\newcommand*{\NEURIPS}{}

\newcommand*{\CAMREADY}{}


\ifdefined\ARXIV
	\documentclass[10pt]{article}
		\usepackage{times}
	\usepackage{fullpage}
	
	
	\usepackage{natbib}
	
	\renewcommand{\cite}[1]{\citep{#1}}
	\setcitestyle{authoryear,round,citesep={;},aysep={,},yysep={;}}
	
	\usepackage[utf8]{inputenc} 
	\usepackage[T1]{fontenc}    
	\usepackage{booktabs}       
	\usepackage{multirow}
	\usepackage{microtype}      
	\usepackage{xcolor}         
	
	\usepackage{tabularx}
	\usepackage{thmtools}
	\usepackage{thm-restate}
	\usepackage[left=1.25in,right=1.25in,top=1in]{geometry}
    \usepackage[normalem]{ulem}

	\usepackage{comment}
	\usepackage{hyperref}
	\definecolor{mydarkblue}{rgb}{0,0.08,0.5}
	\hypersetup{ %
		pdftitle={},
		pdfauthor={},
		pdfsubject={},
		pdfkeywords={},
		colorlinks=true,
		linkcolor=mydarkblue,
		citecolor=mydarkblue,
		filecolor=mydarkblue,
		urlcolor=mydarkblue
	}
	
	\footnotesep 6.65pt %
	\skip\footins 9pt plus 4pt minus 2pt
	\def\footnoterule{\kern-3pt \hrule width 12pc \kern 2.6pt }
	\setcounter{footnote}{0}
	
	\parindent 0pt
	\topsep 4pt plus 1pt minus 2pt
	\partopsep 1pt plus 0.5pt minus 0.5pt
	\itemsep 2pt plus 1pt minus 0.5pt
	\parsep 2pt plus 1pt minus 0.5pt
	\parskip .5pc

	\leftmargin3pc
	\leftmargini\leftmargin \leftmarginii 2em
	\leftmarginiii 1.5em \leftmarginiv 1.0em \leftmarginv .5em
	
	\renewenvironment{abstract}%
	{%
		\vskip 0in%
		\centerline%
		{\large\bf Abstract}%
		\vspace{-1ex}%
		\begin{quote}%
		}
		{
			\par%
		\end{quote}%
		\vskip 0ex%
	}
	
	\title{\vskip -4ex \bf{The Implicit Bias of Structured State Space \\ Models Can Be Poisoned With Clean Labels} \vskip 0ex}
	\author{%
		\textbf{Yonatan Slutzky}\\ 
		Tel Aviv University\\
		\texttt{\normalsize slutzky1@mail.tau.ac.il} \\
		\and
		\textbf{Yotam Alexander}\\
		Tel Aviv University\\
		\texttt{\normalsize yotam.alexander@gmail.com} \\
		\and
		\textbf{Noam Razin}\\
		PLI, Princeton University\\
		\texttt{\normalsize noamrazin@princeton.edu}
		\and
		\textbf{Nadav Cohen}\\
		Tel Aviv University\\
		\texttt{\normalsize cohennadav@tauex.tau.ac.il}
	}	
	\date{}
\fi

\ifdefined\NEURIPS
	\PassOptionsToPackage{numbers}{natbib}
	\documentclass{article}
	\ifdefined\CAMREADY
		\usepackage[final]{neurips_2025}
	\else
		\usepackage{neurips_2025}
	\fi
	\usepackage{footmisc}
	\usepackage[utf8]{inputenc} 
	\usepackage[T1]{fontenc}    
	\usepackage{hyperref}       	
	\usepackage{url}            	
	\usepackage{booktabs}       
	\usepackage{amsfonts}       
	\usepackage{nicefrac}       	
	\usepackage{microtype}      
\fi

\ifdefined\CVPR
	\documentclass[10pt,twocolumn,letterpaper]{article}
	\usepackage{footmisc}
	\usepackage{cvpr}
	\usepackage{times}
	\usepackage{epsfig}
	\usepackage{graphicx}
	\usepackage{amsmath}
	\usepackage{amssymb}
	\usepackage[square,comma,numbers]{natbib}
\fi
\ifdefined\AISTATS
	\documentclass[twoside]{article}
	\ifdefined\CAMREADY
		\usepackage[accepted]{aistats2015}
	\else
		\usepackage{aistats2015}
	\fi
	\usepackage{url}
	\usepackage[round,authoryear]{natbib}
\fi
\ifdefined\ICML
	\documentclass{article}
	\usepackage{footmisc}
	\usepackage{microtype}
	\usepackage{graphicx}
	\usepackage{booktabs}
	\usepackage{hyperref}
	
	\ifdefined\CAMREADY
		\usepackage[accepted]{icml2025}
	\else
		\usepackage{icml2025}
	\fi
\fi
\ifdefined\ICLR
	\documentclass{article}
	\usepackage{iclr2025_conference_arxiv,times}
	\usepackage{footmisc}
	\usepackage{hyperref}
	\usepackage{xcolor}
	\definecolor{mydarkblue}{rgb}{0,0.08,0.5}
	\hypersetup{ %
		pdftitle={},
		pdfauthor={},
		pdfsubject={},
		pdfkeywords={},
		colorlinks=true,
		linkcolor=mydarkblue,
		citecolor=mydarkblue,
		filecolor=mydarkblue,
		urlcolor=mydarkblue}

	\usepackage{url}

	\ifdefined\CAMREADY
		\iclrfinalcopy
	\fi
\fi
\ifdefined\COLT
	\ifdefined\CAMREADY
		\documentclass{colt2017}
	\else
		\documentclass[anon,12pt]{colt2017}
	\fi
\	usepackage{times}
\fi

\usepackage{color}
\usepackage{amsfonts}
\usepackage{algorithm}
\usepackage{algorithmic}
\usepackage{graphicx}
\usepackage{nicefrac}
\usepackage{bbm}
\usepackage{wrapfig}
\usepackage{tikz}
\usepackage[titletoc,title]{appendix}
\usepackage{stmaryrd}
\usepackage{comment}
\usepackage{endnotes}
\usepackage{hyperendnotes}
\usepackage{enumerate}
\usepackage{enumitem}
\usepackage[normalem]{ulem}
\usepackage{booktabs}
\usepackage{amsmath}
	

\ifdefined\COLT
\else
	\usepackage{amsmath,amsthm,amssymb,mathtools}
\fi
\usepackage[small]{caption}
\usepackage[caption = false]{subfig}

\usepackage[extdef=true]{delimset}
\usepackage[capitalize,noabbrev]{cleveref}

\ifdefined\COLT
	\newtheorem{claim}[theorem]{Claim}
	\newtheorem{fact}[theorem]{Fact}
	\newtheorem{procedure}{Procedure}
	\newtheorem{conjecture}{Conjecture}	
	\newtheorem{hypothesis}{Hypothesis}	
	\Crefname{claim}{Claim}{Claims}
	\Crefname{fact}{Fact}{Facts}
	\Crefname{procedure}{Procedure}{Procedures}
	\Crefname{conjecture}{Conjecture}{Conjectures}
	\Crefname{hypothesis}{Hypothesis}{Hypotheses}
	\newcommand{\qed}{\hfill\ensuremath{\blacksquare}}
\else
	\newtheorem{lemma}{Lemma}
	\newtheorem{corollary}{Corollary}
	\newtheorem{theorem}{Theorem}
	\newtheorem{proposition}{Proposition}
	\newtheorem{assumption}{Assumption}

	\newtheorem{remark}{Remark}

	\theoremstyle{definition}
	\newtheorem{definition}{Definition}
	\Crefname{lemma}{Lemma}{Lemmas}
	\Crefname{corollary}{Corollary}{Corollaries}
	\Crefname{theorem}{Theorem}{Theorems}
	\Crefname{proposition}{Proposition}{Propositions}
	\Crefname{assumption}{Assumption}{Assumptions}
	\Crefname{observation}{Observation}{Observations}
	\Crefname{example}{Example}{Examples}
	\Crefname{remark}{Remark}{Remarks}
	\Crefname{claim}{Claim}{Claims}
	\Crefname{fact}{Fact}{Facts}
	\Crefname{procedure}{Procedure}{Procedures}
	\Crefname{conjecture}{Conjecture}{Conjectures}
	\Crefname{hypothesis}{Hypothesis}{Hypotheses}
	\Crefname{definition}{Definition}{Definitions}
	\Crefname{appendix}{Appendix}{Appendices} %
\fi

\definecolor{green}{rgb}{0.0, 0.5, 0.0}

\definecolor{xcolor-gray}{gray}{0.95}


\newcommand{\xbf}{{\mathbf x}}
\newcommand{\ybf}{{\mathbf y}}
\newcommand{\zbf}{{\mathbf z}}

\newcommand{\vbf}{{\mathbf v}}

\newcommand{\wbf}{{\mathbf w}}
\newcommand{\sbf}{{\mathbf s}}
\newcommand{\ebf}{{\mathbf e}}

\newcommand{\bbf}{{\mathbf b}}

\newcommand{\1}{{\mathbf 1}}
\newcommand{\0}{{\mathbf 0}}

\newcommand{\A}{{\mathcal A}}

\newcommand{\NN}{{\mathcal N}}

\renewcommand{\S}{{\mathcal S}}

\newcommand{\U}{{\mathcal U}}
\newcommand{\V}{{\mathcal V}}
\newcommand{\W}{{\mathcal W}}

\newcommand{\I}{{\mathcal I}}
\renewcommand{\L}{\mathcal{L}}

\newcommand{\R}{{\mathbb R}}
\newcommand{\N}{{\mathbb N}}

\DeclareMathOperator*{\argmin}{argmin}


\newcommand{\MC}{\mathcal{C}}

\newcommand{\BN}{\mathbb{N}}
\newcommand{\BR}{\mathbb{R}}

\DeclareMathOperator{\Span}{Span}

\DeclareMathOperator{\diag}{Diag}

\newcommand{\diff}{\text{Diff}}
\newcommand{\dist}{\text{Dist}}

\DeclareFontFamily{U}{mathx}{\hyphenchar\font45}
\DeclareFontShape{U}{mathx}{m}{n}{<-> mathx10}{}
\DeclareSymbolFont{mathx}{U}{mathx}{m}{n}
\DeclareMathAccent{\widebar}{0}{mathx}{"73}

\definecolor{darkspringgreen}{rgb}{0.09, 0.45, 0.27}
\usetikzlibrary{decorations.pathreplacing,calligraphy}
\usetikzlibrary{patterns}

\ifdefined\ENABLEENDNOTES
	 
\else
	\renewcommand{\endnote}[1]{\null} 
\fi
\let\note\footnote


\ifdefined\CVPR
	\usepackage[breaklinks=true,bookmarks=false]{hyperref}
	\ifdefined\CAMREADY
		\cvprfinalcopy 
	\fi
	
\fi

\ifdefined\ARXIV
	\newcommand*{\ABBR}{}
\fi
\ifdefined\ICML
	\newcommand*{\ABBR}{}
\fi
\ifdefined\NEURIPS
	\newcommand*{\ABBR}{}
\fi
\ifdefined\ICLR
	\newcommand*{\ABBR}{}
\fi
\ifdefined\COLT
	\newcommand*{\ABBR}{}
\fi
\ifdefined\ABBR
	\newcommand{\eg}{{\it e.g.}}
	\newcommand{\ie}{{\it i.e.}}

\fi



\begin{document}
	
	
	\ifdefined\ARXIV
		\maketitle
	\fi
	\ifdefined\NEURIPS
	\title{The Implicit Bias of Structured State Space \\ Models Can Be Poisoned With Clean Labels}
		\author{
		\centering                      
		\begin{tabular}[t]{@{}c@{\hspace{3em}}c@{}}   
			\textbf{Yonatan Slutzky\thanks{Equal contribution.}}      & \textbf{Yotam Alexander\footnotemark[1]} \\[0.2ex]
			{\normalfont Tel Aviv University}           & {\normalfont Tel Aviv University}      \\[-0.4ex]
			\texttt{\normalsize slutzky1@mail.tau.ac.il} &
			\texttt{\normalsize yotam.alexander@gmail.com} \\[1.2ex]
			\textbf{Noam Razin}       & \textbf{Nadav Cohen}     \\[0.2ex]
			{\normalfont PLI, Princeton University}           & {\normalfont Tel Aviv University}      \\[-0.4ex]
			\texttt{\normalsize noamrazin@princeton.edu}   &
			\texttt{\normalsize cohennadav@tauex.tau.ac.il} \\
		\end{tabular}
		}
		\maketitle
		\setcounter{footnote}{0}
	\fi
	\ifdefined\CVPR
		\title{Paper Title}
		\author{
			Author 1 \\
			Author 1 Institution \\	
			\texttt{author1@email} \\
			\and
			Author 2 \\
			Author 2 Institution \\
			\texttt{author2@email} \\	
			\and
			Author 3 \\
			Author 3 Institution \\
			\texttt{author3@email} \\
		}
		\maketitle
	\fi
	\ifdefined\AISTATS
		\twocolumn[
		\aistatstitle{Paper Title}
		\ifdefined\CAMREADY
			\aistatsauthor{Author 1 \And Author 2 \And Author 3}
			\aistatsaddress{Author 1 Institution \And Author 2 Institution \And Author 3 Institution}
		\else
			\aistatsauthor{Anonymous Author 1 \And Anonymous Author 2 \And Anonymous Author 3}
			\aistatsaddress{Unknown Institution 1 \And Unknown Institution 2 \And Unknown Institution 3}
		\fi
		]	
	\fi
	\ifdefined\ICML
		\icmltitlerunning{A Provable Pitfall of Generalization in SSMs}
		\twocolumn[
		\icmltitle{A Provable Pitfall of Generalization in SSMs} 
		\icmlsetsymbol{equal}{*}
		\begin{icmlauthorlist}
			\icmlauthor{Author 1}{inst} 
			\icmlauthor{Author 2}{inst}
		\end{icmlauthorlist}
		\icmlaffiliation{inst}{Some Institute}
		\icmlcorrespondingauthor{Author 1}{author1@email}
		\icmlkeywords{}
		\vskip 0.3in
		]
		\printAffiliationsAndNotice{} 
	\fi
	\ifdefined\ICLR
		\title{The Implicit Bias of Structured State Space \\ Models Can Be Poisoned With Clean Labels}
         \newcommand{\aff}[1]{\textsuperscript{\normalfont #1}}
		\author{
			Yonatan Slutzky\thanks{Equal contribution}~\,\aff{$\dagger$}, Yotam Alexander\footnotemark[1]~\,\aff{$\dagger$}, Noam Razin\aff{$\sharp$}, Nadav Cohen\aff{$\dagger$} \\[2mm]
			\textsuperscript{$\dagger$ }Tel Aviv University \;
			\textsuperscript{$\sharp$ }Princeton Language and Intelligence, Princeton University
		}
		\maketitle
	\fi
	\ifdefined\COLT
		\title{Paper Title}
		\coltauthor{
			\Name{Author 1} \Email{author1@email} \\
			\addr Author 1 Institution
			\And
			\Name{Author 2} \Email{author2@email} \\
			\addr Author 2 Institution
			\And
			\Name{Author 3} \Email{author3@email} \\
			\addr Author 3 Institution}
		\maketitle
	\fi

	\begin{abstract}
Neural networks are powered by an implicit bias: a tendency of gradient descent to fit training data in a way that generalizes to unseen data.
A recent class of neural network models gaining increasing popularity is structured state space models (SSMs).
Prior work argued that the implicit bias of SSMs leads to generalization in a setting where data is generated by a low dimensional teacher.
In this paper, we revisit the latter setting, and formally establish a phenomenon entirely undetected by prior work on the implicit bias of SSMs.
Namely, we prove that while implicit bias leads to generalization under many choices of training data, there exist special examples whose inclusion in training completely distorts the implicit bias, to a point where generalization fails.
This failure occurs despite the special training examples being labeled by the teacher, \ie, having clean labels!
We empirically demonstrate the phenomenon, with SSMs trained independently and as part of non-linear neural networks.
In the area of adversarial machine learning, disrupting generalization with cleanly labeled training examples is known as clean-label poisoning.
Given the proliferation of SSMs, we believe that delineating their susceptibility to clean-label poisoning, and developing methods for overcoming this susceptibility, are critical research directions to pursue.
\end{abstract}

	\ifdefined\COLT
		\medskip
		\begin{keywords}
			\emph{TBD}, \emph{TBD}, \emph{TBD}
		\end{keywords}
	\fi

	
\section{Introduction}
\label{sec:intro}
Overparameterized neural networks can fit their training data in multiple ways, some of which generalize to unseen data, while others do not.
Remarkably, when the training data is fit via gradient descent (or a variant thereof), generalization tends to occur.
This phenomenon---one of the greatest mysteries in modern machine learning~\citep{zhang2021understanding,chatterjee2022generalization}---is often viewed as stemming from an \emph{implicit bias}: a tendency of gradient descent, when applied to neural network models, to fit training data in a way that complies with common data-generating distributions.
The latter view was formalized for several neural network models and data-generating distributions \citep{neyshabur2017implicit,soudry2018implicit,gunasekar2018implicit,razin2020implicitregularizationdeeplearning}.

A recent class of neural network models gaining significant popularity is \emph{structured state space models} (\emph{SSMs}).
SSMs are often regarded as a computationally efficient alternative to transformers~\citep{wang2022pretraining}, and underlie prominent neural networks such as S4~\citep{s4}, Mamba~\citep{mamba}, LRU~\citep{lru}, Mega~\citep{mega}, S5~\citep{s5} and more~\citep{hyena, mamba2}.
The implicit bias of SSMs, \ie, of gradient descent over SSMs, was formally studied in prior works, \eg, \citet{emami2021implicit, cohen-karlik2022learning, cohen-karlik2023learning}.
Notable among these is~\citet{cohen-karlik2023learning}, which considered a setting where data is generated by a low dimensional teacher SSM, and gradient flow (gradient descent with infinitesimally small step size) over a high dimensional student SSM fits training data comprising infinitely many sequences of a certain length.%
\ifdefined\CAMREADY
    \note{%
    More precisely, the training data is formed from a continuous (Gaussian) distribution of sequences having a certain length, all labeled by the teacher SSM.
    }
\fi
In this setting, the student SSM can fit the training data in multiple ways, some of which generalize to sequences longer than those seen in training, while others do not.
It was shown in~\citet{cohen-karlik2023learning} that under mild conditions, an implicit bias leads to generalization.

In this paper, we revisit the setting of~\citet{cohen-karlik2023learning}, with one key exception: rather than training data comprising infinitely many sequences, we consider the realistic case where the number of sequences is finite.
Surprisingly, our theory and experiments reveal a phenomenon entirely undetected by prior works on the implicit bias of SSMs.
Namely, we find that while implicit bias leads the student SSM to generalize under many choices of training sequences, there exist special sequences which if included in training completely distort the implicit bias, resulting in the student SSM failing to generalize. 
This failure to generalize takes place despite the fact that the special sequences are labeled by the teacher SSM, \ie, they have clean labels!
In the area of adversarial machine learning, the phenomenon of generalization being disrupted by training instances with clean labels is known as \emph{clean-label poisoning}, and has received significant attention, both empirically \citep{poisonfrogs,metapoison} and theoretically \citep{suya2021model,blum2021robust}.
To our knowledge, the current paper is the first to formally prove susceptibility of SSMs to clean-label~poisoning. 

Our theoretical analysis comprises two contributions.
First, is a dynamical characterization of gradient flow over an SSM, trained individually or as part of a non-linear neural network.
The dynamical characterization reveals that \emph{greedy low rank learning} \citep{li2020towards,sun2021implicit,razin2021implicit,razin2022implicitregularizationhierarchicaltensor}---a sufficient condition for generalization with a low dimensional teacher SSM---is implicitly induced under many, but not all, choices of training sequences.
Our second theoretical contribution builds on our dynamical characterization for a fine-grained analysis of gradient flow over an SSM, employing an advanced tool from dynamical systems theory: a \emph{non-resonance linearization theorem} \citep{dedeceb9-28f3-38fd-9f54-7e01a94eac3c}.
The analysis proves that there exist situations where:
\emph{(i)}~training a student SSM on sequences labeled by a low dimensional teacher SSM exhibits an implicit bias that leads to generalization;
and
\emph{(ii)}~adding to the training set special sequences, also labeled by the teacher SSM (\ie, that also have clean labels), entirely distorts the implicit bias, to an extent where generalization fails.

We corroborate our theory via experiments spanning a wide range of settings: from those covered by our theory, to real-world (non-synthetic) settings comprising SSM-based S4~\citep{s4}, Mamba-2~\citep{mamba2} and LRU~\citep{lru} neural networks trained on the CIFAR-10 dataset~\citep{cifar10}.
The experiments validate both our dynamical characterization and the susceptibility of SSMs to clean-label poisoning.
In light of the growing prominence of SSMs, particularly in large language models, we believe that delineating this susceptibility, and developing methods for overcoming it, are critical research directions to pursue.

	\section{Preliminaries}
\label{sec:prelim}

\subsection{Notation}
\label{sec:prelim:notations}

We use non-boldface lowercase letters for denoting scalars (\eg, $\alpha\in\BR$, $d\in\BN$), boldface lowercase letters for denoting vectors (\eg, $\xbf\in \BR^{d}$), and non-boldface uppercase letters for denoting matrices (\eg, $A\in\BR^{d, d}$). 
We denote by $\1$ an all-ones vector and by $\0$ an all-zeros vector, with their dimensions to be inferred from context.
For $d \in \N$, we let $[ d ] := \{ 1 , 2 , \ldots , d \}$.
For $d \in \N$ and $i \in [ d ]$, we denote by~$\ebf_i$ the $i$'th standard basis vector (\ie, a vector holding one in entry~$i$ and zeros elsewhere) of dimension~$d$, where the dimension is omitted from the notation and should be inferred from context.
We identify scalar sequences of finite lengths with vectors. 

\subsection{Structured State Space Models (SSMs)}
\label{sec:prelim:ssm}

A \emph{structured state space model} (\emph{SSM}) \emph{of dimension~$d \in \N$} is parameterized by three matrices:
$A \in \R^{d , d}$, a \emph{state transition matrix}, which conforms to a predefined structure (\eg, is constrained to be diagonal);
$B \in \R^{d , 1}$, an \emph{input matrix};
and $C \in \R^{1 , d}$, an \emph{output matrix}.
Given the values of $A$, $B$ and~$C$, the SSM realizes a mapping $\phi_{( A , B , C )} ( \cdot )$ that receives as input a length~$k$ scalar sequence $\xbf \in \R^k$, for any $k \in \N$, and produces as output a scalar $y \in \R$ equal to the last element of the sequence $\ybf \in \R^k$ defined through the following recursive formula:
\begin{equation}
\sbf_{k'} = A \sbf_{k' - 1} + B x_{k'} 
~~ , ~~
y_{k'} = C \sbf_{k'}
~~ , ~~
k' \in [ k ]
\text{\,,}
\label{eq:ssm}
\end{equation}
where $( \sbf_{k'} \in \mathbb{R}^{d} )_{k' \in [ k ] \cup \{ 0 \}}$ is a sequence of \emph{states}, and $\sbf_{0} = \0$. 
It is straightforward to show that the mapping $\phi_{( A , B , C )} ( \cdot )$ is fully determined by the sequence \smash{$( C A^{k'} B )_{k' = 0}^\infty$}, known as the \emph{impulse response} of the SSM.
In particular, for any $k \in \N$ and~$\xbf \in \R^k$:
\begin{equation}
\begin{split}
    y = \phi_{( A , B , C )} ( \xbf ) = ( C A^{k - 1} B , \ldots , C A B , C B ) \xbf = \sum\nolimits_{k' = 0}^{k - 1} C A^{k'} B \cdot x_{k - k'}
    \text{\,.}    
\vspace{-0.5mm}
\end{split}
\label{eq:ssm_vec}
\end{equation}
For convenience, we often identify an SSM with the triplet $( A , B , C)$ holding its parameter matrices, and regard the (single column) matrices $B$ and~$C^\top$ as vectors.

Perhaps the most common form of structure imposed on SSMs is \emph{diagonality}~\citep{gu2022parameterization, gupta2022diagonal, lru, mega, mamba}.
Accordingly, unless stated otherwise, we assume that the state transition matrix~$A$ of an SSM is diagonal.

Some of our results will account for SSMs that are part of non-linear neural networks, or more specifically, for SSMs whose output undergoes a transformation~$\sigma ( \cdot \, , \wbf )$, where:
$\sigma : \R \times \W \to \R$ is some differentiable mapping;
$\W$~is some Euclidean space, regarded as a parameter space;
and 
$\wbf \in \W$, regarded as a parameter vector.
Given values for $A$, $B$, $C$ and~$\wbf$, such a neural network realizes the mapping $\phi_{( A , B , C ) , \wbf} ( \cdot ) := \sigma ( \phi_{( A , B , C )} ( \cdot ) , \wbf )$.
This architecture (namely, an SSM followed by a parametric transformation) is ubiquitous among SSM-based neural networks (see, for example, \citet{s4,gupta2022diagonal,gu2022parameterization}).

\subsection{Teacher-Student Setting}
\label{sec:prelim:ts}

We consider the \emph{teacher-student} setting of~\citet{cohen-karlik2023learning}, described hereinbelow.
Data is labeled by a \emph{teacher} SSM $( A^* , B^* , C^* )$ of dimension~$d^* \in \N$, \ie, the ground truth label of $\xbf \in \R^k$, 
for any $k \in \N$, is $\phi_{( A^* , B^* , C^* )} ( \xbf ) \in \R$.
For some $n \in \N$ and $\kappa \in \N_{> 2 d^*}$, we are given a training 
set~$\S$ comprising $n$~labeled sequences of length~$\kappa$, \ie, \smash{$\S := \big( ( \xbf^{( i )} , y^{( i )} ) \big)_{i = 1}^n$}, where \smash{$\xbf^{( i )} \in \R^{\kappa}$} and 
\smash{$y^{( i )} =\phi_{( A^* , B^* , C^* )} ( \xbf^{( i )} )$} for every $i \in [ n ]$.
A \emph{student} SSM $( A , B , C )$ of dimension $d \in \N_{> \kappa}$ is trained by minimizing the square loss over~$\S$, referred to as the \emph{training loss}:
\begin{equation}
\ell_{\S} ( A , B , C ) := \frac{1}{n} \sum\nolimits_{i = 1}^n \big( y^{( i )} - \phi_{( A , B , C )} ( \xbf^{( i )} ) \big)^2
\text{\,.}
\label{eq:loss_ts}
\end{equation}
Optimization is implemented via \emph{gradient flow}, which is formally equivalent to gradient descent with infinitesimally small step size (learning rate), and was shown to well-approximate gradient descent with moderately small step size~\citep{elkabetz2021}.
The dynamics of gradient flow are, for $t\in\BR_{\geq 0}$:
\begin{equation}
( \dot{A} ( t ) , \dot{B} ( t ) , \dot{C} ( t ) ) = - \nabla \ell_{\S} ( A ( t ) , B ( t ) , C ( t ))
\text{\,,}
\label{eq:gf_loss_ts}
\end{equation}
where $( \dot{A} ( t ) , \dot{B} ( t ) , \dot{C} ( t ) ) := \tfrac{d}{dt} ( A ( t ) , B ( t ) , C ( t ) )$, and $( A ( \cdot ) , B ( \cdot ) , C ( \cdot ) )$ is a curve representing~the optimization trajectory.
Generalization of the student at time~$t \in \R_{\geq 0}$ of optimization is measured 
by the extent to which $\phi_{( A ( t ) , B ( t ) , C ( t ) )} ( \cdot )$ approximates $\phi_{( A^{*} , B^{*} , C^{*} )} ( \cdot )$, not only over input sequences of length~$\kappa$ as used for training, but also over input sequences of other lengths.
This accounts not only for in-distribution generalization as considered in classical machine learning theory~\citep{Shalev-Shwartz_Ben-David_2014}, but also for out-of-distribution generalization (extrapolation) as prevalent in modern machine learning~\citep{liu2023outofdistributiongeneralizationsurvey}.
Formally, in line with \cref{eq:ssm_vec}, generalization is quantified by how close the first $k$ elements of the student's impulse response are to the first $k$ elements of the teacher's impulse response, for different values of~$k$.
\begin{definition}
\label{def:gen}
The \emph{generalization error of the student SSM over sequence length~$k$} is:
\begin{equation}
\max\nolimits_{k' \in \{ 0 , 1 , \ldots , k - 1 \}} \big| B A^{k'} C - B^* ( A^* )^{k'} C^* \big|
\text{\,.}
\label{eq:gen}
\end{equation}
\end{definition}
Clearly, there exist assignments for $( A , B , C )$ with which the training loss~$\ell_{\S} ( \cdot )$ is minimized (\ie, equals zero) and the student SSM perfectly generalizes over any sequence length~$k$.\note{%
This is the case, for example, if $A$, $B$ and~$C$ are respectively attained by padding $A^*$, $B^*$ and~$C^*$ with zeros on the right and/or bottom.
}
On the other hand, it was shown in~\citet{cohen-karlik2023learning} that, regardless of the size of the training set~$\S$ (\ie, of~$n$) and the input sequences it comprises (namely, $( \xbf^{( i )} )_{i=1}^n$), there exist assignments for $( A , B , C )$ with which the training loss~$\ell_{\S} ( \cdot )$ is minimized, and yet the student has arbitrarily high generalization error over sequence lengths beyond~$\kappa$, \eg, over sequence length $\kappa + 1$ (for completeness, we prove this fact in \cref{app:zero_loss_non_ext}).
The latter two facts together imply that if minimization of the training loss~$\ell_{\S} ( \cdot )$ via gradient flow (\cref{eq:gf_loss_ts}) produces an assignment for $( A , B , C )$ with which the student generalizes over sequence lengths beyond~$\kappa$, it must be an outcome of implicit bias.
The main result in~\citet{cohen-karlik2023learning} states that if the training set~$\S$ is infinite and each entry of each input sequence~$\xbf^{( i )}$ is independently drawn from the standard normal distribution,\note{%
Formally, this condition means that the training loss~$\ell_{\S} ( \cdot )$ is the expected value of $( y - \phi_{( A , B , C )} ( \xbf ) )^2$, where the entries of~$\xbf$ are independently drawn from the standard normal distribution, and $y = \phi_{( A^* , B^* , C^* )} ( \xbf )$.
} 
then under mild conditions the implicit bias of gradient flow is such that it convergences to a solution which generalizes over any sequence length~$k$.

In this paper, we focus on the realistic case where the training set~$\S$ is finite.
Surprisingly, our theory and experiments (\cref{sec:analysis,sec:exper}, respectively) reveal a phenomenon completely undetected by \citet{cohen-karlik2023learning}, and any other work we are aware of on the implicit bias of SSMs.

	\section{Theoretical Analysis}
\label{sec:analysis}
\vspace{-2.5mm}

\subsection{Dynamical Characterization}
\label{sec:analysis:dynamic}
\vspace{-2mm}

In this subsection we derive a dynamical characterization of gradient flow over an SSM, trained individually or as part of a non-linear neural network.
The dynamical characterization reveals that \emph{greedy low rank learning} \citep{saxe2013exact,lampinen2018analytic,arora2019implicit,li2020towards,razin2021implicit,razin2022implicitregularizationhierarchicaltensor}---a sufficient condition for generalization with a low dimensional teacher SSM---is implicitly induced under many, but not all, choices of training sequences.
\cref{sec:analysis:poison} will build on the dynamical characterization to prove that the implicit bias of SSMs can be poisoned with clean labels.

Our dynamical characterization applies to a setting more general than that laid out in \cref{sec:prelim:ts}.
\vspace{0.5mm}
Specifically, it applies to the same setting, with two exceptions:
\emph{(i)}~the student SSM is potentially
\vspace{0.75mm}
embedded in a non-linear neural network, \ie, the mapping $\phi_{( A , B , C )} ( \cdot )$ is replaced by $\phi_{( A , B , C ) , \wbf} ( \cdot )$ 
as defined in \cref{sec:prelim:ssm};
and
\emph{(ii)}~the training labels \smash{$( y^{( i )} )_{i = 1}^n$} need not be assigned by a teacher SSM, 
\vspace{0.5mm}
\ie, they may be arbitrary.
We denote the resulting training loss---a generalization of~$\ell_{\S} ( \cdot )$ from
 \cref{eq:loss_ts}---by~\smash{$\tilde{\ell}_{\S} ( \cdot )$}:
\vspace{-1mm}
\begin{equation}
    \vspace{-0.5mm}
\tilde{\ell}_{\S} ( A , B , C , \wbf) := \frac{1}{n} \sum\nolimits_{i = 1}^{n} \big( y^{( i )} - \phi_{( A , B , C ) , \wbf} ( \xbf^{( i )} ) \big)^2
\text{\,.}
\label{eq:loss_gen}
\end{equation}
\cref{result:dynamic} below establishes our dynamical characterization: equations of motion for the (diagonal) entries of~$A$ during gradient flow over~$\tilde{\ell}_{\S} ( \cdot )$.

\begin{proposition}
\label{result:dynamic}
Consider optimization of the generalized loss~$\tilde{\ell}_{\S} ( \cdot )$ defined in \cref{eq:loss_gen} via gradient flow.
That is, consider, for any $t\in\BR_{\geq 0}$:
\vspace{-1mm}
\begin{equation*}
( \dot{A} ( t ) , \dot{B} ( t ) , \dot{C} ( t ) , \dot{\wbf} ( t ) ) = - \nabla \tilde{\ell}_{\S} ( A ( t ) , B ( t ) , C ( t ) , \wbf ( t ))
\text{\,,}
\end{equation*}
where $( \dot{A} ( t ) , \dot{B} ( t ) , \dot{C} ( t ) , \dot{\wbf} ( t ) ) := \tfrac{d}{dt} ( A ( t ) , B ( t ) , C ( t ) , \wbf ( t ) )$, and $( A ( \cdot ) , B ( \cdot ) , C ( \cdot ) , \wbf ( \cdot ) )$ is a curve representing the optimization trajectory.
\vspace{0.5mm}
For $j \in [ d ]$, 
denote by $a_{j} ( \cdot )$ the $j$'th diagonal entry of~$A ( \cdot )$, by $b_{j} ( \cdot )$ the $j$'th entry of~$B ( \cdot )$,
and by $c_{j} ( \cdot )$ the $j$'th entry of~$C ( \cdot )$.
Assume that the training sequence length~$\kappa$ is greater than or equal to two.
For $l \in [ \kappa ]$ and $i \in [ n ]$, denote the $l$'th element of the $i$'th training sequence~\smash{$\xbf^{( i )}$} by~\smash{$x^{( i )}_l$}.
Then, for any $j\in[d]$ and $t \in \R_{\geq 0}$:
\begin{equation}
\dot{a}_j ( t ) := \tfrac{d}{dt} a_j ( t ) = b_j ( t ) c_j ( t ) \sum\nolimits_{l = 0}^{\kappa - 2} \gamma^{( l )} ( t ) \cdot a_j ( t )^l
\text{\,,}
\label{eq:dynamic}
\end{equation}
where:
\begin{equation}
\gamma^{( l )} ( t ) := \tfrac{2 ( l + 1 )}{n} \sum\nolimits_{i=1}^{n} \delta^{( i )} ( t ) \xi^{( i )} ( t ) x_{\kappa - l - 1}^{( i )}
\text{\,,}
\label{eq:gamma}
\end{equation}
with:
\begin{align}
\delta^{( i )} ( t ) :=  y^{( i )} -  \phi_{( A ( t ) , B ( t ) , C ( t ) ) , \wbf ( t )} ( \xbf^{( i )} )
~ , ~
\xi ^{(i)} ( t ) := \tfrac{\partial}{\partial z} \sigma ( z , \wbf ( t ) ) \big|_{z = \phi_{( A ( t ) , B ( t ) , C ( t ) )} ( \xbf^{( i )} )}
\text{\,.}
\label{eq:delta_zeta}
\end{align}
\end{proposition}
\vspace{-1.5mm}
\begin{proof}[Proof sketch (proof in \cref{app:character})]
The result readily follows from differentiation of~$\tilde{\ell}_{\S} ( \cdot )$ (\cref{eq:loss_gen}) with respect to each diagonal entry of~$A$.
\end{proof}

\subsubsection{Interpretation}
\label{sec:analysis:dynamic:interp}
\vspace{-1mm}

\cref{result:dynamic} implies that during gradient flow, 
\vspace{0.5mm}
the motion of $a_j ( \cdot )$---the $j$'th diagonal entry of the state transition matrix~$A ( \cdot )$---is given by a degree $\kappa - 2$ polynomial in~$a_j ( \cdot )$, where the coefficients of the polynomial are time-varying.
In particular, at time $t \in \R_{\geq 0}$ of optimization, the coefficient of the $l$'th power in the polynomial, for $l \in \{ 0 , 1 , \ldots , \kappa - 2 \}$, is a product of two factors:
\emph{(i)}~$\gamma^{( l )} ( t )$, which depends on the power~$l$ but not on the entry index~$j$;
and
\emph{(ii)}~$b_j ( t ) c_j ( t )$ (the $j$'th entry of the input matrix~$B ( \cdot )$ times the $j$'th entry of the output matrix~$C ( \cdot )$), which does not depend on the power~$l$ but does depend on the entry index~$j$.

\textbf{Different training sets admit greedy learning.}
Consider the case where $A ( \cdot )$ emanates from standard near-zero initialization~\citep{xavier,he,ssm_zero_init}, \ie, where $a_j ( 0 ) \approx 0$ for all~$j$.
If the factor $\gamma^{( 0 )} ( \cdot )$ is small throughout---as is the case,
\vspace{0.5mm}
\eg, if the penultimate ($\kappa - 1$'th) element of each training 
sequence~\smash{$\xbf^{( i )}$} is small (see \cref{eq:gamma})---then the constant coefficient (\ie, the coefficient of the zeroth power) in the polynomial determining the motion of~$a_j ( \cdot )$ is negligible.
The dynamics of $( a_j ( \cdot ) )_{j = 1}^d$ then exhibit greedy learning, similarly to the dynamics of various quantities in various types of neural networks~\citep{saxe2013exact,lampinen2018analytic,arora2019implicit,li2020towards,razin2021implicit,razin2022implicitregularizationhierarchicaltensor}.
\vspace{0.5mm}
Namely, \smash{$( a_j ( \cdot ) )_{j = 1}^d$} all progress slowly at first, following near-zero initialization, and then, whenever an entry reaches a critical threshold, it starts moving rapidly---see empirical demonstrations in \cref{fig:dynamic,app:exper:dynamic}.
The greedy learning of $( a_j ( \cdot ) )_{j = 1}^d$ implies a greedy low rank learning of the state transition matrix.
More specifically, it implies a tendency to fit training data with $A$ having low rank, meaning a tendency to generalize if data is generated by a low dimensional teacher SSM.

\textbf{Certain training sequences impede greedy learning.} 
In stark contrast to the above, if the training sequences $( \xbf^{( i )} )_{i = 1}^n$ are such that the factor $\gamma^{( 0 )} ( \cdot )$ is not small---as can be the case, \eg, if there is a training sequence~\smash{$\xbf^{( i )}$} in which the last elements are relatively large---then the polynomials determining the motions of $( a_j ( \cdot ) )_{j = 1}^d$ have non-negligible constant coefficients, and greedy low rank learning will generally \emph{not} take place---see empirical demonstrations in \cref{fig:dynamic,app:exper:dynamic}.

\subsection{Clean-Label Poisoning}
\label{sec:analysis:poison}
\vspace{-2mm}

In this subsection we employ the dynamical characterization from \cref{sec:analysis:dynamic} for a fine-grained analysis of gradient flow over an SSM.
The analysis considers a teacher-student setting as in \cref{sec:prelim:ts}, and proves existence of situations where:
\emph{(i)}~training a student SSM on sequences labeled by a low dimensional teacher SSM exhibits an implicit bias that leads to generalization;
and
\emph{(ii)}~adding to the training set certain sequences, also labeled by the teacher SSM (\ie, that also have clean labels), entirely distorts the implicit bias, to an extent where generalization fails.
To our knowledge, this constitutes the first formal proof of susceptibility of SSMs to clean-label poisoning.
Facilitating the analysis is an advanced tool from dynamical systems theory---a \emph{non-resonance linearization theorem}~\citep{dedeceb9-28f3-38fd-9f54-7e01a94eac3c}---which may be of independent interest.

The teacher-student setting considered in this subsection is characterized by \cref{assump:simple_teacher,assump:fixed_B_C} below.
While this setting is specific, it fulfills the purpose of proving existence of situations where SSMs are susceptible to clean-label poisoning.
Moreover, as discussed in \cref{app:extensions}, our theory can be adapted to account for different extensions of the setting, \ie, for different relaxations of \cref{assump:simple_teacher,assump:fixed_B_C}.
Empirically, we demonstrate in \cref{sec:exper} that SSMs are susceptible to clean-label poisoning in a wide range of settings: from the specific setting considered in this subsection, to real-world (non-synthetic) settings comprising SSM-based S4~\citep{s4}, Mamba-2~\citep{mamba2} and LRU~\citep{lru} neural networks trained on the CIFAR-10 dataset~\citep{cifar10}.
\begin{assumption}
\label{assump:simple_teacher}
The teacher SSM is of dimension $d^* = 2$, and its parameter matrices are given by:
\begin{equation}
\begin{split}
A^* = \begin{pmatrix} 1 & 0 \\ 0 & 0 \end{pmatrix}
~~ , ~~
B^* = \begin{pmatrix} 1 & \sqrt{d-1}\, \end{pmatrix}^\top
~~ , ~~
C^* = \begin{pmatrix} 1 & \sqrt{d-1}\, \end{pmatrix}
\text{\,.}
\end{split}
\label{eq:poison_t}
\end{equation}
\end{assumption}
\begin{assumption}
\label{assump:fixed_B_C}
\vspace{-1mm}
The input and output matrices of the student SSM, $B$ and~$C$, are respectively fixed at $\1$ and~$\1^\top$ throughout training.
\vspace{-1mm}
\end{assumption}
Under \cref{assump:simple_teacher,assump:fixed_B_C}, generalization is straightforward to attain: all that is needed in order for the student SSM to achieve low generalization error over any sequence length (\cref{def:gen}) is that one of the diagonal entries of its state transition matrix~$A$ be sufficiently close to one while the rest are sufficiently close to zero.

\cref{result:gen} below proves that generalization takes place in cases where:
each training sequence has zeros in its last two elements and non-negative values elsewhere;
and
at least one training sequence is not identically zero.
Underlying the proof is the dynamical characterization from \cref{sec:analysis:dynamic}: with the last elements of each training sequence being small, the characterization establishes greedy low rank learning, which in turn implies generalization (see interpretation in \cref{sec:analysis:dynamic:interp}).
\begin{proposition}
\label{result:gen}
Consider the teacher-student setting of \cref{sec:prelim:ts}, subject to \cref{assump:simple_teacher,assump:fixed_B_C}.
Suppose that for every $i \in [ n ]$, the training sequence~\smash{$\xbf^{( i )}$} has zeros in its last two elements and non-negative values elsewhere.
Suppose also that \smash{$\xbf^{( i )} \neq \0$} for some $i \in [ n ]$.
Then, for any $k \in \N$, $\epsilon \in \R_{> 0}$ and $\delta \in \R_{> 0}$, there exists an open set~$\I$ of arbitrarily small initializations for the student SSM,\note{%
That is, for any neighborhood $\NN$ of the origin in the space of diagonal matrices in~$\R^{d , d}$ (the latter space is identified with~$\R^d$), there exists an open subset $\I \subset \NN$.
\hspace{-2.5mm}
\label{foot:open_init}
} 
and a corresponding time $t \in \R_{> 0}$, such that gradient flow initialized in~$\I$ reaches at time~$t$ a point where the training loss is at most~$\delta$ and the generalization errors over sequence lengths $1 , 2 , \ldots , k$ are at most~$\epsilon$.
\end{proposition}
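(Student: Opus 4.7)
The plan is to use \cref{result:dynamic} to reduce the multi-dimensional dynamics to a one-dimensional gradient flow via an invariance, analyze that 1D flow explicitly, and then extract an open set of initializations from standard ODE continuity. By \cref{assump:fixed_B_C}, $b_j(t) = c_j(t) = 1$ for every $j$ and $t$, so \cref{result:dynamic} gives $\dot{a}_j(t) = \sum_{l = 0}^{\kappa - 2} \gamma^{(l)}(t) \, a_j(t)^l$. The hypothesis that every training sequence has $x^{(i)}_{\kappa - 1} = 0$ (the unique position appearing in $\gamma^{(l)}$ when $l = 0$) forces $\gamma^{(0)}(t) \equiv 0$; hence $\dot{a}_j(t) = a_j(t) \cdot q_j(a_j(t), t)$ for a polynomial $q_j$, and each axis $\{a_j = 0\}$ is flow-invariant. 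This is the strongest form of greedy low-rank learning: coordinates initialized at zero stay at zero forever. Accordingly I initialize $a_1(0) = \alpha$ for some $\alpha \in (0, 1)$ and $a_j(0) = 0$ for $j \geq 2$; the dynamics then collapse to the 1D gradient flow of $a_1$ on $L(a) := \ell_{\S}(\diag(a, 0, \ldots, 0))$.

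Next I analyze this 1D flow explicitly. Under \cref{assump:simple_teacher} the teacher impulse response is $(d, 1, 1, 1, \ldots)$, and the rank-one student's impulse is $(d, a, a^2, \ldots)$ (the $k' = 0$ entries match automatically since $CB = d$). Hence $L(a) = \tfrac{1}{n} \sum_i \bigl( \sum_{k' = 1}^{\kappa - 1} (1 - a^{k'}) \, x^{(i)}_{\kappa - k'} \bigr)^2$ and $\dot{a}_1 = -L'(a_1) = \tfrac{2}{n} \sum_i \delta^{(i)}(a_1) \, g^{(i)}(a_1)$ with $g^{(i)}(a) := \sum_{k' = 1}^{\kappa - 1} k' a^{k' - 1} x^{(i)}_{\kappa - k'}$. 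On $(0, 1)$, non-negativity of the training entries and of $1 - a^{k'}$ yields $\delta^{(i)}(a), g^{(i)}(a) \geq 0$ for every $i$; for the $i_0$ with $\xbf^{(i_0)} \neq \0$, a non-zero entry must sit in some position $l \in \{1, \ldots, \kappa - 2\}$ (since the last two positions are zero), which makes both $g^{(i_0)}(a) > 0$ and $\delta^{(i_0)}(a) > 0$ throughout $(0, 1)$. Therefore $\dot{a}_1 > 0$ on $(0, 1)$; uniqueness of solutions of the autonomous 1D ODE prevents $a_1$ from crossing the fixed point $a = 1$; and monotone boundedness forces $a_1(t) \nearrow 1$.

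Finally, fix $k$, $\epsilon$, $\delta$ and choose $T$ so large that $|a_1(T) - 1|$ makes both $\ell_{\S}(\diag(a_1(T), 0, \ldots, 0)) < \delta / 2$ and $\max_{k' \in \{0, \ldots, k - 1\}} |a_1(T)^{k'} - 1| < \epsilon / 2$; this is possible because both functions of $a$ vanish continuously at $a = 1$. The time-$T$ flow map $\Phi^T$ on the space of diagonal matrices is continuous in the initial condition (standard ODE theory on a smooth vector field), so the preimage $\I := (\Phi^T)^{-1}\bigl( \{ A \text{ diagonal} : \ell_{\S}(A) < \delta \text{ and gen err over length } k < \epsilon \} \bigr)$ is open, contains my chosen initialization, and furnishes the required open set of initializations.

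The main obstacle is the 1D convergence argument: ruling out intermediate equilibria in $(0, 1)$ and establishing $a_1(t) \nearrow 1$. It relies delicately on both data hypotheses---non-negativity fixes the signs of $\delta^{(i)}$ and $g^{(i)}$, while placement of the non-zero entries strictly inside the first $\kappa - 2$ positions is exactly what makes $g^{(i_0)}(a) > 0$ for $a > 0$. Weakening either hypothesis could let the gradient vanish at the origin or produce residuals of mixed signs, so escape to $a = 1$ would fail---already foreshadowing the clean-label poisoning phenomenon of the next subsection.
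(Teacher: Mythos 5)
Your proposal is correct and follows essentially the same route as the paper's proof: initialize on the invariant ray $(\alpha,0,\ldots,0)$, show the remaining coordinates stay at zero, prove the resulting 1D flow is strictly increasing on $(0,1)$ (using non-negativity plus the existence of a non-zero entry strictly before the last two positions) and hence converges monotonically to $1$, then pick a time $T$ at which loss and generalization error are small and invoke continuous dependence on initial conditions to obtain an open set $\I$. Your derivation of the invariance from the factorization $\dot a_j = a_j\, q_j$ (via $\gamma^{(0)}\equiv 0$ in Proposition~\ref{result:dynamic}) and your extraction of $\I$ as a preimage under the time-$T$ flow map are cosmetic variants of the paper's argument, which instead notes $\dot a_j(0)=0$ directly and applies a Gr\"onwall-type bound on trajectory divergence.
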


\begin{proof}[Proof sketch (proof in \cref{app:gen})]
The proof specializes the dynamical characterization from \cref{sec:analysis:dynamic} per \cref{assump:simple_teacher,assump:fixed_B_C}, and per the conditions posed on the training sequences.
Choosing the set~$\I$ to be sufficiently concentrated around the origin, the specialized characterization implies that gradient flow initialized in~$\I$ exhibits greedy low rank learning (see interpretation of the characterization in \cref{sec:analysis:dynamic:interp}). 
The greedy low rank learning leads a single diagonal entry of the state transition matrix~$A$ to reach the vicinity of one while the rest of the diagonal entries remain near zero.
As stated in the text following \cref{assump:simple_teacher,assump:fixed_B_C}, this ensures generalization.
\end{proof}

As stated in the text following \cref{assump:simple_teacher,assump:fixed_B_C}, under these assumptions, generalization is straightforward to attain.
Indeed, \cref{result:gen} above proves that generalization takes place even in cases where the training set~$\S$ entails little information, \eg, in the case where $\S$ includes~$\ebf_1$ (one followed by zeros) as a sole training sequence.
\cref{result:poison} below proves that in this latter case, despite generalization being straightforward to attain, adding to~$\S$ a single cleanly labeled example---or more specifically, a single carefully selected sequence~\smash{$\xbf^\dagger$} of moderate (Euclidean) norm labeled by the teacher SSM---can entirely disrupt generalization.

In the context of \cref{result:poison}, disrupting generalization requires preventing greedy low rank learning.
This is achieved, in accordance with the dynamical characterization from \cref{sec:analysis:dynamic} (see its interpretation in \cref{sec:analysis:dynamic:interp}), by selecting a sequence~\smash{$\xbf^\dagger$} whose last elements are relatively large.
To prove that such selection indeed disrupts generalization, \cref{result:poison} relies on an advanced tool from dynamical systems theory---a non-resonance linearization theorem~\citep{dedeceb9-28f3-38fd-9f54-7e01a94eac3c}---which may be of independent interest.

\cref{result:poison}, albeit specific, constitutes what is to our knowledge the first formal proof of susceptibility of SSMs to clean-label poisoning.
In \cref{app:extensions} we discuss different extensions of \cref{result:poison}, including an extension that allows for any training set~$\S$ permitted by \cref{result:gen} (\ie, for any choice of training sequences \smash{$( \xbf^{( i )} )_{i = 1}^n$} where: for every $i \in [ n ]$, \smash{$\xbf^{( i )}$} has zeros in its last two elements and non-negative values elsewhere; and \smash{$\xbf^{( i )} \neq \0$} for some $i \in [ n ]$), at the expense of the added~sequence \smash{$\xbf^\dagger$} potentially having large norm.
Empirically, we demonstrate in \cref{sec:exper} that the conclusion of \cref{result:poison} carries over to a wide range of settings, including ones where an SSM is part of a non-linear neural network.

\begin{theorem}
\label{result:poison}
Consider the teacher-student setting of \cref{sec:prelim:ts}, subject to \cref{assump:simple_teacher,assump:fixed_B_C}.
Suppose the training sequence length and the dimension of the student SSM respectively satisfy $\kappa \in \{ 7 , 9 , 11 , \ldots \}$ and $d \geq 8$.
Define the sequences $\xbf := \ebf_1 \in \R^{\kappa}$ and $\xbf^\dagger := \ebf_{\kappa - 1} \in \R^{\kappa}$.
Denote by $y$ and~$y^\dagger$ the labels respectively assigned to $\xbf$ and~$\xbf^\dagger$ by the teacher SSM, \ie, $y := \phi_{( A^{*} , B^{*} , C^{*} )}( \xbf )$ and \smash{$y^\dagger := \phi_{( A^{*} , B^{*} , C^{*} )}( \xbf^\dagger )$}.
Suppose the training set~$\S$ includes $( \xbf , y )$ as a sole example.
Let $k \in \N_{\geq \kappa + 2}$ and $\epsilon \in \R_{> 0}$.
Then, there exists an open set~$\I$ of arbitrarily small initializations for the student SSM,\footref{foot:open_init} such that with any initialization in~$\I$:
\begin{itemize}
\item gradient flow converges to a point where the training loss is minimal (\ie, equals zero) and the generalization errors over sequence lengths $1 , 2 , \ldots , k$ are at most~$\epsilon$;
and
\item appending to~$\S$ the (cleanly labeled) example $( \xbf^\dagger , y^\dagger )$ leads gradient flow to converge to a point where the training loss is minimal and the generalization error over sequence length~$k$ is at least $\min \{ 0.1 , ( 9 d )^{- 1} ( 1 -(0.6)^{1 / ( \kappa - 1 )}) \}$.
\end{itemize}
\end{theorem}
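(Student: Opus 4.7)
My plan is to apply the dynamical characterization of \cref{result:dynamic} to each of the two training sets and analyze the resulting scalar ODEs on the diagonal entries $a_1(\cdot), \ldots, a_d(\cdot)$ of the student's state transition matrix. From \cref{assump:simple_teacher}, the teacher's impulse response satisfies $C^*(A^*)^{0} B^* = d$ and $C^*(A^*)^{k'} B^* = 1$ for every $k' \ge 1$, so both labels are $y = y^\dagger = 1$. Because $\xbf = \ebf_1$ is supported at position~$1$ and $\xbf^\dagger = \ebf_{\kappa-1}$ at position~$\kappa-1$, the coefficient $\gamma^{(l)}$ in \cref{eq:gamma} vanishes except for $l = \kappa-2$ in the single-example case, and except for $l \in \{0, \kappa-2\}$ in the poisoned case. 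Using $b_j = c_j = 1$ (\cref{assump:fixed_B_C}) and $\xi^{(i)} \equiv 1$ (no non-linearity), \cref{eq:dynamic} reduces to
\begin{align*}
\text{(single):} \quad & \dot a_j(t) = 2(\kappa-1)\bigl(1 - s_{\kappa-1}(t)\bigr)\, a_j(t)^{\kappa-2}, \\
\text{(poisoned):} \quad & \dot a_j(t) = \bigl(1 - s_1(t)\bigr) + (\kappa-1)\bigl(1 - s_{\kappa-1}(t)\bigr)\, a_j(t)^{\kappa-2},
\end{align*}
where $s_m(t) := \sum_{j=1}^d a_j(t)^m$ (note that $\kappa$ odd forces $\kappa-2$ odd and $\kappa-1$ even).

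For the single-example case I would take $\I$ to be an open neighborhood of near-zero diagonal initializations in which exactly one entry is strictly largest in magnitude and positive. The absence of a constant drift together with the identity $\dot a_j / a_j = 2(\kappa-1)(1 - s_{\kappa-1})\,a_j^{\kappa-3}$ realizes the greedy low rank learning behavior highlighted in \cref{sec:analysis:dynamic:interp}: the dominant entry reaches a neighborhood of~$1$ in finite time, while every other entry remains within $\epsilon$ of zero throughout. Since the student's impulse response is then $(d, s_1(t), s_2(t), \ldots)$, it tracks the teacher's $(d, 1, 1, \ldots)$ to within~$\epsilon$ on $\{1, \ldots, k\}$, giving the first bullet.

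For the poisoned case, the added drift $1 - s_1(t)$ acts uniformly on every coordinate and, being close to~$1$ near the origin, destroys the greedy bias by forcing all entries to leave zero simultaneously. The plan is to locate a specific attracting zero-loss equilibrium $a^\ast$ on the codimension-$2$ fixed-point manifold $\{s_1 = s_{\kappa-1} = 1\}$ toward which gradient flow from an open set $\I$ of near-origin initializations is driven, and then to lower bound $|s_{k'}(a^\ast) - 1|$ for some $k' \in \{0, \ldots, k-1\}$. At any zero-loss point the Jacobian of the poisoned vector field takes the rank-$2$ form $J = -\1\1^\top - (\kappa-1)^2\,\ubf\ubf^\top$ with $u_j := (a^\ast_j)^{\kappa-2}$: two strictly negative eigenvalues normal to the fixed-point manifold and a $(d-2)$-dimensional kernel tangent to it. Because the spectrum is non-hyperbolic, Hartman–Grobman yields only $C^0$ conjugation, which is too weak to control trajectories quantitatively. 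The key analytic tool is therefore a non-resonance (Poincar\'e–Dulac / Sternberg-type) linearization theorem, which, given a non-resonance relation among the two nonzero eigenvalues and the kernel, produces a smooth conjugation of the flow near $a^\ast$ to its linearization, yielding both convergence from the stable open set $\I$ and a description of $a^\ast$ precise enough to evaluate $s_{k'}(a^\ast)$.

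The hard part will be identifying the specific point $a^\ast$ on the fixed-point manifold to which the trajectory from near-zero initialization is actually driven, and verifying the non-resonance condition there. During the drift-dominated initial phase all coordinates are pushed nearly equally; only later does the nonlinear term $a_j^{\kappa-2}$ separate a handful of ``large'' entries from the rest, and following the flow through this transition is what pins down $a^\ast$ to leading order (the constraint $d \ge 8$ should arise from the non-resonance requirements on the spectrum of~$J$ and from having enough ``small'' coordinates for the near-uniform configuration to exist). Once $a^\ast$ has been located and the smooth linearization established, the lower bound on the generalization error over sequence length $k \ge \kappa + 2$ follows by a direct perturbation argument: the constraints $s_1(a^\ast) = s_{\kappa-1}(a^\ast) = 1$ leave $s_{k'}(a^\ast)$ unconstrained for $k' \notin \{1, \kappa-1\}$, and a quantitative expansion about the ``one-large / rest-small'' configuration of $a^\ast$ (whose small entries are of order $d^{-1}(1 - (0.6)^{1/(\kappa-1)})$) yields the stated $\min\{0.1,\, (9d)^{-1}(1 - (0.6)^{1/(\kappa-1)})\}$.
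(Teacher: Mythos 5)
Your setup (the specialized ODEs, the role of the constant drift $1-s_1(t)$, and the greedy/conservation argument for the single-example case) matches the paper's. The poisoned case, however, contains two genuine gaps. First, you propose to invoke the non-resonance linearization theorem at the zero-loss equilibrium $a^\ast$, whose linearization you yourself compute to be rank~$2$ with a $(d-2)$-dimensional kernel. The theorem of Sell used in the paper requires \emph{strict hyperbolicity} (all eigenvalues nonzero, with at least one of each sign), so it simply does not apply at $a^\ast$; no amount of non-resonance among the two nonzero eigenvalues fixes the degenerate directions. The paper instead applies the linearization theorem at the strict saddle $\mathbf{s} = s\cdot\1$ on the diagonal line $\Span\{\1\}$ (attracting along $\1$, repelling along its orthogonal complement, eigenvalues $-\lambda_+<0$ and $-\lambda_->0$ with no resonances for $d\ge 8$). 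Trajectories from near-zero initialization are funneled toward this saddle by the drift term; the $C^3$ conjugation yields an escape time of order $\ln(1/\alpha)$, and that sharp bound is what makes a Gr\"onwall-type estimate on the divergence between the true trajectory and a reference trajectory survive the exponential blow-up. Your proposal never controls the passage near this saddle, and without it you cannot determine \emph{which} zero-loss point the flow converges to.

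Second, your mechanism for the lower bound on the generalization error is wrong. You describe $a^\ast$ as a ``one-large / rest-small'' configuration, but such a point is precisely a \emph{generalizing} solution. The paper's argument is a symmetry argument: the open set $\I$ consists of initializations in which $a_1(0)$ and $a_2(0)$ are distinct but extremely close, so that under the poisoned dynamics the trajectory tracks a reference trajectory on which $a_1\equiv a_2$ exactly (an invariant of the flow). The limit is therefore close to a zero-loss point with two equal leading coordinates, which forces $a_1,a_2\le (0.6)^{1/(\kappa-1)}$; from this one shows $\sum_j a_j^{L^*-1}\le 1-\min\{0.1,(9d)^{-1}(1-(0.6)^{1/(\kappa-1)})\}$ for an even $L^*\in\{\kappa+1,\dots,k\}$, either because the negative coordinates contribute mass $\ge 0.1$ or because some coordinate with $a_{k^*}^{\kappa-1}\ge 1/(9d)$ decays by the stated factor when raised to the higher power. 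The same near-symmetric initializations still generalize in the unpoisoned case because the conservation law there keeps $a_2$ trapped near zero while $a_1$ escapes to $1$. Reconstructing this two-coordinate symmetry (and the compatibility of the two requirements on $\I$) is the missing core of the argument, not a perturbation expansion around a one-spike solution.
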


\begin{proof}[Proof sketch (proof in \cref{app:poison})]
\cref{fig:poison_proof} illustrates the main ideas behind the proof.
Below is a description of these ideas, along with an annotation of the figure.

\begin{figure*}[t]
\begin{center}
\vspace{-10mm}
\includegraphics[width=0.9\textwidth]{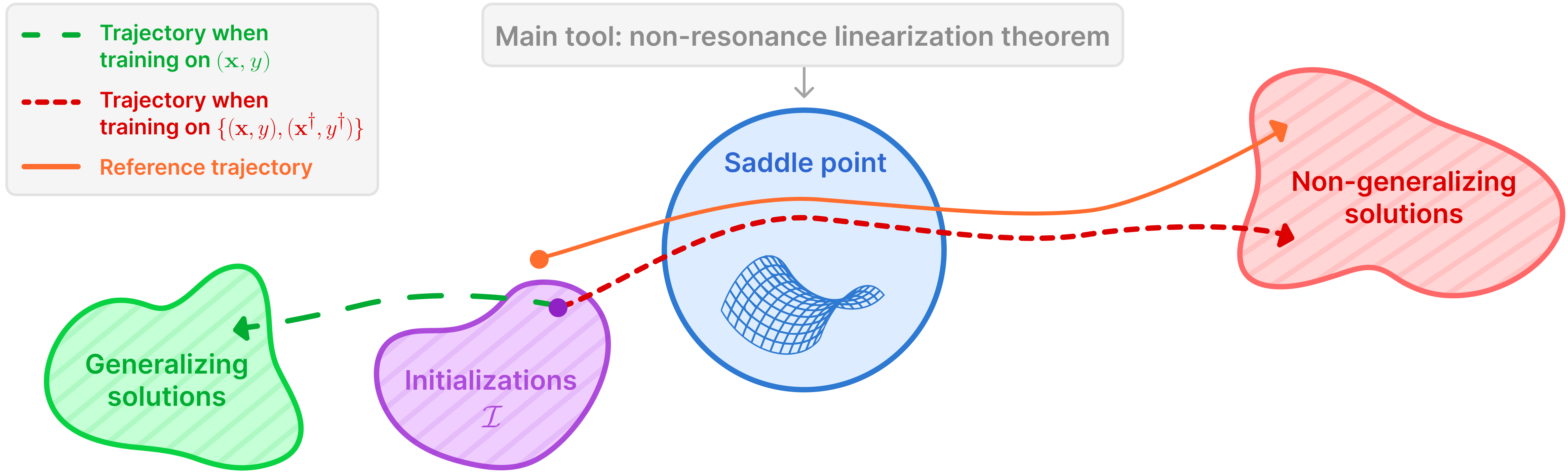}
\end{center}
\vspace{-2mm}
\caption{
Illustration of the main ideas behind the proof of \cref{result:poison}.
See proof sketch for an annotation.
}
\vspace{-4mm}
\label{fig:poison_proof}
\end{figure*}

As stated in the text following \cref{assump:simple_teacher,assump:fixed_B_C}, a sufficient condition for the student SSM to achieve low generalization errors is that one of the diagonal entries of its state transition matrix~$A$ be close to one while the rest are close to zero.
The proof shows that this condition is also necessary, thus the set labeled ``generalizing solutions'' in \cref{fig:poison_proof} is a neighborhood of one-hot assignments for the diagonal entries of~$A$.
Next, similarly to the proof of \cref{result:gen}, the dynamical characterization from \cref{sec:analysis:dynamic} is used to establish that when training on $( \xbf , y )$, under choices of~$\I$ (set of initializations) sufficiently concentrated around the origin, a gradient flow trajectory emanating from~$\I$ exhibits greedy low rank learning, and thus converges to the set of generalizing solutions---as illustrated in \cref{fig:poison_proof}.

For analyzing the behavior of gradient flow when training on $\{ ( \xbf , y ) , ( \xbf^\dagger , y^\dagger ) \}$, the proof makes use of the fact that the last elements of~$\xbf^\dagger$ are relatively large.
In accordance with the dynamical characterization from \cref{sec:analysis:dynamic} (see its interpretation in \cref{sec:analysis:dynamic:interp}), this fact implies that greedy low rank learning does not take place.
The proof identifies reference trajectories of gradient flow (when training on $\{ ( \xbf , y ) , ( \xbf^\dagger , y^\dagger ) \}$) that converge to non-generalizing solutions; one such reference trajectory is illustrated in \cref{fig:poison_proof}.
These reference trajectories emanate from near-zero initializations that cannot be included in~$\I$, since they lead gradient flow to converge to non-generalizing solutions even when training only on~$( \xbf , y )$.
However, the proof shows that $\I$ can consist of initializations near those of reference trajectories, since gradient flow emanating from such~$\I$:
\emph{(i)}~converges to a generalizing solution when training on~$( \xbf , y )$;
and
\emph{(ii)}~closely tracks a reference trajectory when training on $\{ ( \xbf , y ) , ( \xbf^\dagger , y^\dagger ) \}$, resulting in convergence to a non-generalizing solution---as illustrated~in~\cref{fig:poison_proof}.
This concludes the proof.

The main technical challenge faced by the proof lies in item~\emph{(ii)} above, namely, in establishing that when training on $\{ ( \xbf , y ) , ( \xbf^\dagger , y^\dagger ) \}$, an initialization near that of a reference trajectory leads gradient flow to closely track the reference trajectory.
Since the training loss is non-convex, gradient flow trajectories may diverge from one another exponentially fast.
Establishing that a reference trajectory is tracked thus requires sharp bounds on convergence times.
The crux of the challenge is to derive such bounds, as trajectories pass near saddle points, and a-priori, may not escape these saddle points sufficiently fast.
To show that saddle points are escaped swiftly, the proof employs an advanced tool from dynamical systems theory which may be of independent interest: a non-resonance linearization theorem~\citep{dedeceb9-28f3-38fd-9f54-7e01a94eac3c}.
Namely, rather than directly analyzing trajectories in the vicinity of a saddle point, the proof constructs linear approximations, and uses the non-resonance linearization theorem to show that the linear approximations are sufficiently accurate, which in turn implies that the trajectories escape the saddle point sufficiently fast.
The non-resonance linearization theorem requires the spectrum of the Hessian of the training loss to be free of certain algebraic dependencies known as resonances.
If these resonances are absent---which the proof shows to be the case---the non-resonance linearization theorem provides guarantees on the accuracy of linear approximations that are far better than guarantees attainable via standard smoothness arguments.
\end{proof}

	\begin{figure*}[t]
    \begin{center}
        \vspace{-13mm}
        \begin{center}
            \includegraphics[width=1\textwidth]{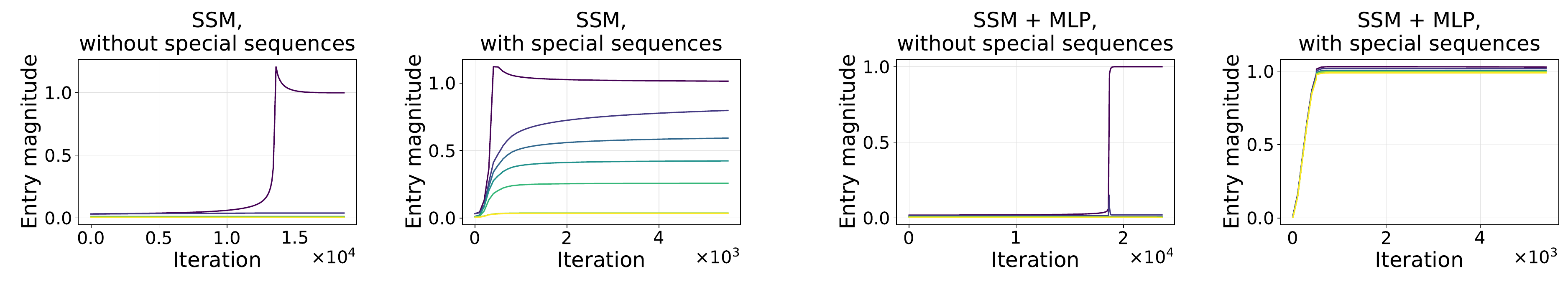}
        \end{center}
    \end{center}
\caption{
Demonstration of the dynamical characterization from \cref{sec:analysis:dynamic}: optimization of an SSM---trained individually or as part of a non-linear neural network---implicitly induces greedy learning of the diagonal entries of the state transition matrix~$A$ under some, but not all, choices of training sequences.
First (leftmost) plot shows the magnitudes of the entries of~$A$ throughout the iterations of gradient descent, in a case where an overparameterized student SSM of dimension $d = 10$ is trained individually on a training set labeled by a teacher SSM of dimension $d^* = 1$, and the training set does not include ``special'' sequences, \ie, sequences in which the last elements are relatively large.
Second plot portrays a scenario that is identical, except that special sequences are included in the training set.
Third and fourth plots adhere to the descriptions of first and second plots, respectively, except that the student SSM is trained along with a successive multi-layer perceptron (non-linear neural network), and the teacher SSM is followed by a (fixed) multi-layer perceptron.
Notice that, with and without a multi-layer perceptron, greedy learning takes place when special sequences are excluded, and does not take place when they are included.
For further experiments (including teacher SSMs of higher dimension, and additional variations) and visualizations (including the effective rank of~$A$, and additional quantities, throughout optimization) see \cref{app:exper:dynamic}.
For further details see \cref{app:details:dynamic}.
}
\vspace{-5mm}
\label{fig:dynamic}
\end{figure*}

\section{Experiments}
\label{sec:exper}

This section presents experiments corroborating our theory.
It is organized as follows.
\cref{sec:exper:dynamic} demonstrates the dynamical characterization we derived (in \cref{sec:analysis:dynamic}), showcasing that optimization of an SSM implicitly induces greedy low rank learning (a sufficient condition for generalization with a low dimensional teacher SSM) under some, but not all, choices of training sequences.
\cref{sec:exper:poison} then demonstrates the clean-label poisoning phenomenon we established (in \cref{sec:analysis:poison}), by showing that adding special cleanly labeled sequences to the training set of an SSM can completely ruin its generalization. 
\ifdefined\CAMREADY
    Code for reproducing our experiments can be found at \url{https://github.com/YoniSlutzky98/imp-bias-ssm-poison}. 
\fi

\subsection{Dynamical Characterization}
\label{sec:exper:dynamic}

As discussed in \cref{sec:analysis:dynamic:interp}, the dynamical characterization in \cref{result:dynamic} (\cref{eq:dynamic,eq:gamma,eq:delta_zeta}) implies that optimization of an SSM---trained individually or as part of a non-linear neural network---implicitly induces greedy learning of the diagonal entries of the state transition matrix~$A$ under some, but not all, choices of training sequences.
For example, if the last elements of each training sequence are small then greedy learning takes place, and if there are training sequences in which the last elements are relatively large then greedy learning may not take place. 
\cref{fig:dynamic} empirically demonstrates this, for a standalone SSM as well as one included in a non-linear neural network.
Further experiments are reported in \cref{app:exper:dynamic}.

\subsection{Clean-Label Poisoning}
\label{sec:exper:poison}

\cref{result:poison} proved existence of situations where clean-label poisoning of an SSM takes place, \ie, situations where:
\emph{(i)}~training a student SSM on sequences labeled by a low dimensional teacher SSM exhibits an implicit bias that leads to generalization;
and
\emph{(ii)}~adding to the training set special sequences, also labeled by the teacher SSM (\ie, that also have clean labels), entirely distorts the implicit bias, to an extent where generalization fails.
\cref{tab:poison_syn} empirically demonstrates clean-label poisoning of SSMs in three different teacher-student settings:
the setting of \cref{result:poison};
a standalone SSM setting beyond \cref{result:poison} (meaning it does not satisfy the assumptions of \cref{result:poison}, \eg, it includes multiple training sequences, and learning of the input matrix~$B$ and output matrix~$C$);
and
a setting where an SSM is part of a non-linear neural network.

\begin{table*}[t!]
\caption{
Demonstration of clean-label poisoning of SSMs in three different teacher-student settings:
the setting of \cref{result:poison};
a standalone SSM setting beyond \cref{result:poison} (meaning it does not satisfy the assumptions of \cref{result:poison}, \eg, it includes multiple training sequences, and learning of the input matrix~$B$ and output matrix~$C$);
and
a setting where an SSM is part of a non-linear neural network, \ie, is followed by a multi-layer perceptron.
In each setting, a high dimensional student is trained until convergence (namely, until the training loss is lower than~$0.01$), and data is generated (\ie, sequences are labeled) by a low dimensional teacher of the same architecture as the student.
Reported are generalization errors (each averaged over four random seeds) for two training sets per setting: 
\emph{(i)}~a training set that does not include ``special'' sequences, \ie, sequences in which the last elements are relatively large;
and
\emph{(ii)}~a training set obtained by adding to the former special sequences along with the (clean) labels assigned to them by the teacher.
In the first two settings (SSMs trained independently) generalization errors are measured via impulse responses, as defined in \cref{def:gen}.
In the third setting (SSM trained as part of non-linear neural network) generalization errors are measured using a held-out test set.
All reported generalization errors are normalized (scaled) such that a zero mapping corresponds to a value of one.
Notice that across all settings, special training sequences significantly deteriorate generalization.
For further experiments and details see \cref{app:exper:poison_syn,app:details:poison_syn}, respectively.
}
\begin{center}
	\small
    \begin{tabular}{lccc}
        \toprule
        \textbf{Setting} & \textbf{Without special sequences} & \textbf{With special sequences} \\
        \midrule
        Per \cref{result:poison} & $1.27\times 10^{-3}$ & $5.01\times 10^{-2}$ \\
        Standalone SSM beyond \cref{result:poison} & $0.194$ & $16.62$ \\
        SSM in non-linear neural network & $1.64\times 10^{-3}$ & $5.45\times 10^{-2}$ \\
        \bottomrule
    \end{tabular}
  \end{center}
    \label{tab:poison_syn}
\end{table*}

We further demonstrate clean-label poisoning of SSMs in real-world (non-synthetic) settings comprising SSM-based S4~\citep{s4}, Mamba-2~\citep{mamba2} and LRU~\citep{lru} neural networks trained on the CIFAR-10 dataset~\citep{cifar10}.
In these settings we do not have access to a teacher (\ie, to a ground truth labeling mapping), and accordingly, cleanly labeled poisonous examples are generated from CIFAR-10 examples by introducing human-imperceptible additive noise to input sequences, while keeping their labels intact.
The noise added to an input sequence is rendered to have relatively large last elements---in line with our theory\note{%
Our theory does not suggest that relatively large last elements are necessary for clean-label poisoning of SSMs in these real-world settings.
Accordingly, it may be possible for the noise to have a different pattern.
\label{foot:noise_diff_pattern}
}---using an adapted version of the Gradient Matching method from~\citet{geiping2020witches} (see \cref{app:details:poison_real} for details).
Adding to the training set cleanly labeled poisonous examples as described leads to significant deterioration in the generalization of the SSM-based neural networks---see~\cref{tab:poison_real}.

We believe the susceptibility of SSMs to clean-label poisoning goes far beyond the demonstrations herein.
In light of the growing prominence of SSMs, particularly in large language models, delineating this susceptibility, and developing methods for overcoming it, are of prime importance.

\begin{table*}[t!]
\caption{
Demonstration of clean-label poisoning of SSMs in real-world (non-synthetic) settings comprising SSM-based S4~\citep{s4}, Mamba-2~\citep{mamba2} and LRU~\citep{lru} neural networks trained on the (sequential variant of the) CIFAR-10 dataset~\citep{cifar10}.
Each row in the table summarizes an experiment with a different SSM-based neural network, where cleanly labeled poisonous examples are generated from CIFAR-10 examples by introducing human-imperceptible additive noise to input sequences, while keeping their labels intact. 
The noise added to an input sequence is rendered to have relatively large last elements---in line with our theory\footref{foot:noise_diff_pattern}---using an adapted version of the Gradient Matching method from~\citet{geiping2020witches}.
The first column in the table specifies the SSM-based neural network being poisoned.
The second column reports two cross-entropy losses on the CIFAR-10 test set: one obtained by training the neural network on original CIFAR-10 training examples, and the other obtained by training the neural network on the same examples along with cleanly labeled poisonous examples as described above.
The third column is identical to the second, except that it reports classification accuracies rather than cross-entropy losses.
Finally, the fourth column in the table reports the relative size of the last elements in the noise introduced for generating a cleanly labeled poisonous example, where the relative size is averaged over all such examples, and quantified by the Euclidean norm of the last $3\%$ of elements divided by the Euclidean norm of all elements.
Throughout the table, quantities are averaged over three random seeds.
Notice that in all experiments, the addition of cleanly labeled poisonous examples deteriorates generalization.
For further details see \cref{app:exper:poison_real,app:details:poison_real}.
}
\begin{center}
    	\fontsize{8.5}{9.5}\selectfont
    \begin{tabular}{lccc}
        \toprule
        \textbf{SSM-based NN} & \textbf{Loss without / with poisoning} & \textbf{Accuracy without / with poisoning} & \textbf{Noise tail size} \\
        \midrule
        S4~\citep{s4} & $0.739$ / $0.839$ & $78.8\%$ / $76.9\%$ & $0.406$ \\
        Mamba-2~\citep{mamba2} & $1.005$ / $1.178$ & $77.8\%$ / $75.1\%$ & $0.559$ \\        
        LRU~\citep{lru} & $0.892$ / $1.111$ & $72.4\%$ / $68.3\%$ & $0.469$ \\                
        \bottomrule
    \end{tabular}
\end{center}
\label{tab:poison_real}
\end{table*}

	\section{Limitations}
\label{sec:limit}

It is important to acknowledge several limitations of this paper.
First, while the dynamical characterization derived in \cref{result:dynamic} applies to a broad setting (\eg, it allows the SSM to be embedded in a non-linear neural network), the proof of generalization in \cref{result:gen} is restricted to a more specific setting (\eg, it requires \cref{assump:simple_teacher,assump:fixed_B_C}), and the proof of clean-label poisoning in \cref{result:poison} is restricted to an even more specific setting (\eg, it requires the original training set to include a single example, in addition to \cref{assump:simple_teacher,assump:fixed_B_C}).
\cref{app:extensions} extends the settings of \cref{result:gen,result:poison}, but even under such extension the settings remain fairly restricted.
A second limitation of this paper relates to the guarantees provided by \cref{result:gen,result:poison}: each ensures existence of a set~$\I$ of initializations with which a desired result holds, and while $\I$ has positive volume (it is open), this volume may be low.
Third, except for the dynamical characterization derived in \cref{result:dynamic} and the real-world (non-synthetic) experiments reported in \cref{sec:exper:poison}, our theory and experiments pertain to near-zero initialization, which is common for neural networks~\citep{xavier,he}, but does not account for modern SSM initializations designed to alleviate vanishing gradients~\citep{hippo,gu2022parameterization}.
Fourth, due to vanishing gradients, maintaining reasonable run-times in experiments with near-zero initialization necessitated a relatively small scale (in terms of, \eg, SSM dimension and training sequence length).
Addressing these limitations is an important set of directions for future work.

	\section{Conclusion}
\label{sec:conclusion}

The proliferation of SSMs, particularly in large language models, renders it crucial to understand their implicit bias.
In this paper, we revisited prior beliefs by which the implicit bias of SSMs leads to generalization when data is generated by a low dimensional teacher.
We formally proved that, in stark contrast to these beliefs, there exist special examples whose addition to training data can completely distort the implicit bias, to a point where generalization with a low dimensional teacher fails.
This failure occurs despite the special examples being labeled by the teacher, \ie, having clean labels!
We corroborated our theory via experiments spanning a wide range of settings: from those analyzed theoretically, to real-world settings comprising prominent SSM-based neural networks.
The experiments confirmed that generalization in SSMs can be disrupted by cleanly labeled training examples, \ie, that SSMs are susceptible to clean-label poisoning.

Our results point to significant challenges in both theory and practice of SSMs.
On the theoretical front, our results suggest that generalization in SSMs cannot be explained via the traditional view of implicit complexity minimization~\citep{gunasekar2017implicit,soudry2018implicit,yun2020unifying}, or through the nascent view by which generalization is typical~\citep{valle-perez2019deep,JMLR:v22:20-676,mingard2023deep,buzaglo2024uniform,alexander2025neural}.
Indeed, if generalization in SSMs was due to the implicit bias finding a solution which, among all solutions fitting training data, minimizes some (data-independent) complexity measure, then training with additional cleanly labeled examples would not change the solution found, and thus would not disrupt generalization.\note{%
Prior work argued that generalization in different neural networks cannot be explained via implicit complexity minimization \citep{vardi2021implicit}, but to our knowledge, such results do not apply to SSMs.
}
Moreover, if generalization in SSMs was due to typicality, \ie, to the majority of solutions fitting training data being ones that generalize, then additional cleanly labeled training examples would only improve generalization, as they enhance the dominance of such majority.
We believe fundamentally new approaches may be needed in order to theoretically pinpoint the source of generalization in SSMs.

Moving to the practical side, the fact that SSMs are susceptible to clean-label poisoning raises significant concerns regarding safety, robustness and reliability.
For example, large language models, which are becoming more and more reliant on SSMs \citep{glorioso2024zamba,pioro2024moe,alonso2024state}, are often fine-tuned via supervised learning on public internet data \citep{lhoest-etal-2021-datasets,workshop2022bloom,touvron2023llama}, and in this process, it may be easy for a malicious actor to add cleanly labeled training examples, \eg, by adding unlabeled training examples prior to label generation.
We believe that delineating the susceptibility of SSMs to clean-label poisoning, and developing methods for overcoming this susceptibility, are critical research directions to pursue.

Our results suggest that \emph{learning dynamics} may facilitate progress in the foregoing research directions.
Indeed, in settings covered by our theory, clean-label poisoning correlates with a dynamical factor~\smash{$\gamma^{( 0 )}( \cdot )$} (defined in \cref{result:dynamic}) being large during learning, \ie, during optimization (see \cref{sec:analysis:dynamic:interp} as well as \cref{fig:gammas,fig:gammas_additional,fig:gammas_longer,fig:gammas_longer_additional} in \cref{app:exper}).
By monitoring~\smash{$\gamma^{( 0 )}( \cdot )$} throughout learning, one may detect clean-label poisoning and take responsive action (\eg, prevent deployment of a poisoned model).
Moreover, it is plausible that by adding to the training loss a penalty term that encourages~\smash{$\gamma^{( 0 )}( \cdot )$} to be small, one may improve resilience to clean-label poisoning.
Investigating such methods, and extending them to settings beyond our theory, may promote safer, more robust and more reliable deployment of SSMs.

	\ifdefined\NEURIPS
		\begin{ack}
			We thank Eshbal Hezroni for assistance in preparing the illustrative figure, and Itamar Zimerman for contributions to the real-world experiments.
This work was supported by the European Research Council (ERC) grant NN4C 101164614, a Google Research Scholar Award, a Google Research Gift, Meta, the Yandex Initiative in Machine Learning, the Israel Science Foundation (ISF) grant 1780/21, the Tel Aviv University Center for AI and Data Science, the Adelis Research Fund for Artificial Intelligence, Len Blavatnik and the Blavatnik Family Foundation, and Amnon and Anat Shashua.
NR is supported in part by the Zuckerman STEM Leadership Program.
		\end{ack}
	\else
		\newcommand{\ack}{}
	\fi
	\ifdefined\ARXIV
		\section*{Acknowledgements}
		\ack
	\else
		\ifdefined\COLT
			\acks{\ack}
		\else
			\ifdefined\CAMREADY
				\ifdefined\ICLR
					\newcommand*{\subsuback}{}
				\fi
				\ifdefined\NEURIPS
				\else
					\section*{Acknowledgements}
					\ack
				\fi
			\fi
		\fi
	\fi
	

	\section*{References}
	{\small
		\ifdefined\ICML
			\bibliographystyle{icml2025}
		\else
			\bibliographystyle{plainnat}
		\fi
		\bibliography{refs}

@article{arora2019implicit,
  title={Implicit regularization in deep matrix factorization},
  author={Arora, Sanjeev and Cohen, Nadav and Hu, Wei and Luo, Yuping},
  journal={Advances in neural information processing systems},
  volume={32},
  year={2019}
}

@article{gupta2021pathlengthboundsgradient,
      author  = {Chirag Gupta and Sivaraman Balakrishnan and Aaditya Ramdas},
      title   = {Path Length Bounds for Gradient Descent and Flow},
      journal = {Journal of Machine Learning Research},
      year    = {2021},
      volume  = {22},
      number  = {68},
      pages   = {1--63} 
}

@article{dedeceb9-28f3-38fd-9f54-7e01a94eac3c,
      title = {Smooth linearization near a fixed point},
      author = {Sell, George R},
      journal = {American Journal of Mathematics},
      pages = {1035--1091},
      year = {1985},
      publisher = {JSTOR}
}

@inproceedings{jin2017escapesaddlepointsefficiently,
      title = {How to escape saddle points efficiently},
      author = {Jin, Chi and Ge, Rong and Netrapalli, Praneeth and Kakade, Sham M and Jordan, Michael I},
      booktitle = {International Conference on Machine Learning (ICML)},
      pages = {1724--1732},
      year = {2017},
      organization= {PMLR}
}

@article{c680dbbe-119c-31e0-a97a-d790b679674f,
      title = {Note on the Derivatives with Respect to a Parameter of the Solutions of a System of Differential Equations},
      author = {T. H. Gronwall},
      journal = {Annals of Mathematics},
      year = {1919},
      volume = {20},
      pages = {292}
}

@book{teschl2024ordinary,
      title = {Ordinary differential equations and dynamical systems},
      author = {Teschl, Gerald},
      volume = {140},
      year = {2024},
      publisher = {American Mathematical Society}
}

@book{mean_value_thm,
      author = {Sahoo, P K and Riedel, T},
      title = {Mean Value Theorems and Functional Equations},
      publisher = {WORLD SCIENTIFIC},
      year = {1998},
      doi = {10.1142/3857}
}

@inbook{Horn_Johnson_1985, 
      place = {Cambridge}, 
      title = {Hermitian and symmetric matrices}, 
      booktitle = {Matrix Analysis}, 
      publisher = {Cambridge University Press}, 
      author = {Horn, Roger A. and Johnson, Charles R.}, 
      year = {1985}, 
      pages = {167–256}
}

@article{s4,
      title = {Efficiently modeling long sequences with structured state spaces},
      author = {Gu, Albert and Goel, Karan and R{\'e}, Christopher},
      journal = {arXiv preprint arXiv:2111.00396},
      year = {2021} 
}

@article{mamba,
      title = {Mamba: Linear-time sequence modeling with selective state spaces},
      author = {Gu, Albert and Dao, Tri},
      journal = {arXiv preprint arXiv:2312.00752},
      year = {2023} 
}

@inproceedings{lru,
      title = {Resurrecting recurrent neural networks for long sequences},
      author = {Orvieto, Antonio and Smith, Samuel L and Gu, Albert and Fernando, Anushan and Gulcehre, Caglar and Pascanu, Razvan and De, Soham},
      booktitle = {International Conference on Machine Learning (ICML)},
      pages = {26670--26698},
      year = {2023},
      organization= {PMLR}
}

@inproceedings{mega,
      title = {Mega: Moving Average Equipped Gated Attention},
      author = {Ma, Xuezhe and Zhou, Chunting and Kong, Xiang and He, Junxian and Gui, Liangke and Neubig, Graham and May, Jonathan and Zettlemoyer, Luke},
      booktitle = {International Conference on Learning Representations (ICLR)},
      year = {2023}
}

@inproceedings{s5,
      title = {Simplified state space layers for sequence modeling},
      author = {Smith, Jimmy TH and Warrington, Andrew and Linderman, Scott W},
      booktitle = {International Conference on Learning Representations (ICLR)},
      year = {2023}
}

@inproceedings{hyena,
      title = {Hyena hierarchy: Towards larger convolutional language models},
      author = {Poli, Michael and Massaroli, Stefano and Nguyen, Eric and Fu, Daniel Y and Dao, Tri and Baccus, Stephen and Bengio, Yoshua and Ermon, Stefano and R{\'e}, Christopher},
      booktitle = {International Conference on Machine Learning (ICML)},
      pages = {28043--28078},
      year = {2023},
      organization= {PMLR} 
}

@article{mamba2,
      title = {Transformers are SSMs: Generalized models and efficient algorithms through structured state space duality},
      author = {Dao, Tri and Gu, Albert},
      journal = {arXiv preprint arXiv:2405.21060},
      year = {2024} 
}

@inproceedings{cohen-karlik2022learning,
      title = {On the implicit bias of gradient descent for temporal extrapolation},
      author = {Cohen-Karlik, Edo and David, Avichai Ben and Cohen, Nadav and Globerson, Amir},
      booktitle = {International Conference on Artificial Intelligence and Statistics},
      pages = {10966--10981},
      year = {2022},
      organization= {PMLR}
}

@inproceedings{cohen-karlik2023learning,
      title = {Learning Low Dimensional State Spaces with Overparameterized Recurrent Neural Nets},
      author = {Cohen-Karlik, Edo and Menuhin-Gruman, Itamar and Giryes, Raja and Cohen, Nadav and Globerson, Amir},
      booktitle = {International Conference on Learning Representations (ICLR)},
      year = {2023}
}

@article{metapoison,
      title = {Metapoison: Practical general-purpose clean-label data poisoning},
      author = {Huang, W Ronny and Geiping, Jonas and Fowl, Liam and Taylor, Gavin and Goldstein, Tom},
      journal = {Advances in Neural Information Processing Systems (NeurIPS)},
      volume = {33},
      pages = {12080--12091},
      year = {2020}
}

@article{poisonfrogs,
      title = {Poison frogs! targeted clean-label poisoning attacks on neural networks},
      author = {Shafahi, Ali and Huang, W Ronny and Najibi, Mahyar and Suciu, Octavian and Studer, Christoph and Dumitras, Tudor and Goldstein, Tom},
      journal = {Advances in neural information processing systems (NeurIPS)},
      volume = {31},
      year = {2018}
}

@misc{gu2022parameterization,
      title = {On the parameterization and initialization of diagonal state space models},
      author = {Gu, Albert and Goel, Karan and Gupta, Ankit and R{\'e}, Christopher},
      journal = {Advances in Neural Information Processing Systems (NeurIPS)},
      volume = {35},
      pages = {35971--35983},
      year = {2022}
}

@misc{gupta2022diagonal,
      title = {Diagonal state spaces are as effective as structured state spaces},
      author = {Gupta, Ankit and Gu, Albert and Berant, Jonathan},
      journal = {Advances in Neural Information Processing Systems (NeurIPS)},
      volume = {35},
      pages = {22982--22994},
      year = {2022} 
}

@article{elkabetz2021,
      title = {Continuous vs. discrete optimization of deep neural networks},
      author = {Elkabetz, Omer and Cohen, Nadav},
      journal = {Advances in Neural Information Processing Systems (NeurIPS)},
      volume = {34},
      pages = {4947--4960},
      year = {2021}
}

@book{Shalev-Shwartz_Ben-David_2014, 
      place = {Cambridge}, 
      title = {Understanding Machine Learning: From Theory to Algorithms}, 
      publisher = {Cambridge University Press}, 
      author = {Shalev-Shwartz, Shai and Ben-David, Shai}, 
      year = {2014}
}

@inproceedings{zhu2019transferable,
      title = {Transferable clean-label poisoning attacks on deep neural nets},
      author = {Zhu, Chen and Huang, W Ronny and Li, Hengduo and Taylor, Gavin and Studer, Christoph and Goldstein, Tom},
      booktitle = {International conference on machine learning (ICML)},
      pages = {7614--7623},
      year = {2019},
      organization= {PMLR}
}

@inproceedings{aghakhani2021bullseye,
      title = {Bullseye polytope: A scalable clean-label poisoning attack with improved transferability},
      author = {Aghakhani, Hojjat and Meng, Dongyu and Wang, Yu-Xiang and Kruegel, Christopher and Vigna, Giovanni},
      booktitle = {2021 IEEE European symposium on security and privacy (EuroS\&P)},
      pages = {159--178},
      year = {2021},
      organization= {IEEE}
}

@inproceedings{zhao2020clean,
      title = {Clean-label backdoor attacks on video recognition models},
      author = {Zhao, Shihao and Ma, Xingjun and Zheng, Xiang and Bailey, James and Chen, Jingjing and Jiang, Yu-Gang},
      booktitle = {Proceedings of the IEEE/CVF conference on computer vision and pattern recognition},
      pages = {14443--14452},
      year = {2020}
}

@inproceedings{suya2021model,
      title = {Model-targeted poisoning attacks with provable convergence},
      author = {Suya, Fnu and Mahloujifar, Saeed and Suri, Anshuman and Evans, David and Tian, Yuan},
      booktitle = {International Conference on Machine Learning (ICML)},
      pages = {10000--10010},
      year = {2021},
      organization= {PMLR}
}

@inproceedings{blum2021robust,
      title = {Robust learning under clean-label attack},
      author = {Blum, Avrim and Hanneke, Steve and Qian, Jian and Shao, Han},
      booktitle = {Conference on Learning Theory (COLT)},
      pages = {591--634},
      year = {2021},
      organization= {PMLR}
}

@inproceedings{gao2021learning,
  title = {Learning and Certification under Instance-targeted Poisoning},
  author = {Ji Gao and Amin Karbasi and Mohammad Mahmoody},
  booktitle = {Conference on Uncertainty in Artificial Intelligence},
  year = {2021}
}

@article{1099821,
      author = {Silverman, L.},
      journal = {IEEE Transactions on Automatic Control}, 
      title = {Realization of linear dynamical systems}, 
      year = {1971},
      volume = {16},
      number = {6},
      pages = {554-567},
      keywords = {Linear systems;Stability;Constraint theory;Control systems;Filtering theory;Observability;Sufficient conditions},
      doi= {10.1109/TAC.1971.1099821}
}

@article{lim2021noisy,
      title = {Noisy recurrent neural networks},
      author = {Lim, Soon Hoe and Erichson, N Benjamin and Hodgkinson, Liam and Mahoney, Michael W},
      journal = {Advances in Neural Information Processing Systems (NeurIPS)},
      volume = {34},
      pages = {5124--5137},
      year = {2021}
}

@inproceedings{emami2021implicit,
      title = {Implicit bias of linear rnns},
      author = {Emami, Melikasadat and Sahraee-Ardakan, Mojtaba and Pandit, Parthe and Rangan, Sundeep and Fletcher, Alyson K},
      booktitle = {International Conference on Machine Learning (ICML)},
      pages = {2982--2992},
      year = {2021},
      organization= {PMLR}
}

@article{simpkins2012system,
  author = {Simpkins, Alex},
  journal = {IEEE}, 
  title = {System Identification: Theory for the User, 2nd Edition (Ljung, L.; 1999) [On the Shelf]}, 
  year = {2012},
  volume = {19},
  number = {2},
  pages = {95--96},
  doi= {10.1109/MRA.2012.2192817}
}

@inproceedings{Mahloujifar2019TheCO,
      title = {The curse of concentration in robust learning: Evasion and poisoning attacks from concentration of measure},
      author = {Mahloujifar, Saeed and Diochnos, Dimitrios I and Mahmoody, Mohammad},
      booktitle = {Proceedings of the AAAI Conference on Artificial Intelligence},
      volume = {33},
      pages = {4536--4543},
      year = {2019}
}

@inproceedings{mahloujifar2018can,
      title = {Can Adversarially Robust Learning Leverage Computational Hardness?},
      author = {Saeed Mahloujifar and Mohammad Mahmoody},
      booktitle = {International Conference on Algorithmic Learning Theory},
      year = {2018}
}

@book{alma996974543408496,
      author = {Sontag, Eduardo D.},
      address = {New York},
      booktitle = {Mathematical control theory : deterministic finite dimensional systems},
      isbn = {0387973664},
      keywords = {rétroaction},
      language = {eng},
      lccn = {90199990},
      year = {1990},
      publisher = {Springer-Verlag},
      series = {Texts in applied mathematics ; 6},
      title = {Mathematical control theory : deterministic finite dimensional systems }
}

@book{10.5555/248702,
      author = {Oppenheim, Alan V. and Willsky, Alan S. and Nawab, S. Hamid},
      title = {Signals \& systems (2nd ed.)},
      year = {1996},
      isbn = {0138147574},
      publisher = {Prentice-Hall, Inc.},
      address = {USA}
}

@article{neyshabur2017implicit,
      title = {Implicit regularization in deep learning},
      author = {Neyshabur, Behnam},
      journal = {arXiv preprint arXiv:1709.01953},
      year = {2017}
}

@misc{chollet2015keras,
      title = {keras},
      author = {Chollet, Fran{\c{c}}ois and others},
      year = {2015}
}

@article{2020SciPy-NMeth,
      title = {SciPy 1.0: fundamental algorithms for scientific computing in Python},
      author = {Virtanen, Pauli and Gommers, Ralf and Oliphant, Travis E and Haberland, Matt and Reddy, Tyler and Cournapeau, David and Burovski, Evgeni and Peterson, Pearu and Weckesser, Warren and Bright, Jonathan and others},
      journal = {Nature methods},
      volume = {17},
      number = {3},
      pages = {261--272},
      year = {2020},
      publisher = {Nature Publishing Group}
}

@article{adam,
      title = {Adam: A method for stochastic optimization},
      author = {Kingma, Diederik P},
      journal = {arXiv preprint arXiv:1412.6980},
      year = {2014} 
}

@inproceedings{razin2022implicitregularizationhierarchicaltensor,
      title = {Implicit regularization in hierarchical tensor factorization and deep convolutional neural networks},
      author = {Razin, Noam and Maman, Asaf and Cohen, Nadav},
      booktitle = {International Conference on Machine Learning (ICML)},
      pages = {18422--18462},
      year = {2022},
      organization= {PMLR} 
}

@inproceedings{odepack,
      added-at = {2017-06-29T07:13:07.000+0200},
      address = {Amsterdam},
      author = {Hindmarsh, A. C.},
      biburl = {https://www.bibsonomy.org/bibtex/22f5c64979228adb11f4d9648e210d07b/gdmcbain},
      booktitle = {Scientific Computing},
      citeulike-article-id = {2441646},
      editor = {Stepleman, R. S.},
      interhash = {62f1f8bd95ba7f8e90c2def77fce2392},
      intrahash = {2f5c64979228adb11f4d9648e210d07b},
      keywords = {usyd 65l05-odes-ivps 65-04-numerical-analysis-software-source-code},
      pages = {55--64},
      posted-at = {2008-02-28 10:10:30},
      priority = {2},
      publisher = {North-Holland},
      timestamp = {2020-08-24T06:59:39.000+0200},
      title = {{ODEPACK}, a Systematized Collection of {ODE} Solvers},
      year = 1983
}

@article{zhang2021understanding,
      title = {Understanding deep learning (still) requires rethinking generalization},
      author = {Zhang, Chiyuan and Bengio, Samy and Hardt, Moritz and Recht, Benjamin and Vinyals, Oriol},
      journal = {Communications of the ACM},
      volume = {64},
      number = {3},
      pages = {107--115},
      year = {2021},
      publisher = {ACM New York, NY, USA}
}

@article{chatterjee2022generalization,
      title = {On the generalization mystery in deep learning},
      author = {Chatterjee, Satrajit and Zielinski, Piotr},
      journal = {arXiv preprint arXiv:2203.10036},
      year = {2022}
}

@article{wang2022pretraining,
      title = {Pretraining without attention},
      author = {Wang, Junxiong and Yan, Jing Nathan and Gu, Albert and Rush, Alexander M},
      journal = {arXiv preprint arXiv:2212.10544},
      year = {2022}
}

@article{soudry2018implicit,
      title = {The implicit bias of gradient descent on separable data},
      author = {Soudry, Daniel and Hoffer, Elad and Nacson, Mor Shpigel and Gunasekar, Suriya and Srebro, Nathan},
      journal = {Journal of Machine Learning Research},
      volume = {19},
      number = {70},
      pages = {1--57},
      year = {2018}
}

@article{gunasekar2018implicit,
      title = {Implicit bias of gradient descent on linear convolutional networks},
      author = {Gunasekar, Suriya and Lee, Jason D and Soudry, Daniel and Srebro, Nati},
      journal = {Advances in neural information processing systems (NeurIPS)},
      volume = {31},
      year = {2018}
}

@article{li2020towards,
      title = {Towards resolving the implicit bias of gradient descent for matrix factorization: Greedy low-rank learning},
      author = {Li, Zhiyuan and Luo, Yuping and Lyu, Kaifeng},
      journal = {arXiv preprint arXiv:2012.09839},
      year = {2020}
}

@article{sun2021implicit,
      title = {Implicit Greedy Rank Learning in Autoencoders via Overparameterized Linear Networks},
      author = {Sun, Shih-Yu and Thilak, Vimal and Littwin, Etai and Saremi, Omid and Susskind, Joshua M},
      journal = {arXiv preprint arXiv:2107.01301},
      year = {2021}
}

@inproceedings{razin2021implicit,
      title = {Implicit regularization in tensor factorization},
      author = {Razin, Noam and Maman, Asaf and Cohen, Nadav},
      booktitle = {International Conference on Machine Learning (ICML)},
      pages = {8913--8924},
      year = {2021},
      organization= {PMLR}
}

@article{razin2020implicitregularizationdeeplearning,
      title = {Implicit regularization in deep learning may not be explainable by norms},
      author = {Razin, Noam and Cohen, Nadav},
      journal = {Advances in neural information processing systems (NeurIPS)},
      volume = {33},
      pages = {21174--21187},
      year = {2020}
}

@article{liu2023outofdistributiongeneralizationsurvey,
      title = {Towards out-of-distribution generalization: A survey},
      author = {Liu, Jiashuo and Shen, Zheyan and He, Yue and Zhang, Xingxuan and Xu, Renzhe and Yu, Han and Cui, Peng},
      journal = {arXiv preprint arXiv:2108.13624},
      year = {2021} 
}

@inproceedings{xavier,
      title = {Understanding the difficulty of training deep feedforward neural networks},
      author = {Glorot, Xavier and Bengio, Yoshua},
      booktitle = {Proceedings of the thirteenth international conference on artificial intelligence and statistics},
      pages = {249--256},
      year = {2010},
      organization= {JMLR Workshop and Conference Proceedings}
}

@inproceedings{he,
      title = {Delving deep into rectifiers: Surpassing human-level performance on imagenet classification},
      author = {He, Kaiming and Zhang, Xiangyu and Ren, Shaoqing and Sun, Jian},
      booktitle = {Proceedings of the IEEE international conference on computer vision},
      pages = {1026--1034},
      year = {2015}
}

@misc{hippo,
      title = {Hippo: Recurrent memory with optimal polynomial projections},
      author = {Gu, Albert and Dao, Tri and Ermon, Stefano and Rudra, Atri and R{\'e}, Christopher},
      journal = {Advances in neural information processing systems (NeurIPS)},
      volume = {33},
      pages = {1474--1487},
      year = {2020}
}

@inproceedings{vardi2021implicit,
      title = {Implicit regularization in relu networks with the square loss},
      author = {Vardi, Gal and Shamir, Ohad},
      booktitle = {Conference on Learning Theory (COLT)},
      pages = {4224--4258},
      year = {2021},
      organization= {PMLR}
}

@article{yun2020unifying,
      title = {A unifying view on implicit bias in training linear neural networks},
      author = {Yun, Chulhee and Krishnan, Shankar and Mobahi, Hossein},
      journal = {arXiv preprint arXiv:2010.02501},
      year = {2020}
}

@article{gunasekar2017implicit,
      title = {Implicit regularization in matrix factorization},
      author = {Gunasekar, Suriya and Woodworth, Blake E and Bhojanapalli, Srinadh and Neyshabur, Behnam and Srebro, Nati},
      journal = {Advances in neural information processing systems (NeurIPS)},
      volume = {30},
      year = {2017}
}

@article{buzaglo2024uniform,
      title = {How Uniform Random Weights Induce Non-uniform Bias: Typical Interpolating Neural Networks Generalize with Narrow Teachers},
      author = {Buzaglo, Gon and Harel, Itamar and Nacson, Mor Shpigel and Brutzkus, Alon and Srebro, Nathan and Soudry, Daniel},
      journal = {arXiv preprint arXiv:2402.06323},
      year = {2024}
}

@article{mingard2023deep,
      title = {Do deep neural networks have an inbuilt Occam's razor?},
      author = {Mingard, Chris and Rees, Henry and Valle-P{\'e}rez, Guillermo and Louis, Ard A},
      journal = {arXiv preprint arXiv:2304.06670},
      year = {2023}
}

@article{JMLR:v22:20-676,
      title = {Is SGD a Bayesian sampler? Well, almost},
      author = {Mingard, Chris and Valle-P{\'e}rez, Guillermo and Skalse, Joar and Louis, Ard A},
      journal = {Journal of Machine Learning Research},
      volume = {22},
      number = {79},
      pages = {1--64},
      year = {2021}
}

@article{glorioso2024zamba,
      title = {Zamba: A Compact 7B SSM Hybrid Model},
      author = {Glorioso, Paolo and Anthony, Quentin and Tokpanov, Yury and Whittington, James and Pilault, Jonathan and Ibrahim, Adam and Millidge, Beren},
      journal = {arXiv preprint arXiv:2405.16712},
      year = {2024}
}

@article{pioro2024moe,
      title = {Moe-mamba: Efficient selective state space models with mixture of experts},
      author = {Pi{\'o}ro, Maciej and Ciebiera, Kamil and Kr{\'o}l, Krystian and Ludziejewski, Jan and Jaszczur, Sebastian},
      journal = {arXiv preprint arXiv:2401.04081},
      year = {2024}
}

@article{alonso2024state,
      title = {State space models as foundation models: A control theoretic overview},
      author = {Alonso, Carmen Amo and Sieber, Jerome and Zeilinger, Melanie N},
      journal = {arXiv preprint arXiv:2403.16899},
      year = {2024}
}

@article{ssm_zero_init,
      title = {State-Free Inference of State-Space Models: The Transfer Function Approach},
      author = {Parnichkun, Rom N and Massaroli, Stefano and Moro, Alessandro and Smith, Jimmy TH and Hasani, Ramin and Lechner, Mathias and An, Qi and R{\'e}, Christopher and Asama, Hajime and Ermon, Stefano and others},
      journal = {arXiv preprint arXiv:2405.06147},
      year = {2024} 
}

@article{lhoest-etal-2021-datasets,
      title={Datasets: A community library for natural language processing},
      author={Lhoest, Quentin and Del Moral, Albert Villanova and Jernite, Yacine and Thakur, Abhishek and Von Platen, Patrick and Patil, Suraj and Chaumond, Julien and Drame, Mariama and Plu, Julien and Tunstall, Lewis and others},
      journal={arXiv preprint arXiv:2109.02846},
      year={2021}
}

@article{touvron2023llama,
      title={Llama: Open and efficient foundation language models},
      author={Touvron, Hugo and Lavril, Thibaut and Izacard, Gautier and Martinet, Xavier and Lachaux, Marie-Anne and Lacroix, Timoth{\'e}e and Rozi{\`e}re, Baptiste and Goyal, Naman and Hambro, Eric and Azhar, Faisal and others},
      journal={arXiv preprint arXiv:2302.13971},
      year={2023}
}

@article{workshop2022bloom,
      title={Bloom: A 176b-parameter open-access multilingual language model},
      author={Le Scao, Teven and Fan, Angela and Akiki, Christopher and Pavlick, Ellie and Ili{\'c}, Suzana and Hesslow, Daniel and Castagn{\'e}, Roman and Luccioni, Alexandra Sasha and Yvon, Fran{\c{c}}ois and Gall{\'e}, Matthias and others},
      year={2023}
}

@article{EffectiveRank,
  title={The effective rank: A measure of effective dimensionality},
  author={Olivier Roy and Martin Vetterli},
  journal={2007 15th European Signal Processing Conference},
  year={2007}
}

@article{geiping2020witches,
  title={Witches' brew: Industrial scale data poisoning via gradient matching},
  author={Geiping, Jonas and Fowl, Liam and Huang, W Ronny and Czaja, Wojciech and Taylor, Gavin and Moeller, Michael and Goldstein, Tom},
  journal={arXiv preprint arXiv:2009.02276},
  year={2020}
}

@article{paszke2019pytorch,
  title={Pytorch: An imperative style, high-performance deep learning library},
  author={Paszke, Adam and Gross, Sam and Massa, Francisco and Lerer, Adam and Bradbury, James and Chanan, Gregory and Killeen, Trevor and Lin, Zeming and Gimelshein, Natalia and Antiga, Luca and others},
  journal={Advances in neural information processing systems},
  volume={32},
  year={2019}
}

@article{cifar10,
  title={Learning multiple layers of features from tiny images},
  author={Krizhevsky, Alex and Hinton, Geoffrey and others},
  year={2009},
  publisher={Toronto, ON, Canada}
}

@inproceedings{valle-perez2019deep,
  title={Deep learning generalizes because the parameter-function map is biased towards simple functions},
  author={Valle-P{\'e}rez, Guillermo and Louis, Ard A and Camargo, Chico Q},
  booktitle={7th International Conference on Learning Representations, ICLR 2019},
  year={2019}
}

@article{saxe2013exact,
  title={Exact solutions to the nonlinear dynamics of learning in deep linear neural networks},
  author={Saxe, Andrew M and McClelland, James L and Ganguli, Surya},
  journal={arXiv preprint arXiv:1312.6120},
  year={2013}
}

@article{lampinen2018analytic,
  title={An analytic theory of generalization dynamics and transfer learning in deep linear networks},
  author={Lampinen, Andrew K and Ganguli, Surya},
  journal={arXiv preprint arXiv:1809.10374},
  year={2018}
}

@article{neyshabur2014search,
  title={In search of the real inductive bias: On the role of implicit regularization in deep learning},
  author={Neyshabur, Behnam and Tomioka, Ryota and Srebro, Nathan},
  journal={arXiv preprint arXiv:1412.6614},
  year={2014}
}

@article{ji2020directional,
  title={Directional convergence and alignment in deep learning},
  author={Ji, Ziwei and Telgarsky, Matus},
  journal={Advances in Neural Information Processing Systems},
  volume={33},
  pages={17176--17186},
  year={2020}
}

@inproceedings{woodworth2020kernel,
  title={Kernel and rich regimes in overparametrized models},
  author={Woodworth, Blake and Gunasekar, Suriya and Lee, Jason D and Moroshko, Edward and Savarese, Pedro and Golan, Itay and Soudry, Daniel and Srebro, Nathan},
  booktitle={Conference on Learning Theory},
  pages={3635--3673},
  year={2020},
  organization={PMLR}
}

@inproceedings{chizat2020implicit,
  title={Implicit bias of gradient descent for wide two-layer neural networks trained with the logistic loss},
  author={Chizat, Lenaic and Bach, Francis},
  booktitle={Conference on learning theory},
  pages={1305--1338},
  year={2020},
  organization={PMLR}
}

@inproceedings{azulay2021implicit,
  title={On the implicit bias of initialization shape: Beyond infinitesimal mirror descent},
  author={Azulay, Shahar and Moroshko, Edward and Nacson, Mor Shpigel and Woodworth, Blake E and Srebro, Nathan and Globerson, Amir and Soudry, Daniel},
  booktitle={International Conference on Machine Learning},
  pages={468--477},
  year={2021},
  organization={PMLR}
}

@article{lyu2019gradient,
  title={Gradient descent maximizes the margin of homogeneous neural networks},
  author={Lyu, Kaifeng and Li, Jian},
  journal={arXiv preprint arXiv:1906.05890},
  year={2019}
}

@inproceedings{min2021explicit,
  title={On the explicit role of initialization on the convergence and implicit bias of overparametrized linear networks},
  author={Min, Hancheng and Tarmoun, Salma and Vidal, Ren{\'e} and Mallada, Enrique},
  booktitle={International Conference on Machine Learning},
  pages={7760--7768},
  year={2021},
  organization={PMLR}
}

@article{lyu2021gradient,
  title={Gradient descent on two-layer nets: Margin maximization and simplicity bias},
  author={Lyu, Kaifeng and Li, Zhiyuan and Wang, Runzhe and Arora, Sanjeev},
  journal={Advances in Neural Information Processing Systems},
  volume={34},
  pages={12978--12991},
  year={2021}
}

@article{damian2022self,
  title={Self-stabilization: The implicit bias of gradient descent at the edge of stability},
  author={Damian, Alex and Nichani, Eshaan and Lee, Jason D},
  journal={arXiv preprint arXiv:2209.15594},
  year={2022}
}

@article{wu2023implicit,
  title={Implicit bias of gradient descent for logistic regression at the edge of stability},
  author={Wu, Jingfeng and Braverman, Vladimir and Lee, Jason D},
  journal={Advances in Neural Information Processing Systems},
  volume={36},
  pages={74229--74256},
  year={2023}
}

@article{wind2023implicit,
  title={Implicit regularization in AI meets generalized hardness of approximation in optimization--Sharp results for diagonal linear networks},
  author={Wind, Johan S and Antun, Vegard and Hansen, Anders C},
  journal={arXiv preprint arXiv:2307.07410},
  year={2023}
}

@article{marion2023implicit,
  title={Implicit regularization of deep residual networks towards neural ODEs},
  author={Marion, Pierre and Wu, Yu-Han and Sander, Michael E and Biau, G{\'e}rard},
  journal={arXiv preprint arXiv:2309.01213},
  year={2023}
}

@article{kou2023implicit,
  title={Implicit bias of gradient descent for two-layer relu and leaky relu networks on nearly-orthogonal data},
  author={Kou, Yiwen and Chen, Zixiang and Gu, Quanquan},
  journal={Advances in Neural Information Processing Systems},
  volume={36},
  pages={30167--30221},
  year={2023}
}

@article{wang2024implicit,
  title={The Implicit Bias of Gradient Descent toward Collaboration between Layers: A Dynamic Analysis of Multilayer Perceptions},
  author={Wang, Zheng and Min, Geyong and Ruan, Wenjie},
  journal={Advances in Neural Information Processing Systems},
  volume={37},
  pages={74868--74898},
  year={2024}
}

@article{li2024feature,
  title={Feature Averaging: An Implicit Bias of Gradient Descent Leading to Non-Robustness in Neural Networks},
  author={Li, Binghui and Pan, Zhixuan and Lyu, Kaifeng and Li, Jian},
  journal={arXiv preprint arXiv:2410.10322},
  year={2024}
}

@article{ravi2024implicit,
  title={The implicit bias of gradient descent on separable multiclass data},
  author={Ravi, Hrithik and Scott, Clay and Soudry, Daniel and Wang, Yutong},
  journal={Advances in Neural Information Processing Systems},
  volume={37},
  pages={81324--81359},
  year={2024}
}

@article{alexander2025neural,
  title={Do Neural Networks Need Gradient Descent to Generalize? A Theoretical Study},
  author={Alexander, Yotam and Slutzky, Yonatan and Ran-Milo, Yuval and Cohen, Nadav},
  journal={arXiv preprint arXiv:2506.03931},
  year={2025}
}
	}
	

	
	\clearpage
	\appendix
	\crefalias{section}{appendix}
	\crefalias{subsection}{appendix}
	\crefalias{subsubsection}{appendix}
	\onecolumn
	
	\ifdefined\ENABLEENDNOTES
		\theendnotes
	\fi
	


	\section{Related Work}
\label{sec:related}

An SSM can be viewed as a special case of a \emph{linear dynamical system} (\emph{LDS}): a classic object of study in areas such as systems theory \citep{10.5555/248702} and control theory \citep{alma996974543408496}.
The problem of learning from data an SSM that admits in-distribution and out-of-distribution generalization is an instance of what is known in the LDS literature as \emph{system identification} \citep{simpkins2012system}.
Determination of whether a high dimensional SSM realizes a mapping that is also realizable by a low dimensional SSM (in our context, these are a student and a teacher, respectively) is considered in the LDS literature under the topic of \emph{minimal realization theory} \citep{1099821}.
Despite these connections, our work is distinct from classic LDS literature: it studies the implicit bias of gradient descent, a phenomenon brought to light by the recent rise of overparameterized neural networks \citep{neyshabur2017implicit}.

A significant line of research has characterized the implicit bias of gradient descent in over-parameterized neural networks \citep{saxe2013exact,lampinen2018analytic,neyshabur2014search,soudry2018implicit,gunasekar2018implicit,lyu2019gradient,ji2020directional,woodworth2020kernel,chizat2020implicit,yun2020unifying,azulay2021implicit,min2021explicit, lyu2021gradient,damian2022self, wu2023implicit, wind2023implicit,marion2023implicit,kou2023implicit, wang2024implicit,li2024feature,ravi2024implicit}.
Several recent works formally studied the implicit bias of gradient descent in the context of recurrent neural networks \citep{lim2021noisy,emami2021implicit,cohen-karlik2023learning}: a broad class of models that includes SSMs.
Some of these works~\citep{emami2021implicit,cohen-karlik2023learning} focus specifically on SSMs, in particular \citet{cohen-karlik2023learning} which we extend (by lifting the unrealistic assumption of infinite training data---see \cref{sec:intro}).
However, to our knowledge, none of the prior works on the implicit bias of gradient descent over SSMs or recurrent neural networks have formally established susceptibility to clean-label poisoning, as we do.

Since its demonstration in~\citet{poisonfrogs}, clean-label poisoning has received significant empirical attention \citep{zhu2019transferable,zhao2020clean,metapoison,aghakhani2021bullseye}.
It was also studied theoretically for convex models~\citep{suya2021model,blum2021robust}.\note{%
Non-convex models were theoretically treated as well \citep{mahloujifar2018can,Mahloujifar2019TheCO,gao2021learning}, but such treatments considered a different type of poisoning, namely, one where poisonous examples come instead of (rather than in addition to) original training examples.
}
To our knowledge, none of the prior works on clean-label poisoning have formally established the phenomenon for SSMs, whose optimization is non-convex.

	
	\section{Extensions of \cref{result:poison,result:gen}}
\label{app:extensions}

In this appendix, we extend \cref{result:poison} to allow for additional training sets. 
Furthermore, we outline extensions of \cref{result:poison,result:gen} to settings in which: 
\emph{(i)}~the teacher SSM is of arbitrary dimension $d^* \geq 2$; 
\emph{(ii)}~the input and output matrices of the teacher SSM vary;
and
\emph{(iii)}~the input and output matrices of the student SSM are learned (as opposed to being fixed throughout training).

\paragraph{Clean-label poisoning with additional training sets.}
\cref{result:poison_extended} below extends \cref{result:poison} to allow for any training set~$\S$ permitted by \cref{result:gen} (\ie, for any choice of training sequences \smash{$( \xbf^{( i )} )_{i = 1}^n$} where: for every $i \in [ n ]$, \smash{$\xbf^{( i )}$} has zeros in its last two elements and non-negative values elsewhere; and \smash{$\xbf^{( i )} \neq \0$} for some $i \in [ n ]$), at the expense of the added (cleanly labeled poisonous) sequence \smash{$\xbf^\dagger$} potentially having large norm.\note{%
An additional limitation of \cref{result:poison_extended} compared to \cref{result:poison} is that its guarantees pertain to some point along the gradient flow trajectory, whereas those of \cref{result:poison} pertain to the trajectory's limit.
}
\begin{theorem}\label{result:poison_extended}
	Consider the teacher-student setting of \cref{sec:prelim:ts}, subject to \cref{assump:simple_teacher,assump:fixed_B_C}.
	Suppose that for every $i \in [ n ]$ the training sequence~\smash{$\xbf^{( i )}$} has zeros in its last two elements and non-negative values elsewhere.
	Suppose also that \smash{$\xbf^{( i )} \neq \0$} for some $i \in [ n ]$.
	Finally, suppose that the dimension of the student SSM satisfies $d \geq 5$. 
	Then, for any $k \in \N_{\geq 3}$, $\delta \in \R_{> 0}$ and~$\epsilon \in \R_{> 0}$, there exists a scalar $c \in \R_{>0}$ and an open set~$\I$ of initializations for the student SSM,\footref{foot:open_init} such that with any initialization in~$\I$:
	\begin{itemize}
		\item there exists a time $t \in \R_{> 0}$ at which gradient flow reaches a point where the training loss is at most~$\delta$, and the generalization errors over sequence lengths $1 , 2 , \ldots , k$ are at most~$\epsilon$;
		and
		\item appending to~$\S$ the (cleanly labeled) example \smash{$( \xbf^\dagger , y^\dagger )$}, where \smash{$\xbf^{\dagger} := c \cdot \ebf_{\kappa-1} \in \R^{\kappa}$} and \smash{$y^\dagger := \phi_{( A^{*} , B^{*} , C^{*} )}( \xbf^\dagger )$}, leads to the existence of a time $t \in \R_{> 0}$ at which gradient flow reaches a point where the training loss is at most~$\delta$, and the generalization error over sequence length~$k$ is at least~$0.5 - 2 / d$. 
	\end{itemize}
\end{theorem}
\begin{proof}
	See \cref{app:poison_extended}.
\end{proof}

\paragraph{Teacher of arbitrary dimension.}
For any $d^* \geq 2$, consider the following parameter assignments for the teacher SSM:
\[
A^{*} =
\begin{pmatrix}
	1 & 0 & \cdots & 0 \\
	0 & 0 & \cdots & 0 \\
	\vdots & \vdots & \ddots & \vdots \\
	0 & 0 & \cdots & 0
\end{pmatrix} \in \R^{d^*, d^*}
~~ , ~~
B^{*} = \begin{pmatrix}
	1 \\
	\sqrt{\frac{d-1}{d^*-1}} \\
	\sqrt{\frac{d-1}{d^*-1}} \\
	\vdots \\
	\sqrt{\frac{d-1}{d^*-1}}
\end{pmatrix}
\in \R^{d^*, 1}
~~ , ~~ \\
C^{*} = \begin{pmatrix}
	1 \\
	\sqrt{\frac{d-1}{d^*-1}} \\
	\sqrt{\frac{d-1}{d^*-1}} \\
	\vdots \\
	\sqrt{\frac{d-1}{d^*-1}}
\end{pmatrix}^{\top}
\in \R^{1, d^*}
\text{\,.}
\]
In this setting, the mapping $\phi_{( A^{*} , B^{*} , C^{*} )} ( \cdot )$ realized by the teacher SSM is the same as the mapping realized by the teacher SSM defined in \cref{eq:poison_t} (where the teacher has dimension $d^* = 2$).
Accordingly, \cref{result:gen,result:poison} and their proofs apply as stated.

\paragraph{Varying teacher input and output matrices.}
Given any teacher SSM $( A^{*} , B^{*} , C^{*} )$ with which \cref{result:gen,result:poison} hold (including a high dimensional teacher as described above), similar results hold with the teacher SSM $( A^{*} , \alpha_1 B^{*} , \alpha_2 C^{*} )$, where $\alpha_1 , \alpha_2 \in \R_{\neq 0}$ are arbitrary.
Indeed, if we likewise scale the values of the (fixed) student parameters $B$ and~$C$, \ie~we replace $B$ by $\alpha_1 B$ and $C$ by $\alpha_2 C$, then for every sequence~$\xbf$:
\[
\phi_{( A^{*} , \alpha_1 B^{*} , \alpha_2 C^{*} )} ( \xbf ) = \alpha_1 \alpha_2 \phi_{( A^{*} , B^{*} , C^{*} )} ( \xbf )
\]
and likewise:
\[
\phi_{( A , \alpha_1 B , \alpha_2 C )} ( \xbf ) = \alpha_1 \alpha_2 \phi_{( A , B , C )} ( \xbf )
\text{\,.}
\]
The training loss and its derivatives thus scale by a positive factor, and so do generalization errors (\cref{def:gen}).
Accordingly, the proofs of \cref{result:gen,result:poison} carry through.

\paragraph{Learned student input and output matrices.}
Below we outline a modification of \cref{result:gen,result:poison} that accounts for a setting in which the input and output matrices of the student SSM are learned.
Suppose these input and output matrices---$B$~and~$C$, respectively---are learned with a learning rate (step size) that may be different from the learning rate of the student's state transition matrix~$A$.
Formally, suppose the optimization trajectory $( A ( \cdot ) , B ( \cdot ) , C ( \cdot ) )$ is governed by the following dynamics:
\begin{equation}
	\begin{split}
	\dot{A} ( t )  & = - \frac{\partial}{\partial{A}} \ell_{\S} ( A (t) , B (t) , C (t)) \\
	\dot{B} ( t )  & = - \eta \cdot \frac{\partial}{\partial{B}} \ell_{\S} ( A (t) , B (t) , C (t)) \\
	\dot{C} ( t )  & = - \eta \cdot \frac{\partial}{\partial{C}} \ell_{\S} ( A (t) , B (t) , C (t))
	\end{split}
	~~ , ~~
	t \in \R_{\geq 0} \text{\,,}
	\label{eq:gf_loss_ts_eta}
\end{equation}
where $\eta > 0$ represents the ratio between the learning rate of $B$ and~$C$, and the learning rate of~$A$.
Consider a trajectory induced by \cref{eq:gf_loss_ts_eta}, and a corresponding trajectory that emanates from the same initialization, but where only $A$ is learned (or equivalently, where $\eta$ in \cref{eq:gf_loss_ts_eta} is replaced by zero).
Arguments similar to those used in the proofs of \cref{result:gen,result:poison} can be used to show that the divergence between these two trajectories is upper bounded by a quantity that depends on~$\eta$, and in particular tends to zero as $\eta$ does.
Accordingly, if $\eta$ is sufficiently small, generalization errors attained when $A$, $B$ and~$C$ are learned jointly (\ie, when optimization is governed by \cref{eq:gf_loss_ts_eta}), are close to those attained when only $A$ is learned.
\cref{result:gen,result:poison}---which apply to a setting where only $A$ is learned---thus translate to results that apply to a setting where $B$ and~$C$ are also learned.



	\section{Low Training Loss with High Generalization Error}
\label{app:zero_loss_non_ext}

As stated in \cref{sec:prelim:ts}, it was shown in~\citet{cohen-karlik2023learning} that, regardless of the size of the training set~$\S$ (\ie, of~$n$), and the length~$\kappa$ input sequences it comprises (namely, $( \xbf^{( i )} \in \R^\kappa )_{i=1}^n$), there exist assignments for the student SSM $( A , B , C )$ which minimize the training loss~$\ell_{\S} ( \cdot )$, yet suffer from arbitrarily high generalization error over sequence lengths beyond~$\kappa$, \eg~over sequence length $\kappa + 1$.
For completeness, we prove this fact below.

\begin{proposition}
	For any $\epsilon \in \R_{> 0}$ there exist assignments for $( A , B , C )$ with which the generalization error is zero over sequence length~$\kappa$,\note{%
	Note that this implies zero training loss, \ie~$\ell_{\S} ( A , B , C) = 0$ for any training set \(\S\) of sequence length \(\kappa\) , regardless of its size.
	}
	yet it is at least~$\epsilon$ over sequence length~$\kappa + 1$.
\end{proposition}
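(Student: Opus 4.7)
The strategy is to exploit overparameterization ($d > \kappa$, hence $d \geq \kappa + 1$) to construct a student SSM whose impulse response $h_{k'} := C A^{k'} B$ agrees with the teacher's impulse response $h_{k'}^* := C^* (A^*)^{k'} B^*$ for $k' = 0, 1, \ldots, \kappa - 1$ but differs from it by exactly $\epsilon$ at $k' = \kappa$. By \cref{eq:ssm_vec}, the output of an SSM on any length-$k$ input sequence depends only on the first $k$ impulse response coefficients, so such a student has zero generalization error over sequence length $\kappa$ (and, as noted in the footnote, therefore minimizes the training loss on any training set of length-$\kappa$ sequences), while attaining generalization error at least $\epsilon$ over sequence length $\kappa + 1$.

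To produce such a student, I would use a Vandermonde construction. Since the student is diagonal, $h_{k'} = \sum_{j=1}^d c_j b_j \, a_j^{k'}$, where $a_j$ denotes the $j$th diagonal entry of $A$ and $b_j, c_j$ the $j$th entries of $B$ and $C$. Pick any $\kappa + 1$ pairwise distinct scalars $a_1, \ldots, a_{\kappa+1}$ for the first $\kappa + 1$ diagonal entries of $A$ (assigning the remaining $d - \kappa - 1$ diagonal entries arbitrarily), and set $b_j = c_j = 0$ for $j > \kappa + 1$ so that these coordinates contribute nothing to $h_{k'}$. Writing $\alpha_j := c_j b_j$ for $j \leq \kappa + 1$, the first $\kappa + 1$ impulse response values assemble as $(h_0, h_1, \ldots, h_\kappa)^\top = V \alphabf$, where $V \in \R^{(\kappa + 1) \times (\kappa + 1)}$ with $V_{k', j} := a_j^{k'}$ is a Vandermonde matrix. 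Since the $a_j$'s are distinct, $V$ is invertible.

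Setting $\alphabf := V^{-1} \bigl( h_0^*, h_1^*, \ldots, h_{\kappa-1}^*, h_\kappa^* + \epsilon \bigr)^\top$ and choosing any factorization of the products $\alpha_j = c_j b_j$ (e.g.~$b_j := 1$, $c_j := \alpha_j$ for $j \leq \kappa + 1$) yields an assignment for $(A, B, C)$ with $h_{k'} = h_{k'}^*$ for $k' = 0, 1, \ldots, \kappa - 1$ and $|h_\kappa - h_\kappa^*| = \epsilon$. By \cref{def:gen}, this gives generalization error zero over sequence length~$\kappa$ and generalization error at least $\epsilon$ over sequence length~$\kappa + 1$. I do not anticipate any real technical obstacle: the Vandermonde invertibility is standard, and the whole argument is essentially the observation that overparameterization leaves a free direction in which to perturb the $\kappa$th impulse response coefficient without disturbing the earlier ones.
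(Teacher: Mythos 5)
Your proposal is correct and follows essentially the same route as the paper's proof: both exploit the invertibility of a Vandermonde matrix built from distinct diagonal entries of $A$ to prescribe the first several impulse response coefficients arbitrarily (matching the teacher up to index $\kappa-1$ and perturbing index $\kappa$ by $\epsilon$), then realize the resulting products $b_j c_j$ by a trivial factorization. The only cosmetic difference is that you restrict to a $(\kappa+1)\times(\kappa+1)$ subsystem and zero out the remaining coordinates, whereas the paper solves the full $d\times d$ system; both are valid.
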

\begin{proof}
	Let \(d>\kappa\), which we will take as the dimension of our student SSM, and let
	\[
	(A,B,C)=\left (\diag(a_{1},\dots,a_{d}),(b_{1},\dots,b_{d}),(c_{1},\dots,c_{d})^{\top} \right )
	\text{\,.}
	\]

	We we will consider the system of equations  
	\begin{align*}
		(CA^{i}B)_{0 \leq i \leq d-1}=\mathbf{r}\text{\,,}
	\end{align*}
	where \(\mathbf{r} \in \mathbb{R}^{d}\). First, observe that our claim follows if a solution $(A,B,C)$ exists for for every $\mathbf{r}$. Indeed, take $\mathbf{r}$ such that its first $\kappa$ entries coincide with \((C^{*}(A^{*})^{i}B^{*})_{0 \leq i \leq \kappa-1}\), and each of its final \(d-\kappa\) entries differs from the corresponding entry of \((C^{*}(A^{*})^{i}B^{*})_{\kappa \leq i \leq d-1}\) by \(\epsilon\) or more. Then \((A,B,C)\) solving these equations would provide the required assignment. To see that this system is indeed solvable for every \(\mathbf{r} \in \mathbb{R}^{d}\), note that it can be rewritten as \(V^{\top}\mathbf{g}=\mathbf{r}\), where 
	\begin{align*}
		V =
		\begin{pmatrix}
			1 & a_{1} & a_{1}^{2} & \cdots & a_{1}^{d-1} \\
			1 & a_{2} & a_{2}^{2} & \cdots & a_{2}^{d-1} \\
			\vdots & \vdots & \vdots & \ddots & \vdots \\
			1 & a_{d} & a_{d}^{2} & \cdots & a_{d}^{d-1}
		\end{pmatrix}
		\text{\,,}
	\end{align*}
	and \(\mathbf{g}=(b_{1}c_{1},\dots,b_{d}c_{d})^{\top}\). $V$ is a Vandermonde matrix, and it is well known that it is invertible as long as \(a_{1},\dots,a_{d}\) are all distinct. Therefore for any such \(\mathbf{r}\), and fixed, distinct \(a_{1},\dots,a_{d}\), one can solve the equation by setting \(\mathbf{g}=(V^{\top})^{-1}\mathbf{r}\). To obtain \((B,C)\) which satisfy \(\mathbf{g}=(V^{\top})^{-1}\mathbf{r}\), one can simply set \(B=\1\) and \(C=\mathbf{g}^{\top}\). 

\end{proof}

	\section{Proof of \cref{result:dynamic} (Dynamical Characterization)}
\label{app:character}

   Fix $t\geq 0$. We use the following shorthands for simplicity:
    \begin{align*}
        \widetilde{\phi}(\xbf^{(i)}):=\phi_{(A(t), B(t), C(t)),\wbf(t)}(\xbf^{(i)}) ~~\; , \;~~ \phi(\xbf^{(i)}):=\phi_{(A(t), B(t), C(t))}(\xbf^{(i)})
        \text{\,.}
    \end{align*}
    The objective $\tilde{\ell}_{\S}$ in time $t$ takes the following form:
    \begin{align*}
        \tilde{\ell}_{\S}(A(t), B(t), C(t))=\frac{1}{n}\sum_{i=1}^{n}(y^{(i)}-\widetilde{\phi}(\xbf^{(i)}))^2
        \text{\,.}
    \end{align*}
    Fix $j\in[d]$. Deriving w.r.t $a_{j}(t)$ and consecutively applying the chain rule we obtain the following
    \begin{align*}
        \frac{\partial}{\partial a_{j}(t)}\tilde{\ell}_{\S}(A(t), B(t), C(t))&=\frac{1}{n}\sum_{i=1}^{n}\frac{\partial}{\partial \widetilde{\phi}(\xbf^{(i)})}(y^{(i)}-\widetilde{\phi}(\xbf^{(i)}))^2\cdot \frac{\partial}{\partial a_{j}(t)}\widetilde{\phi}(\xbf^{(i)})\\
        &=\frac{2}{n}\sum_{i=1}^{n}(\underbrace{\widetilde{\phi}(\xbf^{(i)})-y^{(i)}}_{=-\delta^{(i)}(t)})\cdot\frac{\partial}{\partial a_{j}(t)}\sigma\bigl(\phi(\xbf^{(i)}),\wbf(t)\bigl)\\
        &=-\frac{2}{n}\sum_{i=1}^{n}\delta^{(i)}(t)\underbrace{\frac{\partial}{\partial z}\sigma(z, \wbf(t))|_{z=\phi(\xbf^{(i)})}}_{=\xi^{(i)}(t)}\cdot\frac{\partial}{\partial a_{j}(t)}\phi(\xbf^{(i)})\\
        &=-\frac{2}{n}\sum_{i=1}^{n}\delta^{(i)}(t)\xi^{(i)}(t)\cdot\frac{\partial}{\partial a_{j}(t)}\biggl(\sum_{l=1}^{\kappa}C(t)A(t)^{\kappa-l}B(t)x_{l}^{(i)}\biggl)=(*)
        \text{\,.}
    \end{align*}
    Recalling that $A$ is diagonal, we have that $C(t)A(t)^{\kappa-l}B(t)x_{l}^{(i)}=\sum_{j'=1}^{d}c_{j'}(t)a_{j'}(t)^{\kappa-l}b_{j'}(t)x_{l}^{(i)}$. Hence,
    \begin{align*}
        (*)&=-\frac{2}{n}\sum_{i=1}^{n}\delta^{(i)}(t)\xi^{(i)}(t)\cdot\frac{\partial}{\partial a_{j}(t)}\biggl(\sum_{l=1}^{\kappa}\sum_{j'=1}^{d}c_{j'}(t)a_{j'}(t)^{\kappa-l}b_{j'}(t)x_{l}^{(i)}\biggl)\\
        &=-\frac{2}{n}\sum_{i=1}^{n}\delta^{(i)}(t)\xi^{(i)}(t)\biggl(\sum_{l=1}^{\kappa}(\kappa-l)c_{j}(t)a_{j}(t)^{\kappa-l-1}b_{j}(t)x_{l}^{(i)}\biggl)=(**)
        \text{\,.}
    \end{align*}
    Reversing the order of summation and reordering we receive the following:
    \begin{align*}
        (**)=-b_{j}(t)c_{j}(t)\sum_{l=0}^{\kappa-2}a_{j}(t)^{l}\cdot\underbrace{\frac{2(l+1)}{n}\biggl(\sum_{i=1}^{n}\delta^{(i)}(t)\xi^{(i)}(t)x_{\kappa-l-1}^{(i)}\biggl)}_{=\gamma^{(l)}(t)}
        \text{\,.}
    \end{align*}
    The proof concludes by noting that $\dot{a_{j}}(t)=-\frac{\partial}{\partial a_{j}(t)}\tilde{\ell}_{\S}((A(t), B(t), C(t)))$.
\qed

	\section{Proof of \cref{result:gen} (Generalization)}
\label{app:gen}

\cref{result:gen} follows from \cref{result:gen_extend_train_set}, which is identical up to allowing the non-zero diagonal element of $A^{*}$ to be any positive value instead of $1$.

\begin{proposition}
    \label{result:gen_extend_train_set}
    Consider the teacher-student setting of \cref{sec:prelim:ts}, subject to \cref{assump:fixed_B_C} and the teacher SSM given by
    \begin{equation}
        A^* = \begin{pmatrix} a^{*} & 0 \\ 0 & 0 \end{pmatrix}
        ~~ , ~~
        B^* = \begin{pmatrix} 1 & \sqrt{d-1}\, \end{pmatrix}^\top
        ~~ , ~~
        C^* = \begin{pmatrix} 1 & \sqrt{d-1}\, \end{pmatrix}
        \text{\,,}
        \label{eq:poison_t_ext}
    \end{equation}
    for some $a^{*}>0$. Suppose that for every $i \in [ n ]$ the training sequence~\smash{$\xbf^{( i )}$} has zeros its last two elements and non-negative values elsewhere.
    Suppose also that \smash{$\xbf^{( i )} \neq \0$} for some $i \in [ n ]$.  Then, for any $k \in \N$, $\epsilon \in \R_{> 0}$ and $\delta \in \R_{> 0}$, there exists a time $t \in \R_{> 0}$ and an open set~$\I$ of initializations for the student SSM, such that gradient flow initialized in~$\I$ reaches at time~$t$ a point at which the training loss~$\ell_{\S} ( \cdot )$ is no greater than~$\delta$, and the generalization errors over sequence lengths $1 , 2 , \ldots , k$ are no greater than~$\epsilon$.
    \end{proposition}
    \begin{proof}
        For $k\in\BN$ and $A\in\BR^{d,d}$, we denote by $Gen_{k}(A)$ the generalization error over sequence length $k$ (\cref{def:gen}). 
        Note that $B$ and $C$ are omitted from this notation, as they are fixed to the values \(B=\1,C=\1^{\top}\) throughout the proof. 
        With slight abuse of notation, we also denote $\ell_{\S}(A) := \ell_{\S}(A,B,C)$. 
        Consider the point \(A_{0}=(a_{0},0,\dots,0)\) where \(0<a_{0}<a^{*}\). We will first show that if we initialize at \(A_{0}\), gradient flow will converge to \((a^{*},0,\dots,0)\), and therefore achieve perfect generalization. Indeed, writing \ref{eq:loss_ts} in terms of the entries of $A$ we get:
        \[
        \ell_{\S}(A(\tau)) = \frac{1}{n} \sum_{i=1}^{n} \left( \sum_{l=2}^{\kappa-1} (a^{*})^{l} x_{\kappa - l}^{(i)} - \sum_{l=2}^{\kappa-1} \left( \sum_{j=1}^{d} a_j(\tau)^l \right) x_{\kappa - l}^{(i)} \right)^2
        \text{\,.}
        \]
        
        The derivative of \( \ell_{\S}(A) \) with respect to \( a_p \) is therefore:
        \[
        \frac{\partial }{\partial a_p}\ell_{\S}(A) = -\frac{2}{n} \sum_{i=1}^{n} \left( \sum\limits_{l=2}^{\kappa-1} (a^{*})^{l} x_{\kappa - l}^{(i)} - \sum\limits_{l=2}^{\kappa-1} \left( \sum\limits_{j=1}^{d} a_j^l \right) x_{\kappa - l}^{(i)} \right) \left( \sum\limits_{l=2}^{\kappa-1} l\, a_p^{l-1} x_{\kappa - l}^{(i)} \right)
        \text{\,.}
        \]
        For \(j>2\), \(a_{j}(0)=0\) and thus \(\dot{a}_{j}(0)=-\frac{\partial }{\partial a_j}\ell_{\S}(A(0))=0\). Therefore for all \(j>2\), \(a_{j}(t)=0\) for all \(\tau>0\). Hence it suffices to show that \(a_{1}(\tau)\) converges to $a^{*}$ as \(\tau \rightarrow \infty\). To see this, note that because \(a_{j}(\tau)=0\) for all \(t>0\) the dynamics simplify to 
            \[\dot{a}_{1}(\tau)=
        -\frac{\partial }{\partial a_1}\ell_{\S}(A(\tau)) = \frac{2}{n} \sum_{i=1}^{n} \left( \sum\limits_{l=2}^{\kappa-1} x_{\kappa - l}^{(i)} ((a^{*})^{l}-  a_{1}(\tau)^{l}) \right)   \left( \sum\limits_{l=2}^{\kappa-1} l\, a_1(t)^{l-1} x_{\kappa - l}^{(i)} \right)
        \text{\,.}
        \]
        For all \(i\in[n]\), at \(\tau=0\) it holds that
        \[\left( \sum\limits_{l=2}^{\kappa-1} x_{\kappa - l}^{(i)} ((a^{*})^{l}-  a_{1}(t)^{l}) \right)\left( \sum\limits_{l=2}^{\kappa-1} l\, a_1(t)^{l-1} x_{\kappa - l}^{(i)} \right) \geq 0\]	
        by the non negativity of the entries of \(\xbf^{( i )}\). Additionally, by the positivity of at least one of the entries of  \(\xbf^{( i )}\) for some \(i \in[n]\), at least one of the above expressions is positive. Furthermore, this expression can equal zero if and only if \(a_{1}=a^{*}\) (in which case \(\dot{a}_{1}(\tau)=0\)). Therefore \(\dot{a}_{1}(\tau)>0\) for all \(\tau>0\), and hence \(a_{1}(\tau)\) is monotonically increasing. It is also bounded from above (by  \(a^{*}\)), thus it converges. Furthermore, the limit must be a point where the derivative vanishes, and therefore it must equal $a^{*}$.
        
        Because \(A(\tau)\) converges to \((a^{*},0,\dots,0)\) when initialized at \(A_{0}\) , for any \(\epsilon,\delta>0\) there exists \(t>0\) such that \(\ell_{\S}(A(t))<\frac{\epsilon}{2}\),\(\; Gen_{k}(A(t))<\frac{\delta}{2}\). Now by the continuity of \(\ell_{\S}(\cdot),\; Gen_{k}(\cdot)\) and by \cref{lemma:ODE_sol_diff_wrt_init}, there exists an open set \(\I\) such that, if we initialize gradient flow from  \(\widetilde{A}(0)\in \I\), resulting in the trajectory \(\widetilde{A}(\tau)\), we get that \(\norm{A(t)-\widetilde{A}(t)}_{2}\) is sufficiently small to ensure that
        \begin{align*}
            \ell_{\S}(\widetilde{A}(t))<\delta\,, \quad Gen_{k}(\widetilde{A}(t))<\epsilon 
            \text{\,,}
        \end{align*}
        as required. 
    \end{proof}

	\section{Proof of \cref{result:poison} (Clean-Label Poisoning)}
\label{app:poison}

The outline of the proof is as follows.
\cref{app:poison:setting} details the setting and additional notation. \cref{app:poison:s_1} analyzes gradient flow over $\ell_\S$, where the dataset $\S$ does not include “poisoned” samples,  and shows that it converges to a generalizing solution.
\cref{app:poison:s_2} analyzes gradient flow after the addition of “poisoned” samples, establishing that generalization degrades. 
\cref{app:poison:init} proves that the different initialization sets considered in \cref{app:poison:s_1} and \cref{app:poison:s_2} intersect, and that one can construct an open set $\I$ such that both phenomena occur.

\subsection{Setting and Additional Notation}\label{app:poison:setting}
We will slightly change our notation and use $L$ to denote the sequence length, and $k$ as an index. For any $\xbf\in\BR^{d}$ and any $r\geq 0$ we use $B_{r}(\xbf)$ to denote
\begin{align}\label{def:open_sphere}
	B_{r}(\xbf):=\{\zbf\in\BR^{d}:\|\xbf-\zbf\|_{2}< r\}
\end{align}
and $\overline{B_{r}}(\xbf)$ to denote
\begin{align}\label{def:compact_sphere}
	\overline{B_{r}}(\xbf):=\{\zbf\in\BR^{d}:\|\xbf-\zbf\|_{2}\leq r\}
	\text{\,.}
\end{align}
For any $\xbf\in\BR^{d}$ and any $\V\subseteq \BR^{d}$ we define the Euclidean distance between $\xbf$ and $\V$ as
\begin{align}\label{def:dist}
	\dist(\xbf,\V) := \inf_{\zbf\in\V}\|\xbf-\zbf\|_{2}
	\text{\,.}
\end{align}
We use $\W_{1}$ and $\W_{2}$ to respectively denote
\begin{align}\label{def:W}
	\W_{1}:=\Span\lbrace \1\rbrace ~~\;,~~\; \W_{2}:=\Span\lbrace \ebf_{1}-\ebf_{2},\dots,\ebf_{1}-\ebf_{d}\rbrace
	\text{\,.}
\end{align}
Note that for any $j\in\lbrace2,\dots,d\rbrace$ it holds that
\begin{align*}
	\1^\top(\ebf_{1}-\ebf_{j})=1-1=0
	\text{\,.}
\end{align*}
Hence $\W_{1}$ and $\W_{2}$ are orthogonal. Additionally, it holds that
\begin{align*}
	\W_{1}\cap\W_{2}=\lbrace\0\rbrace ~~\;,\;~~ \dim \W_{1}=1 ~~\;,\;~~ \dim\W_{2}= d-1
	\text{\,,}
\end{align*}
hence $\W_{1}\cup \W_{2}=\BR^{d}$. Finally, for any $\psi\geq 0$ we use $\diff(\psi)$ to denote
\begin{align}\label{def:diff}
	\diff(\psi):=\biggl\lbrace \xbf\in\BR^{d}:\; \forall i,j\in[d], |x_{i}-x_{j}|\leq \psi\biggl\rbrace
\end{align}
and $\diff(\psi)^{\MC}$ to denote
\begin{align}\label{def:diff_c}
	\diff(\psi)^{\MC}:=\biggl\lbrace \xbf\in\BR^{d}:\; \exists i,j\in[d]\; s.t.\; |x_{i}-x_{j}|> \psi\biggl\rbrace
	\text{\,.}
\end{align}

Recall that the teacher SSM (\cref{eq:poison_t}) is given by \((A^{*},B^{*},C^{*})\), where
\begin{equation*}
	A^* = \begin{pmatrix} 1 & 0 \\ 0 & 0 \end{pmatrix}
	~~ , ~~
	B^* = \begin{pmatrix} 1 & \sqrt{d-1}\, \end{pmatrix}^\top
	~~ , ~~
	C^* = \begin{pmatrix} 1 & \sqrt{d-1}\, \end{pmatrix}
	\text{\,.}
\end{equation*}
We claim that the teacher is equivalent, \ie~has the same impulse response, as a $d$-dimensional SSM with $A^{d}=\diag(1,0,\dots,0),B^{d}=\1$ and $C^{d}=\1^{\top}$. 
\begin{proposition}
	For all \(i \geq 0\)
	\begin{align*}
		C^{*}(A^{*})^{i}B^{*}=C^{d}(A^{d})^{i}B^{d}
		\text{\,.}
	\end{align*}
\end{proposition}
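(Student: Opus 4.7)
The plan is to prove the claim by direct computation, splitting into the cases $i=0$ and $i\geq 1$. Both $A^*$ and $A^d$ are idempotent (indeed, $A^*$ has eigenvalues $1$ and $0$, and $A^d=\diag(1,0,\dots,0)$), so the powers collapse and only two distinct cases arise.

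For $i=0$, I would simply compute
\begin{equation*}
C^* B^* = 1\cdot 1 + \sqrt{d-1}\cdot \sqrt{d-1} = d
\text{\,,}
\end{equation*}
and
\begin{equation*}
C^d B^d = \1^\top \1 = d
\text{\,,}
\end{equation*}
so the two agree.

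For $i\geq 1$, I would first note that $(A^*)^i = A^*$ by idempotence, hence
\begin{equation*}
C^*(A^*)^i B^* = C^* A^* B^* = \begin{pmatrix}1 & \sqrt{d-1}\end{pmatrix} \begin{pmatrix}1 \\ 0\end{pmatrix} = 1
\text{\,.}
\end{equation*}
Similarly, $(A^d)^i = A^d = \ebf_1 \ebf_1^\top$, so
\begin{equation*}
C^d (A^d)^i B^d = \1^\top \ebf_1 \ebf_1^\top \1 = 1\cdot 1 = 1
\text{\,,}
\end{equation*}
matching the teacher's value. This covers all $i\geq 0$ and completes the proof.

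There is no real obstacle here; the statement is essentially an observation that both SSMs realize the same impulse response, in which the only non-trivial computation is checking that $C^* B^* = d = C^d B^d$ and $C^* A^* B^* = 1 = C^d A^d B^d$. The main purpose of the proposition is to justify subsequently replacing the two-dimensional teacher by its $d$-dimensional representation in later parts of the argument, where the student and teacher SSMs live in the same space.
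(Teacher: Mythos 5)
Your proof is correct and takes essentially the same approach as the paper, which simply observes that both expressions evaluate to $d$ for $i=0$ and to $1$ for $i\geq 1$. You merely spell out the idempotence argument and the inner products explicitly, which the paper leaves implicit.
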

\begin{proof}
	It is easy to see that both expressions evaluate to $d$ when \(i=0\), and to $1$ when \(i \geq 1\).
\end{proof} 

We will henceforth abuse notation slightly and redefine the teacher \((A^{*},B^{*},C^{*})\) to equal this $d$ dimensional teacher, \ie~we set \(A^{*}:=A^{d},B^{*}:=B^{d},C^{*}:=C^{d}\).

We denote the generalization error on sequences of length $L$ (\cref{def:gen}) by \(Gen_{L}(A)\), \ie
\begin{align*}
	Gen_{L}(A):=\max\nolimits_{L^{'} \in \{ 0 , 1 , \ldots , L - 1 \}} \big| B A^{L^{'}} C - B^* ( A^* )^{L^{'}} C^* \big|
	\text{\,.}
\end{align*} 
For a training set $\S=\big((\xbf^{(i)},y^{(i)})\big)_{i=1}^{n}$, we make a slight abuse of notation and denote the training error of a weight matrix $A\in \BR^{d,d}$ to be
\begin{align*}
    \ell_{\S}(A):=\ell_{\S}(A,B,C)=\frac{1}{n}\sum_{i=1}^{n}\big(y^{(i)}-\phi_{(A,B,C)}(\xbf^{(i)})\big)^{2}
    \text{\,.}
\end{align*}
Note that $B$ and $C$ are kept implicit in these notations, as they are fixed to the values \(B=\1,C=\1^{\top}\) throughout our analysis. 

Examining the teacher weights $(A^{*},B^{*},C^{*})$, one can note that for any $j\in[L-1]$ and any $z\in \BR$ it holds that
\begin{align*}
    \phi_{(A^{*},B^{*},C^{*})}(z\cdot \ebf_{j})=\sum_{k=1}^{d}c_{k}^{*}(a_{k}^{*})^{L-j}b_{k}^{*}z=1\cdot 1^{L-j}\cdot 1 \cdot z +0\cdot z= z
    \text{\,.}
\end{align*}

To facilitate clearer distinction, we denote $\S_{1}$ the training set considered in the first case,  and $\S_{2}$ the training set considered in the second case. We provide below an explicit description of the training sets and their induced losses.

\begin{definition}\label{def:s_1_s_2}
	The training sets  $\S_{1},\S_{2}$ are defined as follows:
    \begin{align*}
        \S_{1}&:=\{(\xbf,y)\}=\{\big(\ebf_{1},\phi_{A^{*},B^{*},C^{*}}(\ebf_{1})\big)\}
        \text{\,,}
        \\~\\
        \S_{2}&:=\{(\xbf,y),(\xbf^{\dagger},y^{\dagger})\}=\{\big(\ebf_{1},\phi_{A^{*},B^{*},C^{*}}(\ebf_{1})\big),\big(\ebf_{L-1},\phi_{A^{*},B^{*},C^{*}}(\ebf_{L-1})\big)\}
        \text{\,.}
    \end{align*}
\end{definition}

The objective $\ell_{\S_{1}}(\cdot)$ takes the following form:
\begin{equation}\label{eq:s_1_obj}
    \begin{split}
    \ell_{\S_{1}}(A)=\big(\phi_{(A^{*},B^{*},C^{*})}(\ebf_{1})-\phi_{(A,B,C)}(\ebf_{1})\big)^{2}=\bigg(1-\sum_{k=1}^{d}a_{k}^{L-1}\bigg)^{2}
    \text{\,.}
    \end{split}
\end{equation}
For any time $t\geq 0$ and any index $j\in[d]$ the gradient flow update $\dot{a_{j}}(t;\S_{1})$ takes the following form
\begin{equation}\label{eq:s_1_dyn}
    \begin{split}
    \dot{a_{j}}(t;\S_{1})=-\frac{\partial}{\partial a_{j}(t;\S_{1})}\ell_{\S_{1}}\big(A(t;\S_{1})\big)=2(L-1)\bigg(1-\sum_{k=1}^{d}a_{k}(t;\S_{1})^{L-1}\bigg)a_{j}(t;\S_{1})^{L-2}
    \text{\,.}
    \end{split}
\end{equation}

The objective $\ell_{\S_{2}}(\cdot)$ takes the following form:
\begin{equation}\label{eq:s_2_obj}
    \begin{split}
    \ell_{\S_{2}}(A)&=\frac{1}{2}\bigg(\big(\phi_{A^{*},B^{*},C^{*}}(\ebf_{1})-\phi_{(A,B,C)}(\ebf_{1})\big)^{2}+\big(\phi_{(A^{*},B^{*},C^{*})}(\ebf_{L-1})-\phi_{(A,B,C)}(\ebf_{L-1})\big)^{2}\bigg)\\
    &=\frac{1}{2}\bigg(\big(1-\sum_{k=1}^{d}a_{k}^{L-1}\big)^{2}+\big(1-\sum_{k=1}^{d}a_{k}\big)^{2}\bigg)
    \text{\,.}
    \end{split}
\end{equation}
For any time $t\geq 0$ and any index $j\in[d]$ the gradient flow update $\dot{a_{j}}(t;\S_{2})$ takes the following form
\begin{equation}\label{eq:s_2_dyn}
    \begin{split}
    \dot{a_{j}}(t;\S_{2})&=-\frac{\partial}{\partial a_{j}(t;\S_{2})}\ell_{\S_{2}}\big(A(t;\S_{2})\big)\\
    &=(L-1)\bigg(1-\sum_{k=1}^{d}a_{k}(t;\S_{2})^{L-1}\bigg)a_{j}(t;\S_{2})^{L-2}+1-\sum_{k=1}^{d}a_{k}(t;\S_{2})
    \text{\,.}
    \end{split}
\end{equation}

Note that by \cref{lemma:GF_global_existence} the above flows are defined for all $t\geq 0$. We denote by $\I_{0}$ a set of initial values for the matrix $A$ which we will use throughout the proof:\footnote{$A$ is a diagonal matrix , so we treat $\I_{0}$ as a subset of $\BR^{d}$.}
\begin{align}\label{def:I_0}
    \I_{0}:=\biggl\lbrace\alpha\cdot (\zeta_{1},\dots,\zeta_{d})^{\top}\in\BR^{d}: \alpha\in(0,\frac{1}{2d}), 1=\zeta_{1}>\zeta_{2}>\dots>\zeta_{d}>0\biggl\rbrace
    \text{\,.}
\end{align}
Throughout \cref{app:poison:s_1} and \cref{app:poison:s_2} we will be concerned with subsets of $\I_{0}$ for which the respective claims hold. 

\subsection{Gradient Flow Under $\S_{1}$ Generalizes}\label{app:poison:s_1}

Throughout this part, we omit the dependence on $\S_{1}$ for simplicity. We begin by proving that when initializing at some $A(0)\in \I_{0}$, the parameters of $A$ converge to a point where the training loss equals zero.

\begin{lemma}\label{lemma:good_converge}
    Suppose we initialize at $A(0)\in \I_{0}$ and evolve $A(t)$ according to the gradient flow dynamics in \cref{eq:s_1_obj}. Then the limit $\lim_{t\rightarrow\infty} A(t)=:\widehat{A_{1}}$ exists and satisfies
    \begin{align*}
        \ell(\widehat{A_{1}})=0
        \text{\,.}
    \end{align*} 
\end{lemma}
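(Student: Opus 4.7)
The plan is to analyze the gradient flow through the scalar residual $g(t) := 1 - \sum_{k=1}^d a_k(t)^{L-1}$ (so that $\ell(A(t)) = g(t)^2$), and to prove convergence in three steps: forward invariance of a trapping region, monotone convergence of each coordinate, and identification of the limit as a zero of $\ell$.

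First I would establish forward invariance of the open region where every $a_j$ is positive and $g$ is positive. Because $L = \kappa$ is odd (by the hypothesis $\kappa\in\{7,9,11,\ldots\}$ of \cref{result:poison}), the exponent $L-2$ in \cref{eq:s_1_dyn} is odd, so each coordinate hyperplane $\{a_j=0\}$ consists of fixed points of the $j$'th component of the flow and is forward invariant by ODE uniqueness. A short computation yields $\dot g(t) = -2(L-1)^2 g(t) \sum_{k=1}^d a_k(t)^{2(L-2)}$, showing that $\{g=0\}$ is likewise invariant. Since any $A(0)\in\I_{0}$ satisfies $0 < a_j(0) < 1/(2d)$, one has $g(0) \geq 1 - d\cdot (1/(2d))^{L-1} > 0$, so the trajectory starts in the desired region and stays there.

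Next I would run a monotone-convergence argument. Inside this region, \cref{eq:s_1_dyn} gives $\dot a_j(t) > 0$, so each $a_j(\cdot)$ is strictly increasing. The inequality $g(t) > 0$ forces $a_j(t)^{L-1} < 1$, and since $L-1$ is even and $a_j(t) > 0$ this gives $a_j(t) < 1$. Thus each $a_j(\cdot)$ is monotone and bounded, hence converges to some $\widehat a_j \in (a_j(0), 1]$, so that $A(t) \to \widehat{A_1} := \diag(\widehat a_1, \ldots, \widehat a_d)$.

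Finally I would identify the limit with a zero of $\ell$. Because $\dot g \leq 0$ throughout and $g \geq 0$, the limit $g^* := \lim_t g(t)$ exists and is nonnegative. Suppose toward contradiction that $g^* > 0$; passing to the limit in the dynamics gives $\dot a_j(t) \to 2(L-1) g^* \widehat a_j^{L-2} > 0$ (using $\widehat a_j \geq a_j(0) > 0$), so $\dot a_j(t) \geq c$ for some $c > 0$ and all large $t$, which upon integration contradicts $a_j(t) \leq 1$. Hence $g^* = 0$ and $\ell(\widehat{A_1}) = (g^*)^2 = 0$, as required. I do not foresee a significant obstacle: the whole argument is a gradient flow in a compact trapping region with $g^2 = \ell$ itself acting as a Lyapunov function. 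The only real subtlety is the parity bookkeeping for $L-1$ (even) and $L-2$ (odd), which underlies both the invariance of the positive orthant and the crucial bound $a_j(t) < 1$.
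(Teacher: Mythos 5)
Your proof is correct, but it takes a genuinely different route from the paper's. The paper first traps the trajectory in the compact annulus $B_{d}(\0)\setminus \overline{B_{\alpha\zeta_{d}/2}}(\0)$, verifies that $\ell_{\S_{1}}$ satisfies a Polyak--Lojasiewicz inequality there with an explicit coefficient $2(L-1)^{2}(\tfrac{\alpha\zeta_{d}}{2\sqrt{d}})^{2L-4}$, notes that the gradient is Lipschitz on this set, and then invokes the general convergence result \cref{lemma:GF_converges_param} (which rests on a path-length bound) to obtain simultaneously the existence of $\widehat{A_{1}}$ and $\ell(\widehat{A_{1}})=0$. You instead exploit the one-dimensional structure of the residual $g(t)=1-\sum_{k}a_{k}(t)^{L-1}$: sign preservation of each $a_{j}$ and of $g$ (the latter via $\dot g=-c(t)g$ with $c\geq 0$, hence $g(t)=g(0)e^{-\int_{0}^{t}c}>0$), coordinate-wise monotone boundedness ($\dot a_{j}>0$ and $a_{j}<1$) to get existence of the limit with no path-length machinery, and a direct contradiction---a persistently positive $\dot a_{j}$ would violate $a_{j}<1$---to force $g^{*}=0$. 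Your argument is more elementary and self-contained; the paper's buys quantitative (exponential) convergence rates via PL, which is the currency used throughout the rest of \cref{app:poison}, though the rates are not needed for this lemma's statement. Both arguments hinge on the same parity facts ($L-1$ even, $L-2$ odd, valid since $L=\kappa$ is odd) and on essentially the same invariance reasoning for why neither $a_{j}$ nor the residual can change sign; the only cosmetic quibble is that the invariance of $\{a_{j}=0\}$ follows from $L-2\geq 1$ rather than from its oddness.
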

\begin{proof}
    We first prove that for any $j\in[d]$ and for any time $t\geq 0$ it holds that
    \begin{align*}
        \alpha\zeta_j\leq a_j(t)\leq 1
        \text{\,.}
    \end{align*}
    Recall that
    \begin{align*}
        \dot{a_{j}}(t)=2(L-1)\bigg(1-\sum_{k=1}^{d}a_{k}(t)^{L-1}\bigg)a_{j}(t)^{L-2}
        \text{\,.}
    \end{align*}
    Hence, by \cref{def:I_0} it must hold that $\dot{a_{j}}(t)\geq 0$ for any $t\geq 0$; Since $\alpha\zeta_{j}>0$ and since $L-1$ is even we have
    \begin{align*}
        1-\sum_{k=1}^{d}(\alpha\zeta_{k})^{L-1}\geq 1-d\cdot (\alpha\zeta_{1})^{L-1}\geq 1-d(\frac{1}{2d})^{L-1}>0
        \text{\,.}
    \end{align*}
    Hence at time $t=0$ we have $\dot{a_{j}}(0)>0$. For any $t>0$, if the derivative equals zero then either $1-\sum_{k=1}^{d}a_{k}(t)^{L-1}=0$ or $a_{j}(t)=0$, implying the derivative must remain equal to zero for $t^{'}>t$. Hence, $\alpha\zeta_{j}\leq a_{j}(t)$ for any $t\geq 0$. Additionally, for any time $t\geq 0$ it holds that $1-\sum_{k=1}^{d}a_{k}(t)^{L-1}\geq 0$.
    At initialization it is positive by the above, and again if at some point it is equals zero then it must remain zero thereafter. Therefore, $a_{j}(t)$ can never reach $1$: since $L-1$ is even and since all entries are strictly positive, if it were to reach or cross $1$ we would reach a contradiction to the previous argument. Thus, we have showed that the gradient flow trajectory is contained in the following open and bounded set:
    \begin{align*}
        \V=B_{d}(\0)\setminus \overline{B_{\frac{\alpha\zeta_{d}}{2}}}(\0)
        \text{\,.}
    \end{align*}
    Note that the teacher $A^{*}$ is within $\V$. Next, we claim that within $\V$ the objective $\ell$ satisfies the PL condition (see \cref{def:PL}) with PL coefficient $ 2(L-1)^{2}(\frac{\alpha\zeta_{d}}{2\sqrt{d}})^{2L-4}$. Indeed, for any $A\in\V$ and any $j\in[d]$ it holds that
    \begin{align*}
        \frac{\partial}{\partial a_{j}}\ell(A)=2(L-1)\bigg(1-\sum_{k=1}^{d}a_{k}^{L-1}\bigg)a_{j}^{L-2}
        \text{\,.}
    \end{align*}
    For any $A\in \V$ there must exist an index $j^{*}\in[d]$ for which $|a_{j^{*}}|\geq \frac{\alpha\zeta_{d}}{2\sqrt{d}}$ and thus
    \begin{align*}
        \|\nabla \ell(A)\|_{2}^{2}&\geq \bigg(2(L-1)\big(1-\sum_{k=1}^{d}a_{k}^{L-1}\big)a_{j^{*}}^{L-2}\bigg)^{2}\\
        &=4(L-1)^{2} a_{j^{*}}^{2L-4}\ell(A)\\
        &\geq 2\cdot 2(L-1)^{2}\bigg(\frac{\alpha\zeta_{d}}{2\sqrt{d}}\bigg)^{2L-4}\ell(A)
        \text{\,.}
    \end{align*}
    Finally, there exists some constant $M>0$ such that within $\V$ the objective $\ell$ has $M$-Lipschitz gradients, since $\ell$ is analytic in $\BR^{d}$ and since $\V$ is contained within the compact and bounded $\overline{B_{d}}(\0)$. The above conditions allows us to invoke \cref{lemma:GF_converges_param} which states that the limit $\lim_{t\rightarrow\infty}A(t)=:\widehat{A_{1}}$ exists and satisfies $\ell(A(t))=0$ as required.
\end{proof}

We now introduce a set \(\I_{1} \subseteq \I_{0}\), under which we prove the rest of the claims in this section.

\begin{definition}\label{def:I_1}
    Let $\eta_{1}>0$. We use $\I_{1}(\eta_{1})$ to denote the following subset of $\I_{0}$:
    \begin{align*}
        \I_{1}(\eta_{1}):=\biggl\lbrace A\in \I_{0}:\forall j\in\lbrace2,\dots, d\rbrace.\; \alpha \leq \bigg(\frac{1-(1-\eta_{1})^{L-1}-\eta_{1}}{d-1}\bigg)^{\frac{1}{L-1}}\frac{1}{\zeta_{j}}(1-\zeta_{j}^{L-3})^{\frac{1}{L-3}}\biggl\rbrace
        \text{\,.}
    \end{align*}
\end{definition}

We now prove that if \(A(0) \in \I_{1}\), the first diagonal entry tends to $1$, while the rest of the entries must remain close to $0$.

\begin{proposition}\label{prop:good_entries}
    Let $\eta_{1}>0$. Suppose we initialize at $A(0)\in \I_{1}(\eta_{1})$ and evolve $A(t)$ according to the gradient flow dynamics in \cref{eq:s_1_obj}. For any $j\in\lbrace 2,\dots, d\rbrace$ and for any time $t\geq 0$ it holds that:
    \begin{align*}
        0\leq a_{j}(t)\leq \bigg(\frac{1-(1-\eta_{1})^{L-1}-\eta_{1}}{d-1}\bigg)^{\frac{1}{L-1}}
        \text{\,.}
    \end{align*}
    Additionally, there exists some time $t^{*}\geq 0$ such that for any time $t\geq t^{*}$ it holds that:
    \begin{align*}
        1-\eta_{1}\leq a_{1}(t)\leq 1
        \text{\,.}
    \end{align*}
\end{proposition}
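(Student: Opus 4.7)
The heart of the argument is a conservation law for the dynamics \cref{eq:s_1_dyn}. I plan to first observe that
\[
\frac{\dot{a}_j(t)}{a_j(t)^{L-2}} = 2(L-1)\Big(1 - \sum\nolimits_{k=1}^{d} a_k(t)^{L-1}\Big)
\]
is \emph{independent of $j$}. Integrating $a_j^{-(L-2)}\dot a_j$ against $-\tfrac{1}{L-3}\tfrac{d}{dt}a_j^{-(L-3)}$ (legal because $L\ge 7$ gives $L-3\ge 4$, and $a_j(t)\ge \alpha\zeta_j>0$ throughout by the monotonicity already established in \cref{lemma:good_converge}), this yields the key identity, valid for every $i,j\in[d]$ and $t\ge 0$:
\[
a_j(t)^{-(L-3)} - a_i(t)^{-(L-3)} = a_j(0)^{-(L-3)} - a_i(0)^{-(L-3)} .
\]
Specializing to $i=1$ and setting $c_j := (\alpha\zeta_j)^{-(L-3)} - \alpha^{-(L-3)} = \alpha^{-(L-3)}(\zeta_j^{-(L-3)}-1) > 0$ (since $\zeta_j<1$ for $j\ge 2$), we obtain $a_j(t)^{-(L-3)} = a_1(t)^{-(L-3)} + c_j$.

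Next I will upper bound the off-diagonal entries. Since \cref{lemma:good_converge} already shows $a_1(t)\le 1$, we get $a_1(t)^{-(L-3)} \ge 1$, hence
\[
a_j(t) \;\le\; (1+c_j)^{-\frac{1}{L-3}} \qquad (j\ge 2).
\]
Denoting $M := \big(\tfrac{1 - (1-\eta_1)^{L-1} - \eta_1}{d-1}\big)^{1/(L-1)}$, the inequality $(1+c_j)^{-1/(L-3)} \le M$ is equivalent to $\alpha^{L-3} \le \tfrac{M^{L-3}(1-\zeta_j^{L-3})}{\zeta_j^{L-3}(1 - M^{L-3})}$, which (since $M<1$) is \emph{weaker} than the defining condition $(\alpha\zeta_j)^{L-3} \le M^{L-3}(1-\zeta_j^{L-3})$ of $\I_1(\eta_1)$. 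This delivers the first claim $0 \le a_j(t) \le M$ for every $j\ge 2$ and $t\ge 0$.

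For the second claim I plan to use \cref{lemma:good_converge} to conclude convergence $A(t)\to \widehat{A_1}$ with $\ell_{\S_1}(\widehat{A_1})=0$, i.e.\ $\sum_{k=1}^d \widehat{a_{1,k}}^{L-1} = 1$. Combining this with the just-proved bound $\widehat{a_{1,j}} \le M$ for $j\ge 2$ gives
\[
\widehat{a_{1,1}}^{L-1} \;\ge\; 1 - (d-1) M^{L-1} \;=\; (1-\eta_1)^{L-1} + \eta_1 \;>\; (1-\eta_1)^{L-1},
\]
so taking $(L-1)$-th roots (both sides positive since $\widehat{a_{1,1}}\ge \alpha>0$) yields the strict inequality $\widehat{a_{1,1}} > 1-\eta_1$. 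Since $a_1(t)$ is monotonically non-decreasing (shown in \cref{lemma:good_converge}) and converges to $\widehat{a_{1,1}}$, there exists $t^*\ge 0$ with $a_1(t) \ge 1-\eta_1$ for all $t\ge t^*$; the upper bound $a_1(t)\le 1$ is likewise inherited from \cref{lemma:good_converge}, completing the proof.

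\paragraph{Anticipated difficulty.}
The only non-routine step is identifying the conservation law; once $a_j^{-(L-3)} - a_1^{-(L-3)}$ is pinned down as a first integral, the rest is bookkeeping. A small care point is the algebraic translation from the $\I_1(\eta_1)$ condition to the required bound on $(1+c_j)^{-1/(L-3)}$, where one must use $M<1$ to argue that the defining condition is stronger than what is needed; verifying $M<1$ amounts to checking $1-(1-\eta_1)^{L-1}-\eta_1 < d-1$, which follows at once from $1-\eta_1\in(0,1)$ and $d\ge 2$.
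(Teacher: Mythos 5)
Your proof is correct and follows essentially the same route as the paper's: both hinge on the first integral $a_j(t)^{-(L-3)}-a_1(t)^{-(L-3)}=\mathrm{const}$ obtained by dividing the dynamics by $a_j^{L-2}$, and both derive the bound on $a_1$ from \cref{lemma:good_converge}. The only (immaterial) differences are that you keep the term $a_1(t)^{-(L-3)}\ge 1$ rather than dropping it to $0$ as the paper does—which is why you need the extra $M<1$ check—and that you pass to the limit point $\widehat{A_1}$ instead of using the loss threshold $\ell(A(t))\le\eta_1^2$ directly.
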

\begin{proof}
    Per the proof of \cref{lemma:good_converge}, $\dot{a_{j}}(t)\geq 0$ for any $j\in [d]$ and $t\geq 0$ and thus the entries $a_{j}(t)$ are positive and non-decreasing (as functions of $t$). Reordering the dynamics, we have the following for any $j\in\lbrace 2,\dots,d\rbrace$ and for any time $\tau\geq 0$:
    \begin{align*}
        \dot{a_{j}}(\tau)a_{j}(\tau)^{-L+2}=\frac{\dot{a_{j}}(\tau)}{a_{j}(\tau)^{L-2}}=2(L-1)\bigg(1-\sum_{k=1}^{d}a_{k}(\tau)^{L-1}\bigg)=\frac{\dot{a_{1}}(\tau)}{a_{1}(\tau)^{L-2}}=\dot{a_{1}}(\tau)a_{1}(\tau)^{-L+2}
        \text{\,.}
    \end{align*}
    Integrating both sides w.r.t time, we receive the following for any time $t\geq 0$:
    \begin{align*}
        \frac{a_{j}(t)^{-L+3}}{-L+3}-\frac{a_{j}(0)^{-L+3}}{-L+3}&=\int_{0}^{t}\dot{a_{j}}(\tau)a_{j}(\tau)^{-L+2}d\tau\\
        &=\int_{0}^{t}\dot{a_{1}}(\tau)a_{1}(\tau)^{-L+2}d\tau\\
        &=\frac{a_{1}(t)^{-L+3}}{-L+3}-\frac{a_{1}(0)^{-L+3}}{-L+3}
        \text{\,.}
    \end{align*}
    Organizing the equation and plugging the initial values, we get that
    \begin{align*}
        a_{j}(t)^{-L+3}=a_{1}(t)^{-L+3}+(\alpha\zeta_{j})^{-L+3}-\alpha^{-L+3}
        \text{\,.}
    \end{align*}
    Both sides are positive by our first argument and since $\alpha\zeta_{j}<\alpha$, and so taking the \(\frac{1}{L-3}\) root yields
    \begin{align*}
        a_{j}(t)&=\biggl(\frac{1}{a_{1}(t)^{-L+3}+(\alpha\zeta_{j})^{-L+3}-\alpha^{-L+3}}\biggl)^{\frac{1}{L-3}}\\
        &\leq \biggl(\frac{1}{\frac{1}{(\alpha\zeta_{j})^{L-3}}-\frac{1}{\alpha^{L-3}}}\biggl)^{\frac{1}{L-3}}\\
        &=\biggl(\frac{(\alpha\zeta_{j})^{L-3}}{1-\zeta_{j}^{L-3}}\biggl)^{\frac{1}{L-3}}\\
        &=\alpha\zeta_{j}\biggl(\frac{1}{1-\zeta_{j}^{L-3}}\biggl)^{\frac{1}{L-3}}
        \text{\,.}
    \end{align*}
    Since $A(0)\in\I_{1}(\eta_{1})$, we obtain that
    \begin{align*}
        a_{j}(t)&\leq \bigg(\frac{1-(1-\eta_{1})^{L-1}-\eta_{1}}{d-1}\bigg)^{\frac{1}{L-1}}\frac{1}{\zeta_{j}}(1-\zeta_{j}^{L-3})^{\frac{1}{L-3}}\zeta_{j}\biggl(\frac{1}{1-\zeta_{j}^{L-3}}\biggl)^{\frac{1}{L-3}}\\
        &=\bigg(\frac{1-(1-\eta_{1})^{L-1}-\eta_{1}}{d-1}\bigg)^{\frac{1}{L-1}}
    \end{align*}
    as desired. We now show that there exists $t^{*}\geq 0$ such that for any time $t\geq t^{*}$ it holds that
    \begin{align*}
        a_{1}(t)\geq 1-\eta_{1}
        \text{\,.}
    \end{align*}
    By \cref{lemma:good_converge}, there exists time $t^{*}\geq 0$ such that for any $t\geq t^{*}$ it holds that 
    \begin{align*}
        \ell(A(t))=\bigg(1-\sum_{k=1}^{d}a_{k}(t)^{L-1}\bigg)^2\leq \eta_{1}^{2}
        \text{\,.}
    \end{align*}
    Therefore, for any time $t\geq t^{*}$ we have
    \begin{align*}
        \bigg|1-\sum_{k=1}^{d}a_{k}(t)^{L-1}\bigg|\leq \eta_{1}\implies 1-\eta_{1}\leq\sum_{k=1}^{d}a_{k}(t)^{L-1}\leq 1+\eta_{1}
        \text{\,.}
    \end{align*}
    Focusing on the left hand side and plugging the bound on the rest of the entries, we receive
    \begin{align*}
        1-\eta_{1}\leq a_{1}(t)^{L-1}+(d-1)\cdot \frac{1-(1-\eta_{1})^{L-1}-\eta_{1}}{d-1}=a_{1}(t)^{L-1}+1-(1-\eta_{1})^{L-1}-\eta_{1}
        \text{\,.}
    \end{align*}
    Rearranging yields
    \begin{align*}
        (1-\eta_{1})^{L-1}\leq a_{1}(t)^{L-1}\implies 1-\eta_{1}\leq a_{1}(t)
        \text{\,.}
    \end{align*}
    Additionally, $a_{1}(t)$ can never cross $1$ - since $L-1$ is even and since all entries are strictly positive, if it were to cross $1$ we would reach a contradiction to the argument in \cref{lemma:good_converge} stating that the residual $1-\sum_{k=1}^{d}a_k(t)^{L-1}$ is always non-negative. With this we complete our proof.
\end{proof}

An immediate result from \cref{prop:good_entries} is the following corollary regarding the student’s recovery of the teacher.

\begin{corollary}\label{cor:good_recovers_teacher}
    Let $\eta_{1}> 0$. Suppose we initialize at $A(0)\in \I_{1}(\eta_{1})$ and evolve $A(t)$ according to the gradient flow dynamics in \cref{eq:s_1_obj}. The limit $\lim_{t\rightarrow\infty}A(t)=:\widehat{A_{1}}$ satisfies
    \begin{align*}
        \|\widehat{A_{1}}-A^{*}\|_{2}\leq \sqrt{\eta_{1}^{2}+(d-1)\bigg(\frac{1-(1-\eta_{1})^{L-1}-\eta_{1}}{d-1}\bigg)^{\frac{2}{L-1}}}
        \text{\,.}
    \end{align*}
\end{corollary}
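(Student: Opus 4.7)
The plan is to derive the bound by combining the existence of the limit $\widehat{A_1} = \lim_{t \to \infty} A(t)$ established in \cref{lemma:good_converge} with the entrywise bounds already obtained in \cref{prop:good_entries}, exploiting the fact that $A$ and $A^*$ are both diagonal so the operator $2$-norm of their difference reduces to a simple sum of squared entry differences.

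First, I would note that since $A(t)$ and $A^*$ are diagonal (with diagonals $(a_1(t), \ldots, a_d(t))$ and $(1, 0, \ldots, 0)$ respectively), and the spectral norm of a diagonal matrix equals the maximum absolute value of its entries, we have
\begin{equation*}
\|\widehat{A_1} - A^*\|_2^2 \leq \sum_{j=1}^{d} \big(\widehat{a_{1,j}} - a^*_j\big)^2 = \big(\widehat{a_{1,1}} - 1\big)^2 + \sum_{j=2}^{d} \widehat{a_{1,j}}^{\,2},
\end{equation*}
where $\widehat{a_{1,j}}$ denotes the $j$-th diagonal entry of $\widehat{A_1}$. Here I am bounding the spectral norm by the Frobenius norm, which suffices for the claim.

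Next, I would invoke \cref{prop:good_entries}: passing to the limit $t \to \infty$ in the inequalities established there (the limit exists by \cref{lemma:good_converge}), we obtain $1 - \eta_1 \leq \widehat{a_{1,1}} \leq 1$, hence $(\widehat{a_{1,1}} - 1)^2 \leq \eta_1^2$; and for every $j \in \{2, \ldots, d\}$, $0 \leq \widehat{a_{1,j}} \leq \bigl(\tfrac{1-(1-\eta_1)^{L-1}-\eta_1}{d-1}\bigr)^{1/(L-1)}$, hence $\widehat{a_{1,j}}^{\,2} \leq \bigl(\tfrac{1-(1-\eta_1)^{L-1}-\eta_1}{d-1}\bigr)^{2/(L-1)}$.

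Substituting these two per-entry bounds into the displayed inequality and taking the square root yields the claimed bound. There is no real obstacle here; the corollary is a direct consequence of the limit existence from \cref{lemma:good_converge} and the coordinate-wise bounds from \cref{prop:good_entries}, and the only ``step'' is the routine observation that distances between diagonal matrices decompose along the diagonal.
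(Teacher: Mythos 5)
Your proposal is correct and follows essentially the same route as the paper: pass to the limit (which exists by \cref{lemma:good_converge}), apply the per-entry bounds of \cref{prop:good_entries}, and sum the squared coordinate differences. The only cosmetic difference is that the paper treats $\|\cdot\|_2$ as the Euclidean norm of the diagonal (identified with a vector in $\R^d$), giving equality where you write the spectral-norm-by-Frobenius inequality; the resulting bound is the same.
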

\begin{proof}
    By \cref{prop:good_entries}, there exists time $t^{*}\geq 0$ such that for any time $t\geq t^{*}$ it holds that
    \begin{align*}
        &\|A(t)-A^{*}\|_{2}\\
        &=\sqrt{(1-a_{1}(t))^{2}+\sum_{k=2}^{d}\big(0-a_{k}(t)\big)^{2}}\leq\sqrt{\eta_{1}^{2}+(d-1)\bigg(\frac{1-(1-\eta_{1})^{L-1}-\eta_{1}}{d-1}\bigg)^{\frac{2}{L-1}}}
        \text{\,.}
    \end{align*}
    The argument follows from \cref{lemma:good_converge} and from continuity.
\end{proof}

\begin{remark}\label{remark:good_teacher_recovery}
    Note that the upper bound in corollary \ref{cor:good_recovers_teacher} satisfies the following
    \begin{align*}
        &\lim_{\eta_{1}\rightarrow 0} \sqrt{\eta_{1}^{2}+(d-1)\bigg(\frac{1-(1-\eta_{1})^{L-1}-\eta_{1}}{d-1}\bigg)^{\frac{2}{L-1}}}\\
        &=\sqrt{\lim_{\eta_{1}\rightarrow 0}\eta_{1}^{2}+(d-1)\bigg(\frac{1-(1-\eta_{1})^{L-1}-\eta_{1}}{d-1}\bigg)^{\frac{2}{L-1}}}\\
        &=0
        \text{\,.}
    \end{align*}
    Hence, for any recovery threshold $\delta>0$ there exists $\eta_{1,\delta}>0$ such that if $A(0)\in\I_{1}(\eta_{1,\delta})$ then $\widehat{A_{1}}$ recovers $A^{*}$ with an error of no more than $\delta$. 
\end{remark}
So far, we have argued that the parameters of A converge to a point which is close \(A^{*}\). We conclude by showing that this leads to low generalization error.

\begin{proposition}\label{prop:gen_under_s_1}
	Let $L^{'}\geq L+2$. For any $\epsilon>0$ there exists an open set of initializations $\I_{1}:=\I_{1}(\delta_{\epsilon})$ such that under \(\S_{1}\), $A$ converges to a point such that \(Gen_{L^{'}}(A)\leq \epsilon\).
\end{proposition}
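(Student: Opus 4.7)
The plan is to combine the parameter-recovery guarantee already established in \cref{cor:good_recovers_teacher} and \cref{remark:good_teacher_recovery} with a short continuity argument for the generalization error $Gen_{L'}(\cdot)$.

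First, I would observe that $Gen_{L'}(\cdot)$ is continuous in the diagonal entries of $A$. With $B=\1$ and $C=\1^{\top}$ fixed, and with the $d$-dimensional redefinition of the teacher introduced at the start of \cref{app:poison:setting}, each quantity $BA^{k'}C - B^{*}(A^{*})^{k'}C^{*}$ (for $k' \in \{0,1,\ldots,L'-1\}$) is a polynomial in the diagonal entries of $A$; its absolute value is therefore continuous, and $Gen_{L'}$, being the maximum of finitely many such continuous functions, is continuous as well. Moreover, by the same redefinition, $Gen_{L'}(A^{*})=0$. Continuity then supplies some $\delta_{\epsilon} > 0$ such that $\|A - A^{*}\|_{2} \leq \delta_{\epsilon}$ implies $Gen_{L'}(A) \leq \epsilon$.

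Second, I would apply \cref{remark:good_teacher_recovery} with recovery threshold $\delta_{\epsilon}$. This yields an $\eta_{1,\delta_{\epsilon}} > 0$ such that any gradient flow initialized in $\I_{1}(\eta_{1,\delta_{\epsilon}})$ converges to a limit $\widehat{A_{1}}$ with $\|\widehat{A_{1}} - A^{*}\|_{2} \leq \delta_{\epsilon}$, and hence $Gen_{L'}(\widehat{A_{1}}) \leq \epsilon$ by the previous step. Define $\I_{1}$ to be the interior of $\I_{1}(\eta_{1,\delta_{\epsilon}})$, i.e.\ the set obtained by replacing the $\leq$ appearing in \cref{def:I_1} with a strict $<$. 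Under the reparametrization $a_{1}=\alpha$ and $a_{j}=\alpha\zeta_{j}$ for $j \geq 2$, the outer set $\I_{0}$ corresponds to $\{(a_{1},\ldots,a_{d}) \in \BR^{d} : 0 < a_{d} < a_{d-1} < \ldots < a_{1} < 1/(2d)\}$, which is open; the additional strict inequality in $\I_{1}$ is a conjunction of strict inequalities involving continuous functions of the $\zeta_{j}$, and so it defines an open subset. Nonemptiness is immediate since every element of $\I_{0}$ with sufficiently small $\alpha$ satisfies the strict version of the constraint. The arguments of \cref{prop:good_entries} and \cref{cor:good_recovers_teacher} used only the (upper) bound on $\alpha$, so they carry over verbatim.

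The main obstacle in this proof is in fact minor: the result reduces to continuity of $Gen_{L'}$ together with the already-established parameter-recovery guarantee. The heavy lifting, namely convergence via the PL inequality (\cref{lemma:good_converge}) and entry-wise control via the separable ODE for $a_{j}(t)^{-L+3}$ (\cref{prop:good_entries}), has already been carried out in the preceding lemmas, so no new technical tool is required here.
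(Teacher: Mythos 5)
Your proposal is correct and follows essentially the same route as the paper's proof: invoke the parameter-recovery guarantee of \cref{cor:good_recovers_teacher} and \cref{remark:good_teacher_recovery} to get $\|\widehat{A_1}-A^*\|_2\leq\delta_\epsilon$, then conclude via continuity of the length-$L'$ impulse response. Your treatment of the openness of $\I_1$ (passing to the strict-inequality version and checking it is nonempty and that the preceding lemmas still apply) is actually more explicit than the paper's one-line assertion, but it is the same argument in substance.
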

\begin{proof}
	Under the dataset \(\S_{1}\), we have shown above that for any $\delta>0$ there exists an open set of initializations $\I_{1}(\delta)$ such that GF will converge to a solution a whose parameters satisfy $\|A-A^{*}\|^2\leq \delta$ . It follows from the continuity of the length \(L^{'}\) impulse response that there is an open set of initializations from which we converge to a point \(Gen_{L^{'}}(A)\leq \epsilon\).
\end{proof}

We abuse notation slightly and denote $\I_{1}(\epsilon):=\I_{1}(\eta_{1,\delta_{1}})$ 
where $\delta_{1}$ is the maximal $\delta$ that guarantees $Gen_{L^{'}}(\widehat{A_{1}})\leq \epsilon$.

\subsection{Gradient Flow Over $\S_{2}$ Converges but Does Not Generalize}\label{app:poison:s_2}
In this section, we show that one can find a set of initializations $\I_{2}$ such that gradient flow under $\S_{2}$ converges to a point with high generalization error. The proof shows that gradient flow trajectories initialized in $\I_{2}$ evolve similarly to 
reference trajectories which provably stays away from any permutation of $A^{*}$.\footnote{Any permutation of $A^{*}$ yields a system with the same impulse response.} Since the training loss is non-convex, gradient flow trajectories can diverge from one another exponentially fast. Establishing that a reference trajectory is tracked thus requires sharp bounds on convergence times. The proof in this section is rather involved and is thus split into several parts.
\begin{itemize}
    \item \cref{app:poison:s_2:eq} defines the reference trajectories and shows their poor ability of generalization.
    \item \cref{app:poison:s_2:critical} characterizes the critical points of the objective $\ell$, focusing on a specific saddle point of interest (which we denote $\mathbf{s}$). 
    \item \cref{app:poison:s_2:linearization} presents relevant background on dynamical systems, introducing a linearization result needed for the rest of the proof.
    \item In \cref{app:poison:s_2:phase_1} we start analyzing the trajectories themselves, showing that they must pass near $\mathbf{s}$.
    \item \cref{app:poison:s_2:phase_2} shows that the trajectories must escape
    sufficiently fast from $\mathbf{s}$ using the tools presented in \ref{app:poison:s_2:linearization}.
    \item \cref{app:poison:s_2:phase_3} proves that after escaping from $\mathbf{s}$ the trajectories converge to global minima.
    \item \cref{app:poison:s_2:eq_divergence} shows that the overall divergence between trajectories emanating from $\I_{2}$ and their corresponding reference trajectories can be bounded from above, implying the former trajectories have poor generalization.
\end{itemize}
Throughout this part, we omit the dependence on $\S_{2}$ for simplicity. 

\subsubsection{Reference Trajectories}\label{app:poison:s_2:eq}
We begin by proving the following useful lemma which states that gradient flow maintains the order of the entries of $A$.

\begin{lemma}\label{lemma:entries_order}
    Suppose we initialize at $A(0)\in\BR^{d}$ and evolve $A(t)$ according to \cref{eq:s_2_dyn}. Let $\pi:[d]\rightarrow[d]$ be a permutation such that for any $j\in[d-1]$:
    \begin{align*}
        a_{\pi(j)}(0)\geq a_{\pi(j+1)}(0)
        \text{\,.}
    \end{align*}
    Then for any $j\in[d-1]$ and any $t\geq 0$ it holds that
    \begin{align*}
        a_{\pi(j)}(t)\geq a_{\pi(j+1)}(t)
        \text{\,.}
    \end{align*}
\end{lemma}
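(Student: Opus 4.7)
The plan is to reduce the problem to a scalar linear ODE for the difference between consecutive (in the $\pi$-order) diagonal entries. Fix any $j \in [d-1]$ and define $u(t) := a_{\pi(j)}(t) - a_{\pi(j+1)}(t)$. Subtracting the two gradient flow equations from \cref{eq:s_2_dyn}, the term $1 - \sum_k a_k(t)$ is common to both and cancels, leaving
\begin{equation*}
\dot{u}(t) = (L-1)\bigg(1 - \sum\nolimits_{k=1}^{d} a_k(t)^{L-1}\bigg)\Big(a_{\pi(j)}(t)^{L-2} - a_{\pi(j+1)}(t)^{L-2}\Big)\text{\,.}
\end{equation*}
This cancellation is the essential observation: the ``clean-label poisoning'' term $1 - \sum_k a_k$, which drives the constant-coefficient part of the dynamics, affects all coordinates identically and therefore does not perturb their relative ordering.

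Next, I would factor the polynomial identity
\begin{equation*}
a^{L-2} - b^{L-2} = (a - b)\sum\nolimits_{i=0}^{L-3} a^{i}\, b^{L-3-i}\text{\,,}
\end{equation*}
which holds for all $a, b \in \R$ (in particular, without any sign assumption). Substituting this factorization into the expression for $\dot{u}(t)$ yields $\dot{u}(t) = h(t)\, u(t)$, where
\begin{equation*}
h(t) := (L-1)\bigg(1 - \sum\nolimits_{k=1}^{d} a_k(t)^{L-1}\bigg)\sum\nolimits_{i=0}^{L-3} a_{\pi(j)}(t)^{i}\, a_{\pi(j+1)}(t)^{L-3-i}\text{\,.}
\end{equation*}
Since the trajectory $A(\cdot)$ exists globally (\cref{lemma:GF_global_existence}) and its entries are continuous in $t$, the function $h$ is continuous, and hence locally integrable.

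From the linear scalar ODE $\dot{u}(t) = h(t) u(t)$, I would conclude via the standard integrating-factor formula that
\begin{equation*}
u(t) = u(0)\exp\!\bigg(\int_{0}^{t} h(s)\, ds\bigg)\text{\,.}
\end{equation*}
Since $u(0) = a_{\pi(j)}(0) - a_{\pi(j+1)}(0) \geq 0$ by hypothesis, and the exponential factor is strictly positive, it follows that $u(t) \geq 0$ for all $t \geq 0$, proving the claim. I do not foresee a substantive obstacle here: the only subtle point is ensuring that the division $u \mapsto \dot{u}/u$ is not performed (which would blow up when $u(0) = 0$); the factorization above circumvents this by writing $\dot{u}$ as a smooth multiple of $u$, so the argument is valid uniformly over the sign/zero status of $u(0)$.
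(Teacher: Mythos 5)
Your proof is correct. The core computation is the same as the paper's: you subtract the two equations of motion from \cref{eq:s_2_dyn}, observe that the shared term $1-\sum_k a_k(t)$ cancels, and are left with a difference of the form $(L-1)\bigl(1-\sum_k a_k(t)^{L-1}\bigr)\bigl(a_{\pi(j)}^{L-2}-a_{\pi(j+1)}^{L-2}\bigr)$. Where you diverge is in how the argument is closed. The paper proceeds by contradiction: if the order were ever violated, continuity would give an earlier time at which the two entries coincide, and from that point on they satisfy identical dynamics and hence remain equal (an implicit appeal to uniqueness of ODE solutions, in the spirit of \cref{lemma:non-crossing_ODE_sol}). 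You instead factor $a^{L-2}-b^{L-2}=(a-b)\sum_i a^i b^{L-3-i}$ to exhibit $u=a_{\pi(j)}-a_{\pi(j+1)}$ as a solution of a scalar linear ODE $\dot u=h(t)u$ with continuous coefficient, and read off $u(t)=u(0)\exp\bigl(\int_0^t h\bigr)\geq 0$ directly. Your route is more explicit and arguably tighter: it handles the boundary case $u(0)=0$ and the strict case uniformly, and it makes rigorous the step the paper states somewhat informally (``the derivative must remain equal to zero for $t'>t$''). Both approaches ultimately rest on the same uniqueness/linearity fact; yours just surfaces it. One cosmetic note: the paper's displayed difference carries a factor $2(L-1)$ that is inconsistent with \cref{eq:s_2_dyn}; your factor $(L-1)$ is the one consistent with the stated dynamics, and the discrepancy is immaterial to the sign argument either way.
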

\begin{proof}
	Recall the dynamics from \cref{eq:s_2_dyn}:
    \begin{align*}
        \dot{a_{j}}(t)=(L-1)\bigg(1-\sum_{k=1}^{d}a_{k}(t)^{L-1}\bigg)a_{j}(t)^{L-2}+1-\sum_{k=1}^{d}a_{k}(t)
        \text{\,.}
    \end{align*}
    Fix $j\in[d-1]$. By the linearity of the derivative, we obtain the following equality by plugging the above dynamics
    \begin{align*}
        \frac{d}{dt}\big(a_{\pi(j)}(t)-a_{\pi(j+1)}(t)\big)&=\dot{a_{\pi(j)}}(t)-\dot{a_{\pi(j+1)}}(t)\\
        &=2(L-1)\bigg(1-\sum_{k=1}^{d}a_{k}(t)^{L-1}\bigg)\big(a_{\pi(j)}(t)^{L-2}-a_{\pi(j+1)}(t)^{L-2}\big)
        \text{\,.}
    \end{align*}
    Assume on the contrary there exists some time $t_{1}\geq 0$ for which $a_{\pi(j)}(t_{1})<a_{\pi(j+1)}(t_{1})$. By the assumption, $t_{1}>0$. By continuity, there must exist some time $t_{2}\in[0,t_{1})$ for which $a_{\pi(j)}(t_{2})=a_{\pi(j+1)}(t_{2})$. This would imply that for any $t\geq t_{2}$, the derivative $\frac{d}{dt}\biggl(a_{\pi(j)}(t)-a_{\pi(j+1)}(t)\biggl)$ is equal zero, which in turn would imply that 
    \begin{align*}
        a_{\pi(j)}(t)-a_{\pi(j+1)}(t)=a_{\pi(j)}(t_{2})-a_{\pi(j+1)}(t_{2})=0
    \end{align*}
    in contradiction to the assumption on $t_{1}$. 
\end{proof}

In what follows, we define the notion of \textit{reference initialization}.
\begin{definition}\label{def:eq_init}
    Let $A\in \I_{0}$ be some initialization of the parameters. The corresponding \textit{reference initialization} $A^{ref}$ is defined as
    \begin{align*}
        \forall j\in[d].\;~~ a_{j}^{ref}=
        \begin{cases}
            a_{1},\; j=1,2\\
            a_{j},\; \text{otherwise}
        \end{cases}
        \text{\,.}
    \end{align*}
    We use $A^{ref}(t)$ to denote the gradient flow trajectories emanating from the reference initializations.
\end{definition}

We now prove that any point with zero training loss which is sufficiently close to a reference trajecory has poor generalization.
\begin{lemma}\label{lemma:eq_fails}
    Let $L^{'}\geq L+2$. There exists some \(\delta_{2}>0\) such that any point \(A=(a_{1},a_{2},\dots,a_{d})\in\BR^{d}\) which satisfies:
    \begin{itemize}
    	\item \(\ell(A)=0\)
    	\item \(a_{1}\geq a_{2}\geq \dots\geq a_{d}\)
    	\item \(\norm{A-A^{eq}} \leq \delta_{2}\) for some point \(A^{eq}=(a_{1}^{eq},\dots,a_{d}^{eq})\in\BR^{d}\) such that \(a_{1}^{eq}=a_{2}^{eq}\)
    \end{itemize}
    must satisfy \(Gen_{L^{'}}(A) \geq \min \bigg\lbrace 0.1 , \frac{1}{9d} \cdot (1-(0.6)^{\frac{1}{L-1}})\bigg\rbrace \).
\end{lemma}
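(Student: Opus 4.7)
The plan is to bound $Gen_{L'}(A)$ from below by evaluating the max in \cref{def:gen} at the single index $k' = L$, which lies in $\{0, 1, \ldots, L'-1\}$ because $L' \geq L+2$. Under $B = \1$, $C = \1^{\top}$ and the teacher $A^{*} = \diag(1, 0, \ldots, 0)$, $B^{*} = \1$, $C^{*} = \1^{\top}$, this entry of the max equals $\bigl|\sum_{j=1}^{d} a_{j}^{L} - 1\bigr|$. Plugging $\xbf = \ebf_{1}$ and $\xbf^{\dagger} = \ebf_{L-1}$ into the impulse-response formula shows that $\ell(A) = 0$ on $\S_{2}$ is equivalent to the two constraints $\sum_{j=1}^{d} a_{j}^{L-1} = 1$ and $\sum_{j=1}^{d} a_{j} = 1$. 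Finally, since the minimizer of $\|A - A^{eq}\|_{2}$ over $A^{eq}$ with $a_{1}^{eq} = a_{2}^{eq}$ is obtained by averaging $a_{1}$ and $a_{2}$ and keeping the other entries fixed (realizing $\|A - A^{eq}\|_{2} = |a_{1}-a_{2}|/\sqrt{2}$), the third hypothesis translates into $|a_{1} - a_{2}| \leq \sqrt{2}\,\delta_{2}$.

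The heart of the argument will be the identity $\sum_{j=1}^{d} a_{j}^{L} = \sum_{j=1}^{d} q_{j}\, a_{j}$ with $q_{j} := a_{j}^{L-1}$. Since $L-1$ is even ($L = \kappa \in \{7, 9, 11, \ldots\}$), each $q_{j} \geq 0$, and by the first zero-loss constraint these weights sum to $1$. The ordering $a_{j} \leq a_{1}$ then yields $\sum_{j=1}^{d} a_{j}^{L} \leq a_{1}$, and combined with $a_{1} \leq 1$ (from $a_{1}^{L-1} \leq \sum_{j=1}^{d} a_{j}^{L-1} = 1$) this gives $Gen_{L'}(A) \geq 1 - \sum_{j=1}^{d} a_{j}^{L} \geq 1 - a_{1}$. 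The matching upper bound on $a_{1}$ will come from the first constraint alone: from $a_{1}^{L-1} + a_{2}^{L-1} \leq 1$ and $a_{1} \geq a_{2}$, we get $a_{2} \leq (1/2)^{1/(L-1)}$, and the perturbation estimate above then gives $a_{1} \leq (1/2)^{1/(L-1)} + \sqrt{2}\,\delta_{2}$.

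To finish, I will choose $\delta_{2} > 0$ so that $1 - (1/2)^{1/(L-1)} - \sqrt{2}\,\delta_{2} \geq \min\bigl\{0.1,\, (9d)^{-1}\bigl(1 - (0.6)^{1/(L-1)}\bigr)\bigr\}$; this is possible because $1 - (1/2)^{1/(L-1)}$ strictly dominates the target for all admissible $L, d$. Indeed, for $L = 7$ direct computation gives $1 - (1/2)^{1/6} \approx 0.109 > 0.1$, while for every $L \geq 2$ the elementary inequality $1 - 0.5^{x} \geq \tfrac{5}{4}(1 - 0.6^{x})$ on $x \in (0, 1]$ yields $1 - (1/2)^{1/(L-1)} \geq \tfrac{5}{4}\bigl(1 - (0.6)^{1/(L-1)}\bigr) > (9d)^{-1}\bigl(1 - (0.6)^{1/(L-1)}\bigr)$ for any $d \geq 1$. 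I do not foresee a serious technical obstacle: the one substantive insight is the weighted-average reformulation of $\sum_{j=1}^{d} a_{j}^{L}$, and everything else is bookkeeping. The only subtlety to track is that some $a_{j}$ may be negative, which is harmless because the ordering provides exactly the one-sided upper bound $a_{j} \leq a_{1}$ that is needed, while the non-negativity of the weights $q_{j}$ requires only that $L - 1$ is even.
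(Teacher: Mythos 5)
Your proposal is correct, and the core estimate is obtained by a genuinely different (and cleaner) mechanism than the paper's. Both arguments evaluate the generalization error at the single odd exponent $L$ (the paper takes $L^{*}=L+1$ even and works with $a_{k}^{L^{*}-1}=a_{k}^{L}$), and both reduce matters to the constraint $\sum_{j}a_{j}^{L-1}=1$ together with an upper bound on $a_{1}$ forced by proximity to the set $\{a_{1}^{eq}=a_{2}^{eq}\}$. Where you diverge is in bounding $\sum_{j}a_{j}^{L}$ away from $1$: the paper splits on the set $J=\{r:a_{r}\leq 0\}$, and in the harder case pigeonholes a coordinate $k^{*}$ with $a_{k^{*}}^{L-1}\geq\tfrac{1}{9d}$ to extract a gap $a_{k^{*}}^{L-1}-a_{k^{*}}^{L}\geq\tfrac{1}{9d}(1-(0.6)^{1/(L-1)})$ --- this case analysis is precisely what produces the $\min\{0.1,(9d)^{-1}(\cdots)\}$ in the statement. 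Your observation that $\sum_{j}a_{j}^{L}=\sum_{j}a_{j}^{L-1}a_{j}\leq a_{1}$ (nonnegative weights summing to one, each $a_{j}\leq a_{1}$) collapses all of that into one line and yields the stronger, dimension-free bound $1-a_{1}\geq 1-(1/2)^{1/(L-1)}-\sqrt{2}\,\delta_{2}$, which you correctly verify dominates the stated target via $1-0.5^{x}\geq\tfrac{5}{4}(1-0.6^{x})$ on $(0,1]$; your quantitative treatment of the third hypothesis ($|a_{1}-a_{2}|\leq\sqrt{2}\,\delta_{2}$) also replaces the paper's soft continuity appeal with an explicit constant. One small step to spell out when writing this up: the deduction $a_{2}\leq(1/2)^{1/(L-1)}$ from $a_{1}^{L-1}+a_{2}^{L-1}\leq 1$ and $a_{1}\geq a_{2}$ needs a two-case remark, since $L-1$ is even and $x\mapsto x^{L-1}$ is not monotone --- if $a_{2}\leq 0$ the bound is trivial, and only if $a_{2}>0$ does $a_{2}^{L-1}\leq a_{1}^{L-1}$ give $2a_{2}^{L-1}\leq 1$.
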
    
    \begin{proof}
    	Let $L^{*}\in \{L+1,\dots,L^{'}\}$ such that $L^{*}$ is even. We now show that 
    	\begin{align*}
    		\sum_{k=1}^{d}a_{k}^{L^{*}-1}\leq 1-c
    	\end{align*}
    	for some constant $c>0$ which is independent of $L^{'}$. This in turn implies that 
    	\begin{align*}
    		Gen_{L^{'}}(A) \geq (1-CA^{L^{*}-1}B)\geq c
    	\end{align*}
    	which gives us the desired lower bound. To do this, we write
    	\begin{align*}
    		\sum_{k}^{d}a_{k}^{L^{*}-1}=\sum_{k}^{d}a_{k}^{L-1}a_{k}^{L^{*}-L}
            \text{\,.}
    	\end{align*}
    	First note that $|a_{k}|\leq 1$ for all $k\in[d]$ - this follows from the fact that $L-1$ is even and from the fact that $\ell(A)=0$ and hence \(\sum_{k}a_{k}^{L-1}=1\). Therefore, for all $k\in[d]$ we have
    	\begin{align*}
    		|a_{k}^{L^{*}-1}|=|a_{k}^{L-1}a_{k}^{L^{*}-L}|=|a_{k}^{L-1}|\cdot|a_{k}^{L^{*}-L}|\leq a_{k}^{L-1}
            \text{\,.}
    	\end{align*}
    	Assume first that $a_{1}=a_{2}=a$. Then clearly \(a_{1}^{L-1}+a_{2}^{L-1}=2a^{L-1}\leq 1\) and hence
    	\begin{align*}
    		a_{1}^{L-1},a_{2}^{L-1}\leq \frac{1}{2}\implies a_{1},a_{2} \leq (\frac{1}{2})^{\frac{1}{L-1}}
            \text{\,.}
    	\end{align*}
    	Now by continuity it follows that for sufficiently small \(\delta_{2}>0\), we have that if \(\norm{A-A^{eq}}\leq \delta_{2}\) then
    	\begin{align*}
    		a_{1},a_{2} \leq (0.6)^{\frac{1}{L-1}}
            \text{\,.}
    	\end{align*}
    	Let $J:=\lbrace r: a_{r}\leq 0\rbrace$. For such indices we have $a_{r}^{L^{*}-1}\leq 0$. Suppose that
    	\begin{align*}
    		\sum_{k \in J}a_{k}^{L-1}\geq 0.1
            \text{\,.}
    	\end{align*}
    	Then we have that
    	\begin{align*}
    		\sum_{k=1}^{d}a_{k}^{L^{*}-1}\leq \sum_{k \notin J}a_{k}^{L^{*}-1}=\sum_{k \notin J}a_{k}^{L-1}a_{k}^{L^{*}-L}\leq \sum_{k \notin J} a_{k}^{L-1}=1-\sum_{k\notin J}a_{k}^{L-1}\leq 0.9  
    	\end{align*}
    	so we can take $c=0.1$. Otherwise we have that
    	\begin{align*}
    		\sum_{k \notin J} a_{k}^{L-1}\geq 0.9  
    	\end{align*}
    	so there exists some $k^{*}\notin J$ such that $a_{k^{*}}^{L-1}\geq \frac{1}{9d}$. On the other hand, we have
    	\begin{align*}
    		a_{k^{*}}\leq a_{1}\leq |a_{1}| \leq (0.6)^{\frac{1}{L-1}}
            \text{\,.}
    	\end{align*}
    	Therefore, since $k^{*}\notin J$ we have $0\leq a_{k^{*}}^{L^{*}-L}\leq a_{k^{*}}$ and so 
    	\begin{align*}
    		a_{k^{*}}^{L-1}-a_{k^{*}}^{L^{*}-1}= a_{k^{*}}^{L-1}(1-a_{k^{*}}^{L^{*}-L})\geq \frac{1}{9d}(1-a_{k^{*}})\geq \frac{1}{9d}\big(1-(0.6)^{\frac{1}{L-1}}\big)
            \text{\,.}
    	\end{align*}
    	This yields the following:
    	\begin{align*}
    		1-\sum_{k=1}^{d}a_{k}^{L^{*}-1}=\sum_{k=1}^{d}a_{k}^{L-1}-\sum_{k=1}^{d}a_{k}^{L^{*}-1}=\sum_{k=1}^{d}(a_{k}^{L-1}-a_{k}^{L^{*}-1})\geq \frac{1}{9d}\big(1-(0.6)^{\frac{1}{L-1}}\big) 
    	\end{align*}
    	which gives us $c=\frac{1}{9d}\big(1-(0.6)^{\frac{1}{L-1}}\big)$. In either case we can find a constant $c>0$ proving the argument.
    \end{proof}

\cref{lemma:eq_fails} motivates us to find an open subset of initializations under which the respective gradient flow trajectories remain close to their reference trajectory counterparts, as this would allow us to lower bound generalization error.

\subsubsection{Characterization of Critical Points}\label{app:poison:s_2:critical}
In this section we characterize the critical points of the objective $\ell$. 

\begin{lemma}\label{lemma:critical_pts}
    Let $A\in\BR^{d}$ be a point such that
    \begin{align*}
        \nabla\ell(A)=0
        \text{\,.}
    \end{align*}
    Then either $A$ is a global minimum, \ie~$\ell(A)=0$, or exists $s\in\BR$ such that $A=s\cdot\1$.
\end{lemma}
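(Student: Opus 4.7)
The plan is to directly compute the gradient of $\ell$ and analyze the system $\nabla\ell(A)=0$ componentwise. Writing
\[
R_1(A) := 1-\sum_{k=1}^{d}a_k^{L-1} ~~,~~ R_2(A) := 1-\sum_{k=1}^{d}a_k ~~,~~ \ell(A) = \tfrac{1}{2}\bigl(R_1(A)^2+R_2(A)^2\bigr)\text{\,,}
\]
a routine differentiation yields, for every $j\in[d]$,
\[
\tfrac{\partial \ell}{\partial a_j}(A) = -(L-1)\,R_1(A)\,a_j^{L-2} - R_2(A)\text{\,.}
\]
So $\nabla\ell(A)=0$ is equivalent to the $d$ scalar equations
\begin{equation}
(L-1)\,R_1(A)\,a_j^{L-2} = -R_2(A)\quad\text{for all } j\in[d]\text{\,.}\label{eq:plan_crit}
\end{equation}

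First I would handle the easy case $R_1(A)=0$: then \eqref{eq:plan_crit} immediately forces $R_2(A)=0$, so both residuals vanish and $\ell(A)=0$, which is the ``global minimum'' conclusion.

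In the remaining case $R_1(A)\neq 0$, \eqref{eq:plan_crit} shows that the value $a_j^{L-2}$ is the same constant $c := -R_2(A)/\bigl((L-1)R_1(A)\bigr)$ for every $j$. This is where the hypothesis $\kappa\in\{7,9,11,\ldots\}$ enters: since $L=\kappa$ is odd, the exponent $L-2$ is odd, so the map $x\mapsto x^{L-2}$ is strictly increasing on $\R$ and therefore injective. Consequently all $a_j$ agree, i.e.\ $A = s\cdot\1$ for some scalar $s\in\R$, as desired.

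I do not expect any real obstacle here: once the gradient is written out, the only substantive observation is the parity of $L-2$, which is guaranteed by the standing hypothesis on $\kappa$. (If one ever wanted to drop the oddness of $\kappa$, then $x\mapsto x^{L-2}$ would be two-to-one on $\R$ and one would have to allow critical points where the entries split into two groups of equal magnitude but opposite signs; that case does not arise under the assumptions of \cref{result:poison}.)
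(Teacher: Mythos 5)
Your proposal is correct and follows essentially the same route as the paper's proof: differentiate $\ell$ coordinatewise, observe that $R_1(A)=0$ forces $R_2(A)=0$ (hence $\ell(A)=0$), and otherwise solve for $a_j^{L-2}$ as a $j$-independent constant and use that $L-2$ is odd to conclude all entries coincide. The only cosmetic difference is that the paper takes the $(L-2)$-th root explicitly where you invoke injectivity of $x\mapsto x^{L-2}$; these are the same observation.
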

\begin{proof}
    By \cref{eq:s_2_dyn}, for any $j\in[d]$ it holds that
    \begin{align*}
        \frac{\partial}{\partial a_{j}}\ell(A)=(L-1)\bigg(\sum_{k=1}^{d}a_{k}^{L-1}-1\bigg)a_{j}^{L-2}+\sum_{k=1}^{d}a_{k}-1=0
        \text{\,.}
    \end{align*}
    If $\sum_{k=1}^{d}a_{k}^{L-1}-1=0$, then the above implies that $\sum_{k=1}^{d}a_{k}-1=0$. This in turn yields that
    \begin{align*}
        \ell(A)=\frac{1}{2}\bigg(\big(1-\sum_{k=1}^{d}a_{k}^{L-1}\big)^{2}+\big(1-\sum_{k=1}^{d}a_{k}\big)^{2}\bigg)=0
        \text{\,,}
    \end{align*}
    \ie~$A$ is a global minimum. Suppose $\sum_{k=1}^{d}a_{k}^{L-1}-1\neq 0$. Then we obtain by rearranging that
    \begin{align*}
        a_{j}^{L-2}=\frac{\bigg(1-\sum_{k=1}^{d}a_{k}\bigg)}{(L-1)\bigg(\sum_{k=1}^{d}a_{k}^{L-1}-1\bigg)}
        \text{\,.}
    \end{align*}
    $L-2$ is odd, and so taking the $L-2$ root on both sides we obtain that
    \begin{align*}
        a_{j}=\left(\frac{\bigg(1-\sum_{k=1}^{d}a_{k}\bigg)}{(L-1)\bigg(\sum_{k=1}^{d}a_{k}^{L-1}-1\bigg)}\right)^{\frac{1}{L-2}}:=s
    \end{align*}
    completing our proof.
\end{proof}

\cref{lemma:critical_pts} establishes that critical points of $\ell$ which are not global minima must reside within $\W_{1}=\Span\lbrace \1\rbrace$ (\cref{def:W}). These saddle points pose an obstacle to the convergence of gradient flow to a global minimum. The following lemma outlines the type of points gradient flow could encounter assuming we initialize at $\I_{0}$ or at a reference initialization.
\begin{lemma}\label{lemma:s_2_GF_points_loss}
    Suppose we initialize at $A(0)\in\I_{0}$ and at $A^{ref}(0)$, and evolve $A(t)$ and $A^{ref}(t)$ according to \cref{eq:s_2_dyn}. Then for any time $t\geq 0$ it holds that
    \begin{align*}
        \ell\big(A(t)\big),\ell\big(A^{ref}(t)\big)\leq 1
        \text{\,.}
    \end{align*}
\end{lemma}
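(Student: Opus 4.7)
The plan is to combine two standard facts: gradient flow is non-increasing on the objective, and the objective evaluated at any initialization from $\I_{0}$ (or its reference counterpart) is at most one. Together these immediately yield the bound for all $t \geq 0$.

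First, I would observe that since $\ell$ is continuously differentiable on $\R^{d}$ and $A(\cdot)$, $A^{ref}(\cdot)$ follow gradient flow (\cref{eq:s_2_dyn}), the standard identity
\begin{equation*}
\tfrac{d}{dt}\,\ell\big(A(t)\big) \;=\; \big\langle \nabla \ell\big(A(t)\big), \dot{A}(t) \big\rangle \;=\; -\,\big\|\nabla \ell\big(A(t)\big)\big\|_{2}^{2} \;\leq\; 0
\end{equation*}
holds along the trajectory, and analogously for $A^{ref}(t)$. Thus it suffices to bound the loss at $t=0$.

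Next, I would directly estimate $\ell\big(A(0)\big)$ using the form
\begin{equation*}
\ell(A)=\tfrac{1}{2}\Big(\big(1-\textstyle\sum_{k=1}^{d}a_{k}^{L-1}\big)^{2}+\big(1-\textstyle\sum_{k=1}^{d}a_{k}\big)^{2}\Big)
\end{equation*}
together with the description of $\I_{0}$ (\cref{def:I_0}). For $A(0)=\alpha\cdot(\zeta_{1},\dots,\zeta_{d})^{\top}\in\I_{0}$, every diagonal entry lies in $(0,\tfrac{1}{2d})$, and since $\zeta_{1}=1$ and $0<\zeta_{j}<1$ for $j\geq 2$, we have
\begin{equation*}
0 \;<\; \textstyle\sum_{k=1}^{d} a_{k}(0) \;\leq\; d\cdot\tfrac{1}{2d} \;=\; \tfrac{1}{2},
\qquad
0 \;<\; \textstyle\sum_{k=1}^{d} a_{k}(0)^{L-1} \;\leq\; d\cdot\big(\tfrac{1}{2d}\big)^{L-1} \;\leq\; \tfrac{1}{2}.
\end{equation*}
Consequently $1-\sum_{k} a_{k}(0)\in[\tfrac{1}{2},1)$ and $1-\sum_{k} a_{k}(0)^{L-1}\in[\tfrac{1}{2},1)$, each of which squared is at most $1$, giving $\ell(A(0))\leq \tfrac{1}{2}(1+1)=1$. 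For the reference initialization, \cref{def:eq_init} only duplicates the entry $a_{1}(0)=\alpha\leq\tfrac{1}{2d}$ into position $2$, so the same bounds $\sum_{k}a_{k}^{ref}(0)\leq\tfrac{1}{2}$ and $\sum_{k}a_{k}^{ref}(0)^{L-1}\leq\tfrac{1}{2}$ go through verbatim, yielding $\ell\big(A^{ref}(0)\big)\leq 1$.

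Combining the two ingredients, $\ell\big(A(t)\big)\leq \ell\big(A(0)\big)\leq 1$ and $\ell\big(A^{ref}(t)\big)\leq \ell\big(A^{ref}(0)\big)\leq 1$ for all $t\geq 0$, which is the claim. The only mildly delicate point is the bookkeeping at initialization---verifying that the sum and the power sum are each bounded by $\tfrac{1}{2}$ using $\alpha<\tfrac{1}{2d}$ and $\zeta_{j}\in(0,1]$---but this is an elementary calculation rather than a substantive obstacle; no new machinery is required.
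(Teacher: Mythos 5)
Your proposal is correct and follows essentially the same route as the paper: bound $\ell$ at the initialization (using $\alpha < \tfrac{1}{2d}$ and $\zeta_j \in (0,1]$ to show both residuals lie in $(0,1)$, hence $\ell(A(0)), \ell(A^{ref}(0)) \leq 1$), then invoke the monotonicity of the loss under gradient flow (the paper's \cref{lemma:gf_non_increasing}). Your bookkeeping for the reference initialization is also sound, since duplicating $a_1(0)=\alpha \leq \tfrac{1}{2d}$ into the second coordinate keeps every entry below $\tfrac{1}{2d}$.
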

\begin{proof}
    Per \cref{def:I_0} the entries at initialization are arranged in descending order. Since $L-1$ is even we have that the initializations satisfy the inequalities
    \begin{align*}
        1-\sum_{k=1}^{d}a_{k}(0)^{L-1}=1-\sum_{k=1}^{d}(\alpha\zeta_{k})^{L-1}\geq 1-2(\alpha\zeta_{1})^{L-1}-\sum_{k=3}^{d}(\alpha\zeta_{k})^{L-1}=1-\sum_{k=1}^{d}a_{k}^{ref}(0)^{L-1}
    \end{align*}
    and 
    \begin{align*}
        1-\sum_{k=1}^{d}a_{k}(0)=1-\sum_{k=1}^{d}(\alpha\zeta_{k})\geq 1-2(\alpha\zeta_{1})-\sum_{k=3}^{d}(\alpha\zeta_{k})=1-\sum_{k=1}^{d}a_{k}^{ref}(0)
        \text{\,.}
    \end{align*}
    By \cref{def:I_0} it holds that $\alpha\zeta_{1}<\frac{1}{2d}$, thus we have that
    \begin{align*}
        1-2(\alpha\zeta_{1})^{L-1}-\sum_{k=3}^{d}(\alpha\zeta_{k})^{L-1}\geq 1-d\cdot (\alpha\zeta_{1})^{L-1}\geq 1-d(\frac{1}{2d})^{L-1}>0
    \end{align*}
    and 
    \begin{align*}
        1-2(\alpha\zeta_{1})-\sum_{k=3}^{d}(\alpha\zeta_{k})\geq 1-d\cdot (\alpha\zeta_{1})\geq 1-d(\frac{1}{2d})>0
        \text{\,.}
    \end{align*}
    On the other hand, by \cref{def:I_0} it also holds that $\alpha\zeta_{d}>0$, thus we have that
    \begin{align*}
        1-\sum_{k=1}^{d}(\alpha\zeta_{k})^{L-1}\leq 1-d\cdot (\alpha\zeta_{d})^{L-1}<1
    \end{align*}
    and 
    \begin{align*}
        1-\sum_{k=1}^{d}(\alpha\zeta_{k})\leq 1-d\cdot (\alpha\zeta_{d})< 1
        \text{\,.}
    \end{align*}
    Therefore, the quantities
    \begin{align*}
        1-\sum_{k=1}^{d}a_{k}(0)^{L-1} ~~\;,\;~~ 1-\sum_{k=1}^{d}a_{k}^{ref}(0)^{L-1} ~~\;,\;~~ 1-\sum_{k=1}^{d}a_{k}(0) ~~\;,\;~~ 1-\sum_{k=1}^{d}a_{k}^{ref}(0)
    \end{align*}
    are all within the interval $(0,1)$. Thus, the objective at both initializations is no more than $1$ since both satisfy
    \begin{align*}
        \ell(A)=\frac{1}{2}\big((1-d\cdot a^{L-1})^{2}+(1-d\cdot a)^{2}\big)\leq \frac{1}{2}(1^{2}+1^{2})\leq 1
        \text{\,.}
    \end{align*}
    The proof is completed by the argument in \cref{lemma:gf_non_increasing} which states that under gradient flow the objective is non-increasing.
\end{proof}

The following lemma shows that only a specific region of $\W_{1}$ can contain critical points with loss lower than that of the initialization points we consider. This, along with \cref{lemma:gf_non_increasing}, implies that only a specific region of $\W_{1}$ is relevant.
\begin{lemma}\label{lemma:limiting_W_0}
    Let $A\in\BR^{d}$ be a point for which there exists $a\in\BR$ such that $A=a\cdot \1$. If $a\not\in[\frac{1}{d},\frac{3}{d}]$ then either $\nabla\ell(A)\neq 0$ or $\ell(A)>1$
\end{lemma}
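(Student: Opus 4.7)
The plan is to exhaust the complement of $[1/d, 3/d]$ by splitting it into a ``large-magnitude'' zone, handled via the loss bound, and a ``small-magnitude'' zone, handled via a direct gradient computation. First I will compute the partial derivative of $\ell$ at $A = a \cdot \1$: by symmetry all $d$ coordinates of the gradient coincide, equaling
\[
g(a) \;:=\; (L-1)(d a^{L-1} - 1)\, a^{L-2} + (d a - 1),
\]
so $\nabla \ell(A) = 0$ if and only if $g(a) = 0$.

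For the large-magnitude zone $a \geq 3/d$ or $a \leq -1/d$, the estimate $|1 - da| \geq 2$ gives $(1-da)^2 \geq 4$, and hence
\[
\ell(A) \;=\; \tfrac{1}{2}\bigl((1-d a^{L-1})^2 + (1-da)^2\bigr) \;\geq\; 2 \;>\; 1,
\]
placing these $a$'s in the ``$\ell(A) > 1$'' alternative.

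For the small-magnitude zone $a \in (-1/d, 1/d)$ I would show directly that $g(a) < 0$, which in particular precludes $g(a) = 0$. The key parity observations are that $L-1$ is even (since $\kappa = L$ is odd by assumption), so $a^{L-1} \geq 0$; combined with $|a| < 1/d$ this yields $0 \leq d a^{L-1} \leq 1/d^{L-2}$, which is negligible for $L \geq 7$ and $d \geq 8$. Similarly $L-2$ is odd, so $a^{L-2}$ shares sign with $a$. For $a \in [0, 1/d)$ both summands of $g(a)$ are non-positive, and the second is strictly less than $-1 + 1/d < 0$, so $g(a) < 0$. For $a \in (-1/d, 0)$ the first summand becomes positive, but its magnitude is bounded by $(L-1)(1)(1/d)^{L-2} = (L-1)/d^{L-2}$, which for $L \geq 7$ and $d \geq 8$ satisfies $(L-1)/d^{L-2} \leq 6/8^5 \ll 1$, while $da - 1 < -1$; so the positive term cannot cancel the negative one, and $g(a) < 0$ once more.

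The only sub-case requiring real estimation is $a \in (-1/d, 0)$, where the two contributions to $g(a)$ carry opposite signs; the resolution is the quantitative gap $(L-1)/d^{L-2} < 1 < |da - 1|$, which is comfortable under the hypotheses $\kappa \geq 7$ and $d \geq 8$. All other sub-cases reduce to one-line sign checks, and the lemma follows by combining the two zones.
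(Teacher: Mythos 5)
Your proof is correct and follows essentially the same route as the paper's: split the complement of $[\frac{1}{d},\frac{3}{d}]$ into a large-magnitude region where $(1-da)^2\geq 4$ forces $\ell(A)\geq 2>1$, and a small-magnitude region $(-\frac{1}{d},\frac{1}{d})$ where the common gradient entry $g(a)=(L-1)(da^{L-1}-1)a^{L-2}+(da-1)$ is shown to be strictly negative by the same parity and magnitude estimates. One harmless slip: for $a\in[0,\frac{1}{d})$ the second summand satisfies $da-1<0$ but not $da-1<-1+\frac{1}{d}$ in general; the weaker (correct) bound already suffices for your conclusion.
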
 
\begin{proof}
    We begin by proving that for any $a\in\BR$, if $a\notin(0,\frac{3}{d}]$ then $a\cdot \1$ must incur a loss greater than $1$. If $a>\frac{3}{d}$ then it holds that $d\cdot a>3$, hence we obtain that
    \begin{align*}
        \ell(a\cdot \1)=\frac{1}{2}\big((1-d\cdot a^{L-1})^{2}+(1-d\cdot a)^{2}\big)\geq \frac{(1-d\cdot a)^{2}}{2}>1
        \text{\,.}
    \end{align*}
    The same argument applies when $a<-\frac{1}{d}$, since in that case $d\cdot a<-1\implies (1-d\cdot a)^{2}>2$. Next, we show that if $a\in[-\frac{1}{d},\frac{1}{d})$ then $\nabla\ell(a\cdot \1)\neq 0$. Suppose $a\in[-\frac{1}{d},0]$. $L-1$ is even and $d\geq 8$ hence
    \begin{align*}
        d\cdot a^{L-1}-1\in [-1,0)\implies (L-1)(d\cdot a^{L-1}-1)a^{L-2}\in \bigg(0, \frac{L-1}{d^{L-2}}\bigg)\subseteq\bigg(0,\frac{L-1}{8^{L-2}}\bigg)
        \text{\,.}
    \end{align*}
    The function $f(L):=\frac{L-1}{8^{L-2}}$ is decreasing for $L\geq 3$ and acheives the value $0.25$ when $L=3$, hence since $L\geq 3$ we get $f(L)\leq 0.25$. Thus we have for any $j\in[d]$ that the gradient’s $j$th entry statisfies
    \begin{align*}
        \nabla\ell(a\cdot\1)=\big((L-1)(d\cdot a^{L-1})a^{L-2}+(d\cdot a-1)\big)\leq \big(0.25+(d\cdot a-1)\big)\leq (0.25-1)<0
        \text{\,.}
    \end{align*}
    Suppose $a\in(0,\frac{1}{d})$. In this case, we have that
    \begin{align*}
        a^{L-1}<\frac{1}{d}\implies d\cdot a^{L-1}<1\implies (L-1)(d\cdot a^{L-1}-1)a^{L-2}<0
        \text{\,.}
    \end{align*}
    Hence, since $d\cdot a-1<0$ we have for any $j\in[d]$ that the gradient’s $j$th entry satisfies
    \begin{align*}
        \nabla\ell(a\cdot\1)_{j}=\big((L-1)(d\cdot a^{L-1}-1)a^{L-2}+(d\cdot a-1)\big)<0
        \text{\,.}
    \end{align*}
    Therefore, any critical point which is not a global minimum and has value in $(0,1)$ cannot reside outside of $[\frac{1}{d},\frac{3}{d}]$. 
\end{proof}

Having disqualified most of $\W_{1}$, we now identify the unique critical point on the non-disqualified region of $\W_{1}$ and show that it is not a global minimum. 
\begin{lemma}\label{lemma:saddle_char}
    There exists a unique $s\in[\frac{1}{d},\frac{3}{d}]$ for which $\nabla\ell(s\cdot\1)=0$. Additionally, $s$ satisfies
    \begin{align*}
        \mathbf{s}:=s\cdot\1=\argmin_{A\in\W_{1}}\ell(A)
    \end{align*}
    and
    \begin{align*}
        \ell(\mathbf{s})\geq\frac{1}{8}>0
        \text{\,.}
    \end{align*}
\end{lemma}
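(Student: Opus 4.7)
The plan is to reduce everything to one-variable analysis of $g(a) := \ell(a\cdot\1) = \tfrac{1}{2}(1-da^{L-1})^2 + \tfrac{1}{2}(1-da)^2$ on $\R$. By the symmetry of $\ell$, the full gradient at $a\cdot\1$ is a scalar multiple of $\1$: a direct computation gives $\nabla\ell(a\cdot\1) = h(a)\cdot\1$ with $h(a) := (L-1)(da^{L-1}-1)a^{L-2} + (da-1)$, and $g'(a) = d\cdot h(a)$. So critical points of $\ell$ on $\W_1$ correspond exactly to zeros of $g'$, and the argmin identity $\mathbf{s} = \argmin_{A\in\W_1}\ell(A)$ reduces to minimizing $g$ on $\R$.

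For existence and location of the minimizer, note that $g$ is continuous, nonnegative and coercive (the $(1-da)^2$ summand alone forces $g\to\infty$ as $|a|\to\infty$), so $g$ attains a global minimum at some $s\in\R$. A direct evaluation gives $g(1/d) = \tfrac{1}{2}(1-d^{-(L-2)})^2 < 1/2$, hence $g(s) < 1$. Since $s\cdot\1$ is then a critical point of $\ell$ with value strictly less than $1$, \cref{lemma:limiting_W_0} forces $s\in[1/d, 3/d]$.

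For uniqueness I would establish strict convexity of $g$ on $[1/d, 3/d]$. Differentiating twice yields
\begin{equation*}
g''(a) = d^2(L-1)^2 a^{2L-4} + d^2 - d(L-1)(L-2)a^{L-3}(1-da^{L-1}),
\end{equation*}
and the key quantitative estimate is that on $[1/d, 3/d]$ one has $da^{L-1} \leq 3^{L-1}/d^{L-2} = 3\cdot(3/d)^{L-2} \leq 3\cdot(3/8)^{L-2}$, which is bounded by $1/2$ whenever $d\geq 8$ and $L\geq 7$. Consequently $1 - da^{L-1} \in (1/2, 1)$, and the single negative term in $g''$ is bounded in absolute value by $(L-1)(L-2)\cdot 3^{L-3}/d^{L-4}$. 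A routine arithmetic check using $d\geq 8$, $L\geq 7$ shows this quantity is strictly less than $d^2$, so $g''>0$ on $[1/d, 3/d]$. Strict convexity on this interval admits at most one critical point there, which combined with the previous paragraph identifies $s$ uniquely and establishes the argmin assertion.

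Finally, the lower bound $\ell(\mathbf{s}) \geq 1/8$ is immediate from the same key estimate: since $ds^{L-1} \leq 1/2$, we have $(1-ds^{L-1})^2 \geq 1/4$ and hence $\ell(\mathbf{s}) \geq \tfrac{1}{2}(1-ds^{L-1})^2 \geq 1/8$. The main obstacle in the proof is the convexity bound, but thanks to the generous parameter range the negative contribution to $g''$ is dwarfed by the positive $d^2$ term, reducing this step to an elementary inequality of the form $(L-1)(L-2)\cdot 3^{L-3} < d^{L-2}$, which can be verified directly.
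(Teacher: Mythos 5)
Your proof is correct and follows essentially the same route as the paper's: restrict $\ell$ to the line $\W_1$, obtain the one-variable function $g$, localize the critical point to $[\frac{1}{d},\frac{3}{d}]$ via \cref{lemma:limiting_W_0}, prove uniqueness through positivity of the second derivative using the same bounds on $(L-1)(L-2)a^{L-3}$ for $d\geq 8$, $L\geq 7$, and deduce $\ell(\mathbf{s})\geq\frac{1}{8}$ from $ds^{L-1}\leq\frac{1}{2}$. The only cosmetic difference is that you obtain existence from coercivity of $g$ and then localize the global minimizer, whereas the paper applies the intermediate value theorem to $f'$ with sign checks at $0$ and $\frac{3}{d}$; both are sound.
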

\begin{proof}
    We focus on the following function:
    \begin{align*}
        f(a)=\frac{1}{2}\big((1-d\cdot a^{L-1})^{2}+(1-d\cdot a)^{2}\big)
        \text{\,.}
    \end{align*}
    Note that $f(a)=\ell(a\cdot \1)$. It holds that
    \begin{align*}
        f^{'}(a):=(L-1)(d\cdot a^{L-1}-1)a^{L-2}+(d\cdot a-1)
        \text{\,.}
    \end{align*}
    Note that $f^{'}(a)=\nabla\ell(a\cdot \1)_{j}$ for any $j\in[d]$, and so $f^{'}(a)=0$ if and only if $\nabla\ell(a\cdot \1)=\0$. We proceed to show that within $[-\frac{1}{d},\frac{3}{d}]$, $f'(a)$ has a root and is monotonic. It holds that
    \begin{align*}
        f^{'}(0)=d\big((L-1)(d\cdot (0)^{L-1}-1)(0)^{L-2}+(d\cdot 0-1)\big)=-d<0
        \text{\,.}
    \end{align*}
    Next, since $d\geq8$ it holds that
    \begin{align*}
        (L-1)(1-d\cdot (\frac{3}{d})^{L-1})(\frac{3}{d})^{L-2}\leq (L-1)(\frac{3}{d})^{L-2}\leq \frac{L-1}{2^{L-2}}(\frac{3}{4})^{L-2}=:h(L)
        \text{\,.}
    \end{align*}
    $h(L)$ is a decreasing function for $L\geq 3$ and achieves the value $0.75$ when $L=3$, hence since $L\geq 3$ we get $h(L)\leq 1$. Therefore,
    \begin{align*}
        f^{'}(\frac{3}{d})&=d\big((L-1)(d\cdot (\frac{3}{d})^{L-1}-1)(\frac{3}{d})^{L-2}+(d\cdot \frac{3}{d}-1)\big)\\
        &=d\big(2-(L-1)(1-d\cdot (\frac{3}{d})^{L-1})(\frac{3}{d})^{L-2}\big)\\
        &\geq d(2-1)\\
        &>0
        \text{\,.}
    \end{align*}
    Hence by continuity, $f^{'}(a)$ has a root within $[-\frac{1}{d},\frac{3}{d}]$. Note that by \cref{lemma:critical_pts}, $f^{'}(a)$ doesn’t have a root within $[-\frac{1}{d},\frac{1}{d})$, implying the root is actually achieved in $[\frac{1}{d},\frac{3}{d}]$. Next, it holds that
    \begin{align*}
        f^{''}(a)&=d\big((L-1)(2L-3)d\cdot a^{2L-4}-(L-1)(L-2)a^{L-3}+d\big)\\
        &\geq d\big(d-(L-1)(L-2)a^{L-3}\big)
        \text{\,.}
    \end{align*}
    Because $d\geq 8$ and $L-3$ is even, we have for any $a\in[-\frac{1}{d},\frac{3}{d}]$
    \begin{align*}
        (L-1)(L-2)a^{L-3}\leq (L-1)(L-2)(\frac{3}{d})^{L-3}\leq \frac{(L-1)(L-2)}{2^{L-3}}(\frac{3}{4})^{L-3}=:g(L)
        \text{\,.}
    \end{align*}
    $g(L)$ is a decreasing function for $L\geq 4$ and achieves the value $2.25$ when $L=4$, hence since $L\geq 4$ we get $g(L)\leq 2.25$. Therefore
    \begin{align*}
        f^{''}(a)\geq d(d-2.25)>0
        \text{\,,}
    \end{align*}
    implying $f^{'}$ is monotonically increasing in $[-\frac{1}{d},\frac{3}{d}]$. Hence, there exists a unique $s\in[\frac{1}{d},\frac{3}{d}]$ such that $f^{'}(s)=0$, which implies that $\nabla\ell(s\cdot \1)=0$. Note that we showed that $s$ is a minimizer of $f$ over $[-\frac{1}{d},\frac{3}{d}]$, as $f$’s derivative is zero at $s$ and the second derivative is positive along the interval. Finally, let $a\in\BR\setminus[-\frac{1}{d},\frac{3}{d}]$. By \cref{lemma:critical_pts}, it holds that 
    \begin{align*}
        f(a)=\ell(a\cdot\1)\geq1
        \text{\,.}
    \end{align*}
    On the other hand, it also holds that
    \begin{align*}
         f(s)<f(\frac{1}{d})=\frac{1}{2}\big((1-d\cdot(\frac{1}{d})^{L-1})^{2}+(1-d\cdot\frac{1}{d})^{2}\big)\leq \frac{1}{2}
         \text{\,.}
    \end{align*}
    Thus, $s$ is a minimizer of $f$ over $\BR$, meaning that $\mathbf{s}:=s\cdot\1$ is a minimizer of $\ell$ over $\W_{1}$ as required. On the other hand, since $d\geq 8$ and $L\geq 4$ it holds that
    \begin{align*}
        1-d\cdot s^{L-1}\geq 1-d\cdot (\frac{3}{d})^{L-1}=1-3\cdot (\frac{3}{d})^{L-2}\geq 1-3\cdot (\frac{3}{8})^{2}\geq \frac{1}{2}
        \text{\,.}
    \end{align*}
    Therefore, 
    \begin{align*}
        \ell(\mathbf{s})=\frac{1}{2}\big((1-d\cdot s^{L-1})^{2}+(1-d\cdot s)^{2}\big)\geq \frac{1}{2}(1-d\cdot s^{L-1})^{2}\geq \frac{1}{8}>0
    \end{align*}
    completing the proof.
\end{proof}

In the last two lemmas of this section, we explicitly compute an eigendecomposition of $\ell$’s hessian in $\mathbf{s}$ and bound its eigenvalues.
\begin{lemma}\label{lemma:saddle_hessian}
    Consider $\mathbf{s}$ defined in \cref{lemma:saddle_char}. An eigendecomposition of the symmetric hessian matrix $\nabla^{2}\ell(\mathbf{s})$ is the following:
    \begin{itemize}
        \item The eigenvector $\1$ with the eigenvalue
        \begin{align*}
            \lambda_{+}:=(L-1)\bigl((2L-3)d\cdot s^{L-1}-(L-2)\bigl)s^{L-3}+d
            \text{\,.}
        \end{align*}
        \item For $j\in\lbrace 2,\dots, d\rbrace$ the eigenvector $\ebf_{1}-\ebf_{j}$ with the eigenvalue
        \begin{align*}
            \lambda_{-}:=(L-1)(L-2)(d\cdot s^{L-1}-1)s^{L-3}
            \text{\,.}
        \end{align*}
    \end{itemize}
\end{lemma}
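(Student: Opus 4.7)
\textbf{Proof plan for \cref{lemma:saddle_hessian}.}

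The plan is to compute the Hessian of $\ell$ entrywise, observe that at $\mathbf{s} = s \cdot \1$ it has a rank-one-plus-scaled-identity structure, and then read off its eigendecomposition directly. First I would differentiate $\ell$ as given in \cref{eq:s_2_obj}: a single application of the chain rule yields
\begin{equation*}
\tfrac{\partial \ell}{\partial a_j}(A) = (L-1)\Bigl(\textstyle\sum_{k=1}^d a_k^{L-1} - 1\Bigr) a_j^{L-2} + \Bigl(\textstyle\sum_{k=1}^d a_k - 1\Bigr),
\end{equation*}
and differentiating once more gives, for $i \neq j$,
\begin{equation*}
\tfrac{\partial^2 \ell}{\partial a_i \partial a_j}(A) = (L-1)^2 a_i^{L-2} a_j^{L-2} + 1,
\end{equation*}
while on the diagonal
\begin{equation*}
\tfrac{\partial^2 \ell}{\partial a_j^2}(A) = (L-1)^2 a_j^{2(L-2)} + (L-1)(L-2)\Bigl(\textstyle\sum_{k=1}^d a_k^{L-1} - 1\Bigr) a_j^{L-3} + 1.
\end{equation*}

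Next I would specialize to $A = \mathbf{s}$, where every $a_k$ equals $s$ so that $\sum_k a_k^{L-1} = d s^{L-1}$. The off-diagonal entries of $\nabla^2 \ell(\mathbf{s})$ all become $(L-1)^2 s^{2(L-2)} + 1$, and the diagonal entries are this same quantity plus the extra term $(L-1)(L-2)(d s^{L-1} - 1) s^{L-3}$. In other words,
\begin{equation*}
\nabla^2 \ell(\mathbf{s}) = \bigl[ (L-1)^2 s^{2(L-2)} + 1 \bigr] \cdot \1 \1^\top + (L-1)(L-2)(d s^{L-1} - 1) s^{L-3} \cdot I.
\end{equation*}
This decomposition is the key observation: the Hessian is a linear combination of the all-ones matrix $\1\1^\top$ and the identity, both of which are simultaneously diagonalizable.

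From here the eigendecomposition is immediate. The vector $\1$ is an eigenvector of $\1\1^\top$ with eigenvalue $d$, so it is an eigenvector of $\nabla^2 \ell(\mathbf{s})$ with eigenvalue
\begin{equation*}
d \bigl[(L-1)^2 s^{2(L-2)} + 1\bigr] + (L-1)(L-2)(d s^{L-1} - 1) s^{L-3},
\end{equation*}
and expanding and collecting terms using $(L-1)^2 + (L-1)(L-2) = (L-1)(2L-3)$ yields exactly $\lambda_+$. For every $j \in \{2, \ldots, d\}$, the vector $\ebf_1 - \ebf_j$ lies in $\ker(\1\1^\top)$ (since $\1^\top (\ebf_1 - \ebf_j) = 0$), so it is an eigenvector of $\nabla^2 \ell(\mathbf{s})$ with eigenvalue equal to the coefficient of $I$, which is precisely $\lambda_-$. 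The vectors $\1, \ebf_1 - \ebf_2, \ldots, \ebf_1 - \ebf_d$ are easily seen to be linearly independent and thus form a basis of $\R^d$, so the stated data constitute a full eigendecomposition.

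I do not foresee any real obstacle here: the whole argument reduces to the standard eigendecomposition of $\alpha \1\1^\top + \beta I$ plus bookkeeping to match the scalar factors to the claimed $\lambda_+$ and $\lambda_-$. The only place one needs to be a little careful is checking that the algebraic simplification of the $\1$-eigenvalue agrees with the stated $\lambda_+$, which is a direct calculation using $s^{L-1} \cdot s^{L-3} = s^{2(L-2)}$.
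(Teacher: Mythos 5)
Your proposal is correct and follows essentially the same route as the paper: compute the Hessian entrywise, observe that at $\mathbf{s}$ it has the form $\alpha\,\1\1^\top+\beta I$ (the paper writes this equivalently as $(\omega_1-\omega_2)I_d+\omega_2\mathbf{1}_{d,d}$ and cites an auxiliary eigendecomposition lemma), and read off the eigenpairs. Your algebraic simplification of the $\1$-eigenvalue via $(L-1)^2+(L-1)(L-2)=(L-1)(2L-3)$ matches the paper's computation of $\omega_1+(d-1)\omega_2$.
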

\begin{proof}
    We begin by computing the hessian matrix $\nabla^{2}\ell(A)$ for a general $A\in\BR^{d}$, which is symmetric since $\ell(A)$ is analytic. by \cref{eq:s_2_dyn}, for any $j\in[d]$ it holds that
    \begin{align*}
        \frac{\partial}{\partial a_{j}}\ell(A)=(L-1)\bigg(\sum_{k=1}^{d}a_{k}^{L-1}-1\bigg)a_{j}^{L-2}+\sum_{k=1}^{d}a_{k}-1
        \text{\,.}
    \end{align*}
    Therefore, for any $j\in[d]$ we have that
    \begin{align*}
        \biggl(\nabla^{2}\ell(A)\biggl)_{jj}=(L-1)\bigg((2L-3)a_{j}^{2L-4}+(L-2)\sum_{k=1,k\neq j}^{d}a_{k}^{L-1}a_{j}^{L-3}-(L-2)a_{j}^{L-3}\bigg)+1
        \text{\,.}
    \end{align*}
    Additionally, for any $j,i\in[d]$ such that $j\neq i$ we have that
    \begin{align*}
        \biggl(\nabla^{2}\ell(A)\biggl)_{ij}=(L-1)^{2}a_{i}^{L-2}a_{j}^{L-2}+1
        \text{\,.}
    \end{align*}
    Now we specialize to $A=\mathbf{s}$. For $j\in[d]$, we obtain
    \begin{align*}
        \biggl(\nabla^{2}\ell(\mathbf{s})\biggl)_{jj}&=(L-1)\bigl((2L-3)s^{2L-4}+(L-2)(d-1)s^{2L-4}-(L-2)s^{L-3}\bigl)+1\\
        &=(L-1)\bigl((2L-3+L\cdot d-2d-L+2)s^{2L-4}-(L-2)s^{L-3}\bigl)+1\\
        &=(L-1)\bigl((L-1+L\cdot d-2d)s^{2L-4}-(L-2)s^{L-3}\bigl)+1=:\omega_{1}
        \text{\,.}
    \end{align*}
    For $j,i\in[d]$ such that $j\neq i$ we obtain that
    \begin{align*}
        \biggl(\nabla^{2}\ell(\mathbf{s})\biggl)_{ij}=(L-1)^{2}s^{2L-4}+1=:\omega_{2}
        \text{\,.}
    \end{align*}
    Observe that
    \begin{align*}
        \nabla^{2}\ell(\mathbf{s})=(\omega_{1}-\omega_{2})I_{d}+\omega_{2}\mathbf{1}_{d, d}
    \end{align*}
    where $\mathbf{1}_{d, d}$ is the $d,d$ all ones matrix. Hence, by \cref{lemma:eigdecomp} we obtain that an eigendecomposition for $\nabla^{2}\ell(\mathbf{s})$ is the following:
    \begin{itemize}
        \item The eigenvector $\1$ with the eigenvalue $\lambda_{+}:=\omega_{1}+(d-1)\omega_{2}$.
        \item For $j\in\lbrace2,\dots,d\rbrace$ the eigenvector $\ebf_{1}-\ebf_{j}$ with the eigenvalue $\lambda_{-}:=\omega_{1}-\omega_{2}$.   
    \end{itemize}
    $\lambda_{+}$ takes the following form:
    \begin{align*}
        \lambda_{+}&=(L-1)\bigl((L-1+L\cdot d-2d)s^{2L-4}-(L-2)s^{L-3}\bigl)+1+(d-1)\bigl((L-1)^{2}s^{2L-4}+1\bigl)\\
        &=(L-1)\bigl((L-1+L\cdot d-2d+Ld-d-L+1)s^{2L-4}-(L-2)s^{L-3}\bigl)+d\\
        &=(L-1)\bigl((2L\cdot d-3d)s^{2L-4}-(L-2)s^{L-3}\bigl)+d\\
        &=(L-1)\bigl((2L-3)d\cdot s^{L-1}-(L-2)\bigl)s^{L-3}+d
        \text{\,.}
    \end{align*}
    $\lambda_{-}$ takes the following form:
    \begin{align*}
        \lambda_{-}&=(L-1)\bigl((L-1+L\cdot d-2d)s^{2L-4}-(L-2)s^{L-3}\bigl)+1-\bigl((L-1)^{2}s^{2L-4}+1\bigl)\\
        &=(L-1)\bigl((L-1+L\cdot d-2d-L+1)s^{2L-4}-(L-2)s^{L-3}\bigl)\\
        &=(L-1)\bigl((L\cdot d-2d)s^{2L-4}-(L-2)s^{L-3}\bigl)\\
        &=(L-1)\bigl((L-2)d\cdot s^{L-1}-(L-2)\bigl)s^{L-3}\\
        &=(L-1)(L-2)\bigl(d\cdot s^{L-1}-1\bigl)s^{L-3}
        \text{\,.}
    \end{align*}
\end{proof}

We now turn to bounding $\lambda_{+}$ and $\lambda_{-}$. 
\begin{lemma}\label{lemma:saddle_hessian_evals}
    The eigenvalues $\lambda_{+}$ and $\lambda_{-}$ from \cref{lemma:saddle_hessian} statisfy
    \begin{align*}
        \lambda_{+}\geq d-1>0
    \end{align*}
    and
    \begin{align*}
        \lambda_{-}\in (-1,0)
        \text{\,.}
    \end{align*}
\end{lemma}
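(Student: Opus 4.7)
The plan is to exploit the critical-point equation $f'(s)=0$ from \cref{lemma:saddle_char}, namely $(L-1)(d s^{L-1}-1)s^{L-2}+(ds-1)=0$, to rewrite $\lambda_-$ in a simple closed form, and then algebraically decompose $\lambda_+$ so that it splits into $\lambda_-$ plus a manifestly non-negative term. This reduces the two claims to a single bound on $ds-1$.

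Step 1 (closed form for $\lambda_-$). Dividing the critical-point equation by $s>0$, I get $(L-1)(d s^{L-1}-1)s^{L-3}=(1-ds)/s$, so that
\begin{equation*}
\lambda_- \;=\; (L-2)\,\frac{1-ds}{s}.
\end{equation*}
Inside the proof of \cref{lemma:saddle_char} it was already shown that $f'(1/d)<0$ and $f''>0$ on $[1/d,3/d]$, so $s>1/d$ strictly, giving $1-ds<0$ and hence $\lambda_-<0$.

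Step 2 (lower bound $\lambda_->-1$). Rearranging the critical-point equation yields $ds-1=(L-1)(1-d s^{L-1})s^{L-2}\le (L-1)s^{L-2}$. Combined with $s\le 3/d$ from \cref{lemma:saddle_char}, this gives
\begin{equation*}
\frac{(L-2)(ds-1)}{s}\;\le\;(L-1)(L-2)s^{L-3}\;\le\;(L-1)(L-2)(3/d)^{L-3}.
\end{equation*}
A short monotonicity check shows that the right-hand side is decreasing in $d$ for $d\ge 8$ and decreasing in $L$ along the odd integers $L\ge 7$ (the ratio between consecutive odd values is at most $\tfrac{8\cdot 7}{6\cdot 5}\cdot (3/8)^2<1$), so it is bounded by its value at $(L,d)=(7,8)$, namely $30\cdot(3/8)^4<1$. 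Hence $\lambda_->-1$.

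Step 3 (lower bound $\lambda_+\ge d-1$). I perform the algebraic decomposition
\begin{equation*}
(2L-3)d\,s^{L-1}-(L-2)\;=\;(L-1)d\,s^{L-1}+(L-2)(d\,s^{L-1}-1),
\end{equation*}
which, after multiplying by $(L-1)s^{L-3}$ and adding $d$, yields $\lambda_+ = (L-1)^2 d\, s^{2L-4}+\lambda_-+d$. Since $(L-1)^2 d\, s^{2L-4}\ge 0$ and $\lambda_->-1$ by Step 2, this gives $\lambda_+>d-1$; combined with $d\ge 8$, we obtain $\lambda_+\ge d-1>0$.

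The only non-routine ingredient is the monotonicity check in Step 2 confirming that $(L-1)(L-2)(3/d)^{L-3}<1$ throughout the regime $L\in\{7,9,11,\ldots\}$, $d\ge 8$; this will be the main (mild) obstacle, as it requires a small case check at the boundary $(L,d)=(7,8)$ together with a ratio-based monotonicity argument. Everything else reduces to substituting $f'(s)=0$ into the formulas of \cref{lemma:saddle_hessian}.
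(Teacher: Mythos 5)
Your proof is correct. It differs from the paper's in its algebraic organization rather than in substance: the paper bounds each eigenvalue directly from the inclusion $s\in[\tfrac{1}{d},\tfrac{3}{d}]$, writing $\lambda_{-}$ as the product of $(L-1)(L-2)s^{L-3}\in(0,0.6]$ and $d\,s^{L-1}-1\in[-1,0)$, and lower-bounding $\lambda_{+}$ by simply discarding the non-negative term $(2L-3)d\,s^{L-1}$; you instead substitute the stationarity equation $f'(s)=0$ to get the closed form $\lambda_{-}=(L-2)(1-ds)/s$ and the identity $\lambda_{+}=\lambda_{-}+d\bigl(1+(L-1)^{2}s^{2L-4}\bigr)$ (which is just $\lambda_{+}-\lambda_{-}=d\,\omega_{2}$ in the notation of the paper's proof of \cref{lemma:saddle_hessian}), thereby reducing both claims to the single inequality $(L-2)(ds-1)/s<1$. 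Your route buys an explicit structural link between the two eigenvalues and needs only one estimate; the paper's route never invokes criticality of $s$ beyond its location in $[\tfrac1d,\tfrac3d]$. Quantitatively, however, both arguments bottom out in the identical bound $(L-1)(L-2)(3/d)^{L-3}<1$ for odd $L\ge 7$ and $d\ge 8$, checked at the corner $(L,d)=(7,8)$ plus a monotonicity argument (the paper's decreasing function $g(L)$ versus your ratio test), so the numerical content coincides. Two minor remarks: your Step~1 says $f'(1/d)<0$ ``was already shown''; the paper (via \cref{lemma:limiting_W_0}) only covers $a\in[-\tfrac1d,\tfrac1d)$, but the boundary case is the one-line computation $f'(\tfrac1d)\propto(L-1)\bigl((\tfrac1d)^{L-2}-1\bigr)(\tfrac1d)^{L-2}<0$, so nothing is missing. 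And your ratio test implicitly uses that $\tfrac{(L+1)L}{(L-1)(L-2)}$ is decreasing in $L$, which is true but worth stating.
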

\begin{proof}
    Since $s\in[\frac{1}{d},\frac{3}{d}]$ and since $d\geq 8$ we obtain
    \begin{align*}
        (L-1)\bigl((2L-3)d\cdot s^{L-1}-(L-2)\bigl)s^{L-3}&\geq -(L-1)(L-2)s^{L-3}\\
        &\geq-(L-1)(L-2)(\frac{3}{d})^{L-3}\\
        &\geq -\frac{(L-1)(L-2)}{2^{L-3}}(\frac{3}{4})^{L-3}=:f(L)
        \text{\,.}
    \end{align*}
    $f(L)$ is increasing for $L\geq 7$ and achieves a value that is $>-0.6$ for $L=7$. Hence since $L\geq 7$ we get $f(L)\geq -0.6>-1$. Therefore, 
    \begin{align*}
        \lambda_{+}=(L-1)\bigl((2L-3)d\cdot s^{L-1}-(L-2)\bigl)s^{L-3}+d\geq d-1>0
        \text{\,.}
    \end{align*}
    Next, since $s\in[\frac{1}{d},\frac{3}{d}]$ and since $d\geq 8$ we obtain
    \begin{align*}
        (L-1)(L-2)s^{L-3}\leq (L-1)(L-2)(\frac{3}{d})^{L-3}\leq \frac{(L-1)(L-2)}{2^{L-3}}(\frac{3}{4})^{L-3}=:g(L)
        \text{\,.}
    \end{align*}
    $g(L)$ is decreasing and positive for $L\geq 7$ and achieves a value that is $<0.6$ for $L=7$. Hence since $L\geq 7$ we get $0\leq g(L)\leq 0.6<1$. Additionally, note that
    \begin{align*}
        -1\leq d\cdot s^{L-1}-1\leq 3\cdot (\frac{3}{d})^{L-2}-1\leq 3(\frac{3}{8})^{L-2}<0
        \text{\,.}
    \end{align*}
    Therefore, we obtain that
    \begin{align*}
        \lambda_{-}=(L-1)(L-2)s^{L-3}(d\cdot s^{L-1}-1)\in (-1, 0)
    \end{align*}
    which completes our proof.
\end{proof}

In the first half of \ref{app:poison:s_2:critical} we characterized the critical point $\mathbf{s}$ and established that it is the only critical point that is relevant in our case, since it is not a global minimum and since we cannot exclude the possibility that gradient flow would converge to it. In what follows, we give a closed form solution to the dynamics obtained under the linear approximation around $\mathbf{s}$ to our true dynamics. We will show that under these linearized dynamics, any gradient flow trajectory not initialized in $\W_{1}$ will escape $\mathbf{s}$ at an exponential rate. 
\begin{lemma}\label{lemma:lin_approx_sol}
    The linear approximation around $\mathbf{s}$ of the gradient flow dynamics (see \cref{eq:s_2_dyn}) is defined by
    \begin{align*}
        \dot{A}^{lin}(t):=-\nabla\ell(\mathbf{s})-\nabla^{2}\ell(\mathbf{s})\big(A^{lin}(t)-\mathbf{s}\big)=-\nabla^{2}\ell(\mathbf{s})\big(A^{lin}(t)-\mathbf{s}\big)
        \text{\,.}
    \end{align*}
    The solution to the above linear differential equations system is given by
    \begin{align*}
        A^{lin}(t)=Q\exp\bigg(-t\cdot \diag(\lambda_{+},\lambda_{-},\dots,\lambda_{-})\bigg)Q^{\top}\big(A^{lin}(0)-\mathbf{s}\big)+\mathbf{s}
        \text{\,,}
    \end{align*}
    where $\lambda_{+}$ and $\lambda_{-}$ are the eigenvalues $\nabla^{2}\ell(\mathbf{s})$ found in \cref{lemma:saddle_hessian}, and $Q$ is an orthogonal matrix whose first column is $\frac{1}{\sqrt{d}}\1$ and the rest of its columns are an orthonormal basis of $\W_{2}$ (defined in \cref{def:W}).
\end{lemma}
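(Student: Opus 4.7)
The plan is to recognize that the claimed formula is simply the spectral form of the standard solution of a linear ODE around an equilibrium, combined with the eigenstructure of $\nabla^2 \ell(\mathbf{s})$ already established in \cref{lemma:saddle_hessian}. First, observe that the two expressions for $\dot{A}^{lin}(t)$ agree because $\mathbf{s}$ is a critical point of $\ell$ (\cref{lemma:saddle_char}), so $\nabla \ell(\mathbf{s}) = \0$.

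Next, substitute $Y(t) := A^{lin}(t) - \mathbf{s}$. Then $\dot{Y}(t) = \dot{A}^{lin}(t) = -\nabla^2\ell(\mathbf{s}) Y(t)$, a linear homogeneous system with constant coefficient matrix $H := \nabla^2 \ell(\mathbf{s})$. Its unique solution is $Y(t) = \exp(-t H) Y(0)$, hence
\[
A^{lin}(t) = \exp(-t H)(A^{lin}(0) - \mathbf{s}) + \mathbf{s} \text{\,.}
\]

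It remains to rewrite $\exp(-tH)$ using the orthogonal diagonalization induced by $Q$. By \cref{lemma:saddle_hessian}, $\1$ is an eigenvector of $H$ with eigenvalue $\lambda_+$, and for every $j \in \{2,\dots,d\}$, the vector $\ebf_1 - \ebf_j$ is an eigenvector with eigenvalue $\lambda_-$. The vectors $\{\ebf_1 - \ebf_j\}_{j=2}^d$ span $\W_2$ (by definition in \cref{def:W}), so the entire subspace $\W_2$ is the eigenspace of $H$ for $\lambda_-$; in particular any orthonormal basis of $\W_2$ consists of eigenvectors with eigenvalue $\lambda_-$. Since $\W_1$ and $\W_2$ are orthogonal and together span $\BR^d$, the matrix $Q$ described in the statement (with first column $\tfrac{1}{\sqrt{d}} \1 \in \W_1$ and remaining columns an orthonormal basis of $\W_2$) is orthogonal and satisfies
\[
H = Q \, \diag(\lambda_+, \lambda_-, \dots, \lambda_-) \, Q^\top \text{\,.}
\]

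Finally, since the matrix exponential respects orthogonal conjugation, $\exp(-tH) = Q \exp\!\bigl(-t\, \diag(\lambda_+, \lambda_-, \dots, \lambda_-)\bigr) Q^\top$. Substituting this identity into the expression for $A^{lin}(t)$ above yields the claimed closed form. No step here is really an obstacle: the only place care is needed is in justifying that an arbitrary orthonormal basis of $\W_2$ produces eigenvectors of $H$ with eigenvalue $\lambda_-$, which follows from \cref{lemma:saddle_hessian} together with the observation that $\{\ebf_1 - \ebf_j\}_{j=2}^d$ already spans $\W_2$, so the $(d-1)$-dimensional eigenspace for $\lambda_-$ coincides with $\W_2$.
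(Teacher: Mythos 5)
Your proposal is correct and follows essentially the same route as the paper: use $\nabla\ell(\mathbf{s})=\0$ to drop the constant term, reduce to the homogeneous linear system via the shift $A^{lin}(t)-\mathbf{s}$, and diagonalize $\nabla^{2}\ell(\mathbf{s})$ orthogonally using the eigenstructure from \cref{lemma:saddle_hessian} (the paper delegates the ODE solution step to \cref{lemma:sym_LDS_sol}, which you instead carry out inline). Your explicit remark that the $(d-1)$-dimensional eigenspace for $\lambda_{-}$ coincides with $\W_{2}$, so any orthonormal basis of $\W_{2}$ works, is a small but welcome extra bit of care that the paper leaves implicit.
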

\begin{proof}
    First note that the first order Taylor’s expansion around $\mathbf{s}$ of $-\nabla\ell(A)$ is given by
    \begin{align*}
        -\nabla\ell(\mathbf{s})-\nabla^{2}\ell(\mathbf{s})\big(A(t)-\mathbf{s}\big)
        \text{\,.}
    \end{align*}
    Since $\mathbf{s}$ is a critical point of $\ell$ (\ie, $\nabla\ell(\mathbf{s})=\0$), we obtain the following linear approximation
    \begin{align*}
        \dot{A}^{lin}(t)=-\nabla^{2}\ell(\mathbf{s})\big(A(t)-\mathbf{s}\big)
        \text{\,.}
    \end{align*}
    Per \cref{lemma:saddle_hessian}, an eigendecomposition of $\nabla^{2}\ell(\mathbf{s})$ is given by the eigenvector $\1$ with the eigenvalue $\lambda_{+}$, and the eigenvectors $\lbrace \ebf_{1}-\ebf_{2},\dots,\ebf_{1}-\ebf_{d}\rbrace$ with the eigenvalue $\lambda_{-}$. Therefore, we may write $\nabla^{2}\ell(\mathbf{s})$ as the orthogonal eigendecomposition
    \begin{align*}
        \nabla^{2}\ell(\mathbf{s})=Q\diag(\lambda_{+},\lambda_{-},\dots,\lambda_{-})Q^{\top}
    \end{align*}
    where the first column of $Q$ is $\frac{1}{\sqrt{d}}\1$, and the rest of its columns are an orthonormal basis of $\Span\lbrace \ebf_{1}-\ebf_{2},\dots,\ebf_{1}-\ebf_{d}\rbrace=\W_{2}$. The proof is completed by invoking \cref{lemma:sym_LDS_sol} which yields the following solution to the linear system:
    \begin{align*}
        A^{lin}(t)=Q\exp\bigg(-t\cdot \diag(\lambda_{+},\lambda_{-},\dots,\lambda_{-})\bigg)Q^{\top}\big(A^{lin}(0)-\mathbf{s}\big)+\mathbf{s}
        \text{\,.}
    \end{align*}
\end{proof}

The following corollary computes the solution of the linear approximation as a function of the initialization’s projections onto $\W_{1}$ and $\W_{2}$ (\cref{def:W}):
\begin{corollary}\label{cor:escape_from_s}
    Denote the projection of $A^{lin}(0)$ to $\W_{1}$ by $\beta_{1}\1$ where $\beta_{1}\in\BR$, and the projection of $A^{lin}(0)$ to $\W_{2}$ by $\beta_{2}\cdot\vbf\in\W_{2}$ where $\vbf\in\W_{2}$ is a unit vector $\beta_{2}\in\BR$. Then for any $t\geq 0$ it holds that
    \begin{align*}
        A^{lin}(t)=\biggl(\exp(-t\cdot\lambda_{+})(\beta_{1}-s)+s\biggl)\1+\big(\exp(-t\cdot\lambda_{-})\cdot\beta_{2}\big)\vbf
        \text{\,.}
    \end{align*}
\end{corollary}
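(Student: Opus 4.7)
The plan is to specialize the closed-form solution from \cref{lemma:lin_approx_sol} using the orthogonal decomposition $\R^d = \W_1 \oplus \W_2$ furnished by \cref{def:W}. Since $\mathbf{s} = s\1 \in \W_1$, and the projections of $A^{lin}(0)$ onto $\W_1$ and $\W_2$ are $\beta_1 \1$ and $\beta_2 \vbf$ respectively, I would first rewrite
\[
A^{lin}(0) - \mathbf{s} = (\beta_1 - s)\1 + \beta_2 \vbf,
\]
which organizes the relevant input into its $\W_1$ and $\W_2$ components.

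The next step is to exploit the block structure of $Q$ from \cref{lemma:lin_approx_sol}: its first column is $\frac{1}{\sqrt{d}}\1$ (a unit basis vector of $\W_1$), and its remaining $d-1$ columns form an orthonormal basis of $\W_2$. Write $M(t) := Q \exp\bigl(-t \diag(\lambda_+, \lambda_-, \ldots, \lambda_-)\bigr) Q^\top$. I would verify that $M(t)$ acts on $\W_1$ as scalar multiplication by $\exp(-t\lambda_+)$ and on $\W_2$ as scalar multiplication by $\exp(-t\lambda_-)$. For the $\W_1$ direction this is a direct computation: any $\ubf = u_1 \1 \in \W_1$ satisfies $Q^\top \ubf = (\sqrt{d}\, u_1, 0, \ldots, 0)^\top$, since the remaining columns of $Q$ lie in $\W_2 \perp \W_1$; the diagonal exponential scales the first coordinate by $\exp(-t\lambda_+)$, and $Q$ reassembles the vector. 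For the $\W_2$ direction, because the last $d-1$ diagonal entries of $\exp(-t \diag(\cdot))$ all equal $\exp(-t\lambda_-)$, the action of $M(t)$ on $\W_2$ is the identity scaled by $\exp(-t\lambda_-)$ and is therefore basis-independent.

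Combining linearity of $M(t)$ with the two scalar actions yields
\[
M(t)\bigl(A^{lin}(0) - \mathbf{s}\bigr) = \exp(-t\lambda_+)(\beta_1 - s)\1 + \exp(-t\lambda_-)\beta_2 \vbf,
\]
and adding $\mathbf{s} = s\1$ recovers the claimed identity. No real obstacle is anticipated here: the argument is essentially a change-of-basis calculation in the eigenbasis of $\nabla^2 \ell(\mathbf{s})$. The only point that requires a line of justification is the scalar-action claim on $\W_2$; it relies on the degeneracy of $\lambda_-$ as a $(d-1)$-fold eigenvalue, which makes the associated spectral projector intrinsic to $\W_2$ and independent of the particular orthonormal basis chosen to populate the last $d-1$ columns of $Q$.
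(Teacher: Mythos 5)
Your proposal is correct and follows essentially the same route as the paper's proof: decompose $A^{lin}(0)-\mathbf{s}$ into its $\W_{1}$ and $\W_{2}$ components and apply the eigendecomposition of $\nabla^{2}\ell(\mathbf{s})$ from \cref{lemma:lin_approx_sol,lemma:sym_LDS_sol}. The only difference is that you spell out why $Q\exp(-t\,\diag(\lambda_{+},\lambda_{-},\dots,\lambda_{-}))Q^{\top}$ acts as the scalars $\exp(-t\lambda_{+})$ and $\exp(-t\lambda_{-})$ on $\W_{1}$ and $\W_{2}$ respectively, a point the paper compresses into the remark that $Q$ projects onto the respective eigenspaces.
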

\begin{proof}
    Plugging the projections of $A^{lin}(0)$ to $\W_{1}$ and $\W_{2}$, we can write the following:
    \begin{align*}
        A^{lin}(0)-\mathbf{s}=(\beta_{1}-s)\1+\beta_{2}\vbf
        \text{\,.}
    \end{align*}
    Hence per \cref{lemma:sym_LDS_sol} at time $t\geq 0$ the solution $A^{lin}(t)$ takes the following form:
    \begin{align*}
        A^{lin}(t)=Q\exp\bigg(-t\cdot \diag(\lambda_{+},\lambda_{-},\dots,\lambda_{-})\bigg)Q^{\top}\biggl((\beta_{1}-s)\1+\beta_{2}\vbf\biggl)+\mathbf{s}
        \text{\,.}
    \end{align*}
    $Q$ is a projection matrix to the respective eigenspaces of $\nabla^{2}\ell(\mathbf{s})$, hence since $\1\in\W_{1}$ and $\vbf\in\W_{2}$ we obtain
    \begin{align*}
        A^{lin}(t)&=\biggl(\exp(-t\cdot\lambda_{+})(\beta_{1}-s)\biggl)\1+\big(\exp(-t\cdot\lambda_{-})\cdot\beta_{2}\big)\vbf+\mathbf{s}\\
        &=\biggl(\exp(-t\cdot\lambda_{+})(\beta_{1}-s)+s\biggl)\1+\big(\exp(-t\cdot\lambda_{-})\cdot\beta_{2}\big)\vbf
    \end{align*}
    as required. 
\end{proof}

\begin{remark}\label{remark:escape_from_s}
    Note that if $\beta_{2}\neq 0$ (\ie~the initialization $A^{lin}(0)$ was not in $\W_{1}$), then the solution to the system diverges from $\mathbf{s}$. Since $\lambda_{-}<0<\lambda_{+}$ we obtain
    \begin{align*}
        \lim_{t\rightarrow\infty}\biggl(\exp(-t\cdot\lambda_{+})(\beta_{1}-s)+s\biggl)\1=s\cdot \1=\mathbf{s}
    \end{align*}
    and 
    \begin{align*}
        \lim_{t\rightarrow\infty}\bigg\|\big(\exp(-t\cdot\lambda_{-})\cdot\beta_{2}\big)\vbf-\mathbf{s}\bigg\|_{2}\rightarrow \infty
        \text{\,.}
    \end{align*}
    On the other hand, if $\beta_{2}=0$ then the solution to the system converges to $\mathbf{s}$.
\end{remark}

\subsubsection{Linearization of Dynamical Systems}\label{app:poison:s_2:linearization}
In \ref{app:poison:s_2:critical} we characterized the critical point $\mathbf{s} \in \1$ and established that it is the only non global minimum that we could converge to given our initialization. We would now like to show that in fact gradient flow will escape $\mathbf{s}$ and converge rapidly towards a global minimum. \cref{cor:escape_from_s} gives some indication why this may be the case---it shows that the local linearization of the dynanics near $\mathbf{s}$ will tend to repel any trajectory which is not on the line $\W_{1}$. Intuitively one expects that once we are sufficiently close to $\mathbf{s}$, the linearized dynamics provide a sufficiently good approximation to ensure that the same conclusion will hold for the nonlinear system as well.  Unfortunately, existing results from the optimization literature (\eg~\citet{jin2017escapesaddlepointsefficiently}) give escape times which do not suffice for our purposes\footnote{recall that our strategy is to bound the divergence between our trajectory and the reference one, and this divergence depends on the convergence time achieved by gradient flow.}. To obtain the required bounds on the escape time we will require some results from dynamical systems theory. Informally, the idea is that if a non linear dynamical system satisfies certain conditions on the spectrum of its linearization (these are sometimes called “non-resonance conditions”), then it is locally smoothly equivalent to its linearization. This will allow us to bound the escape time of gradient flow in terms of the closed form dynamics obtained for the linearization in \cref{cor:escape_from_s}.

We begin by defining the notions of \textit{smooth conjugation} and \textit{smooth linearization} of dynamical systems:
\begin{definition}\label{def:conj}
    Let $f,g:\BR^{d}\rightarrow \BR^{d}$ be two $C^{M}$ vector fields with a common fixed point $\xbf_{0}\in\BR^{d}$, \ie, $f(\xbf_{0})=g(\xbf_{0})=\0$. For any $K\in[M]$ we say that $f$ and $g$ are \textit{$C^{K}$-conjugate} near $\xbf_{0}$ when there exist neighborhoods $\V_{1},\U_{1}\subseteq \BR^{d}$ such that $\xbf_{0}\in\V_{1},\U_{1}$ and there exist a $C^{K}$-diffeomorphism $H:\V_{1}\rightarrow\U_{1}$ satisfying the following:
    \begin{itemize}
        \item $H(\xbf_{0})=\xbf_{0}$.
        \item Whenever $\xbf(t)\in \V_{1}$ is a solution of $\dot{\xbf}(t)=f\big(\xbf(t)\big)$ for $t$ in some interval $\I\subseteq \mathbb{R}$ then $\ybf(t)=H\big(\xbf(t)\big)$ is a solution of $\dot{\ybf}(t)=g\big(\ybf(t)\big)$ for $t\in\I$.
        \item Whenever $\ybf(t)\in \U_{1}$ is a solution of $\dot{\ybf}(t)=g\big(\ybf(t)\big)$ for $t$ in some interval $\I\subseteq \mathbb{R}$ then $\xbf(t)=H^{-1}\big(\ybf(t)\big)$ is a solution of $\dot{\xbf}(t)=f\big(\xbf(t)\big)$ for $t\in\I$.
    \end{itemize}
    The mapping $H$ is referred to as the \textit{$C^{K}$-conjugation} between $\dot{\xbf}(t)=f\big(\xbf(t)\big)$ and $\dot{\ybf}(t)=g\big(\ybf(t)\big)$. Consider the first order Taylor’s expansion of $f$ around $\xbf_{0}$ given by
    \begin{align*}
        \dot{\xbf}(t)=f\big(\xbf(t)\big)=A\big(\xbf(t)-\xbf_{0}\big)+F\big(\xbf(t)-\xbf_{0}\big)
    \end{align*}
    where $A=Df(\xbf_{0})$, $F(\0)=\0$ and $DF(\0)=\mathbf{0}_{d, d}$ is the $d,d$ zero matrix. The associated linear equation is given by
    \begin{align*}
        \dot{\ybf}(t)=A\big(\ybf(t)-\xbf_{0}\big)
        \text{\,.}
    \end{align*}
    We say that $f$ admits a \textit{$C^{K}$-linearization} near $\xbf_{0}$ when it is $C^{K}$-conjugate near $\xbf_{0}$ with its linear approximation. 
\end{definition}

We now introduce the \textit{Strict Hyperbolicity} property and the \textit{Non-resonance condition} (also known as the Sternberg condition). These are sufficient conditions for a dynamical system to admit a smooth linearization and we will later show our system satisfies them.
\begin{definition}\label{def:hyperbolicity}
    Let $A\in\BR^{d, d}$ be a matrix with eigenvalues $\lambda_{1},\dots,\lambda_{d}\in\BR$ repeated with multiplicities. We say that $A$ is \textit{strictly hyperbolic} when:
    \begin{itemize}
        \item For all $j\in[d]$ it holds that $\lambda_{j}\neq 0$.
        \item There exist $j_{+},j_{-}\in[d]$ such that $\lambda_{j_{+}}>0$ and $\lambda_{j_{-}}<0$.
    \end{itemize} 
\end{definition}

\begin{definition}\label{def:sternbrg}
    Let $A\in\BR^{d, d}$ be a matrix with eigenvalues $\lambda_{1},\dots,\lambda_{d}\in\BR$ repeated with multiplicities. For any $\mathbf{m}\in\BN_{\geq0}^{d}$ non-negative integers vector and any $\lambda\in\BR$ we denote $\gamma(\lambda,\mathbf{m})$ as the following quantity:
    \begin{align*}
        \gamma(\lambda,\mathbf{m}):=\lambda-\sum_{k=1}^{d}m_{k}\cdot \lambda_{k}
        \text{\,.}
    \end{align*}
    For any $N\in\BN$ such that $N\geq 2$ we say that $A$ satisfies the \textit{non-resonance condition} of order $N$ when for all $j\in[d]$ and all $\mathbf{m}\in\BN_{\geq0}^{d}$ such that $\sum_{k=1}^{d}m_{k}\in \lbrace 2,\dots,N\rbrace$ it holds that $\gamma(\lambda_{j},\mathbf{m})\neq 0$.
\end{definition}

Finally, we present the property of \textit{matrix $Q$-smoothness}.
\begin{definition}\label{def:Q_smoothness}
    Let $A\in\BR^{d, d}$ be a matrix with eigenvalues $\lambda_{1},\dots,\lambda_{d}\in\BR$ repeated with multiplicities. Suppose $A$ is strictly hyperbolic. Denote the following quantities:
    \begin{align*}
        \rho_{+}:=\frac{\max\lbrace|\lambda_{j}|:j\in[d],\lambda_{j}>0\rbrace}{\min\lbrace|\lambda_{j}|:j\in[d],\lambda_{j}>0\rbrace} ~~\;,\;~~
        \rho_{-}:=\frac{\max\lbrace|\lambda_{j}|:j\in[d],\lambda_{j}<0\rbrace}{\min\lbrace|\lambda_{j}|:j\in[d],\lambda_{j}<0\rbrace}
        \text{\,.}
    \end{align*}
    Let $Q\in\BN_{>0}$. We define the \textit{$Q$-smoothness} of $A$ to be the largest integer $K\in\BN_{\geq 0}$ for which there exist $M,N\in\BN_{>0}$ satisfying the following:
    \begin{itemize}
        \item $Q=M+N$.
        \item $M-K\rho_{+}\geq 0$.
        \item $N-K\rho_{-}\geq 0$.
    \end{itemize}
\end{definition}

We are now ready to present Theorem $1$ of \cite{dedeceb9-28f3-38fd-9f54-7e01a94eac3c}, which states conditions under which there exists a smooth linearization of a dynamical system.\footnote{We present slightly adapted results that are specialized to our setting.}
\begin{theorem}[Theorem $1$ of \cite{dedeceb9-28f3-38fd-9f54-7e01a94eac3c} (adapted)]\label{thm:linear_cond}
    Let $Q\in\BN$ such that $Q\geq 2$. Let $f:\BR^{d}\rightarrow\BR^{d}$ be an analytic vector field with a fixed point $\xbf_{0}$. Consider the first order Taylor’s expansion of $f$ around $\xbf_{0}$ given by
    \begin{align*}
        \dot{\xbf}(t)=f\big(\xbf(t)\big)=A\big(\xbf(t)-\xbf_{0}\big)+F\big(\xbf(t)-\xbf_{0}\big)
    \end{align*}
    where $A=Df(\xbf_{0})$, $F(\0)=\0$ and $DF(\0)=\mathbf{0}_{d, d}$. If $A$ is strictly hyperbolic (\cref{def:hyperbolicity}) and satisfies the non-resonance condition of order $Q$ (\cref{def:sternbrg}) then $f$ admits a $C^{K}$ linearization near $\xbf_{0}$ where $K$ is the $Q$-smoothness of $A$ (\cref{def:Q_smoothness}).
\end{theorem}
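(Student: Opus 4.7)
The plan is to prove the theorem by the standard two-step Sternberg construction: first use the non-resonance condition to remove all polynomial terms of degree up to~$Q$ in the Taylor expansion of $f$ via a polynomial change of coordinates, and then conjugate the resulting ``flat'' system to its linear part by a $C^K$ diffeomorphism produced by a fixed-point argument that exploits the hyperbolicity and $Q$-smoothness of $A$. Without loss of generality assume $\xbf_{0}=\0$.

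The algebraic first step proceeds by matching Taylor coefficients. Writing a candidate conjugation $H=\mathrm{id}+h$, the equation $DH(\xbf)\,f(\xbf)=A\,H(\xbf)$ rearranges into the homological equation
\begin{equation*}
Dh(\xbf)\,A\xbf - A\,h(\xbf) = -F(\xbf) - Dh(\xbf)\,F(\xbf).
\end{equation*}
Decomposing $h$ into homogeneous components, the linear operator $L[h_{n}]:=Dh_{n}(\xbf)\,A\xbf-A\,h_{n}(\xbf)$ preserves the space of degree-$n$ polynomial vector fields, and in an eigenbasis of~$A$ its spectrum is exactly $\{\gamma(\lambda_{j},\mathbf{m}):j\in[d],\ \sum_{k}m_{k}=n\}$ (\cref{def:sternbrg}). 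The non-resonance condition of order~$Q$ therefore makes $L$ invertible on polynomial vector fields of each degree $n\in\{2,\ldots,Q\}$, allowing one to solve for $h$ order by order and produce a polynomial coordinate change $\ybf=\xbf+P(\xbf)$ with $\deg P\leq Q$, after which the system takes the form $\dot\ybf=A\ybf+\tilde F(\ybf)$ with $\tilde F$ analytic and all partial derivatives of orders $\leq Q$ vanishing at~$\0$.

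The analytic second step then constructs a $C^{K}$ linearization of this flat system. Strict hyperbolicity yields an $A$-invariant splitting $\BR^{d}=E^{s}\oplus E^{u}$, so the time-one map $\Phi$ of the flat flow is a local diffeomorphism whose derivative at~$\0$ is the hyperbolic $e^{A}$. One seeks $H=\mathrm{id}+h$ with $H\circ\Phi=e^{A}\circ H$; by autonomy this discrete conjugation extends to the continuous-time flows. After a cutoff that makes $\tilde F$ globally defined and $C^{K}$-small, the equation is rewritten as a fixed-point problem for $h$ by inverting the linear part separately on each factor of the splitting (using $e^{-A}$ to contract on $E^{u}$ and $e^{A}$ to contract on $E^{s}$). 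The contraction estimate bounds the $C^{K}$ norm of the new iterate by a product of three kinds of factors: a smallness gain from the order-$Q$ flatness of $\tilde F$ at~$\0$, the hyperbolic spectral gap (a pure contraction), and the Fa\`a di Bruno amplification of $K$ derivatives under composition, bounded by $\rho_{+}^{K}$ and $\rho_{-}^{K}$ on the respective factors. The inequalities $M-K\rho_{+}\geq 0$ and $N-K\rho_{-}\geq 0$ with $M+N=Q$ from \cref{def:Q_smoothness} are precisely what guarantees that the net factor stays strictly below~$1$, so that the fixed-point iteration converges in the $C^{K}$ ball.

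The hard part is this last contraction estimate: one must carry out a careful multilinear computation of all partial derivatives of order $\leq K$ of $h\circ\Phi$ and $e^{\pm A}\circ h$, and verify monomial-by-monomial that each term produced by Fa\`a di Bruno is tamed by the flatness gain from $\tilde F$ in a way that the $Q$-smoothness inequality encodes sharply. Once the fixed point is obtained, it automatically satisfies $H(\0)=\0$ and $DH(\0)=I$, hence is a $C^{K}$ diffeomorphism onto a neighborhood of~$\0$. Composing with the polynomial coordinate change from the first step yields the $C^{K}$ conjugation between $\dot\xbf=f(\xbf)$ and $\dot\ybf=A(\ybf-\xbf_{0})$ required by \cref{def:conj}, completing the proof.
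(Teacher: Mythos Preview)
The paper does not prove this theorem at all: its entire proof reads ``See proof of Theorem~$1$ in \cite{dedeceb9-28f3-38fd-9f54-7e01a94eac3c}.'' This is an imported result from Sell's paper, stated (in adapted form) purely so it can be invoked in \cref{prop:GF_linearization}. So there is no in-paper argument to compare your proposal against.

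That said, your sketch is a faithful outline of the classical Sternberg two-step scheme that underlies the cited result: normalize away the low-order Taylor terms via the homological equation (using non-resonance to invert the operator $h\mapsto Dh\cdot A\xbf - Ah$ on each homogeneous degree up to~$Q$), then build the $C^{K}$ conjugacy for the resulting flat system by a contraction-mapping argument on the stable/unstable splitting, with the $Q$-smoothness inequalities $M-K\rho_{+}\geq 0$, $N-K\rho_{-}\geq 0$ guaranteeing the contraction constant stays below~$1$. The connection you identify between $\rho_{\pm}$ and the Fa\`a di Bruno amplification of $K$ derivatives is exactly the mechanism that makes \cref{def:Q_smoothness} the right quantity. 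Your sketch is honest that the hard content lies in the second step's derivative bookkeeping, which is indeed where the technical weight of Sell's proof sits; filling that in would require the full machinery of the cited paper, which is why the authors simply defer to it.
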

\begin{proof}
    See proof of Theorem $1$ in \cite{dedeceb9-28f3-38fd-9f54-7e01a94eac3c}.
\end{proof}

Having introduced these general results on local linearization, we now show that the dynamical system induced by gradient flow admits a smooth linearization near $\mathbf{s}$. We begin by showing that $-\nabla^{2}\ell(\mathbf{s})$ is strictly hyperbolic and satisfies the non-resonance condition.
\begin{proposition}\label{prop:s_is_hyperbolic_sternbrg}
    Consider $\mathbf{s}$ defined in \cref{lemma:saddle_char}. The hessian matrix $-\nabla^{2}\ell(\mathbf{s})$ is strictly hyperbolic and satisfies the non-resonance condition of order $d-2$.
\end{proposition}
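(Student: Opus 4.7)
The plan is to read off the eigenvalues of $-\nabla^2\ell(\mathbf{s})$ from \cref{lemma:saddle_hessian,lemma:saddle_hessian_evals} and then verify both conditions directly. By \cref{lemma:saddle_hessian}, the spectrum of $\nabla^2\ell(\mathbf{s})$ consists of $\lambda_+$ (with multiplicity one, eigenvector $\1$) and $\lambda_-$ (with multiplicity $d-1$, eigenvectors spanning $\W_2$). Negating, the eigenvalues of $-\nabla^2\ell(\mathbf{s})$ are $\mu_+ := -\lambda_+$ and $\mu_- := -\lambda_-$ (the latter with multiplicity $d-1$). By \cref{lemma:saddle_hessian_evals}, $\lambda_+ \ge d-1 \ge 7$ and $\lambda_- \in (-1,0)$, so $\mu_+ \le -(d-1) < 0$ and $\mu_- \in (0,1)$. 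In particular, $-\nabla^2\ell(\mathbf{s})$ has a negative eigenvalue and a positive eigenvalue, and none of its eigenvalues is zero; strict hyperbolicity (\cref{def:hyperbolicity}) follows at once.

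For the non-resonance condition of order $d-2$ (\cref{def:sternbrg}), I enumerate the eigenvalues (with multiplicities) as $\mu_+, \mu_-, \ldots, \mu_-$. For a non-negative integer vector $\mathbf{m}=(m_+, m_-^{(1)}, \ldots, m_-^{(d-1)})$ with $2 \le \sum_k m_k \le d-2$, write $M_- := \sum_{j=1}^{d-1} m_-^{(j)}$ so that $\sum_k m_k \mu_k = m_+ \mu_+ + M_- \mu_-$. There are two cases, according to whether the eigenvalue of interest is $\mu_+$ or $\mu_-$.

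Case $\mu_+$: the quantity to test is $(1-m_+)\mu_+ - M_-\mu_-$. If $m_+\le 1$, both terms are non-positive, and they cannot both vanish without forcing $\sum_k m_k \le 1$; hence the quantity is strictly negative. If $m_+\ge 2$, I will compare magnitudes using $|\mu_+|\ge d-1$ and $\mu_-<1$: then $(m_+-1)|\mu_+|\ge (m_+-1)(d-1)$ while $M_-\mu_- < M_- \le d-2-m_+$, and a short arithmetic check shows $(m_+-1)(d-1) > d-2-m_+$ whenever $m_+\ge 2$ and $d\ge 8$. Hence the two sides cannot coincide and the quantity is strictly positive.

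Case $\mu_-$: the quantity to test is $(1-M_-)\mu_- - m_+\mu_+ = (1-M_-)\mu_- + m_+|\mu_+|$. If $M_-\le 1$, both terms are non-negative and cannot jointly vanish while $\sum_k m_k \ge 2$, so the quantity is strictly positive. If $M_-\ge 2$, then $(M_--1)\mu_- < M_-\mu_- < d-2$, whereas for $m_+\ge 1$ we have $m_+|\mu_+|\ge d-1 > d-2$, ruling out equality; for $m_+=0$ the quantity equals $(1-M_-)\mu_-$ which is strictly negative. In every subcase the expression is nonzero, establishing non-resonance of order $d-2$.

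I do not foresee a real obstacle here: the argument is a clean sign-and-magnitude comparison, made possible by the gap between $|\mu_+|\ge d-1$ and $|\mu_-|<1$ together with the cap $\sum_k m_k \le d-2$ that exactly matches this gap. The only thing to be careful about is bookkeeping over the cases $m_+\in\{0,1,\ge 2\}$ and $M_-\in\{0,1,\ge 2\}$, together with the exclusion of $\sum_k m_k \le 1$. Together, strict hyperbolicity and the order-$(d-2)$ non-resonance condition yield the claim, and this will later feed \cref{thm:linear_cond} to produce the smooth linearization used to control the escape from $\mathbf{s}$.
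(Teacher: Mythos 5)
Your proposal is correct and follows essentially the same route as the paper: read off the two eigenvalues from \cref{lemma:saddle_hessian,lemma:saddle_hessian_evals}, get strict hyperbolicity immediately from their signs, and rule out resonances by a case split on the multiplicity assigned to the large eigenvalue, exploiting the gap between $|\lambda_+|\geq d-1$ and $|\lambda_-|<1$ together with the order cap $\sum_k m_k \leq d-2$. The only cosmetic difference is that you work with the eigenvalues of $-\nabla^{2}\ell(\mathbf{s})$ while the paper works with those of $\nabla^{2}\ell(\mathbf{s})$, which is immaterial since negating all eigenvalues flips the sign of every $\gamma(\lambda_j,\mathbf{m})$ and preserves both properties.
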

\begin{proof}
    Per \cref{lemma:saddle_hessian_evals}, it holds that
    \begin{align*}
        \lambda_{+}\geq d-1>0
    \end{align*}
    and
    \begin{align*}
        -1<\lambda_{-}<0
        \text{\,.}
    \end{align*}
    Hence by \cref{def:hyperbolicity}, $\nabla^{2}\ell(\mathbf{s})$ is strictly hyperbolic. Additionally, for any $m\in\lbrace0,\dots, d-2\rbrace$ we have that
    \begin{align*}
        \lambda_{+}+m\cdot \lambda_{-}\geq d-1-m>0
        \text{\,.}
    \end{align*}
    Let $\mathbf{m}\in\BN_{\geq 0}^{d}$ such that $\sum_{k=1}^{d}m_{k}\in\lbrace2,\dots,d-2\rbrace$. Per definition \cref{def:sternbrg} we have that
    \begin{align*}
        \gamma(\lambda_{+},\mathbf{m})=(1-m_{1})\lambda_{+}-\lambda_{-}\sum_{k=2}^{d}m_{k}
        \text{\,.}
    \end{align*}
    If $m_{1}\in\lbrace0,1\rbrace$ then since $\lambda_{-}<0<\lambda_{+}$ and $\sum_{k=2}^{d}m_{k}\in\lbrace1,\dots,d-2\rbrace$ we obtain
    \begin{align*}
        \gamma(\lambda_{+},\mathbf{m})\geq -\lambda_{-}\sum_{k=2}^{d}m_{k}>0
        \text{\,.}
    \end{align*}
    Otherwise, since $\sum_{k=2}^{d}m_{k}\in\lbrace0,\dots,d-4\rbrace$ we obtain by the above that
    \begin{align*}
        \gamma(\lambda_{+},\mathbf{m})\leq -\lambda_{+}-\lambda_{-}\sum_{k=2}^{d}m_{k}\leq -(d-1)+\sum_{k=2}^{d}m_{k}<0
        \text{\,.}
    \end{align*}
    Hence $\gamma(\lambda_{+},\mathbf{m})\neq 0$. Next, per \cref{def:sternbrg} we have that
    \begin{align*}
        \gamma(\lambda_{-},\mathbf{m})=\bigg(1-\sum_{k=2}^{d}m_{k}\bigg)\lambda_{-}-m_{1}\cdot\lambda_{+}
        \text{\,.}
    \end{align*}
    If $m_{1}=0$ then since $1-\sum_{k=2}^{d}m_{k}\in\lbrace-1,\dots,-d+3\rbrace$ we obtain
    \begin{align*}
        \gamma(\lambda_{-},\mathbf{m})=\bigg(1-\sum_{k=2}^{d}m_{k}\bigg)\lambda_{-}>0
        \text{\,.}
    \end{align*}
    If $m_{1}=d-2$ then $1-\sum_{k=2}^{d}m_{k}=1$ and so
    \begin{align*}
        \gamma(\lambda_{-},\mathbf{m})=\lambda_{-}-(d-2)\lambda_{+}<0
        \text{\,.}
    \end{align*}
    Otherwise, since $\sum_{k=2}^{d}m_{k}-1\in\lbrace0,\dots,d-3\rbrace$ we obtain by the above that
    \begin{align*}
        \gamma(\lambda_{-},\mathbf{m})\leq-\lambda_{+}-\bigg(\sum_{k=2}^{d}m_{k}-1\bigg)\lambda_{-}\leq-(d-1)+\sum_{k=2}^{d}m_{k}-1<0
        \text{\,.}
    \end{align*}
    Hence $\gamma(\lambda_{-},\mathbf{m})\neq 0$. Therefore by \cref{def:sternbrg}, $-\nabla^{2}\ell(\mathbf{s})$ satisfies the non-resonance condition of order $d-2$.
\end{proof}

Next, we turn to lower bound the $Q$-smoothness of $-\nabla^{2}\ell(\mathbf{s})$.
\begin{proposition}\label{prop:s_is_smooth}
    For any $Q\in\BN_{>0}$, the $Q$-smoothness of $-\nabla^{2}\ell(\mathbf{s})$ is at least $\lfloor\frac{Q}{2}\rfloor$.
\end{proposition}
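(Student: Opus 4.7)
The plan is to read off the eigenvalues of $-\nabla^{2}\ell(\mathbf{s})$ from the preceding lemmas, compute $\rho_{+}$ and $\rho_{-}$ (\cref{def:Q_smoothness}), and then exhibit an explicit pair $(M,N)$ witnessing the claimed bound on $Q$-smoothness. The point is that the spectrum is extremely degenerate: all positive eigenvalues of $-\nabla^{2}\ell(\mathbf{s})$ coincide, and there is only a single negative eigenvalue, so both condition numbers $\rho_{\pm}$ collapse to $1$, making the arithmetic in \cref{def:Q_smoothness} trivial.

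Concretely, first I would invoke \cref{lemma:saddle_hessian,lemma:saddle_hessian_evals} to conclude that $\nabla^{2}\ell(\mathbf{s})$ has eigenvalues $\lambda_{+}\geq d-1>0$ (with multiplicity~$1$, eigenvector~$\1$) and $\lambda_{-}\in(-1,0)$ (with multiplicity $d-1$, eigenspace $\W_{2}$). Negating, the eigenvalues of $-\nabla^{2}\ell(\mathbf{s})$ are $-\lambda_{-}\in(0,1)$ with multiplicity $d-1$ and $-\lambda_{+}\leq -(d-1)<0$ with multiplicity $1$. In particular, the multiset of positive eigenvalues of $-\nabla^{2}\ell(\mathbf{s})$ is the single value $-\lambda_{-}$ repeated $d-1$ times, and the multiset of negative eigenvalues is just $\{-\lambda_{+}\}$, yielding
\begin{equation*}
\rho_{+}=\frac{|-\lambda_{-}|}{|-\lambda_{-}|}=1\,,\qquad \rho_{-}=\frac{|-\lambda_{+}|}{|-\lambda_{+}|}=1\,.
\end{equation*}

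Given this, the definition of $Q$-smoothness reduces to finding the largest $K\in\BN_{\geq 0}$ such that there exist $M,N\in\BN_{>0}$ with $M+N=Q$, $M\geq K$, and $N\geq K$. For $Q\geq 2$ I would simply take $M=\lfloor Q/2\rfloor$ and $N=Q-\lfloor Q/2\rfloor=\lceil Q/2\rceil$, so that $M,N\geq 1$ and with $K=\lfloor Q/2\rfloor$ both constraints $M-K\rho_{+}=0\geq 0$ and $N-K\rho_{-}=\lceil Q/2\rceil-\lfloor Q/2\rfloor\geq 0$ are satisfied; this certifies $Q$-smoothness $\geq \lfloor Q/2\rfloor$. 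The edge case $Q=1$ is vacuous since $\lfloor 1/2\rfloor=0$ and the bound becomes trivial (and is in any event not used downstream, where $Q$ will be chosen large enough that the non-resonance order $d-2$ in \cref{prop:s_is_hyperbolic_sternbrg} can be applied).

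There is no real obstacle here; the heavy lifting has already been carried out in \cref{lemma:saddle_hessian_evals} in the form of sign and size information on $\lambda_{\pm}$. The only thing to be careful about is that $Q$-smoothness is defined via $\BN_{>0}$ (not $\BN_{\geq 0}$) for $M,N$, which forces $Q\geq 2$ in the construction; this is why I handle $Q=1$ separately. The proof should therefore be short, essentially a one-paragraph verification of the three conditions in \cref{def:Q_smoothness} for the stated choice of $(M,N,K)$.
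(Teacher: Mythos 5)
Your proposal is correct and follows essentially the same route as the paper: read off $\rho_{+}=\rho_{-}=1$ from the degenerate spectrum established in \cref{lemma:saddle_hessian,lemma:saddle_hessian_evals}, then verify \cref{def:Q_smoothness} with $M=\lfloor Q/2\rfloor$, $N=\lceil Q/2\rceil$, $K=\lfloor Q/2\rfloor$. You are in fact slightly more careful than the paper (which leaves the final verification to the reader and does not remark on the $Q=1$ edge case forced by $M,N\in\BN_{>0}$).
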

\begin{proof}
    Per \cref{lemma:saddle_hessian_evals}, we have the following:
    \begin{align*}
        \rho_{+}=\frac{\max\lbrace\lambda_{+}\rbrace}{\min\lbrace\lambda_{+}\rbrace}=1 ~~\;,\;~~ \rho_{-}=\frac{\max\lbrace\lambda_{-}\rbrace}{\min\lbrace\lambda_{-}\rbrace}=1
        \text{\,.}
    \end{align*}
    Therefore per \cref{def:Q_smoothness} and since $\nabla^{2}\ell(\mathbf{s})$ is strictly hyperbolic, the $Q$-smoothness of $-\nabla^{2}\ell(\mathbf{s})$ is the largest $K\in\BN_{\geq0}$ for which there exist $M,N\in\BN_{>0}$ such that
    \begin{itemize}
        \item $Q=M+N$.
        \item $M-K\geq 0$.
        \item $N-K\geq 0$.
    \end{itemize}
    One can easily verify this implies that the $Q$-smoothness of $-\nabla^{2}\ell(\mathbf{s})$ is at least $\lfloor\frac{Q}{2}\rfloor$.
\end{proof}

Finally, we are ready to prove the following proposition which shows that our dynamical system induced by gradient flow admits a linearization which is at least $C^{3}$.
\begin{proposition}\label{prop:GF_linearization}
    The dynamical system induced by gradient flow (see \cref{eq:s_2_dyn}) admits a linearization near $\mathbf{s}$ that is at least $C^{3}$.  
\end{proposition}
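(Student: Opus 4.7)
The plan is to invoke Theorem~\ref{thm:linear_cond} directly, using Propositions~\ref{prop:s_is_hyperbolic_sternbrg} and~\ref{prop:s_is_smooth} to verify its hypotheses at the fixed point $\mathbf{s}$. First I would observe that the gradient flow vector field $f(A) := -\nabla \ell(A)$ is analytic, since $\ell$ is a polynomial in the (diagonal) entries of $A$ by \cref{eq:s_2_obj}, and that $\mathbf{s}$ is a fixed point of $f$ because $\mathbf{s}$ is a critical point of $\ell$ by \cref{lemma:saddle_char}. The linearization of $f$ at $\mathbf{s}$ is governed by the matrix $Df(\mathbf{s}) = -\nabla^2 \ell(\mathbf{s})$.

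Next I would pick an order $Q$ large enough to yield a $C^3$ linearization while still small enough for non-resonance to apply. Since the theorem assumes $d \geq 8$, we have $d - 2 \geq 6$, so \cref{prop:s_is_hyperbolic_sternbrg} guarantees that $-\nabla^2 \ell(\mathbf{s})$ is strictly hyperbolic and satisfies the non-resonance condition of order at least $6$ (non-resonance of order $d-2$ trivially implies non-resonance of order $6$, as the latter only constrains fewer multi-indices $\mathbf{m}$).

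Then I would apply \cref{prop:s_is_smooth} with $Q = 6$ to conclude that the $6$-smoothness of $-\nabla^2 \ell(\mathbf{s})$ is at least $\lfloor 6/2 \rfloor = 3$. Plugging these facts into \cref{thm:linear_cond} (applied to the analytic vector field $f$ at the fixed point $\mathbf{s}$) immediately yields the existence of a $C^3$-conjugation between the gradient flow dynamics $\dot A(t) = -\nabla \ell(A(t))$ and its linearization $\dot A^{lin}(t) = -\nabla^2 \ell(\mathbf{s})(A^{lin}(t) - \mathbf{s})$ near $\mathbf{s}$, which is exactly the claim.

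There is no real obstacle here, as all the heavy lifting is done by the preceding propositions and by the non-resonance linearization theorem imported from~\citet{dedeceb9-28f3-38fd-9f54-7e01a94eac3c}; the only thing to be careful about is matching the hypotheses of \cref{thm:linear_cond}, in particular ensuring that the non-resonance order we use ($Q = 6$) is compatible both with the bound $d - 2 \geq 6$ supplied by \cref{prop:s_is_hyperbolic_sternbrg} and with the smoothness estimate $\lfloor Q/2 \rfloor \geq 3$ supplied by \cref{prop:s_is_smooth}.
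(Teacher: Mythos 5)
Your proposal is correct and follows essentially the same route as the paper: verify analyticity, strict hyperbolicity, and non-resonance via \cref{prop:s_is_hyperbolic_sternbrg}, bound the $Q$-smoothness via \cref{prop:s_is_smooth}, and invoke \cref{thm:linear_cond}. The only cosmetic difference is that you fix $Q=6$ up front, whereas the paper applies the theorem with $Q=d-2$ and then uses $d\geq 8$ to conclude $\lfloor\frac{d-2}{2}\rfloor\geq 3$; both are valid.
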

\begin{proof}
    First note that the vector field $-\nabla\ell(A)$ which gradient flow follows is analytic. Next, per \cref{prop:s_is_hyperbolic_sternbrg,prop:s_is_smooth} it holds that $-\nabla^{2}\ell(\mathbf{s})$ is strictly hyperbolic, satisfies the non-resonance condition of order at least $d-2$, and has $Q$-smoothness of at least $\lfloor\frac{Q}{2}\rfloor$ for any $Q\in\BN_{>0}$. Hence, by \cref{thm:linear_cond} the vector field $-\nabla\ell(A)$ admits a $C^{\lfloor\frac{d-2}{2}\rfloor}$-linearization near $\mathbf{s}$. The proof concludes by noting that $d\geq 8$ hence $\lfloor\frac{d-2}{2}\rfloor\geq 3$.
\end{proof}

We denote the above linearization by $H:\V_{1}\rightarrow \U_{1}$, where the neighborhoods $\V_{1},\U_{1}\subseteq\BR^{d}$ are such that $\mathbf{s}\in \V_{1},\U_{1}$. To set the stage for the rest of the proof, we prove the following proposition that considers a restriction of $H$ to a smaller domain that satisfies a few additional conditions which we will require later.
\begin{proposition}\label{prop:restricted}
    There exists $r_{1}>0$ which satisfies the following:
    \begin{enumerate}
        \item $r_{1}\leq \frac{1}{2d}$.
        \item For any $A\in\overline{B_{r_{1}}}(\mathbf{s})$ it holds that 
        \begin{align*}
            |\lambda_{min}\big(\nabla^{2}\ell(A)\big)|\leq 2|\lambda_{-}|
            \text{\,.}
        \end{align*}
        \item $H|_{\overline{B_{r_{1}}}(\mathbf{s})}$ is Lipschitz and there exist $r_{2}\in(0,r_{1})$ and $r_{3}>0$ such that $H^{-1}|_{\overline{B_{r_{3}}}(\mathbf{s})}$ is Lipschitz and it holds that 
        \begin{align*}
            H[\overline{B_{r_{2}}}(\mathbf{s})]\subseteq \overline{B_{r_{3}}}(\mathbf{s})\subseteq H[\overline{B_{r_{1}}}(\mathbf{s})]
            \text{\,.}
        \end{align*} 
    \end{enumerate}
\end{proposition}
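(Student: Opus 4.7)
The plan is to construct $r_1$ in stages, shrinking it to meet all three requirements, and then to derive $r_2$ and $r_3$ from $r_1$ using that the linearization $H$ from \cref{prop:GF_linearization} is a $C^3$-diffeomorphism between open neighborhoods $\V_1, \U_1$ of $\mathbf{s}$ with $H(\mathbf{s}) = \mathbf{s}$.

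First I would pick $r' > 0$ small enough that $\overline{B_{r'}}(\mathbf{s}) \subseteq \V_1$; such $r'$ exists since $\V_1$ is an open neighborhood of $\mathbf{s}$. To handle item $(2)$, I would use that $\ell$ is a polynomial, hence $A \mapsto \nabla^2 \ell(A)$ is continuous, combined with the fact that the minimum eigenvalue of a real symmetric matrix is a continuous function of its entries. By \cref{lemma:saddle_hessian,lemma:saddle_hessian_evals}, $\lambda_{\min}(\nabla^2 \ell(\mathbf{s})) = \lambda_- \in (-1, 0)$, so there exists $r'' > 0$ with $|\lambda_{\min}(\nabla^2 \ell(A))| \leq 2 |\lambda_-|$ for every $A \in \overline{B_{r''}}(\mathbf{s})$. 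Setting $r_1 := \min\{ 1/(2d), r', r'' \}$ then secures items $(1)$ and $(2)$ and also ensures $\overline{B_{r_1}}(\mathbf{s}) \subseteq \V_1$.

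For the Lipschitz claim on $H$, I would use that $H$ is $C^3$ on $\V_1$, so its Jacobian $DH$ is continuous and hence bounded on the compact convex set $\overline{B_{r_1}}(\mathbf{s})$; the mean value inequality on convex subsets of Euclidean space then yields Lipschitzness of $H|_{\overline{B_{r_1}}(\mathbf{s})}$ with constant $\sup_{A \in \overline{B_{r_1}}(\mathbf{s})} \|DH(A)\|$. For $r_3$, I would invoke the fact that a $C^3$-diffeomorphism is an open map, so $H[B_{r_1}(\mathbf{s})]$ is an open neighborhood of $H(\mathbf{s}) = \mathbf{s}$; hence there exists $r_3 > 0$ with $\overline{B_{r_3}}(\mathbf{s}) \subseteq H[B_{r_1}(\mathbf{s})] \subseteq H[\overline{B_{r_1}}(\mathbf{s})]$, and (shrinking $r_3$ further if needed so that $\overline{B_{r_3}}(\mathbf{s}) \subseteq \U_1$) the same mean value argument applied to the $C^3$ inverse $H^{-1}$ on $\overline{B_{r_3}}(\mathbf{s})$ gives Lipschitzness of $H^{-1}|_{\overline{B_{r_3}}(\mathbf{s})}$. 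Finally, for $r_2$, continuity of $H$ at $\mathbf{s}$ together with $H(\mathbf{s}) = \mathbf{s}$ produces $r_2 \in (0, r_1)$ with $H[\overline{B_{r_2}}(\mathbf{s})] \subseteq \overline{B_{r_3}}(\mathbf{s})$, completing the chain of inclusions in item $(3)$.

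I do not anticipate a serious obstacle: this is a technical auxiliary proposition that arranges radii for use in subsequent tracking arguments. The only point worth flagging is that the Lipschitz conclusions rely on $H$ and $H^{-1}$ being $C^1$ on open sets containing the \emph{closed} balls in question, not just on the balls themselves; this is automatic from $\overline{B_{r_1}}(\mathbf{s}) \subseteq \V_1$ and $\overline{B_{r_3}}(\mathbf{s}) \subseteq \U_1$, which is why the proof begins by shrinking $r_1$ into the open domain of $H$.
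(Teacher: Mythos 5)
Your proposal is correct and follows essentially the same route as the paper: item (1) is trivial, item (2) follows from continuity of the minimal eigenvalue of the Hessian around $\mathbf{s}$ where it equals $\lambda_{-}\neq 0$, and item (3) is exactly the content of the paper's auxiliary \cref{lemma:lipschitz_diffeomorphism}, whose proof you have reproduced inline (compactness of the closed ball inside $\V_{1}$ for Lipschitzness of $H$, openness of the diffeomorphic image for the existence of $r_{3}$, and continuity at $\mathbf{s}$ for $r_{2}$). If anything, your explicit appeal to $H$ being an open map is slightly more careful than the paper's ``by $H$'s continuity'' at that step.
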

\begin{proof}
    We show there exist three non empty intervals of the form $(0,b_{i}]$ for $i\in[3]$, such that if $r_{1}\in (0,b_{i}]$ then it satisfies the corresponding requirement above. This would imply that the minimal upper limit $r_{1}:=\min\lbrace b_{1},b_{2},b_{3}\rbrace$ satisfies all requirements. The first condition is trivial, with $b_{1}=\frac{1}{2d}$. Next, since $\nabla\ell(A)$ is analytic it holds that $\nabla^{2}\ell(A)$ is symmetric for any $A\in\BR^{d}$. Since $\lambda_{-}<0$, by the continuity of the eigenvalues of $\nabla^{2}\ell(A)$ around $\mathbf{s}$ there exists $b_{2}>0$ such that the second requirement is satisfied for all $(0,b_{2}]$. Lastly, since $H:\V_{1}\rightarrow \U_{1}$ is $C^{3}$ and since $\V_{1},\U_{1}$ are neighborhoods of $\mathbf{s}$, we can invoke \cref{lemma:lipschitz_diffeomorphism} which states that there exists $b_{3}>0$ such that for any $b\in(0,b_{3}]$ the third and fourth requirements are satisfied.
\end{proof}

\subsubsection{Movement Towards the Saddle $\mathbf{s}$}\label{app:poison:s_2:phase_1}
Having established key properties of the loss landscape, we are ready to begin the dynamical analysis of the gradient flow trajectories over time. We first give a simple bound on the magnitude of the entries of $A(t)$.
\begin{lemma}\label{lemma:GF_bound}
    Suppose we initialize at $A(0)\in\I_{0}$ and at $A^{ref}(0)$, and evolve $A(t)$ and $A^{ref}(t)$ according to \cref{eq:s_2_dyn}. For any $t\geq 0$ and any $j\in[d]$ it holds that
    \begin{align*}
        A_{j}(t), A_{j}^{ref}(t)\in [-3,3]
        \text{\,.}
    \end{align*}
\end{lemma}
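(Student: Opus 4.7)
The plan is to leverage the loss bound from \cref{lemma:s_2_GF_points_loss}, combined with the monotonicity of the loss along gradient flow, to derive a pointwise bound on each diagonal entry. The key observation is that the loss itself is essentially a sum of squared residuals, so an upper bound of $1$ on the loss immediately constrains the quantity $\sum_{k=1}^d a_k(t)^{L-1}$, and because $L-1$ is even every summand in this sum is non-negative, giving a bound on each individual entry.

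Concretely, first I would invoke \cref{lemma:s_2_GF_points_loss}, which guarantees $\ell(A(t)) \leq 1$ and $\ell(A^{ref}(t)) \leq 1$ for all $t \geq 0$ (the lemma combines the explicit bound at $t=0$ with non-increase of the loss along gradient flow from \cref{lemma:gf_non_increasing}). Unpacking the definition of the loss in \cref{eq:s_2_obj}, this gives
\begin{equation*}
\bigg(1-\sum_{k=1}^d a_k(t)^{L-1}\bigg)^2 \leq 2\cdot\ell(A(t)) \leq 2
\text{\,,}
\end{equation*}
and identically for $A^{ref}(t)$. Taking square roots and rearranging yields $\sum_{k=1}^{d} a_k(t)^{L-1} \leq 1+\sqrt{2}$.

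Next, recall that $\kappa \in \{7,9,11,\ldots\}$ so $L = \kappa$ is odd and hence $L-1$ is a positive even integer. Consequently $a_k(t)^{L-1} \geq 0$ for every $k$, so for each $j \in [d]$ we may discard the non-negative terms indexed by $k \neq j$ to conclude $a_j(t)^{L-1} \leq 1+\sqrt{2} < 3 \leq 3^{L-1}$. Taking $(L-1)$-th roots (which preserves the inequality since $|\cdot|^{L-1}$ is monotone on $[0,\infty)$) yields $|a_j(t)| < 3$, \ie~$a_j(t) \in [-3,3]$. The identical chain of reasoning applied to $A^{ref}(t)$, which is also covered by \cref{lemma:s_2_GF_points_loss}, yields $a_j^{ref}(t) \in [-3,3]$, completing the proof.

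There is no real obstacle here; the argument is essentially a one-line consequence of the loss bound, with the only subtlety being the use of parity of $L-1$ to upgrade a bound on a sum of $(L-1)$-th powers into a bound on each individual $(L-1)$-th power. All other steps are routine rearrangements.
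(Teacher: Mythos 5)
Your proposal is correct and follows essentially the same route as the paper: both arguments rest on \cref{lemma:s_2_GF_points_loss} (the loss stays at most $1$ along both trajectories) together with the evenness of $L-1$, which makes every term $a_k(t)^{L-1}$ non-negative and lets a bound on the sum transfer to each entry. The only cosmetic difference is that the paper argues by contradiction (an entry outside $[-3,3]$ would force $\ell \geq (3-1)^2/2 > 1$) whereas you derive the bound directly; both are valid.
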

\begin{proof}
    First, we have shown in \cref{lemma:s_2_GF_points_loss} that the initialization $\I_{0}$ guarantees all points encountered by gradient flow have loss no larger than $1$. Assume on the contrary that there exist $t\geq 0$ and $j\in[d]$ for which 
    \begin{align*}
        A_{j}(t)\notin [-3,3]
        \text{\,.}
    \end{align*}
    Since $L-1$ is even, we obtain that $a_{j}(t)^{L-1}\geq 3^{L-1}>3$ and that any $k\in[d], k\neq j$ satisfies $a_{k}(t)^{L-1}\geq 0$. Hence, we obtain that
    \begin{align*}
        \ell\big(A(t)\big)&=\bigg(\big(1-\sum_{k=1}^{d}a_{k}(t)^{L-1}\big)^{2}+\big(1-\sum_{k=1}^{d}a_{k}(t)\big)^{2}\bigg)\\
        &\geq \bigg(1-\sum_{k=1}^{d}a_{k}(t)^{L-1}\bigg)^{2}\\
        &\geq (3-1)^{2}\\
        &>1
        \text{\,.}
    \end{align*}
    in contradiction to \cref{lemma:gf_non_increasing}. The proof is identical when we consider the reference trajectory. 
\end{proof}

The above yields the following useful corollary.
\begin{corollary}\label{cor:lipschitz_grads}
    There exists $N>0$ such if we initialize at $A(0)\in\I_{0}$ and at $A^{ref}(0)$, and evolve $A(t)$ and $A^{ref}(t)$ according to \cref{eq:s_2_dyn} then for any $t\geq 0$ the functions $-\nabla\ell\big(A(t)\big)$ and $-\nabla\ell\big(A^{ref}(t)\big)$ are $N$-Lipschitz. 
\end{corollary}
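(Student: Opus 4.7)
The plan is to combine \cref{lemma:GF_bound} with the polynomial (hence smooth) structure of the loss $\ell$ to obtain a uniform Lipschitz constant for $-\nabla\ell$ along both trajectories. More precisely, I would work on the compact box $K := [-3,3]^d \subset \R^d$, observe that both trajectories are confined to $K$ by \cref{lemma:GF_bound}, and then extract a Lipschitz constant for $-\nabla\ell$ on $K$.

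The key steps, in order, are as follows. First, recall from \cref{eq:s_2_obj} that
\begin{equation*}
\ell(A) = \tfrac{1}{2}\Bigl(\bigl(1-\textstyle\sum_{k=1}^{d}a_k^{L-1}\bigr)^2+\bigl(1-\textstyle\sum_{k=1}^{d}a_k\bigr)^2\Bigr),
\end{equation*}
which is a polynomial in the entries $a_1,\ldots,a_d$ of the (diagonal) matrix $A$. In particular, every entry of the Hessian $\nabla^2\ell(A)$ is itself a polynomial in $a_1,\ldots,a_d$, and therefore continuous on all of $\R^d$. Second, define
\begin{equation*}
N := \sup_{A\in K}\,\bigl\|\nabla^{2}\ell(A)\bigr\|_{op},
\end{equation*}
which is finite because $K$ is compact and $\nabla^{2}\ell(\cdot)$ is continuous. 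Third, since $K$ is convex, the standard mean value argument for $C^1$ maps gives that $-\nabla\ell$ is $N$-Lipschitz on $K$: for any $A_1,A_2\in K$, the segment $\{(1-s)A_1+sA_2:s\in[0,1]\}$ lies in $K$, and
\begin{equation*}
\|\nabla\ell(A_1)-\nabla\ell(A_2)\|_2 \leq \Bigl(\sup_{A\in K}\|\nabla^{2}\ell(A)\|_{op}\Bigr)\,\|A_1-A_2\|_2 = N\,\|A_1-A_2\|_2.
\end{equation*}

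Finally, by \cref{lemma:GF_bound}, for every $t\geq 0$ and every $j\in[d]$ we have $a_j(t),a_j^{ref}(t)\in[-3,3]$, so both $A(t)$ and $A^{ref}(t)$ lie in $K$ for all $t\geq 0$. Thus $-\nabla\ell$ restricted to the set of points visited by either trajectory is $N$-Lipschitz, which is the statement of the corollary. There is no real obstacle here: the result is essentially a compactness-plus-smoothness observation, with the only nontrivial input being the a priori bound on the trajectories supplied by \cref{lemma:GF_bound}.
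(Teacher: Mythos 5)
Your proof is correct and follows the same route as the paper: confine both trajectories to the compact box $[-3,3]^d$ via \cref{lemma:GF_bound}, then deduce Lipschitzness of $-\nabla\ell$ there from smoothness (the paper cites analyticity of $\ell$; you spell out the Hessian-bound-plus-mean-value argument explicitly). No gaps.
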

\begin{proof}
    As shown in \cref{lemma:GF_bound}, all points encountered by gradient flow are contained in the compact set $[-3,3]^{d}$. The claim thus follows from the fact that $\ell(A)$ is analytic. 
\end{proof}

We continue to prove the following lemma which analyzes the trajectories when initializing in an interval of points on the line $\W_{1}$ (\cref{def:W}).
\begin{lemma}\label{lemma:W_1_init_analysis}
    Let $a_{1},a_{2}\in [-\frac{1}{d},\frac{4}{d}]\setminus\{s\}$ such that $a_{1}\neq a_{2}$. Suppose we initialize at $A_{1}(0)=a_{1}\cdot \1$ and $A_{2}(0)=a_{2}\cdot \1$, and evolve $A_{1}(t)$ and $A_{2}(t)$ according to \cref{eq:s_2_dyn}. It holds that:
    \begin{itemize}
        \item There exist functions $a_{1},a_{2}:\BR_{\geq0}\rightarrow\BR$ such that 
        \begin{align*}
            A_{1}(t)=a_{1}(t)\cdot\1, \;~~ A_{2}(t)=a_{2}(t)\cdot\1
            \text{\,.}
        \end{align*}
        \item For any $t\geq 0$ it holds that $a_{2}(t)<a_{1}(t)\iff a_{2}<a_{1}$.
        \item For any $r>0$ there exists $t_{1}\geq 0$ such that for any $t\geq t_{1}$ it holds that $A_{1}(t),A_{2}(t)\in \overline{B_{r}}(\mathbf{s})$.
    \end{itemize}
\end{lemma}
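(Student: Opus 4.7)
The strategy is to reduce the $d$-dimensional dynamics to a one-dimensional gradient flow by exploiting symmetry, and then analyze that one-dimensional flow using the convexity properties of $f(a) := \ell(a\cdot\1)$ that were essentially established already in \cref{lemma:saddle_char}.

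For the first bullet, I would note that the right-hand side of \cref{eq:s_2_dyn} depends on the index $j$ only through $a_j(t)$ itself, while the two sums $\sum_k a_k^{L-1}$ and $\sum_k a_k$ are symmetric in the coordinates. Consequently, if we plug the ansatz $A(t) = a(t)\cdot\1$ into \cref{eq:s_2_dyn}, every coordinate obeys the same scalar ODE
\begin{equation*}
\dot{a}(t) = (L-1)\bigl(1 - d\cdot a(t)^{L-1}\bigr) a(t)^{L-2} + 1 - d\cdot a(t) \;=\; -\tfrac{1}{d} f'(a(t)),
\end{equation*}
so this ansatz produces a genuine solution of the full $d$-dimensional system with the prescribed initialization. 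By the standard uniqueness theorem for ODEs (applicable because the vector field is analytic, and trajectories remain bounded by \cref{lemma:GF_bound}), it is the solution, which gives the first claim for both $A_1(t)$ and $A_2(t)$.

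For the second bullet, I would invoke uniqueness again: if there were some $t^\star > 0$ with $a_1(t^\star) = a_2(t^\star)$, then $A_1(t^\star) = A_2(t^\star)$, and running the ODE backwards would force $A_1(0) = A_2(0)$, contradicting $a_1 \neq a_2$. Hence $a_1(t) - a_2(t)$ never changes sign, which yields the stated equivalence.

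The heart of the proof is the third bullet, where I would analyze the scalar flow $\dot a = -f'(a)/d$. The proof of \cref{lemma:saddle_char} already shows $f''(a) > 0$ on $[-1/d, 3/d]$ and that $s \in [1/d, 3/d]$ is the unique zero of $f'$ there. I would extend the convexity bound to $[-1/d, 4/d]$ by the same estimate used in \cref{lemma:saddle_char}: for $a \in [3/d, 4/d]$ one has $(L-1)(L-2)\,a^{L-3} \leq (L-1)(L-2)(4/d)^{L-3} \leq (L-1)(L-2)/2^{L-3}$, and under $d \geq 8$, $\kappa \geq 7$ this quantity is strictly less than $d$, which yields $f''(a) > 0$ throughout $[-1/d, 4/d]$. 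Consequently $f'$ is strictly increasing on this interval, with its unique zero at $s$; so $f'(a) < 0$ for $a \in [-1/d, s)$ and $f'(a) > 0$ for $a \in (s, 4/d]$. In either case $\dot a$ points strictly towards $s$, so $|a(t)-s|$ is monotonically decreasing, the trajectory stays inside $[-1/d, 4/d]$, and it must converge to a fixed point, which can only be $s$. Thus $a_1(t), a_2(t) \to s$, so $A_1(t), A_2(t) \to \mathbf{s}$, giving the conclusion with $t_1$ chosen so that both scalar trajectories have entered $(s - r/\sqrt{d}, s + r/\sqrt{d})$.

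The only mildly delicate point is the extension of the convexity estimate to $[3/d, 4/d]$, since the interval $[-1/d, 3/d]$ appearing in \cref{lemma:saddle_char} does not quite cover the initialization range. Aside from this numerical check, every step is standard once the one-dimensional reduction is in place.
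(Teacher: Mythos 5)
Your proof is correct and follows essentially the same route as the paper: reduction to the scalar flow $\dot a = -f'(a)/d$ on $\W_1$ via symmetry and ODE uniqueness, non-crossing of trajectories by Picard--Lindelöf, and convergence to $s$ by showing $f'$ has the right sign on either side of $s$ within $[-\tfrac{1}{d},\tfrac{4}{d}]$. The paper establishes the sign of $f'$ on $(s,\tfrac{4}{d}]$ by checking the endpoint value $f'(\tfrac{4}{d})$ together with monotonicity of $f'$, whereas you deduce it directly from $f''>0$ on the whole interval plus $f'(s)=0$; the underlying numerical estimate is the same, so this is only a cosmetic difference.
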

\begin{proof}
    When initializing at $A(0)=a\cdot \1$ and evolving $A(t)$ according to the gradient flow dynamics, all entries evolve according to the same dynamics and therefore must stay equal throughout the optimization. Concretely, all entries obey the following dynamics:
    \begin{align*}
        \dot{a}(t)=\bigg((L-1)\big(1-d\cdot a(t)^{L-1}\big)a(t)^{L-2}+\big(1-d\cdot a(t)\big)\bigg)
        \text{\,.}
    \end{align*}
    Hence, the first claim holds. Rewriting the above in terms of \cref{lemma:saddle_char}, we have 
    \begin{align*}
        \dot{a}(t)=-\frac{1}{d}f^{'}(a(t))
        \text{\,.}
    \end{align*}
    In \cref{lemma:limiting_W_0,lemma:saddle_char}, we showed that the above expression is positive for $a(t)\in [-\frac{1}{d},s)$ and equals zero at $s$. We now show that the above expression is negative for $a(t)\in (s,\frac{4}{d}]$. Indeed, since $d\geq 8$ we have that
    \begin{align*}
        -\frac{1}{d}f^{'}(\frac{4}{d})&=(L-1)(1-d\cdot (\frac{4}{d})^{L-1})(\frac{4}{d})^{L-2}+(1-d\cdot (\frac{4}{d}))\\
        &\leq (L-1)(\frac{4}{d})^{L-2}-3\\
        &\leq \underbrace{\frac{L-1}{1.5^{L-2}}(\frac{4}{5\frac{1}{3}})^{L-2}}_{=:h(L)}-3\\
        &\leq 0.75-3\\
        &< 0
        \text{\,,}
    \end{align*}
    where the second to last inequality stems from the fact that $h(L)$ is decreasing for $L\geq 4$, that $h(4)=0.75$, and that $L\geq 7$. Next, we also have for any $a\in(s,\frac{4}{d}]$ that
    \begin{align*}
        -\frac{1}{d}f^{''}(a)&=-(L-1)(2L-3)d\cdot a^{2L-4}+(L-1)(L-2)a^{L-3}-d\\
        &\leq -d+(L-1)(L-2)a^{L-3}\\
        &\leq -d+(L-1)(L-2)(\frac{4}{d})^{L-3}\\
        &\leq -d+\underbrace{\frac{(L-1)(L-2)}{1.5^{L-3}}(\frac{4}{5\frac{1}{3}})^{L-3}}_{=:g(L)}\\
        &\leq -d+\frac{15}{8}\\
        &<0
        \text{\,,}
    \end{align*}
    where the second to last inequality stems from the fact that $g(L)$ is decreasing for $L\geq 7$, that $g(7)=\frac{15}{8}$, and that $L\geq 7$. Therefore, since $-\frac{1}{d}f^{'}(s)=0$ and from monotonicity we obtain that $-\frac{1}{d}f^{'}\big(a(t)\big)$ is negative for $a(t)\in (s,\frac{4}{d}]$. We continue by noting that per \cref{lemma:non-crossing_ODE_sol}, trajectories of the same system of ODEs with different initalizations must never meet, hence by continuity it must hold that $a_{2}(t)<a_{1}(t)\iff a_{2}<a_{1}$ for all $t\geq 0$. Finally, since $\mathbf{s}$ is a critical point in the interval and since $a_{1}(t),a_{2}(t)$ evolve monotonically (increase if initialized $<s$ and decrease otherwise), we get by \cref{lemma:non-crossing_ODE_sol} that $a_{1}(t)$ and $a_{2}(t)$ cannot reach $\mathbf{s}$ in any finite time. However, since $\mathbf{s}$ is the unique critical point in the interval, $a_{1}(t)$ and $a_{2}(t)$ must converge to $\mathbf{s}$ as $t\rightarrow\infty$. Hence, there exists some time $t_{1}\geq 0$ such that for any $t\geq t_{1}$ the entries of both $A_{1}(t)$ and $A_{2}(t)$ are within $\overline{B_{r}}(\mathbf{s})$.
\end{proof}

We use $A^{Z}(t)$ and $A^{-}(t)$ to denote the trajectories generated by initializing at $\0$ and $-\frac{1}{d}\cdot\1$ respectively and evolving according to \cref{eq:s_2_dyn}. Additionally, for any $r>0$ we use $t_{1}(r)\geq 0$ to denote the minimal time which satisfies
\begin{align*}
    A^{Z}\big(t_{1}(r)\big)\in \overline{B_{r}}(\mathbf{s})
    \text{\,.}
\end{align*}
Note this means that for $r\in (0,s)$ we have $A^{Z}\big(t_{1}(r)\big)=\frac{r}{\sqrt{d}}\cdot \1+\mathbf{s}$.

We denote the following projections of the trajectory $A(t)$ and the reference trajectory $A^{ref}(t)$, which will be used in the rest of the proof.
\begin{definition}\label{def:proj_traj}
    Suppose we initialize at $A(0)\in\I_{0}$ and at $A^{ref}(0)$, and evolve $A(t)$ and $A^{ref}(t)$ according to \cref{eq:s_2_dyn}. For any $t\geq 0$, we denote the projections of $A(t)$ and $A^{ref}(t)$ to the subspace $\W_{1}$ using  
    \begin{align*}
        \beta_{1}(t)\cdot\1 ~~\;,\;~~ \beta_{1}^{ref}(t)\cdot \1
    \end{align*}
    where $\beta_{1}(t),\beta_{1}^{ref}(t)\in\BR$. Additionally, we denote the projections of $A(t)$ and $A^{ref}(t)$ to the subspace $\W_{2}$ using
    \begin{align*}
        \beta_{2}(t)\cdot \vbf(t) ~~\;,\;~~ \beta_{2}^{ref}(t)\cdot \vbf^{ref}(t)
    \end{align*}
    where $\beta_{2}(t),\beta_{2}^{ref}(t)\in\BR$ and $\vbf(t),\vbf^{ref}(t)\in\W_{2}$ are unit vectors. Per \cref{def:W}, $\W_{1}$ and $\W_{2}$ are orthogonal and span $\BR^{d}$, hence we may write
    \begin{align*}
        A(t)&=\beta_{1}(t)\cdot\1+\beta_{2}(t)\cdot \vbf(t)
        \text{\,,}
        \\
        A^{ref}(t)&=\beta_{1}^{ref}(t)\cdot\1+\beta_{2}^{ref}(t)\cdot \vbf^{ref}(t)
        \text{\,.}
    \end{align*}
\end{definition}

\begin{remark}\label{remark:distance_as_func_of_proj}
    Per \cref{def:W}, $\W_{1}$ and $\W_{2}$ are orthogonal therefore since $(\beta_{1}(t)\cdot\1)^{\top}(\beta_{2}(t)\cdot\vbf(t))=0$ and $(\beta_{1}^{ref}(t)\cdot\1)^{\top}(\beta_{2}(t)^{ref}\cdot\vbf^{ref}(t))=0$, we obtain
    \begin{align*}
        \dist\big(A(t),\W_{1}\big)&=\|\beta_{2}(t)\cdot\vbf(t)\|_{2}=|\beta_{2}(t)|
        \text{\,,}
        \\
        \dist\big(A^{ref}(t),\W_{1}\big)&=\|\beta_{2}^{ref}(t)\cdot\vbf^{ref}(t)\|_{2}=|\beta_{2}^{ref}(t)|
    \end{align*}
    and
    \begin{align*}
        \dist\big(A(t),\W_{2}\big)&=\|\beta_{1}(t)\cdot\1\|_{2}=\sqrt{d}|\beta_{1}(t)|
        \text{\,,}
        \\
        \dist\big(A^{ref}(t),\W_{2}\big)&=\|\beta_{1}^{ref}(t)\cdot\1\|_{2}=\sqrt{d}|\beta_{1}^{ref}(t)|
        \text{\,.}
    \end{align*}
\end{remark}

Before proving the main claim of this section, we introduce another condition on the initialization which we denote $\I_{3}$.
\begin{definition}\label{def:I_2}
    Let $r>0$. We use $\I_{3}(r)$ to denote the following subset of $\I_{0}$:
    \begin{align*}
        \I_{3}(r):=\biggl\lbrace A\in\I_{0}:\alpha \leq \frac{\min\bigg\lbrace r, \bigg\|A^{Z}\bigl(t_{1}(r)\bigl)-A^{-}\bigl(t_{1}(r)\bigl)\bigg\|_{2}\bigg\rbrace}{6d}\exp\big(-N\cdot t_{1}(r)\big),\; \zeta_{d}\leq \frac{1}{2}\biggl\rbrace
    \end{align*}
    for $N$ of \cref{cor:lipschitz_grads} and for $A^{Z}(t),A^{-}(t)$ and $t_{1}(r)$ of \cref{lemma:W_1_init_analysis}.
\end{definition}

We are now ready to prove the main claim of this section, which states that under the above on the initalization, both the original and reference trajectories must enter a sufficiently small sphere around $\mathbf{s}$ and furthermore they arrive at points that are sufficiently faraway from $\W_{1}$. 
\begin{proposition}\label{prop:GF_nears_s}
    Let $r\in (0,s)$. Suppose we initialize at $A(0)\in \I_{3}(\frac{r}{4})$ and at $A^{ref}(0)$, and evolve $A(t)$ and $A^{ref}(t)$ according to \cref{eq:s_2_dyn}. There exist constants $D_{+}(r),D_{-}(r)>0$ such that:
    \begin{itemize}
        \item $A\bigl(t_{1}(\frac{r}{4})\bigl),A^{ref}\bigl(t_{1}(\frac{r}{4})\bigl)\in \overline{B_{\frac{r}{2}}}(\mathbf{s})$.
        \item $|\beta_{2}\bigl(t_{1}(\frac{r}{4})\bigl)|,|\beta_{2}^{ref}\bigl(t_{1}(\frac{r}{4})\bigl)|\in [\alpha\cdot D_{-}(r),\alpha\cdot D_{+}(r)]$.
    \end{itemize}
\end{proposition}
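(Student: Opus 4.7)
The approach combines forward and reverse Gr\"onwall-type arguments, using the zero-initialized trajectory $A^Z(t)$ from \cref{lemma:W_1_init_analysis} as an anchor. The key fact is that any $A(0) \in \I_3(r/4)$ has $\|A(0)\|_2 \le \alpha\sqrt{d}$ (since each $\zeta_j \le 1$), so by the global Lipschitz constant $N$ of the vector field on $[-3,3]^d$ (from \cref{lemma:GF_bound,cor:lipschitz_grads}), standard Gr\"onwall yields $\|A(t_1(r/4)) - A^Z(t_1(r/4))\|_2 \le \alpha\sqrt{d}\,e^{N t_1(r/4)}$. The defining constraint of $\I_3(r/4)$ bounds this quantity by $r/4$, and since $A^Z(t_1(r/4))$ lies on $\partial B_{r/4}(\mathbf{s})$, the triangle inequality yields the first bullet $A(t_1(r/4)) \in \overline{B_{r/2}}(\mathbf{s})$. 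The reference initialization $A^{ref}(0)$ (obtained from $A(0)$ by replacing $a_2$ with $a_1$) satisfies $\|A^{ref}(0)\|_2 \le \alpha\sqrt{d}$ as well, so the identical argument applies. Moreover, since $A^Z(t) \in \W_1$, the $\W_2$-component of $A(t)$ has norm at most $\|A(t)-A^Z(t)\|_2$, giving the upper bound $|\beta_2(t_1(r/4))| \le \alpha\sqrt{d}\,e^{N t_1(r/4)} =: \alpha D_+(r)$; analogously for $|\beta_2^{ref}|$.

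For the lower bound on $|\beta_2(t_1(r/4))|$, set $B(t) := A(t) - \bar{a}(t)\1 \in \W_2$, where $\bar{a}(t) := \tfrac{1}{d}\sum_j a_j(t)$, so that $|\beta_2(t)| = \|B(t)\|_2$. The crucial observation is that $\W_1$ is invariant under gradient flow (by \cref{lemma:W_1_init_analysis}), i.e., the vector field $g(A) := P_{\W_2}[-\nabla\ell(A)]$ vanishes on $\W_1$. Consequently, for any $A \in [-3,3]^d$, $\|g(A)\|_2 = \|g(A) - g(\bar{a}\1)\|_2 \le N\|A - \bar{a}\1\|_2 = N\|B\|_2$, so $\|\dot B(t)\|_2 \le N \|B(t)\|_2$. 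A reverse Gr\"onwall argument applied to $\tfrac{d}{dt}\|B\|_2^2 = 2 B\cdot\dot B \ge -2N\|B\|_2^2$ then yields $\|B(t_1(r/4))\|_2 \ge \|B(0)\|_2 \cdot e^{-N t_1(r/4)}$. At initialization $\|B(0)\|_2 = \alpha\sqrt{\sum_j (\zeta_j - \bar{\zeta})^2}$, and the constraints $\zeta_1 = 1$ and $\zeta_d \le 1/2$ baked into $\I_3(r/4)$ force $\bar{\zeta} \le 1 - 1/(2d)$, hence $\|B(0)\|_2 \ge \alpha(\zeta_1 - \bar{\zeta}) \ge \alpha/(2d)$. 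Setting $D_-(r) := \frac{1}{2d}e^{-N t_1(r/4)}$ completes the argument. The reference case is analogous: since $\zeta_d \le 1/2 < 1 = \zeta_1^{ref} = \zeta_2^{ref}$, the identical calculation yields $\|B^{ref}(0)\|_2 \ge \alpha/(2d)$, so the same $D_-(r)$ works.

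The main obstacle I anticipate is the lower bound step---specifically, the discovery that the invariance of $\W_1$ under the \emph{nonlinear} gradient flow (not merely its linearization near $\mathbf{s}$) permits the reverse Gr\"onwall inequality to be applied directly to $\|B\|_2$. Without this structural observation, one would be forced into a delicate quantitative analysis of the nonlinear $\W_2$-dynamics (\eg~bounding cubic correction terms from a linearization around the evolving $\W_1$-center), which would require much more intricate bookkeeping of the various scales and likely force stricter bounds on $\alpha$ than those afforded by $\I_3(r/4)$.
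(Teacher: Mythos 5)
Your proof is correct. The first bullet and the upper bound on $|\beta_{2}|$ follow the paper's own route exactly: forward Gr\"onwall against the anchor trajectory $A^{Z}(t)$, the $\I_{3}$ constraint on $\alpha$, and the fact that $A^{Z}(t_{1})\in\W_{1}$. The lower bound, however, is obtained by a genuinely different and arguably cleaner argument. The paper first shows $\beta_{1}(t_{1}(\frac{r}{4}))\in(-\frac{1}{d},s)$ (this is why the $\|A^{Z}(t_{1})-A^{-}(t_{1})\|_{2}/6$ term appears in the definition of $\I_{3}$), then uses continuity and \cref{lemma:W_1_init_analysis} to produce an auxiliary $\W_{1}$-trajectory $A^{a}(t)$ whose value at time $t_{1}$ coincides with the $\W_{1}$-projection of $A(t_{1})$, and finally applies the two-trajectory reverse bound of \cref{lemma:ODE_sol_diff_wrt_init} to the pair $(A,A^{a})$. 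You instead work with a single trajectory: since all coordinates of $-\nabla\ell(a\cdot\1)$ coincide, the $\W_{2}$-component of the vector field vanishes on $\W_{1}$ and is $N$-Lipschitz on $[-3,3]^{d}$, so $\|\dot{B}(t)\|_{2}\leq N\|B(t)\|_{2}$ and reverse Gr\"onwall on $\|B\|_{2}^{2}$ gives $\dist(A(t_{1}),\W_{1})\geq\dist(A(0),\W_{1})e^{-Nt_{1}}$ directly, with the same constant $D_{-}(r)=\frac{1}{2d}e^{-Nt_{1}(r/4)}$. This bypasses both the location argument for $\beta_{1}(t_{1})$ and the existence of the auxiliary initialization $a$, and in fact does not use the $A^{-}$-dependent part of the $\I_{3}$ constraint at all (that part is still needed elsewhere in the paper's proof, but not for this proposition under your argument). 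The one point to tighten is the justification that $P_{\W_{2}}[-\nabla\ell]$ vanishes on all of $\W_{1}\cap[-3,3]^{d}$: this should be read off the explicit symmetry of \cref{eq:s_2_dyn} rather than from \cref{lemma:W_1_init_analysis}, whose invariance statement only covers initializations in $[-\frac{1}{d},\frac{4}{d}]\cdot\1$, whereas $\bar{a}(t)$ a priori ranges over $[-3,3]$.
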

\begin{proof}
	Consider the trajectories $A^{Z}(t)$ and $A^{-}(t)$ introduced in \cref{lemma:W_1_init_analysis}. Per \cref{lemma:W_1_init_analysis}, for any time $t\geq 0$ and any index $j\in[d]$ we have
    \begin{align*}
        a_{j}^{-}(t)<a_{j}^{Z}(t)<s
        \text{\,.}
    \end{align*}
    We begin by showing that for $A(0)\in\I_{3}(\frac{r}{4})$, the distance between $A^{Z}\bigl(t_{1}(\frac{r}{4})\bigl)$ and $A\bigl(t_{1}(\frac{r}{4})\bigl)$ is at most $\frac{r}{24}$. First note that per \cref{lemma:W_1_init_analysis}, $A^{Z}(t)$ never leaves $\overline{B_{s}}(\mathbf{s})\subseteq [-3,3]^{d}$. Thus per \cref{cor:lipschitz_grads}, both $A^{Z}(t)$ and $A(t)$ are always contained in a compact domain where the vector field $-\nabla\ell(A)$ is $N$-Lipschitz. Therefore, we can invoke \cref{lemma:ODE_sol_diff_wrt_init} which results in the following:
    \begin{align*}
        \|A^{Z}\bigl(t_{1}(\frac{r}{4})\bigl)-A\bigl(t_{1}(\frac{r}{4})\bigl)\|_{2}&\leq \|A^{Z}(0)-A(0)\|_{2}\cdot \exp\big(N\cdot t_{1}(\frac{r}{4})\big)\\
        &=\|A(0)\|_{2}\cdot \exp\big(N\cdot t_{1}(\frac{r}{4})\big)\\
        &\leq \alpha\cdot d\cdot \exp\big(N\cdot t_{1}(\frac{r}{4})\big)
        \text{\,.}
    \end{align*}
    Per \cref{def:I_2}, $\alpha$ satisfies
    \begin{align*}
        \alpha\leq \frac{\min\bigg\lbrace \frac{r}{4}, \bigg\|A^{Z}\bigl(t_{1}(\frac{r}{4})\bigl)-A^{-}\bigl(t_{1}(\frac{r}{4})\bigl)\bigg\|_{2}\bigg\rbrace}{6d}\exp\big(-N\cdot t_{1}(\frac{r}{4})\big)
        \text{\,.}
    \end{align*}
    Hence, we obtain that
    \begin{align*}
        &\bigg\|A^{Z}\bigl(t_{1}(\frac{r}{4})\bigl)-A\bigl(t_{1}(\frac{r}{4})\bigl)\bigg\|_{2}\\
        &\leq \frac{\min\bigg\lbrace \frac{r}{4}, \bigg\|A^{Z}\bigl(t_{1}(\frac{r}{4})\bigl)-A^{-}\bigl(t_{1}(\frac{r}{4})\bigl)\bigg\|_{2}\bigg\rbrace}{6d}\exp(-N\cdot t_{1}(\frac{r}{4}))\cdot d\cdot \exp(N\cdot t_{1}(\frac{r}{4}))\\
        &=\frac{\min\bigg\lbrace \frac{r}{4}, \bigg\|A^{Z}\bigl(t_{1}(\frac{r}{4})\bigl)-A^{-}\bigl(t_{1}(\frac{r}{4})\bigl)\bigg\|_{2}\bigg\rbrace}{6}\\
        &\leq \frac{r}{24}
        \text{\,.}
    \end{align*}
    Therefore, using the triangle inequality we obtain
    \begin{align*}
        \bigg\|A\bigl(t_{1}(\frac{r}{4})\bigl)-\mathbf{s}\bigg\|_{2}\leq \bigg\|A^{Z}\bigl(t_{1}(\frac{r}{4})\bigl)-\mathbf{s}\bigg\|_{2}+\bigg\|A\bigl(t_{1}(\frac{r}{4})\bigl)-A^{Z}\bigl(t_{1}(\frac{r}{4})\bigl)\bigg\|_{2}\leq \frac{r}{4}+\frac{r}{24}\leq \frac{r}{2}
        \text{\,.}
    \end{align*}
    Hence, $A(t_{1})\in \overline{B_{\frac{r}{2}}}(\mathbf{s})$. Next, 
    by \cref{remark:distance_as_func_of_proj} we obtain that
    \begin{align*}
        \bigg|\beta_{2}\bigl(t_{1}(\frac{r}{4})\bigl)\bigg|=\dist\biggl(A\bigl(t_{1}(\frac{r}{4})\bigl),\W_{1}\biggl)
        \text{\,.}
    \end{align*}
    By \cref{lemma:W_1_init_analysis} we have $A^{Z}\bigl(t_{1}(\frac{r}{4})\bigl)\in\W_{1}$, hence
    \begin{align*}
        \bigg|\beta_{2}\bigl(t_{1}(\frac{r}{4})\bigl)\bigg|\leq \bigg\|A^{Z}\bigl(t_{1}(\frac{r}{4})\bigl)-A\bigl(t_{1}(\frac{r}{4})\bigl)\bigg\|_{2}\leq \alpha\cdot d\cdot \exp\big(N\cdot t_{1}(\frac{r}{4})\big)
        \text{\,.}
    \end{align*}
    Thus, denoting $D_{+}(r):=d\cdot \exp\big(N\cdot t_{1}(\frac{r}{4})\big)$ we get the first part of the second claim. We now show that $\beta_{1}\bigl(t_{1}(\frac{r}{4})\bigl)\in (-\frac{1}{d},s)$. Per \cref{lemma:W_1_init_analysis}, we get by definition of $t_{1}$ that 
    \begin{align*}
        \bigg\|A^{Z}\bigl(t_{1}(\frac{r}{4})\bigl)-\mathbf{s}\bigg\|_{2}=\frac{r}{4}    
    \end{align*}
    and so since $\|A^{Z}\bigl(t_{1}(\frac{r}{4})\bigl)-A\bigl(t_{1}(\frac{r}{4})\bigl)\|_{2}\leq \frac{r}{24}$ and $r<s$ it must hold that $\beta_{1}\bigl(t_{1}(\frac{r}{4})\bigl)<s$. Since
    \begin{align*}
        \bigg\|A^{Z}\bigl(t_{1}(\frac{r}{4})\bigl)-A\bigl(t_{1}(\frac{r}{4})\bigl)\bigg\|_{2}\leq \frac{\bigg\|A^{Z}\bigl(t_{1}(\frac{r}{4})\bigl)-A^{-}\bigl(t_{1}(\frac{r}{4})\bigl)\bigg\|_{2}}{6}    
    \end{align*}
    it must hold that
    \begin{align*}
        \beta_{1}\bigl(t_{1}(\frac{r}{4})\bigl)>a^{-}\bigl(t_{1}(\frac{r}{4})\bigl)
    \end{align*}
    where $A^{-}\bigl(t_{1}(\frac{r}{4})\bigl)=a^{-}\bigl(t_{1}(\frac{r}{4})\bigl)\cdot \1$. Note that by \cref{lemma:W_1_init_analysis} we obtain
    \begin{align*}
        \beta_{1}\bigl(t_{1}(\frac{r}{4})\bigl)>a^{-}\bigl(t_{1}(\frac{r}{4})\bigl)>-\frac{1}{d}
    \end{align*}
    as $a^{-}(t)$ is monotonically increasing. Therefore by \cref{lemma:W_1_init_analysis} and by continuity, there must exist some point $a\in\bigg(-\frac{1}{d},\beta_{1}\big(t_{1}(\frac{r}{4})\big)\bigg)$ such that if we initialize $A^{a}(0)=a\cdot\1$ and evolve $A^{a}(t)$ according to the gradient flow dynamics, then it holds that
    \begin{align*}
        A^{a}\bigl(t_{1}(\frac{r}{4})\bigl)=\beta_{1}\bigl(t_{1}(\frac{r}{4})\bigl)\cdot\1
        \text{\,.}
    \end{align*}
    Per \cref{lemma:W_1_init_analysis}, $A^{a}(t)$ never leaves $[-3,3]^{d}$ where the vector field $-\nabla\ell(A)$ is $N$-Lipschitz. Thus, invoking \cref{lemma:ODE_sol_diff_wrt_init} we obtain
    \begin{align*}
        \bigg|\beta_{2}\bigl(t_{1}(\frac{r}{4})\bigl)\bigg|&=\bigg\|\beta_{2}\bigl(t_{1}(\frac{r}{4})\bigl)\cdot\vbf\bigl(t_{1}(\frac{r}{4})\bigl)\bigg\|_{2}\\
        &=\bigg\|\beta_{2}\bigl(t_{1}(\frac{r}{4})\bigl)\cdot\vbf\bigl(t_{1}(\frac{r}{4})\bigl)+\beta_{1}\bigl(t_{1}(\frac{r}{4})\bigl)\cdot\1-\beta_{2}\bigl(t_{1}(\frac{r}{4})\bigl)\cdot\vbf\bigl(t_{1}(\frac{r}{4})\bigl)\bigg\|_{2}\\
        &=\bigg\|A\bigl(t_{1}(\frac{r}{4})\bigl)-A^{a}\bigl(t_{1}(\frac{r}{4})\bigl)\bigg\|_{2}\\
        &\geq \bigg\|A(0)-A^{a}(0)\bigg\|_{2}\cdot \exp\big(-N\cdot t_{1}(\frac{r}{4})\big)
        \text{\,.}
    \end{align*}
    As $A^{a}(0)\in\W_{1}$, we can lower bound the right hand side by the distance between $A(0)$ and $\W_{1}$ and obtain
    \begin{align*}
        \bigg|\beta_{2}\bigl(t_{1}(\frac{r}{4})\bigl)\bigg|\geq \dist\big(A(0),\W_{1}\big)\cdot \exp\big(-N\cdot t_{1}(\frac{r}{4})\big)
        \text{\,.}
    \end{align*}
    Next, observe that $\zeta_{d}\leq\frac{1}{2}$ since $A(0)\in\I_{3}(\frac{r}{4})$, hence
    \begin{align*}
        \zeta_{1}-\frac{\sum_{k=1}^{d}\zeta_{k}}{d}\geq1-\frac{d-1}{d}-\frac{1}{2d}=\frac{1}{2d}
        \text{\,.}
    \end{align*}
    Therefore,
    \begin{align*}
        \dist\big(A(0),\W_{1}\big)&=\sqrt{\sum_{k=1}^{d}\bigg(\alpha\cdot\zeta_k-\frac{\sum_{k=1}^{d}\alpha\cdot\zeta_k}{d}\bigg)^{2}}\\
        &=\alpha\sqrt{\sum_{k=1}^{d}\bigg(\zeta_k-\frac{\sum_{k=1}^{d}\zeta_k}{d}\bigg)^{2}}\\
        &\geq \alpha\cdot \bigg|\zeta_{1}-\frac{\sum_{k=1}^{d}\zeta_k}{d}\bigg|\\
        &\geq \alpha\cdot \frac{1}{2d}
        \text{\,.}
    \end{align*}
    Hence, we meet the second part of the second claim with $D_{-}(r)$ defined as
    \begin{align*}
        D_{-}(r):=\frac{1}{2d}\cdot \exp\big(-N\cdot t_{1}(\frac{r}{4})\big)>0
        \text{\,.}
    \end{align*}
    Note that the proof for the reference case is identical.
\end{proof}

We note the following remark which deals with the value of points in a sufficiently small sphere around $\mathbf{s}$.
\begin{remark}\label{remark:sphere_val_around_s}
    Let $\mu>0$. The objective $\ell$ is continuous and $\ell(\mathbf{s})>0$ there exists $\overline{r}(\mu)>0$ such that any $A\in\overline{B_{\overline{r}(\mu)}}(\mathbf{s})$ satisfies 
    \begin{align*}
        \ell(A)\leq (1+\frac{\mu}{4})\cdot \ell(\mathbf{s})
        \text{\,.}
    \end{align*}
\end{remark}

We conclude this section by proving the following corollary, which states that when $A(0)\in\I_{3}$, a set of additional properties are satisfied.
\begin{corollary}\label{cor:sufficient_sphere}
    Let $\mu>0$. Consider $\widetilde{r}(\mu):=\min\lbrace r_{2}, \overline{r}(\mu)\rbrace$ for the respective $r_{2}$ and $\overline{r}(\mu)$ of \cref{prop:restricted,remark:sphere_val_around_s}. Suppose we initialize at $A(0)\in \I_{3}(\frac{\widetilde{r}}{4})$ and at $A^{ref}(0)$, and evolve $A(t)$ and $A^{ref}(t)$ according to \cref{eq:s_2_dyn}. There exist constants $D_{+}(\mu),D_{-}(\mu)>0$ such that:
    \begin{itemize}
        \item 
        $A\bigl(t_{1}(\frac{\widetilde{r}(\mu)}{4})\bigl),A^{ref}\bigl(t_{1}(\frac{\widetilde{r}(\mu)}{4})\bigl)\in \overline{B_{\frac{r_{2}}{2}}}(\mathbf{s})$.
        \item 
        $\bigg|\beta_{2}\bigl(t_{1}(\frac{\widetilde{r}(\mu)}{4})\bigl)\bigg|,\bigg|\beta_{2}^{ref}\bigl(t_{1}(\frac{\widetilde{r}(\mu)}{4})\bigl)\bigg|\in [\alpha\cdot D_{-}(\mu),\alpha\cdot D_{+}(\mu)]$.
        \item 
        $\ell\biggl(A\bigl(t_{1}(\frac{\widetilde{r}(\mu)}{4})\bigl)\biggl),\ell\biggl(A^{ref}\bigl(t_{1}(\frac{\widetilde{r}(\mu)}{4})\bigl)\biggl)\leq (1+\frac{\mu}{4})\ell(\mathbf{s})$.
    \end{itemize}
\end{corollary}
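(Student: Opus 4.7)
The plan is to obtain the corollary as a direct specialization of \cref{prop:GF_nears_s} with the specific radius $r := \widetilde{r}(\mu) := \min\{r_2, \overline{r}(\mu)\}$, combined with \cref{remark:sphere_val_around_s} for the loss bound. All three bullets should follow from essentially bookkeeping; there is no new dynamical content required.

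First, I would verify that the hypotheses of \cref{prop:GF_nears_s} apply at $r = \widetilde{r}(\mu)$. Since $\widetilde{r}(\mu) \leq r_2 \leq r_1 \leq \tfrac{1}{2d} < s$ by \cref{prop:restricted}, we have $\widetilde{r}(\mu) \in (0, s)$, so \cref{prop:GF_nears_s} is applicable. Applying the proposition to both the trajectory initialized at $A(0) \in \I_3(\tfrac{\widetilde{r}(\mu)}{4})$ and the reference trajectory initialized at $A^{ref}(0)$ yields that at time $t_1(\tfrac{\widetilde{r}(\mu)}{4})$ both trajectories lie in $\overline{B_{\widetilde{r}(\mu)/2}}(\mathbf{s})$. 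Since $\widetilde{r}(\mu) \leq r_2$, we obtain $\overline{B_{\widetilde{r}(\mu)/2}}(\mathbf{s}) \subseteq \overline{B_{r_2/2}}(\mathbf{s})$, which establishes the first bullet.

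For the second bullet, \cref{prop:GF_nears_s} supplies constants $D_+(\widetilde{r}(\mu)), D_-(\widetilde{r}(\mu)) > 0$ such that $|\beta_2(t_1(\tfrac{\widetilde{r}(\mu)}{4}))|$ and $|\beta_2^{ref}(t_1(\tfrac{\widetilde{r}(\mu)}{4}))|$ lie in $[\alpha \cdot D_-(\widetilde{r}(\mu)), \alpha \cdot D_+(\widetilde{r}(\mu))]$. I would then simply set $D_+(\mu) := D_+(\widetilde{r}(\mu))$ and $D_-(\mu) := D_-(\widetilde{r}(\mu))$, noting that both remain strictly positive since they depend only on $\mu$ through $\widetilde{r}(\mu)$.

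For the third bullet, the first bullet already guarantees that both $A(t_1(\tfrac{\widetilde{r}(\mu)}{4}))$ and $A^{ref}(t_1(\tfrac{\widetilde{r}(\mu)}{4}))$ lie in $\overline{B_{\widetilde{r}(\mu)/2}}(\mathbf{s}) \subseteq \overline{B_{\overline{r}(\mu)}}(\mathbf{s})$, since $\widetilde{r}(\mu) \leq \overline{r}(\mu)$. Invoking \cref{remark:sphere_val_around_s} at these points then yields $\ell(A(t_1(\tfrac{\widetilde{r}(\mu)}{4}))) \leq (1 + \tfrac{\mu}{4}) \ell(\mathbf{s})$ and analogously for the reference trajectory. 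Since every step above is a routine invocation of an already-established result, I do not anticipate any technical obstacle; the only mild subtlety is confirming that $\widetilde{r}(\mu) < s$ so that \cref{prop:GF_nears_s} can be legitimately applied, which follows from the size bound on $r_1$ in \cref{prop:restricted} and the fact that $s \geq \tfrac{1}{d}$ (from \cref{lemma:saddle_char}).
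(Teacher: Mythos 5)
Your proposal is correct and follows essentially the same route as the paper's proof: specialize \cref{prop:GF_nears_s} at $r=\widetilde{r}(\mu)$, set $D_{\pm}(\mu):=D_{\pm}(\widetilde{r}(\mu))$, use $\widetilde{r}(\mu)\leq r_{2}$ for the first bullet, and use $\widetilde{r}(\mu)\leq\overline{r}(\mu)$ together with \cref{remark:sphere_val_around_s} for the third. Your explicit check that $\widetilde{r}(\mu)<s$ (via $r_{1}\leq\tfrac{1}{2d}<\tfrac{1}{d}\leq s$) is a sound detail the paper leaves implicit.
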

\begin{proof}
    We consider the constants $D_{+}(\mu):=D_{+}(\widetilde{r}(\mu))$ and $D_{-}(\mu):=D_{-}(\widetilde{r}(\mu))$ from \cref{prop:GF_nears_s}. Per \cref{prop:GF_nears_s} and since $A(0)\in\I_{3}(\frac{\widetilde{r}(\mu)}{4})$, we have that:
    \begin{itemize}
        \item 
        $A\bigl(t_{1}(\frac{\widetilde{r}(\mu)}{4})\bigl),A^{ref}\bigl(t_{1}(\frac{\widetilde{r}(\mu)}{4})\bigl)\in \overline{B_{\frac{\widetilde{r}(\mu)}{2}}}(\mathbf{s})$.
        \item 
        $\bigg|\beta_{2}\bigl(t_{1}(\frac{\widetilde{r}(\mu)}{4})\bigl)\bigg|,\bigg|\beta_{2}^{ref}\bigl(t_{1}(\frac{\widetilde{r}(\mu)}{4})\bigl)\bigg|\in [\alpha\cdot D_{-}(\mu),\alpha\cdot D_{+}(\mu)]$.
    \end{itemize}
    As $\widetilde{r}(\mu)\leq r_{2},\overline{r}(\mu)$, we immediately obtain that
    \begin{align*}
        A\bigl(t_{1}(\frac{\widetilde{r}(\mu)}{4})\bigl),A^{ref}\bigl(t_{1}(\frac{\widetilde{r}(\mu)}{4})\bigl)\in \overline{B_{\frac{r_{2}}{2}}}(\mathbf{s})
    \end{align*}
    and
    \begin{align*}
        A\bigl(t_{1}(\frac{\widetilde{r}(\mu)}{4})\bigl),A^{ref}\bigl(t_{1}(\frac{\widetilde{r}(\mu)}{4})\bigl)\in \overline{B_{\frac{\overline{r}(\mu)}{2}}}(\mathbf{s})
        \text{\,.}
    \end{align*}
    Finally, recall \cref{remark:sphere_val_around_s} which combined with the latter argument results in
    \begin{align*}
        \ell\biggl(A\bigl(t_{1}(\frac{\widetilde{r}(\mu)}{4})\bigl)\biggl),\ell\biggl(A^{ref}\bigl(t_{1}(\frac{\widetilde{r}(\mu)}{4})\bigl)\biggl)\leq (1+\frac{\mu}{4})\ell(\mathbf{s})
    \end{align*}
    as required.
\end{proof}

\subsubsection{Escape From the Saddle $\mathbf{s}$}\label{app:poison:s_2:phase_2}

In the previous section, we showed that the gradient flow trajectories must reach a sufficiently small sphere around $\mathbf{s}$. Our goal in this section is showing that not only do both trajectories escape it, but they also do it fast enough\footnote{recall that the divergence between the two trajectories depends on the convergence time achieved by gradient flow.}. To begin this section, we prove the following three lemmas regarding the diffeomorphism $H$ from \cref{prop:GF_linearization}. The following lemma proves that $W_{1}$ is mapped into itself under $H$.
\begin{lemma}\label{lemma:H_invariant}
    Let $A\in\overline{B_{r_{2}}}(\mathbf{s})\setminus\lbrace\mathbf{s}\rbrace$ and denote $\widetilde{A}:=H(A)$. If $A\in\W_{1}$ then $\widetilde{A}\in\W_{1}$.
\end{lemma}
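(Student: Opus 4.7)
The plan is to leverage the smooth conjugation $H$ established in \cref{prop:GF_linearization}, transporting the dynamical fact that gradient flow started on $\W_{1}$ stays on $\W_{1}$ and converges to $\mathbf{s}$ over to the linearized system, where the invariance of $\W_{1}$ can be read off directly from the eigendecomposition of $\nabla^{2}\ell(\mathbf{s})$ via \cref{cor:escape_from_s}.

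First I would write $A = a \cdot \1$ with $a \in \R \setminus \{s\}$ and verify that $a$ lies in the interval $[-\tfrac{1}{d},\tfrac{4}{d}]$ required by \cref{lemma:W_1_init_analysis}. This follows from $|a - s| \leq r_{2}/\sqrt{d} \leq r_{1}/\sqrt{d} \leq 1/(2d\sqrt{d})$, using $r_{1} \leq 1/(2d)$ from \cref{prop:restricted} and $s \in [\tfrac{1}{d},\tfrac{3}{d}]$ from \cref{lemma:saddle_char}. I would then invoke \cref{lemma:W_1_init_analysis} (essentially the symmetry observation in its proof: when all coordinates are initially equal, the ODE keeps them equal) to conclude that the gradient flow trajectory $A(\cdot)$ with $A(0)=A$ satisfies $A(t) \in \W_{1}$ for every $t \geq 0$, evolves monotonically along $\W_{1}$ toward $\mathbf{s}$, and limits to $\mathbf{s}$. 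The monotonicity gives $\|A(t) - \mathbf{s}\|_{2} \leq \|A - \mathbf{s}\|_{2} \leq r_{2}$ for all $t \geq 0$, so the entire forward trajectory remains in $\overline{B_{r_{2}}}(\mathbf{s}) \subseteq \V_{1}$, the domain of $H$.

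Next I would transfer to the linearized system via the conjugation property in \cref{def:conj}: the curve $\widetilde{A}(t) := H(A(t))$ is a solution of $\dot{\widetilde{A}}(t) = -\nabla^{2}\ell(\mathbf{s})\bigl(\widetilde{A}(t)-\mathbf{s}\bigr)$ on $[0,\infty)$. Continuity of $H$ together with $H(\mathbf{s}) = \mathbf{s}$ then yields $\widetilde{A}(t) \to \mathbf{s}$ as $t \to \infty$. Plugging this limit into the explicit linear solution from \cref{cor:escape_from_s}, and decomposing $\widetilde{A}(0) = \beta_{1}\1 + \beta_{2}\vbf$ with $\beta_{2}\vbf \in \W_{2}$,
\[
\widetilde{A}(t) = \bigl(\exp(-t\lambda_{+})(\beta_{1}-s) + s\bigr)\1 + \exp(-t\lambda_{-})\beta_{2}\vbf.
\]
By \cref{lemma:saddle_hessian_evals} we have $\lambda_{+} > 0$ and $\lambda_{-} < 0$, so the first summand converges to $\mathbf{s}$ while $\|\exp(-t\lambda_{-})\beta_{2}\vbf\|_{2} = |\beta_{2}|\,e^{t|\lambda_{-}|}$ would diverge unless $\beta_{2} = 0$. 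Hence $\widetilde{A}(0) = H(A) \in \W_{1}$, as claimed.

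Given the machinery already in place, the argument is largely routine; the main point requiring care is the domain issue, namely ensuring that $A(t)$ remains in $\V_{1}$ for all $t \geq 0$ so that the conjugation property of $H$ applies on the entire ray $[0,\infty)$ and the limit $t \to \infty$ can be taken to read off $\beta_{2} = 0$. Absent the monotone convergence along $\W_{1}$ from \cref{lemma:W_1_init_analysis}, one would be forced to work on a finite time interval where both $\W_{1}$ and $\W_{2}$ components can still coexist in the linearization, and the desired invariance could not be extracted.
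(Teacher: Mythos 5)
Your proposal is correct and follows essentially the same route as the paper's proof: restrict $a$ to the interval where \cref{lemma:W_1_init_analysis} applies, use it to get that the nonlinear trajectory stays on $\W_{1}$ inside $\overline{B_{r_{2}}}(\mathbf{s})$ and converges to $\mathbf{s}$, push this through the conjugation $H$, and conclude from the explicit linear solution (\cref{cor:escape_from_s}, \cref{lemma:saddle_hessian_evals}) that a nonzero $\W_{2}$ component would force divergence. The only cosmetic difference is that the paper phrases the last step as a proof by contradiction via \cref{remark:escape_from_s}, whereas you read off $\beta_{2}=0$ directly; the substance is identical.
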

\begin{proof}
    Since $A\in\W_{1}$ there exists $a\in \overline{B_{r_{2}}}(\mathbf{s})$ such that $A=a\cdot\1$. Per \cref{prop:restricted,lemma:saddle_char}, it holds that $r_{2}<r_{1}\leq \frac{1}{2d}$ and $s\in[\frac{1}{d},\frac{3}{d}]$. Thus we obtain that $a\in[0, \frac{4}{d}]$. Assume on the contrary that $\widetilde{A}\notin\W_{1}$. On the one hand, if we initialize at $A(0)=a\cdot \1$ and evolve $A(t)$ according to \cref{eq:s_2_dyn}, then per \cref{lemma:W_1_init_analysis} $A(t)\in\overline{B_{r_{2}}}(\mathbf{s})$ for all \(t\geq 0\), and furthermore $\lim_{t \rightarrow \infty}A(t)=\mathbf{s}$. By continuity \(H(A(t))\) converges to s as well. On the other hand, if we initialize at $\widetilde{A}(0)=\widetilde{A}$ and evolve $\widetilde{A}(t)$ according to the linear approximation around $\mathbf{s}$ of the gradient flow dynamics (see \cref{lemma:lin_approx_sol}), then per \cref{remark:escape_from_s} the solution $\widetilde{A}(t)$ diverges away from $\mathbf{s}$ (since the projection of $\widetilde{A}$ to $\W_{2}$ is not zero). This contradicts our assumption that H is a conjugation (see \cref{def:conj}).
\end{proof}

The following lemma proves the existence of two points in $\W_{1}$ that are mapped by $H$ to “opposite sides” of $\mathbf{s}$.
\begin{lemma}\label{lemma:H_encricles_s}
    There exists $a_{1},a_{2}\in [s-\frac{r_{2}}{2\sqrt{d}},s+\frac{r_{2}}{2\sqrt{d}}]\setminus\lbrace s\rbrace$ such that there exist $\widetilde{a}_{1}\in[s-\frac{r_{3}}{\sqrt{d}},s)$ and $\widetilde{a}_{2}\in (s,s+\frac{r_{3}}{\sqrt{d}}]$ for which either
    \begin{align*}
        H(a_{1}\cdot\1)=\widetilde{a}_{1}\cdot\1 ~~\;,\;~~ H(a_{2}\cdot\1)=\widetilde{a}_{2}\cdot\1
    \end{align*}
    or
    \begin{align*}
        H(a_{1}\cdot\1)=\widetilde{a}_{2}\cdot\1 ~~\;,\;~~ H(a_{2}\cdot\1)=\widetilde{a}_{1}\cdot\1
        \text{\,.}
    \end{align*}
\end{lemma}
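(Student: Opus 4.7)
The plan is to leverage Lemma~\ref{lemma:H_invariant}---which shows that $H$ restricts to a self-map of $\W_1$ on a small sphere around $\mathbf{s}$---together with injectivity and continuity of $H$, to reduce the statement to a one-dimensional monotonicity argument on a real interval.

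First, I would observe that for any $a \in [s - \tfrac{r_2}{2\sqrt{d}},\, s + \tfrac{r_2}{2\sqrt{d}}]$, the point $a\cdot\1$ sits at distance $|a - s|\sqrt{d} \leq \tfrac{r_2}{2}$ from $\mathbf{s}$, hence lies in $\overline{B_{r_2}}(\mathbf{s})$. Combining the inclusion $H[\overline{B_{r_2}}(\mathbf{s})] \subseteq \overline{B_{r_3}}(\mathbf{s})$ from Proposition~\ref{prop:restricted} with the invariance $H\bigl(\W_1 \cap \overline{B_{r_2}}(\mathbf{s}) \setminus \{\mathbf{s}\}\bigr) \subseteq \W_1$ from Lemma~\ref{lemma:H_invariant}, and using the identity $H(\mathbf{s}) = \mathbf{s}$ built into Definition~\ref{def:conj}, I can define a scalar map
\[
f : \left[s - \tfrac{r_2}{2\sqrt{d}},\, s + \tfrac{r_2}{2\sqrt{d}}\right] \to \left[s - \tfrac{r_3}{\sqrt{d}},\, s + \tfrac{r_3}{\sqrt{d}}\right]
\]
by requiring $H(a\cdot\1) = f(a)\cdot\1$, with $f(s) = s$. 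The bound on the image follows from the one-dimensionality of $\W_1$: if $H(a\cdot\1) = \widetilde{a}\cdot\1 \in \overline{B_{r_3}}(\mathbf{s})$, then $|\widetilde{a} - s|\sqrt{d} \leq r_3$.

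The key step is then to argue that $f$ is strictly monotonic. Since $H$ is a $C^3$-diffeomorphism by Proposition~\ref{prop:GF_linearization}, $f$ is continuous and injective on a real interval, and such a map is automatically strictly monotonic (by the intermediate value theorem applied to $f(a) - f(a')$). Specializing to $a_1 := s - \tfrac{r_2}{2\sqrt{d}}$ and $a_2 := s + \tfrac{r_2}{2\sqrt{d}}$, strict monotonicity together with $f(s) = s$ forces either $f(a_1) < s < f(a_2)$ or $f(a_2) < s < f(a_1)$. Injectivity and $f(s) = s$ further ensure $f(a_1), f(a_2) \neq s$, so the inequalities are strict; this delivers precisely the two dichotomous possibilities asserted in the statement, upon setting $\widetilde{a}_1 := f(a_1),\, \widetilde{a}_2 := f(a_2)$ in the first case and swapping the labels in the second.

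I do not foresee any serious obstacle: all ingredients are already in place. The invariance of $\W_1$ under $H$ collapses the problem to one dimension, the diffeomorphism property supplies continuity and injectivity on the interval, and the radius bookkeeping (ensuring $|a - s|\sqrt{d} \leq r_2/2$ on the domain side and $|f(a) - s|\sqrt{d} \leq r_3$ on the image side) is immediate from the nested inclusions in Proposition~\ref{prop:restricted}. The only mild care needed is in stating why a continuous injection on a real interval must be strictly monotonic, which is a standard one-line consequence of the intermediate value theorem.
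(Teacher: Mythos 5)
Your proposal is correct, and it takes a genuinely different route from the paper. The paper also reduces to $\W_1$ via \cref{lemma:H_invariant} and \cref{prop:restricted}, but to rule out the bad case where both images $\widetilde{a}_1,\widetilde{a}_2$ land on the same side of $s$, it argues by contradiction through the dynamics: if both were, say, greater than $s$, the linearized flow started at $\widetilde{a}_2\cdot\1$ would pass through $\widetilde{a}_1\cdot\1$ in finite time, and pulling this back through the conjugation $H^{-1}$ would force the nonlinear gradient flow started at $a_2\cdot\1$ to cross from one side of $\mathbf{s}$ to the other along $\W_1$, contradicting the monotone, non-crossing behaviour established in \cref{lemma:W_1_init_analysis}. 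You instead observe that the induced scalar map $f$ (defined by $H(a\cdot\1)=f(a)\cdot\1$, with $f(s)=s$) is a continuous injection on a real interval, hence strictly monotone, so $f(s)=s$ automatically separates $f(a_1)$ and $f(a_2)$ onto opposite sides of $s$. Your argument is more elementary and self-contained: it needs only the invariance of $\W_1$, the inclusion $H[\overline{B_{r_2}}(\mathbf{s})]\subseteq\overline{B_{r_3}}(\mathbf{s})$, and the homeomorphism property of $H$, whereas the paper's version additionally leans on \cref{lemma:lin_approx_sol}, \cref{remark:escape_from_s} and \cref{lemma:W_1_init_analysis}. The one step you should spell out is the standard fact that a continuous injection on an interval is strictly monotone; with that in place the proof is complete.
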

\begin{proof}
    Consider $a_{1}=s-\frac{r_{2}}{4\sqrt{d}}$ and $a_{2}=s+\frac{r_{2}}{4\sqrt{d}}$. Both $a_{1}\cdot\1$ and $a_{2}\cdot\1$ are within $\W_{1}\cap\overline{B_{r_{2}}}(\mathbf{s})$ and so by \cref{prop:restricted,lemma:H_invariant} it holds that $H(a_{1}\cdot\1),H(a_{2}\cdot\1)\in\W_{1}\cap\overline{B_{r_{3}}}(\mathbf{s})$. Thus we can denote $H(a_{1}\cdot\1)=\widetilde{a}_{1}\cdot\1$ and $(a_{2}\cdot\1)=\widetilde{a}_{2}\cdot\1$ for some $\widetilde{a}_{1},\widetilde{a}_{2}\in [s-\frac{r_{3}}{\sqrt{d}},s+\frac{r_{3}}{\sqrt{d}}]$. $\widetilde{a}_{1},\widetilde{a}_{2}$ are distinct and different than $s$ since $a_{1},a_{2}\neq s$ are distinct and since $H$ is a homeomorphism with $H(\mathbf{s})=\mathbf{s}$. Assume WLOG that $\widetilde{a}_{1}<\widetilde{a}_{2}$ (otherwise we flip the indices). Assume on the contrary that $\widetilde{a}_{1},\widetilde{a}_{2}>s$ (the case where $\widetilde{a}_{1},\widetilde{a}_{2}<s$ is symmetric). Per \cref{remark:escape_from_s}, if we initialize at $\widetilde{A}(0)=\widetilde{a}_{2}\cdot\1$ and evolve $\widetilde{A}(t)$ according to the linear approximation around $\mathbf{s}$ of the gradient flow dynamics, then our trajectory (which converges to $\mathbf{s}$) must reach $\widetilde{a}_{1}\cdot\1$ after some finite time $t_{2}$, \ie we obtain $\widetilde{A}(t_{2})=\widetilde{a}_{1}\cdot\1$. Thus we obtain that
    \begin{align*}
        H^{-1}\big(\widetilde{A}(t_{2})\big)=H^{-1}(\widetilde{a}_{1}\cdot\1)=H^{-1}\big(H(a_{1}\cdot\1)\big)=a_{1}\cdot\1
        \text{\,.}
    \end{align*}
    Hence, per \cref{prop:GF_linearization} if we initialize at $A(0)=a_{2}\cdot\1$ and evolve $A(t)$ according to the gradient flow dynamics, we would get that
    \begin{align*}
        A(t_{2})=H^{-1}\big(\widetilde{A}(t_{2})\big)=a_{1}\cdot\1
        \text{\,.}
    \end{align*}
    The proof concludes by noting that the above is a contradiction to \cref{lemma:W_1_init_analysis}.
\end{proof}

The following lemma proves that $W_{1}$ is mapped into itself under $H^{-1}$.
\begin{lemma}\label{lemma:H^{-1}_invariant}
    Let $\widetilde{A}\in\overline{B_{r_{3}}}(\mathbf{s})\setminus\lbrace\mathbf{s}\rbrace$ and denote $A:=H^{-1}(\widetilde{A})$. If $\widetilde{A}\in\W_{1}$ then $A\in\W_{1}$.
\end{lemma}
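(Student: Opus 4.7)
The plan is to argue by contradiction. Suppose $\widetilde{A}\in\W_{1}\cap\overline{B_{r_{3}}}(\mathbf{s})\setminus\{\mathbf{s}\}$ but $A:=H^{-1}(\widetilde{A})\notin\W_{1}$; I will derive a contradiction by using the conjugation $H$ to build a gradient flow trajectory emanating from $A$ whose asymptotic behavior is incompatible with $A$ lying off $\W_{1}$. Because $\widetilde{A}-\mathbf{s}\in\W_{1}$ is a $\lambda_{+}$-eigenvector of $\nabla^{2}\ell(\mathbf{s})$ (\cref{lemma:saddle_hessian}) with $\lambda_{+}>0$ (\cref{lemma:saddle_hessian_evals}), the linearized flow starting at $\widetilde{A}$ is $\widetilde{A}(t)=\exp(-t\lambda_{+})(\widetilde{A}-\mathbf{s})+\mathbf{s}$, which stays in $\W_{1}\cap\overline{B_{r_{3}}}(\mathbf{s})$ for all $t\geq 0$ and converges to $\mathbf{s}$. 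By the conjugation established in \cref{prop:GF_linearization} (together with the domain inclusions of \cref{prop:restricted}), the curve $A(t):=H^{-1}(\widetilde{A}(t))$ is a genuine nonlinear gradient flow trajectory on $[0,\infty)$ with $A(0)=A$, $A(t)\in\overline{B_{r_{1}}}(\mathbf{s})$ throughout, and $A(t)\to\mathbf{s}$.

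The next step is to show $A(t)\notin\W_{1}$ for every $t\geq 0$. If instead $A(t_{0})\in\W_{1}$ for some $t_{0}>0$, then because $r_{1}\leq 1/(2d)$ and $s\in[1/d,3/d]$ we have $A(t_{0})=a\cdot\1$ with $a\in[1/(2d),7/(2d)]\subseteq[-1/d,4/d]$, so \cref{lemma:W_1_init_analysis} applies and the restriction of the flow to $\W_{1}$ is a smooth one-dimensional ODE, locally solvable backward. Uniqueness of solutions of the full gradient flow ODE then forces this backward extension to coincide with $A(t)$ on $[0,t_{0}]$, yielding $A(0)\in\W_{1}$, a contradiction. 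Combining this with \cref{lemma:entries_order} and its proof (once two entries meet, they remain equal thereafter) implies that the equivalence relation ``$a_{i}(t)=a_{j}(t)$'' on $[d]$ coarsens monotonically in $t$ and stabilizes after finitely many mergers past some time $T_{0}$, with nontrivial limit. I may therefore fix indices $i,j$ with $h(t):=a_{i}(t)-a_{j}(t)>0$ for all $t\geq T_{0}$.

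The contradiction will finally come from exponential growth of $h(t)$ as $A(t)$ nears $\mathbf{s}$. Differentiating the gradient flow ODE yields
\begin{equation*}
\dot{h}(t)=(L-1)\!\left(1-\sum\nolimits_{k=1}^{d}a_{k}(t)^{L-1}\right)\!\left(a_{i}(t)^{L-2}-a_{j}(t)^{L-2}\right).
\end{equation*}
As $A(t)\to\mathbf{s}=s\cdot\1$ with $s\in[1/d,3/d]$ and $1-d\cdot s^{L-1}\geq 1/2$ (from the proof of \cref{lemma:saddle_char}), the first factor stays above $(L-1)/4$ for $t$ large; the mean value theorem expands the second factor as $(L-2)\,\xi(t)^{L-3}\,h(t)$ with $\xi(t)^{L-3}\to s^{L-3}>0$ (noting $L-3$ is even). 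Hence $\dot{h}(t)\geq c\cdot h(t)$ for some explicit $c>0$ and all $t$ past some $T_{1}\geq T_{0}$, and Gr\"onwall's inequality forces $h(t)$ to grow unboundedly, contradicting $h(t)\leq 2\,|A(t)-\mathbf{s}|\to 0$.

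The main obstacle is that the local conjugation alone does not constrain which coordinate subspace $H^{-1}(\widetilde{A})$ lies in; bridging this gap to the desired conclusion requires exploiting the instability of the saddle $\mathbf{s}$ in the $\W_{2}$ direction, which is the role of the exponential-growth estimate. The most delicate ingredient is the forward invariance of $\W_{1}^{c}$ under the nonlinear flow, established by combining uniqueness of the full gradient flow ODE backward from $A(t_{0})\in\W_{1}$ with the one-dimensionality (and hence local backward solvability) of the flow restricted to $\W_{1}$ via \cref{lemma:W_1_init_analysis}.
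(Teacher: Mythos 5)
Your proof is correct, but it takes a genuinely different route from the paper's. The paper argues by trajectory matching: it invokes \cref{lemma:H_encricles_s} to find reference points $a'\cdot\1$ with $H(a'\cdot\1)=\widetilde{a}'\cdot\1$ on the appropriate side of $\mathbf{s}$, runs the linearized flow along $\W_1$ until it passes through $\widetilde{A}$ (or until the flow from $\widetilde{A}$ passes through $\widetilde{a}'\cdot\1$), and then uses the conjugation together with \cref{lemma:W_1_init_analysis} and \cref{lemma:entries_order} to identify $A$ with a point on a gradient-flow trajectory that is known to live in $\W_1$. You instead prove a local stable-manifold statement: the nonlinear trajectory $A(t)=H^{-1}(\widetilde{A}(t))$ converges to $\mathbf{s}$, and no trajectory converging to $\mathbf{s}$ can leave $\W_1$, because off $\W_1$ the coordinate difference $h=a_i-a_j$ obeys $\dot h=(L-1)\bigl(1-\sum_k a_k^{L-1}\bigr)\bigl(a_i^{L-2}-a_j^{L-2}\bigr)\geq c\,h$ near $\mathbf{s}$ (using $1-d\,s^{L-1}\geq 1/2$ and the mean value theorem), so Gr\"onwall forces $h$ to grow, contradicting $A(t)\to\mathbf{s}$. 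Your forward-invariance step for $\W_1^{\mathrm{c}}$ (backward uniqueness of the ODE combined with the invariance of $\W_1$) is the delicate point and is handled correctly, as is the merger/stabilization argument guaranteeing a persistently separated pair $(i,j)$. The trade-off: the paper's argument is shorter given that \cref{lemma:H_encricles_s} has already been established and requires no new dynamical estimate, whereas yours bypasses \cref{lemma:H_encricles_s} entirely and is more illuminating --- it identifies the local stable set of $\mathbf{s}$ with $\W_1$, which is the real reason the lemma holds --- at the cost of an extra quantitative estimate near the saddle.
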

\begin{proof}
    Since $\widetilde{A}\in\W_{1}\cap \overline{B_{r_{3}}}(\mathbf{s})$ there exists $\widetilde{a}\in [s-r_{3},s+r_{3}]\setminus\lbrace s\rbrace$ such that $\widetilde{A}=\widetilde{a}\cdot\1$. Assume WLOG that $\widetilde{a}\in [s-r_{3},s)$ (the opposite case is symmetric). Per \cref{lemma:H_encricles_s}, there exists $a'\in [s-\frac{r_{2}}{2\sqrt{d}},s+\frac{r_{2}}{2\sqrt{d}}]\setminus\lbrace s\rbrace$ such that there exists $\widetilde{a}'\in [s-\frac{r_{3}}{\sqrt{d}},s)$ for which
    \begin{align*}
        H(a'\cdot\1)=\widetilde{a}'\cdot\1
        \text{\,.}
    \end{align*}
    Assume that $\widetilde{a}\leq \widetilde{a}'$. By \cref{lemma:lin_approx_sol}, if we initialize at $\widetilde{A}(0)=\widetilde{A}$ and evolve $\widetilde{A}(t)$ according to the linear approximation around $\mathbf{s}$ of the gradient flow dynamics, then our trajectory (which converges to $\mathbf{s}$) must reach $\widetilde{a}'\cdot\1$ after some finite time $t_{2}$, \ie we obtain $\widetilde{A}(t_{2})=\widetilde{a}'\cdot\1$. Thus we obtain that
    \begin{align*}
        H^{-1}\big(\widetilde{A}(t_{2})\big)=H^{-1}(\widetilde{a}'\cdot\1)=H^{-1}\big(H(a'\cdot\1)\big)=a'\cdot\1
        \text{\,.}
    \end{align*}
    Hence, per \cref{prop:GF_linearization} if we initialize at $A(0)=A=H^{-1}(\widetilde{A})$ and evolve $A(t)$ according to the gradient flow dynamics, we would get that
    \begin{align*}
        A(t_{2})=H^{-1}\big(\widetilde{A}(t_{2})\big)=a'\cdot\1
        \text{\,.}
    \end{align*}
    Invoking \cref{lemma:entries_order} we conclude that $A\in\W_{1}$. Assume that $\widetilde{a}> \widetilde{a}'$. By \cref{lemma:lin_approx_sol}, if we initialize at $\widetilde{A}'(0)=\widetilde{A}'$ and evolve $\widetilde{A}'(t)$ according to the linear approximation around $\mathbf{s}$ of the gradient flow dynamics, then our trajectory (which converges to $\mathbf{s}$) must reach $\widetilde{a}\cdot\1$ after some finite time $t_{2}$, \ie we obtain $\widetilde{A}'(t_{2})=\widetilde{a}\cdot\1=\widetilde{A}$. On the one hand, note that $H^{-1}(\widetilde{A})=A$. On the other hand, if we initialize at $A'(0)=a'\cdot\1$ and evolve $A'(t)$ according to the gradient flow dynamics (defined in \cref{eq:s_2_obj}), we get by \cref{prop:GF_linearization} that $A'(t_{2})=H^{-1}\big(\widetilde{A}'(t_{2})\big)$. Thus, $A'(t_{2})=A$. The proof concludes by invoking \cref{lemma:W_1_init_analysis} which states that $A'(t)\in\W_{1}$ for any $t\geq 0$.
\end{proof}

The following two lemmas give bounds on the original dynamics in terms of the linearized ones.
\begin{lemma}\label{lemma:W_1_dist_lipschitz_1}
    Let $A\in\overline{B_{r_{2}}}(\mathbf{s})\setminus\lbrace\mathbf{s}\rbrace$. It holds that
    \begin{align*}
        \dist(A,\W_{1})\leq \widetilde{G}\cdot \dist\big(H(A),\W_{1}\big)
    \end{align*}
    where $\widetilde{G}$ is the Lipschitz coefficient of $H^{-1}|_{\overline{B_{r_{3}}}(\mathbf{s})}$.
\end{lemma}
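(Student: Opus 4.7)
The plan is to exploit the Lipschitzness of $H^{-1}$ on $\overline{B_{r_{3}}}(\mathbf{s})$ together with the fact, already established in \cref{lemma:H^{-1}_invariant}, that $H^{-1}$ sends $\W_{1}\cap\overline{B_{r_{3}}}(\mathbf{s})$ into $\W_{1}$. More concretely, I will produce an explicit point of $\W_{1}$ close to $A$ by pulling back, via $H^{-1}$, the orthogonal projection of $H(A)$ onto $\W_{1}$.

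First I would denote by $P$ the orthogonal projection of $H(A)$ onto the linear subspace $\W_{1}$, so that by definition $\|H(A)-P\|_{2}=\dist\bigl(H(A),\W_{1}\bigr)$. I then need to verify that $P$ lies in the domain of $H^{-1}|_{\overline{B_{r_{3}}}(\mathbf{s})}$; this is immediate because $\mathbf{s}\in\W_{1}$, hence the orthogonal projection onto $\W_{1}$ is non-expansive with respect to $\mathbf{s}$, giving
\begin{equation*}
\|P-\mathbf{s}\|_{2}\leq \|H(A)-\mathbf{s}\|_{2}\leq r_{3}
\text{\,,}
\end{equation*}
where the final inequality uses $A\in\overline{B_{r_{2}}}(\mathbf{s})$ together with the containment $H[\overline{B_{r_{2}}}(\mathbf{s})]\subseteq\overline{B_{r_{3}}}(\mathbf{s})$ from \cref{prop:restricted}. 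Thus $P\in\overline{B_{r_{3}}}(\mathbf{s})$, so $H^{-1}(P)$ is well-defined, and by \cref{lemma:H^{-1}_invariant} (which also trivially covers the degenerate case $P=\mathbf{s}$, since $H^{-1}(\mathbf{s})=\mathbf{s}\in\W_{1}$) we have $H^{-1}(P)\in\W_{1}$.

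The bound is then a one-line chain. Since $H^{-1}(P)\in\W_{1}$, the distance $\dist(A,\W_{1})$ is bounded by $\|A-H^{-1}(P)\|_{2}$. Writing $A=H^{-1}(H(A))$ and applying the Lipschitz bound on $H^{-1}|_{\overline{B_{r_{3}}}(\mathbf{s})}$ with constant $\widetilde{G}$,
\begin{equation*}
\dist(A,\W_{1})\leq \bigl\|H^{-1}(H(A))-H^{-1}(P)\bigr\|_{2}\leq \widetilde{G}\cdot \|H(A)-P\|_{2}=\widetilde{G}\cdot\dist\bigl(H(A),\W_{1}\bigr)
\text{\,,}
\end{equation*}
which is exactly the claimed inequality.

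There is no serious obstacle here: the entire argument is structural, using only that $\W_{1}$ is a linear subspace containing~$\mathbf{s}$, the nestedness of balls given by \cref{prop:restricted}, the invariance statement of \cref{lemma:H^{-1}_invariant}, and the Lipschitz constant of $H^{-1}$. The only point that requires care is checking that the projection $P$ lands inside $\overline{B_{r_{3}}}(\mathbf{s})$, so that we are entitled to apply $H^{-1}$ and its Lipschitz bound; this is handled by the observation that projecting onto an affine set through $\mathbf{s}$ cannot increase distance from $\mathbf{s}$.
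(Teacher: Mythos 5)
Your proposal is correct and follows essentially the same route as the paper: identify the nearest point of $\W_{1}$ to $H(A)$ inside $\overline{B_{r_{3}}}(\mathbf{s})$, pull it back through $H^{-1}$, invoke \cref{lemma:H^{-1}_invariant} to see the pullback lies in $\W_{1}$, and apply the Lipschitz bound on $H^{-1}|_{\overline{B_{r_{3}}}(\mathbf{s})}$. The only (cosmetic) difference is that you realize the nearest point explicitly as the orthogonal projection and justify its membership in $\overline{B_{r_{3}}}(\mathbf{s})$ via non-expansiveness, whereas the paper takes an argmin over $\W_{1}\cap\overline{B_{r_{3}}}(\mathbf{s})$ and uses surjectivity of $H$ from \cref{prop:restricted}; both are sound.
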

\begin{proof}
    First note that per \cref{prop:restricted}, $H^{-1}|_{\overline{B_{r_{3}}}(\mathbf{s})}$ is indeed Lipschitz.  Let $\widetilde{G}>0$ be its Lipschitz coefficient. Next, by definition of the $\dist$ measure (\cref{def:dist}) we have that
    \begin{align*}
        \dist\big(H(A),\W_{1}\big)=\min_{\widetilde{A}\in\W_{1}}\|H(A)-\widetilde{A}\|_{2}
        \text{\,.}
    \end{align*}
    Since $A\in\overline{B_{r_{2}}}(\mathbf{s})$, it holds by \cref{prop:restricted} that $H(A)\in\overline{B_{r_{3}}}(\mathbf{s})$. Thus, since $\overline{B_{r_{3}}}(\mathbf{s})$ is a ball it must hold that
    \begin{align*}
        \min_{\widetilde{A}\in\W_{1}}\|H(A)-\widetilde{A}\|_{2}=\min_{\widetilde{A}\in\W_{1}\cap \overline{B_{r_{3}}}(\mathbf{s})}\|H(A)-\widetilde{A}\|_{2}
        \text{\,.}
    \end{align*}
    As $H$ is onto $\overline{B_{r_{3}}}(\mathbf{s})$, by \cref{prop:restricted} there exists $A'\in\overline{B_{r_{1}}}(\mathbf{s})$ such that
    \begin{align*}
        H(A')\in\argmin_{\widetilde{A}\in\W_{1}\cap \overline{B_{r_{3}}}(\mathbf{s})}\|H(A)-\widetilde{A}\|_{2}
        \text{\,.}
    \end{align*}
    Hence by the Lipschitz property of of $H^{-1}$ we obtain
    \begin{align*}
        \min_{\widetilde{A}\in\W_{1}\cap \overline{B_{r_{3}}}(\mathbf{s})}\|H(A)-\widetilde{A}\|_{2}&=\|H(A)-H(A')\|_{2}\\
        &\geq \frac{1}{\widetilde{G}}\bigg\|H^{-1}\big(H(A)\big)-H^{-1}\big(H(A')\big)\bigg\|_{2}\\
        &=\frac{1}{\widetilde{G}}\|A-A'\|_{2}
        \text{\,.}
    \end{align*}
    Since $A'=H^{-1}(\widetilde{A})$ for some $\widetilde{A}\in\W_{1}\cap\overline{B_{r_{3}}}(\mathbf{s})$, it holds by \cref{lemma:H^{-1}_invariant} that $A'\in\W_{1}$. Thus, 
    \begin{align*}
        \frac{1}{\widetilde{G}}\|A-A'\|_{2}\geq \frac{1}{\widetilde{G}}\min_{A''\in\W_{1}}\|A-A''\|_{2}=\frac{1}{\widetilde{G}}\cdot \dist(A,\W_{1})
        \text{\,.}
    \end{align*}
    Combining the above inequalities and multiplying by \(\widetilde{G}\) we obtain overall that
    \begin{align*}
        \widetilde{G}\cdot \dist\big(H(A),\W_{1}\big)\geq |A-A'\|_{2} \geq \dist(A,\W_{1})
    \end{align*}
    as required.  
\end{proof}

\begin{lemma}\label{lemma:W_1_dist_lipschitz_2}
    Let $A\in \overline{B_{r_{1}}}(\mathbf{s})$. It holds that
    \begin{align*}
        \dist\big(H(A),\W_{1}\big)\leq G\cdot \dist(A,\W_{1})
    \end{align*}
    where $G$ is the Lipschitz coefficient of $H|_{\overline{B_{r_{1}}}(\mathbf{s})}$.
\end{lemma}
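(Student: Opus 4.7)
The plan is to follow the same structural strategy as in \cref{lemma:W_1_dist_lipschitz_1}, but going in the forward direction and using the Lipschitz property of $H$ itself rather than that of $H^{-1}$. First I would let $A^{*} \in \W_{1}$ be the orthogonal projection of $A$ onto the linear subspace $\W_{1}$, so that $\dist(A, \W_{1}) = \|A - A^{*}\|_{2}$. Since $\mathbf{s} \in \W_{1}$ (because $\mathbf{s} = s \cdot \1$), orthogonal projection onto $\W_{1}$ is contractive relative to $\mathbf{s}$, yielding $\|A^{*} - \mathbf{s}\|_{2} \leq \|A - \mathbf{s}\|_{2} \leq r_{1}$, so that $A^{*} \in \W_{1} \cap \overline{B_{r_{1}}}(\mathbf{s})$.

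Next I would argue that $H(A^{*}) \in \W_{1}$. If $A^{*} = \mathbf{s}$ this is immediate since $H(\mathbf{s}) = \mathbf{s}$. Otherwise, \cref{lemma:H_invariant} applies, once one verifies that its proof extends verbatim from $\overline{B_{r_{2}}}(\mathbf{s})$ to $\overline{B_{r_{1}}}(\mathbf{s})$. The only place where the radius enters is in checking that the scalar parameter $a^{*}$ (in $A^{*} = a^{*} \cdot \1$) lies within the interval $[-\tfrac{1}{d}, \tfrac{4}{d}]$ required by \cref{lemma:W_1_init_analysis}; the constraint $r_{1} \leq \tfrac{1}{2d}$ from \cref{prop:restricted}, together with $s \in [\tfrac{1}{d}, \tfrac{3}{d}]$ from \cref{lemma:saddle_char}, ensures $a^{*} \in [\tfrac{1}{2d}, \tfrac{7}{2d}] \subseteq [-\tfrac{1}{d}, \tfrac{4}{d}]$, and the rest of the contradiction argument used to prove \cref{lemma:H_invariant} goes through unchanged. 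With $H(A^{*}) \in \W_{1}$ established, the bound follows from the Lipschitz property of $H|_{\overline{B_{r_{1}}}(\mathbf{s})}$ with coefficient $G$ (from \cref{prop:restricted}):
\begin{align*}
\dist\big(H(A), \W_{1}\big) \leq \|H(A) - H(A^{*})\|_{2} \leq G \|A - A^{*}\|_{2} = G \cdot \dist(A, \W_{1}).
\end{align*}

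The main technical subtlety is the verification that the $\W_{1}$-invariance of $H$ (\cref{lemma:H_invariant}) holds at $A^{*}$, which may lie outside the smaller ball $\overline{B_{r_{2}}}(\mathbf{s})$ where that lemma was originally stated. As outlined above this is a routine extension, so the proof is substantially cleaner than that of \cref{lemma:W_1_dist_lipschitz_1}---rather than having to identify a nearest point on $\W_{1}$ in the image side and pull it back through $H^{-1}$, we can simply push the orthogonal projection forward through $H$, since orthogonal projection onto a subspace through the fixed point $\mathbf{s}$ automatically keeps us inside the valid domain of $H$.
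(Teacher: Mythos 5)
Your proof is correct and takes essentially the same route as the paper's: identify the nearest point of $\W_{1}$ to $A$ (the paper phrases it as the minimizer over $\W_{1}\cap\overline{B_{r_{1}}}(\mathbf{s})$, you as the orthogonal projection), push it forward through $H$ using $\W_{1}$-invariance, and apply the Lipschitz bound on $H|_{\overline{B_{r_{1}}}(\mathbf{s})}$. Your explicit verification that \cref{lemma:H_invariant} extends from $\overline{B_{r_{2}}}(\mathbf{s})$ to $\overline{B_{r_{1}}}(\mathbf{s})$ is a welcome addition, since the paper's own proof invokes that lemma on the larger ball without comment.
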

\begin{proof}
    First note that per \cref{prop:restricted}, $H|_{\overline{B_{r_{1}}}(\mathbf{s})}$ is indeed Lipschitz. Let $G>0$ be its Lipschitz coefficient . Next, by definition of the $\dist$ measure we have that
    \begin{align*}
        \dist(A,\W_{1})=\min_{A'\in\W_{1}}\|A-A'\|_{2}
        \text{\,.}
    \end{align*}
    Since $A\in\overline{B_{r_{1}}}(\mathbf{s})$ and $\overline{B_{r_{1}}}(\mathbf{s})$ is a ball, it must hold that
    \begin{align*}
        \min_{A'\in\W_{1}}\|A-A'\|_{2}=\min_{A'\in\W_{1}\cap\overline{B_{r_{1}}}(\mathbf{s})}\|A-A'\|_{2}
        \text{\,.}
    \end{align*}
    By the Lipschitz property of $H$ we obtain
    \begin{align*}
        \min_{A'\in\W_{1}\cap\overline{B_{r_{1}}}(\mathbf{s})}\|A-A'\|_{2}\geq \min_{A'\in\W_{1}\cap\overline{B_{r_{1}}}(\mathbf{s})}\frac{1}{G}\|H(A)-H(A')\|_{2}
        \geq 
        \dist\big(H(A),\W_{1}\big)
        \text{\,.}
        \end{align*}    
        where the last inequality follows from \cref{lemma:H_invariant}. Multiplying by \(G\) gives the result.
\end{proof}

Before proving the main claims of this section, we introduce another condition on the initialization which we denote $\I_{4}$.
\begin{definition}\label{def:I_3}
    Let $\mu>0$. We denote $G':=\max\lbrace1, G, \widetilde{G}\rbrace$ for $\widetilde{G}$ and $G$ from \cref{lemma:W_1_dist_lipschitz_1,lemma:W_1_dist_lipschitz_2}. We use $\I_{4}(\mu)$ to denote the following subset of $\I_{0}$:
    \begin{align*}
        \I_{4}(\mu):=\biggl\lbrace A\in \I_{0}:\alpha \leq \frac{r_{3}}{4\max\lbrace 2,\exp(-2\lambda_{-})\rbrace\cdot G'^{2}\sqrt{d}D_{+}(\mu)}\biggl\rbrace
    \end{align*}
    for $r_{3}$, $\lambda_{-}$ and $D_{+}$ from \cref{prop:restricted,lemma:saddle_hessian,cor:sufficient_sphere} respectively.
\end{definition}

In the next two propositions, we bound the time it takes to escape the sphere around $\mathbf{s}$ under the linearized dynamics, and prove an additional claim that will be utilized later to show that the trajectory never returns to a certain sphere around $\mathbf{s}$.

We introduce notation which will be used in both propositions; Let $\mu>0$. Suppose we initialize at $A(0)\in \I_{3}(\frac{\widetilde{r}(\mu)}{4})\cap \I_{4}(\mu)$ (for $\widetilde{r}$ of \cref{cor:sufficient_sphere}) and at $A^{ref}(0)$, and evolve $A(t)$ and $A^{ref}(t)$ according to \cref{eq:s_2_dyn}. Suppose we initialize $\widetilde{A}(0)=H\biggl(A\bigl(t_{1}(\frac{\widetilde{r}(\mu)}{4})\bigl)\biggl)$ and at $\widetilde{A^{ref}}(0)=H\biggl(A^{ref}\bigl(t_{1}(\frac{\widetilde{r}(\mu)}{4})\bigl)\biggl)$ (for $t_{1}$ of \cref{lemma:W_1_init_analysis}), and evolve $\widetilde{A}(t)$ and $\widetilde{A^{ref}}(t)$ according to the linearized dynamics (see \cref{lemma:lin_approx_sol}). For any time $t\geq 0$, denote the representations of $\widetilde{A}(t)$ and $\widetilde{A^{ref}}(t)$ with the orthogonal subspaces $\W_{1}$ and $\W_{2}$ to be
\begin{align*}
    \widetilde{A}(t)=\widetilde{\beta_{1}}(t)\cdot \1+\widetilde{\beta_{2}}(t)\cdot \widetilde{\vbf}(t)
\end{align*}
and
\begin{align*}
    \widetilde{A^{ref}}(t)=\widetilde{\beta_{1}^{ref}}(t)\cdot \1+\widetilde{\beta_{2}^{ref}}(t)\cdot \widetilde{\vbf^{ref}}(t)
\end{align*}
where $\widetilde{\vbf}(t),\widetilde{\vbf^{ref}}(t)\in\W_{2}$ are unit vectors.

The following proposition give quantitative bounds on the rate of exponential escape from $\mathbf{s}$ of trajectories under the lineaarized dynamics. 
\begin{proposition}\label{prop:escape_time_from_s}
    There exist times $t_{2}(\mu), t_{2}^{ref}(\mu)\geq 2$ for which it holds 
    \begin{itemize}
        \item 
        $\bigg|\widetilde{\beta_{2}}\bigl(t_{2}(\mu)\bigl)\bigg|,\bigg|\widetilde{\beta_{2}^{ref}}\bigl(t_{2}(\mu)\bigl)\bigg|=\frac{r_{3}}{2\sqrt{d}}$.
        \item 
        For $G':=\max\lbrace 1, G,\widetilde{G}\rbrace$ it holds that
        \begin{align*}
            t_{2}(\mu),t_{2}^{ref}(\mu)\in\biggl[-\frac{1}{\lambda_{-}}\ln\bigg(\frac{r_{3}}{4G'^{2}\sqrt{d}\alpha D_{+}(\mu)}\bigg),-\frac{1}{\lambda_{-}}\ln\bigg(\frac{G'^{2}r_{3}}{2\sqrt{d}\alpha D_{-}(\mu)}\bigg)\biggl]
            \text{\,.}
        \end{align*}
        \item 
        For any $t\in [0, t_{2}(\mu)]$ it holds that $\widetilde{A}(t)\in \overline{B_{r_{3}}}(\mathbf{s})$.
        \item 
        For any $t\in [0, t_{2}^{ref}(\mu)]$ it holds that $\widetilde{A^{ref}}(t)\in \overline{B_{r_{3}}}(\mathbf{s})$.
    \end{itemize}
\end{proposition}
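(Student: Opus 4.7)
The plan is to leverage the closed-form linearized solution from \cref{cor:escape_from_s}, which gives \(|\widetilde{\beta}_2(t)| = |\widetilde{\beta}_2(0)|\exp(|\lambda_-|t)\). I would define \(t_2(\mu)\) (resp.\ \(t_2^{ref}(\mu)\)) as the unique time at which \(|\widetilde{\beta}_2(\cdot)|\) (resp.\ \(|\widetilde{\beta}_2^{ref}(\cdot)|\)) hits \(r_3/(2\sqrt{d})\); inverting yields
\[
t_2(\mu) = \frac{1}{|\lambda_-|}\ln\frac{r_3}{2\sqrt{d}\,|\widetilde{\beta}_2(0)|}\,,
\]
settling the first bullet by construction.

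To produce the interval bound in the second bullet, I would sandwich \(|\widetilde{\beta}_2(0)| = \dist(H(A(t_1(\widetilde{r}(\mu)/4))),\W_1)\) using \cref{lemma:W_1_dist_lipschitz_1,lemma:W_1_dist_lipschitz_2} together with \cref{remark:distance_as_func_of_proj}, obtaining \(|\widetilde{\beta}_2(0)| \in [|\beta_2(t_1(\widetilde{r}(\mu)/4))|/\widetilde{G},\; G|\beta_2(t_1(\widetilde{r}(\mu)/4))|]\). Feeding in the bracket \(|\beta_2(t_1(\widetilde{r}(\mu)/4))| \in [\alpha D_-(\mu),\alpha D_+(\mu)]\) from \cref{cor:sufficient_sphere} and using \(G' \geq \max\{1,G,\widetilde{G}\}\) gives \(|\widetilde{\beta}_2(0)| \in [\alpha D_-(\mu)/G',\,G'\alpha D_+(\mu)]\). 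Substituting into the closed form and replacing the single \(G'\) by \(G'^2\) (valid since \(G'\geq 1\)) yields the claimed interval, with the identical derivation applying to \(t_2^{ref}(\mu)\). The lower bound \(t_2(\mu) \geq 2\) then comes from unpacking \(\I_4(\mu)\): its defining inequality rearranges to \(r_3/(4G'^2\sqrt{d}\alpha D_+(\mu)) \geq \max\{2,\exp(-2\lambda_-)\} \geq \exp(2|\lambda_-|)\), so taking logarithms shows the lower endpoint of the interval is at least \(2\).

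For the last two bullets I would use a convexity argument. By \cref{cor:escape_from_s},
\[
\|\widetilde{A}(t)-\mathbf{s}\|^2 = d|\widetilde{\beta}_1(0) - s|^2\exp(-2\lambda_+t) + |\widetilde{\beta}_2(0)|^2\exp(2|\lambda_-|t),
\]
a sum of two positive convex exponentials, hence convex in \(t\); its maximum over \([0,t_2(\mu)]\) is therefore attained at an endpoint. At \(t = 0\) the value is \(\|\widetilde{A}(0)-\mathbf{s}\|^2 \leq r_3^2\), since \(\widetilde{A}(0) = H(A(t_1(\widetilde{r}(\mu)/4))) \in H[\overline{B_{r_2}}(\mathbf{s})] \subseteq \overline{B_{r_3}}(\mathbf{s})\) by \cref{prop:restricted} and \cref{cor:sufficient_sphere}. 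At \(t = t_2(\mu)\) the second summand equals exactly \(r_3^2/(4d)\) by the choice of \(t_2(\mu)\), while the first summand is bounded by \(r_3^2\exp(-2\lambda_+ t_2(\mu)) \leq r_3^2\exp(-4(d-1))\) (using \(\lambda_+ \geq d-1\) from \cref{lemma:saddle_hessian_evals} and \(t_2(\mu)\geq 2\)); for \(d \geq 8\) the sum lands comfortably below \(r_3^2\). The argument for \(\widetilde{A^{ref}}(t)\) is identical. I expect the subtlest step here to be this final containment: a naive bound of ``first term at most \(r_3^2\) plus second term at most \(r_3^2/(4d)\)'' would exceed \(r_3^2\), and it is only the convexity of the squared norm (forcing the maximum to the endpoints, at which the rapid decay rate \(\lambda_+\) of the \(\W_1\) component kicks in) that restores the tight bound.
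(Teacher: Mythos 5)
Your proof is correct and follows essentially the same route as the paper's: define $t_{2}(\mu)$ as the hitting time of $|\widetilde{\beta_{2}}(\cdot)|$ at $\tfrac{r_{3}}{2\sqrt{d}}$ under the closed-form linearized solution, sandwich $|\widetilde{\beta_{2}}(0)|$ between $\alpha D_{-}(\mu)/G'$ and $G'\alpha D_{+}(\mu)$ via \cref{lemma:W_1_dist_lipschitz_1,lemma:W_1_dist_lipschitz_2} and \cref{cor:sufficient_sphere}, and extract $t_{2}(\mu)\geq 2$ from the defining inequality of $\I_{4}(\mu)$.

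The one place you genuinely depart from the paper is the containment $\widetilde{A}(t)\in\overline{B_{r_{3}}}(\mathbf{s})$ for all $t\in[0,t_{2}(\mu)]$. The paper instead explicitly bounds $|\widetilde{\beta_{1}}(t_{-})-s|\leq \tfrac{r_{3}}{2\sqrt{d}}$ at the lower endpoint $t_{-}$ of the time interval and then appeals to monotonicity of the two coefficients; taken literally, that argument only controls the sum of a decreasing and an increasing term by the decreasing term's value at $0$ plus the increasing term's value at $t_{2}(\mu)$, which (as you note) can exceed $r_{3}^{2}$ for intermediate times in $[0,t_{-}]$. Your observation that $\|\widetilde{A}(t)-\mathbf{s}\|_{2}^{2}=d|\widetilde{\beta_{1}}(0)-s|^{2}e^{-2\lambda_{+}t}+|\widetilde{\beta_{2}}(0)|^{2}e^{-2\lambda_{-}t}$ is convex in $t$, hence maximized at an endpoint where both values are at most $r_{3}^{2}$ (using $\lambda_{+}\geq d-1$ from \cref{lemma:saddle_hessian_evals} and $t_{2}(\mu)\geq 2$ for the right endpoint), closes this gap cleanly and is the tighter argument. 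The remaining bookkeeping---the $G'\mapsto G'^{2}$ slack in the interval endpoints and the well-definedness of the hitting time (which requires $|\widetilde{\beta_{2}}(0)|<\tfrac{r_{3}}{2\sqrt{d}}$, guaranteed by $\I_{4}(\mu)$)---is handled correctly.
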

\begin{proof}
    We prove the argument for $\widetilde{A}$ (the proof is identical for $\widetilde{A^{ref}}$). Recall \cref{cor:sufficient_sphere} which states that 
    \begin{align*}
        \dist\biggl(A\bigl(t_{1}(\frac{\widetilde{r}(\mu)}{4})\bigl),\W_{1}\biggl)=\bigg|\beta_{2}\bigl(t_{1}(\frac{\widetilde{r}(\mu)}{4})\bigl)\bigg|\in [\alpha\cdot D_{-}(\mu),\alpha\cdot D_{+}(\mu)]
        \text{\,.}
    \end{align*}
    Thus, applying \cref{lemma:W_1_dist_lipschitz_1,lemma:W_1_dist_lipschitz_2} we obtain
    \begin{align*}
        \frac{\alpha\cdot D_{-}(\mu)}{\widetilde{G}}\leq \dist\big(\widetilde{A}(0),\W_{1}\big)=|\widetilde{\beta_{2}}(0)|\leq G\cdot \alpha\cdot D_{+}(\mu)
        \text{\,.}
    \end{align*}
    Per \cref{lemma:lin_approx_sol}, for any $t\geq 0$ the solution at time $t$ to the linear dynamics initialized at $\widetilde{A}(0)$ is given by
    \begin{align*}
        \biggl(\exp(-t\cdot \lambda_{+})\big(\widetilde{\beta_{1}}(0)-s\big)+s\biggl)\1+\big(\exp(-t\cdot\lambda_{-})\cdot\widetilde{\beta_{2}}(0)\big)\widetilde{\vbf}(0)
        \text{\,.}
    \end{align*}
    As noted in \cref{remark:escape_from_s}, the coefficient $|\widetilde{\beta_{1}}(t)-s|$ tends to zero as $t$ grows, while the coefficient $|\widetilde{\beta_{2}}(t)|$ tends to $\infty$ as $t$ grows. We first bound the time $t_{2}(\mu)$ for which $\bigg|\widetilde{\beta_{2}}\bigl(t_{2}(\mu)\bigl)\bigg|=\frac{r_{3}}{2\sqrt{d}}$. Since $A(0)\in\I_{4}(\mu)$, it holds that
    \begin{align*}
        \max\lbrace 2,\exp(-2\lambda_{-})\rbrace \leq \frac{r_{3}}{4G'^{2}\sqrt{d}\alpha D_{+}(\mu)}
        \text{\,.}
    \end{align*}
    Therefore since $\lambda_{-}<0$ (by \cref{lemma:saddle_hessian_evals}) we obtain the following positive time $t_{-}$:
    \begin{align*}
        t_{-}:=-\frac{1}{\lambda_{-}}\ln\bigg(\frac{r_{3}}{4G'^{2}\sqrt{d}\alpha D_{+}(\mu)}\bigg)\geq-\frac{1}{\lambda_{-}}\ln\big(\exp(-2\lambda_{-})\big)=2
        \text{\,.}
    \end{align*}
    Thus, at time $t_{-}$ the solution to the linear dynamics satisfies the following:
    \begin{align*}
        \dist(\widetilde{A}(t_{-}),\W_{1})&=|\widetilde{\beta_{2}}(t_{-})|\\
        &=|\widetilde{\beta_{2}}(0)|\cdot \exp(-t_{-}\cdot\lambda_{-})\\
        &\leq G\cdot D_{+}(\mu)\cdot \alpha \cdot \exp(-t_{-}\cdot\lambda_{-})\\
        &=G\cdot D_{+}(\mu)\cdot \alpha \cdot \exp\bigg(\frac{\lambda_{-}}{\lambda_{-}}\ln\bigg(\frac{r_{3}}{4G'^{2}\sqrt{d}\alpha D_{+}(\mu)}\bigg)\bigg)\\
        &=\frac{G\cdot D_{+}(\mu)\cdot \alpha\cdot r_{3}}{4G'^{2}\sqrt{d}\alpha D_{+}(\mu)}\\
        &\leq \frac{r_{3}}{4G'\sqrt{d}}\\
        &\leq \frac{r_{3}}{4\sqrt{d}}
        \text{\,,}
    \end{align*} 
    where the last two inequalities stem from the fact that $G'\geq G,1$. Hence, $t_{-}$ is a lower bound on $t_{2}(\mu)$. On the other hand, note that 
    \begin{align*}
        \frac{G'^{2}\cdot r_{3}}{2\sqrt{d}\alpha D_{-}(\mu)}\geq \frac{r_{3}}{4G'^{2}\sqrt{d}\alpha D_{+}(\mu)}
    \end{align*}
    and so since $\lambda_{-}<0$ we obtain the following positive time $t_{+}$:
    \begin{align*}
        t_{+}:=-\frac{1}{\lambda_{-}}\ln\bigg(\frac{G'^{2}\cdot r_{3}}{2\sqrt{d}\alpha D_{-}(\mu)}\bigg)\geq -\frac{1}{\lambda_{-}}\ln\bigg(\frac{r_{3}}{4G'^{2}\sqrt{d}\alpha D_{+}(\mu)}\bigg)=t_{-}
        \text{\,.}
    \end{align*}
    Thus, at time $t_{+}$ the solution to the linear dynamics statisfies the following:
    \begin{align*}
        |\widetilde{\beta_{2}}(t_{+})|&=|\widetilde{\beta_{2}}(0)|\cdot \exp(-t_{+}\cdot\lambda_{-})\\
        &\geq \frac{\alpha\cdot D_{-}(\mu)}{\widetilde{G}}\cdot \exp(-t_{+}\cdot\lambda_{-})\\
        &=\frac{\alpha\cdot D_{-}(\mu)}{\widetilde{G}}\cdot \exp\bigg(\frac{\lambda_{-}}{\lambda_{-}}\ln\bigg(\frac{G'^{2}\cdot r_{3}}{2\sqrt{d}\alpha D_{-}(\mu)}\bigg)\bigg)\\
        &=\frac{\alpha\cdot D_{-}(\mu)}{\widetilde{G}}\cdot \frac{G'^{2}\cdot r_{3}}{2\sqrt{d}\alpha D_{-}(\mu)}\\
        &\geq\frac{G'\cdot r_{3}}{2\sqrt{d}}\\
        &\geq \frac{r_{3}}{2\sqrt{d}}
        \text{\,,}
    \end{align*}
    where the last two inequalities stem from the fact that $G'\geq \widetilde{G},1$. Hence, $t_{+}$ is an upper bound on $t_{2}(\mu)$. Next, we show that $|\widetilde{\beta_{1}}(t_{-})-s|\leq \frac{r_{3}}{2\sqrt{d}}$. This will allow us to claim by monotonicity that $\widetilde{A}(t_{2}\bigl(\mu)\bigl)\in\overline{B_{r_{3}}}(\mathbf{s})$, since then we’ll have the following:
    \begin{align*}
        \bigg\|\widetilde{A}(t_{2}\bigl(\mu)\bigl)-\mathbf{s}\bigg\|_{2}&=\bigg\|\widetilde{\beta_{1}}\bigl(t_{2}(\mu)\bigl)\cdot\1+\widetilde{\beta_{2}}\bigl(t_{2}(\mu)\bigl)\cdot\vbf\bigl(t_{2}(\mu)\bigl)-\mathbf{s}\bigg\|_{2}\\
        &=\bigg\|(\widetilde{\beta_{1}}\bigl(t_{2}(\mu)\bigl)-s)\cdot\1\bigg\|_{2}+\bigg\|\widetilde{\beta_{2}}\bigl(t_{2}(\mu)\bigl)\cdot\vbf\bigl(t_{2}(\mu)\bigl)\bigg\|_{2}\\
        &\leq \frac{r_{3}}{2\sqrt{d}}\cdot\sqrt{d}+\frac{r_{3}}{2\sqrt{d}}\cdot 1\\
        &\leq r_{3}
        \text{\,.}
    \end{align*}    
    Since $\widetilde{A}(0)\in \overline{B_{r_{3}}}(\mathbf{s})$ it holds that $|\widetilde{\beta_{1}}(0)-s|\leq \frac{r_{3}}{\sqrt{d}}$. Hence, since $\frac{\lambda_{+}}{\lambda_{-}}<-1$ (by \cref{lemma:saddle_hessian_evals}) it holds that
    \begin{align*}
        |\widetilde{\beta_{1}}(t_{-})-s|&=|\exp(-\lambda_{+}\cdot t_{-})\big(\widetilde{\beta_{1}}(0)-s\big)+s-s|\\
        &\leq |\widetilde{\beta_{1}}(0)-s|\cdot \exp(-\lambda_{+}\cdot t_{-})\\
        &\leq  \frac{r_{3}}{\sqrt{d}}\cdot \biggl(\frac{r_{3}}{4G'^{2}\sqrt{d}\alpha D_{+}(\mu)}\biggl)^{\frac{\lambda_{+}}{\lambda_{-}}}\\
        &\leq \frac{r_{3}}{2\sqrt{d}}
        \text{\,,}
    \end{align*}
    where the last inequality stems from the fact that $2\leq \frac{r_{3}}{4G'^{2}\sqrt{d}\alpha D_{+}(\mu)}$. Finally, since under the linear dynamics $|\widetilde{\beta_{2}}(t)|$ monotonically grows and $|\widetilde{\beta_{1}}(t)|$ monotonically tends to $s$, it must hold that for any $t\in[0,t_{2}(\mu)]$ we have $\widetilde{A}(t)\in \overline{B_{r_{3}}}(\mathbf{s})$.
\end{proof}

The next proposition shows that $\widetilde{\beta_{2}}(t)$ must be larger than some constant throughout a time interval of length $1$ before $t_{2}(\mu)$.
\begin{proposition}\label{prop:beta_2_is_large}
    For any $\tau\in [0,1]$ it holds that 
    \begin{align*}
        \bigg|\widetilde{\beta_{2}}\bigl(t_{2}(\mu)-\tau\bigl)\bigg|,\bigg|\widetilde{\beta_{2}^{ref}}\bigl(t_{2}(\mu)-\tau\bigl)\bigg|=\frac{r_{3}}{2\sqrt{d}}\cdot \exp(\lambda_{-}\cdot \tau)
        \text{\,.}
    \end{align*}
\end{proposition}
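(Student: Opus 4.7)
The proof is essentially a direct computation from the closed-form solution of the linearized dynamics in \cref{cor:escape_from_s}, combined with the defining property of $t_2(\mu)$ (and analogously $t_2^{ref}(\mu)$) established in \cref{prop:escape_time_from_s}.

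My plan is as follows. First, I would invoke \cref{cor:escape_from_s} to write out the explicit formula for the $\W_2$-component along the linearized trajectory. Specifically, since $\widetilde{A}(t)$ evolves according to the linearized system around $\mathbf{s}$ with initial decomposition $\widetilde{A}(0) = \widetilde{\beta_1}(0)\cdot\1 + \widetilde{\beta_2}(0)\cdot\widetilde{\vbf}(0)$, the corollary gives
\begin{equation*}
\widetilde{A}(t) = \bigl(\exp(-t\lambda_+)(\widetilde{\beta_1}(0)-s)+s\bigr)\1 + \bigl(\exp(-t\lambda_-)\cdot\widetilde{\beta_2}(0)\bigr)\widetilde{\vbf}(0)
\text{\,.}
\end{equation*}
Projecting onto $\W_2$ yields $\widetilde{\beta_2}(t) = \exp(-t\lambda_-)\cdot\widetilde{\beta_2}(0)$, and taking magnitudes gives $|\widetilde{\beta_2}(t)| = |\widetilde{\beta_2}(0)|\cdot\exp(-t\lambda_-)$.

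Next, I would use the fact that by the definition of $t_2(\mu)$ in \cref{prop:escape_time_from_s}, $|\widetilde{\beta_2}(t_2(\mu))| = \tfrac{r_3}{2\sqrt{d}}$. Hence for any $\tau \in [0,1]$,
\begin{equation*}
\bigl|\widetilde{\beta_2}(t_2(\mu)-\tau)\bigr| = |\widetilde{\beta_2}(0)|\cdot\exp\bigl(-(t_2(\mu)-\tau)\lambda_-\bigr) = |\widetilde{\beta_2}(t_2(\mu))|\cdot\exp(\lambda_-\tau) = \frac{r_3}{2\sqrt{d}}\cdot\exp(\lambda_-\tau)
\text{\,.}
\end{equation*}
Note that $t_2(\mu) - \tau \geq t_2(\mu) - 1 \geq 1 > 0$ by \cref{prop:escape_time_from_s}, so the argument of $\widetilde{\beta_2}(\cdot)$ lies in the admissible domain. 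The identical argument---with $\widetilde{A^{ref}}$, $\widetilde{\beta_2^{ref}}$, and $t_2^{ref}(\mu)$ replacing their non-reference counterparts---handles the second equality.

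Since the closed-form solution from \cref{cor:escape_from_s} already does all the heavy lifting, I do not anticipate any real technical obstacle; the proof is a one-line manipulation of the exponential together with the endpoint normalization from \cref{prop:escape_time_from_s}. The only thing worth being careful about is making sure to invoke the linearized dynamics (not the nonlinear gradient flow), and to note that the argument $t_2(\mu)-\tau$ is non-negative so the formula from \cref{cor:escape_from_s} genuinely applies.
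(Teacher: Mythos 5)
Your proposal is correct and matches the paper's own proof essentially verbatim: both invoke the closed-form linearized solution (\cref{lemma:lin_approx_sol}/\cref{cor:escape_from_s}), use $t_2(\mu)\geq 2$ to ensure $t_2(\mu)-\tau\geq 1>0$, and factor the exponential against the endpoint normalization $|\widetilde{\beta_2}(t_2(\mu))|=\frac{r_3}{2\sqrt{d}}$ from \cref{prop:escape_time_from_s}. No gaps.
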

\begin{proof}
    We prove the argument for $\widetilde{A}$ (the proof is identical for $\widetilde{A^{ref}}$). In \cref{prop:escape_time_from_s} we’ve established that $t_{2}(\mu)\geq 2$. Thus, per \cref{lemma:lin_approx_sol}, for any $\tau\in[0,1]$ the solution at the positive time $t_{2}(\mu)-\tau\geq 1$ to the linear dynamics initialized at $\widetilde{A}(0)$ is given by
    \begin{align*}
        \biggl(\exp\big(-(t_{2}(\mu)-\tau)\cdot \lambda_{+}\big)\big(\widetilde{\beta_{1}}(0)-s\big)+s\biggl)\1+\exp\big(-(t_{2}(\mu)-\tau)\cdot\lambda_{-}\big)\cdot\widetilde{\beta_{2}}(0)\widetilde{\vbf}(0)
        \text{\,.}
    \end{align*}
    Hence, since $\bigg|\widetilde{\beta_{2}}\bigl(t_{2}(\mu)\bigl)\bigg|=\frac{r_{3}}{2\sqrt{d}}$ we obtain that
    \begin{align*}
        \bigg|\widetilde{\beta_{2}}\bigl(t_{2}(\mu)-\tau\bigl)\bigg|&=|\exp\big(-(t_{2}(\mu)-\tau)\cdot\lambda_{-}\big)\cdot\widetilde{\beta_{2}}(0)|\\
        &=|\exp\big(-t_{2}(\mu)\cdot\lambda_{-}\big)\cdot\widetilde{\beta_{2}}(0)|\cdot \exp(\lambda_{-}\cdot\tau)\\
        &=\bigg|\widetilde{\beta_{2}}\bigl(t_{2}(\mu)\bigl)\bigg|\cdot \exp(\lambda_{-}\cdot\tau)\\
        &=\frac{r_{3}}{2\sqrt{d}}\cdot \exp(\lambda_{-}\cdot \tau)
        \text{\,.}
    \end{align*}   
\end{proof}

The above established that there exists a time which is $\mathcal{O}(\ln(\frac{1}{\alpha}))$ where at least one of the linearized trajectories is at a constant distant from $\W_{1}$. We complete this section by proving the following corollary, which states that the corresponding non linear dynamics trajectory must also be at a constant distance from $\W_{1}$ during a time interval of length $1$. This will eventually allow us to claim that the trajectory must remain trapped within a set where the objective $\ell$ satistfies
satisfies the PL condition (see \cref{def:PL}), which in turn ensures a rapid convergence to a global minimum (discussed in the next section).
\begin{corollary}\label{cor:dist_from_W_1}
    Let $\mu>0$. Suppose we initialize at $A(0)\in \I_{3}(\frac{\widetilde{r}(\mu)}{4})\cap \I_{4}(\mu)$ (for $\widetilde{r}$ of \cref{cor:sufficient_sphere}) and at $A^{ref}(0)$, and evolve $A(t)$ and $A^{ref}(t)$ according to \cref{eq:s_2_dyn}. For any $\tau\in[0,1]$ it holds that
    \begin{align*}
        \bigg|\beta_{2}\bigl(t_{1}(\frac{\widetilde{r}(\mu)}{4})+t_{2}(\mu)-\tau\bigl)\bigg|=\dist\biggl(A\bigl(t_{1}(\frac{\widetilde{r}(\mu)}{4})+t_{2}(\mu)-\tau\bigl),\W_{1}\biggl)\geq \frac{r_{3}\cdot \exp(\lambda_{-})}{2G\cdot \sqrt{d}}
    \end{align*}
    and
    \begin{align*}
        \bigg|\beta_{2}^{ref}\bigl(t_{1}(\frac{\widetilde{r}(\mu)}{4})+t_{2}^{ref}(\mu)-\tau\bigl)\bigg|=\dist\biggl(A^{ref}\bigl(t_{1}(\frac{\widetilde{r}(\mu)}{4})+t_{2}^{ref}(\mu)-\tau\bigl),\W_{1}\biggl)\geq \frac{r_{3}\cdot \exp(\lambda_{-})}{2G\cdot \sqrt{d}}
        \text{\,.}
    \end{align*}
\end{corollary}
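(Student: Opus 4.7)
The proof will be short because the heavy lifting has already been done: Proposition~\ref{prop:beta_2_is_large} gives a sharp quantitative lower bound on $|\widetilde{\beta_{2}}(\cdot)|$ in a window of length $1$ before the escape time $t_{2}(\mu)$, and Lemma~\ref{lemma:W_1_dist_lipschitz_2} lets us transport such bounds from the linearized trajectory back to the non-linear one.

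The plan is as follows. First I would invoke the conjugation property of $H$ (Definition~\ref{def:conj} together with Proposition~\ref{prop:GF_linearization}) to write, for every $t\in[0,t_{2}(\mu)]$,
\begin{equation*}
A\bigl(t_{1}(\tfrac{\widetilde{r}(\mu)}{4})+t\bigr)=H^{-1}\bigl(\widetilde{A}(t)\bigr),
\end{equation*}
so that in particular $A\bigl(t_{1}(\tfrac{\widetilde{r}(\mu)}{4})+t_{2}(\mu)-\tau\bigr)=H^{-1}\bigl(\widetilde{A}(t_{2}(\mu)-\tau)\bigr)$ for all $\tau\in[0,1]$. Proposition~\ref{prop:escape_time_from_s} already guarantees that $\widetilde{A}(t)\in\overline{B_{r_{3}}}(\mathbf{s})$ throughout $[0,t_{2}(\mu)]$, and Proposition~\ref{prop:restricted} tells us that $H^{-1}[\overline{B_{r_{3}}}(\mathbf{s})]\subseteq \overline{B_{r_{1}}}(\mathbf{s})$, so the non-linear trajectory remains in $\overline{B_{r_{1}}}(\mathbf{s})$ over the relevant time window---this is exactly the hypothesis needed to apply Lemma~\ref{lemma:W_1_dist_lipschitz_2}.

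Next I would combine Proposition~\ref{prop:beta_2_is_large} with the fact that $\lambda_{-}<0$ (Lemma~\ref{lemma:saddle_hessian_evals}) to observe that the map $\tau\mapsto\exp(\lambda_{-}\tau)$ is decreasing on $[0,1]$, so that
\begin{equation*}
\bigl|\widetilde{\beta_{2}}(t_{2}(\mu)-\tau)\bigr|=\tfrac{r_{3}}{2\sqrt{d}}\exp(\lambda_{-}\tau)\geq \tfrac{r_{3}\exp(\lambda_{-})}{2\sqrt{d}}\quad\text{for every }\tau\in[0,1].
\end{equation*}
By Remark~\ref{remark:distance_as_func_of_proj}, this quantity equals $\dist(\widetilde{A}(t_{2}(\mu)-\tau),\W_{1})$, which in turn equals $\dist\bigl(H\bigl(A\bigl(t_{1}(\tfrac{\widetilde{r}(\mu)}{4})+t_{2}(\mu)-\tau\bigr)\bigr),\W_{1}\bigr)$. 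Applying Lemma~\ref{lemma:W_1_dist_lipschitz_2} to the point $A\bigl(t_{1}(\tfrac{\widetilde{r}(\mu)}{4})+t_{2}(\mu)-\tau\bigr)\in\overline{B_{r_{1}}}(\mathbf{s})$ yields
\begin{equation*}
\dist\bigl(A\bigl(t_{1}(\tfrac{\widetilde{r}(\mu)}{4})+t_{2}(\mu)-\tau\bigr),\W_{1}\bigr)\geq \tfrac{1}{G}\cdot\dist\bigl(\widetilde{A}(t_{2}(\mu)-\tau),\W_{1}\bigr)\geq \tfrac{r_{3}\exp(\lambda_{-})}{2G\sqrt{d}}.
\end{equation*}
Using Remark~\ref{remark:distance_as_func_of_proj} again to identify this distance with $|\beta_{2}(\cdot)|$ gives the first inequality in the statement.

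The second inequality, for $A^{ref}(\cdot)$, is proved by repeating the identical argument verbatim with $\widetilde{A^{ref}}$, $t_{2}^{ref}(\mu)$, $\beta_{2}^{ref}$ and $\vbf^{ref}$ in place of their unprimed counterparts; Propositions~\ref{prop:escape_time_from_s} and~\ref{prop:beta_2_is_large} are stated symmetrically for both trajectories, so nothing new is needed. There is no real obstacle here: the only potential pitfall is ensuring the non-linear trajectory lies in the domain of validity of Lemma~\ref{lemma:W_1_dist_lipschitz_2} throughout the window $[t_{1}+t_{2}(\mu)-1,\,t_{1}+t_{2}(\mu)]$, which is handled precisely by the last item of Proposition~\ref{prop:escape_time_from_s} together with the nesting $H^{-1}[\overline{B_{r_{3}}}(\mathbf{s})]\subseteq\overline{B_{r_{1}}}(\mathbf{s})$ from Proposition~\ref{prop:restricted}.
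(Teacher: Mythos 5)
Your proposal is correct and follows essentially the same route as the paper's proof: both use the conjugation property of $H$ together with the containment $H^{-1}[\overline{B_{r_{3}}}(\mathbf{s})]\subseteq\overline{B_{r_{1}}}(\mathbf{s})$ to place the non-linear trajectory in the domain of \cref{lemma:W_1_dist_lipschitz_2}, then transport the lower bound from \cref{prop:beta_2_is_large} back via the Lipschitz constant $G$, using $\lambda_{-}<0$ and the orthogonality identification of $|\beta_{2}|$ with the distance to $\W_{1}$. No gaps.
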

\begin{proof}
    We prove the argument for $A$ (the proof is identical for $A^{ref}$). In \cref{prop:beta_2_is_large} we have shown that for any $\tau\in[0,1]$ it holds that $\widetilde{A}\bigl(t_{2}(\mu)-\tau\bigl)\in \overline{B_{r_{3}}}(\mathbf{s})$, and so the mapping $H$ is a conjugation to the original dynamics. Therefore since we’ve initialized $\widetilde{A}(0)$ at $H\biggl(A\bigl(t_{1}(\frac{\widetilde{r}(\mu)}{4})\bigl)\biggl)$, it must hold that for any $\tau\in[0,1]$
    \begin{align*}
        H^{-1}\biggl(\widetilde{A}\bigl(t_{2}(\mu)-\tau\bigl)\biggl)=A\bigl(t_{1}(\frac{\widetilde{r}(\mu)}{4})+t_{2}(\mu)-\tau\bigl)
        \text{\,.}
    \end{align*}
    Since $H^{-1}[\overline{B_{r_{3}}}]\subseteq \overline{B_{r_{1}}}$ we obtain that $A\bigl(t_{1}(\frac{\widetilde{r}(\mu)}{4})+t_{2}(\mu)-\tau\bigl)\in \overline{B_{r_{1}}}$, and thus by \cref{lemma:W_1_dist_lipschitz_2} we obtain
    \begin{align*}
        \dist\bigg(A\big(t_{1}(\frac{\widetilde{r}(\mu)}{4})+t_{2}(\mu)-\tau\big),\W_{1}\bigg)\geq \frac{\dist\bigg(\widetilde{A}\big(t_{2}(\mu)-\tau\big),\W_{1}\bigg)}{G}
        \text{\,.}
    \end{align*}
    Note that by orthgonoality $\bigg|\widetilde{\beta_{2}}\bigl(t_{2}(\mu)-\tau\bigl)\bigg|=\dist \bigg(\widetilde{A}\big(t_{2}(\mu)-\tau\big),\W_{1}\bigg)$, hence plugging \cref{prop:beta_2_is_large} we receive
    \begin{align*}
        \dist\bigg(A\bigl(t_{1}(\frac{\widetilde{r}(\mu)}{4})+t_{2}(\mu)-\tau\bigl),\W_{1}\bigg)\geq \frac{\bigg|\widetilde{\beta_{2}}\bigl(t_{2}(\mu)-\tau\bigl)\bigg|}{G}= \frac{r_{3}\cdot \exp(\lambda_{-}\cdot \tau)}{2G\cdot \sqrt{d}}\geq \frac{r_{3}\cdot \exp(\lambda_{-})}{2G\cdot \sqrt{d}}
    \end{align*}
    where the last inequality is due to $\lambda_{-}<0$. The proof is complete by observing that 
    \begin{align*}
    \dist\bigg(A\bigl(t_{1}(\frac{\widetilde{r}(\mu)}{4})+t_{2}(\mu)-\tau\bigl),\W_{1}\bigg)=\bigg|\beta_{2}\bigl(t_{1}(\frac{\widetilde{r}(\mu)}{4})+t_{2}(\mu)-\tau\bigl)\bigg|
    \text{\,.}
    \end{align*} 
\end{proof}

\subsubsection{Convergence to a Global Minimum}\label{app:poison:s_2:phase_3}

We begin this section by proving the following corollary regarding the difference between different coordinates of the points reached by the gradient flow trajectories. We will later show that this ensures the objective satisfies the PL condition (\cref{def:PL}). 
\begin{corollary}\label{cor:coor_diff}
    Let $\mu>0$. Suppose we initialize at $A(0)\in \I_{3}(\frac{\widetilde{r}(\mu)}{4})\cap \I_{4}(\mu)$ (for $\widetilde{r}$ of \cref{cor:sufficient_sphere}) and at $A^{ref}(0)$, and evolve $A(t)$ and $A^{ref}(t)$ according to \cref{eq:s_2_dyn}. For any $\tau\in[0,1]$ it holds that there exist $i,j,i^{ref},j^{ref}\in[d]$ such that
    \begin{align*}
        \bigg|a_{i}\bigl(t_{1}(\frac{\widetilde{r}(\mu)}{4})+t_{2}(\mu)-\tau\bigl)-a_{j}\bigl(t_{1}(\frac{\widetilde{r}(\mu)}{4})+t_{2}(\mu)-\tau\bigl)\bigg|\geq \frac{r_{3}\cdot \exp(\lambda_{-})}{2G\cdot d^{1.5}}
    \end{align*}
    and 
    \begin{align*}
        \bigg|a_{i^{ref}}^{ref}\bigl(t_{1}(\frac{\widetilde{r}(\mu)}{4})+t_{2}^{ref}(\mu)-\tau\bigl)-a_{j^{ref}}^{ref}\bigl(t_{1}(\frac{\widetilde{r}(\mu)}{4})+t_{2}^{ref}(\mu)-\tau\bigl)\bigg|\geq \frac{r_{3}\cdot \exp(\lambda_{-})}{2G\cdot d^{1.5}}
        \text{\,.}
    \end{align*}
\end{corollary}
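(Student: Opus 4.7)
The plan is to deduce the claimed pointwise lower bound on the spread of the diagonal entries of $A(\cdot)$ (resp.~$A^{ref}(\cdot)$) directly from the lower bound on $\dist(A(\cdot), \W_1)$ already established in \cref{cor:dist_from_W_1}. The derivation is purely geometric and requires no further dynamical analysis.

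First, I would observe that since $\W_1 = \Span\{\1\}$, the orthogonal projection of any $A \in \R^d$ onto $\W_1$ equals $\bar{a}\,\1$, where $\bar{a} := \tfrac{1}{d}\sum_{k=1}^{d} a_k$. Consequently,
\[
\dist(A, \W_1) \,=\, \|A - \bar{a}\,\1\|_2 \,=\, \sqrt{\sum_{k=1}^{d}(a_k - \bar{a})^2}\,,
\]
and the vector $v := A - \bar{a}\,\1$ satisfies $\sum_k v_k = 0$ by construction. Applying the standard inequality $\|v\|_\infty \geq \|v\|_2/\sqrt{d}$ yields some index $k^*$ with $|v_{k^*}| \geq \dist(A, \W_1)/\sqrt{d}$. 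Since the lower bound in \cref{cor:dist_from_W_1} is strictly positive, $v \neq \0$; combined with the zero-sum condition, $v$ must contain entries of both signs. Pairing $k^*$ with an index on the opposite side of zero, one obtains indices $i, j$ with
\[
|a_i - a_j| \,=\, |v_i - v_j| \,\geq\, |v_{k^*}| \,\geq\, \frac{\dist(A, \W_1)}{\sqrt{d}}\,.
\]

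Finally, I would substitute the lower bound of \cref{cor:dist_from_W_1} evaluated at $t = t_1(\widetilde{r}(\mu)/4) + t_2(\mu) - \tau$, obtaining
\[
|a_i(t) - a_j(t)| \,\geq\, \frac{1}{\sqrt{d}} \cdot \frac{r_3 \exp(\lambda_-)}{2G\sqrt{d}} \,=\, \frac{r_3 \exp(\lambda_-)}{2G \cdot d} \,\geq\, \frac{r_3 \exp(\lambda_-)}{2G \cdot d^{1.5}}\,,
\]
where the last inequality uses $d \geq 1$ (indeed $d \geq 8$). Applying the identical argument to $A^{ref}(\cdot)$ at $t_1(\widetilde{r}(\mu)/4) + t_2^{ref}(\mu) - \tau$ yields the second inequality. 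I do not anticipate any substantive obstacle: the proof reduces to a one-line linear-algebraic inequality composed with the already-established geometric bound, and the statement's factor of $d^{1.5}$ is in fact loose by $\sqrt{d}$ compared to what the argument naturally produces.
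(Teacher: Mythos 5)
Your proof is correct, and it follows the same overall route as the paper: both reduce the corollary to converting the lower bound on $\dist(A(\cdot),\W_{1})=|\beta_{2}(\cdot)|$ from \cref{cor:dist_from_W_1} into a lower bound on the maximal pairwise difference of coordinates. Where you differ is in how that conversion is carried out. The paper invokes its auxiliary \cref{lemma:W_1_dist_coor_diff}, whose proof expands the unit vector $\vbf\in\W_{2}$ in the (non-orthogonal) basis $\{\ebf_{1}-\ebf_{k}\}$, applies Cauchy--Schwarz to locate a large coefficient, and case-splits on signs, yielding $|x_{i}-x_{j}|\geq|\beta_{2}|/\sqrt{d(d-1)}$ (stated more loosely as $|\beta_{2}|/d$). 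You instead center at the mean, note that $\dist(A,\W_{1})=\|A-\bar{a}\1\|_{2}$ with the centered vector summing to zero, and combine $\|v\|_{\infty}\geq\|v\|_{2}/\sqrt{d}$ with the fact that a nonzero zero-sum vector has entries of both signs. Your argument is shorter, avoids the basis computation entirely, and produces the sharper constant $|\beta_{2}|/\sqrt{d}$, so your final bound $r_{3}\exp(\lambda_{-})/(2Gd)$ dominates the stated $r_{3}\exp(\lambda_{-})/(2Gd^{1.5})$; the weakening step you include at the end is valid and makes your conclusion match the statement exactly. The sign-pairing step is sound: since $|v_{k^{*}}|>0$ and $\sum_{k}v_{k}=0$, an entry of strictly opposite sign exists, and the difference of two opposite-signed entries is at least the larger absolute value. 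No gap.
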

\begin{proof}
    The claim follows from invoking \cref{lemma:W_1_dist_coor_diff} and plugging the lower bound on $|\beta_{2}|$ and $|\beta_{2}^{ref}|$ provided in \cref{cor:dist_from_W_1}. 
\end{proof}

We continue by proving that $\ell$ satisfies the PL condition (\cref{def:PL}) on the subset of points $\diff(b)^{\MC}$ (\cref{def:diff_c}):
\begin{lemma}\label{lemma:PL on subset}
	Let \(b>0\). The function $\ell|_{\diff(b)^{\MC}\cap [-3,3]^{d}}$ satisfies the PL condition with PL coefficient 
    \begin{align*}
    \mu=\frac{1}{2}\min
    \biggl\{2d, \frac{\big((L-1)\widetilde{b}\big)^2}{4},(\frac{\widehat{b}}{\widehat{b}+2})^2\biggl\}
    \text{\,,}
    \end{align*}
    for $\widetilde{b}:=(\frac{b}{2})^{L-2}$ and $\widehat{b}:=(\frac{b}{6})^{L-2}$.
\end{lemma}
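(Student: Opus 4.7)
I rewrite the loss using the two residuals $u := 1 - \sum_{k=1}^{d} a_k^{L-1}$ and $v := 1 - \sum_{k=1}^{d} a_k$, so $\ell(A) = \tfrac{1}{2}(u^2+v^2)$, and from \cref{eq:s_2_dyn} each gradient coordinate is $\partial_{a_j}\ell(A) = -(L-1)\,u\,a_j^{L-2} - v$ for $j \in [d]$. Since $\kappa \in \{7,9,11,\ldots\}$ is odd, the exponent $L-2$ is odd as well, so $x \mapsto x^{L-2}$ is strictly increasing and sign-preserving. The goal is to establish the PL inequality $\|\nabla\ell(A)\|_2^2 \geq 2\mu \cdot \ell(A)$ uniformly over $A \in \diff(b)^{\MC}\cap[-3,3]^d$, where the three terms in the minimum defining $\mu$ will turn out to correspond to three regimes for $(u,v)$.

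\textbf{Coordinate extraction.} The hypothesis $A \in \diff(b)^{\MC}$ yields indices $i^{*}, j^{*}$ with $|a_{i^{*}} - a_{j^{*}}| > b$. A midpoint argument (if both coordinates lay in $[-b/2, b/2]$ their gap would be at most $b$) produces an index $p \in \{i^{*}, j^{*}\}$ with $|a_p| \geq b/2$, so $|a_p^{L-2}| \geq (b/2)^{L-2} = \widetilde{b}$. I will also need, for the mixed regime below, a companion index $q$ with $|a_q^{L-2}| \leq (b/6)^{L-2} = \widehat{b}$; this is produced directly when one of $a_{i^{*}}, a_{j^{*}}$ lies in $[-b/6, b/6]$ and, otherwise, extracted via a refinement of the midpoint argument (using $|a_{i^{*}}-a_{j^{*}}|>b$ together with both having magnitude $\geq b/6$, which forces opposite signs and hence structure on some other partial-difference that can substitute).

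\textbf{Three regimes and the main obstacle.} I then split into three regimes according to the ratio $|v|/((L-1)|u|\widetilde{b})$. When the ratio is $\leq 1/2$, the reverse triangle inequality applied to $\partial_{a_p}\ell$ gives $|\partial_{a_p}\ell| \geq \tfrac12(L-1)|u|\widetilde{b}$; since $\ell$ is controlled by $u^2$ in this regime, squaring and dividing by $\ell$ produces the $((L-1)\widetilde{b})^2/4$ factor. When $u=0$ the identity $\|\nabla\ell\|_2^2 = d v^2 = 2d\cdot\ell$ gives the $2d$ term directly. In the intermediate regime, where both $u$ and $v$ contribute non-trivially, I combine the two partials $\partial_{a_p}\ell$ and $\partial_{a_q}\ell$ into the linear map sending $(u,v)$ to this pair of scalars; the associated coefficient matrix has determinant $(L-1)(a_p^{L-2} - a_q^{L-2})$, whose magnitude is at least $(L-1)(\widetilde{b} - \widehat{b})$ (regardless of signs, since $|a_p|^{L-2} \geq \widetilde{b} \gg \widehat{b} \geq |a_q|^{L-2}$), while its trace is bounded above on $[-3,3]^d$. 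A trace-determinant lower bound on the minimum eigenvalue of the associated Gram matrix yields, after normalization, a factor of the form $\widehat{b}^2/(\widehat{b}+2)^2$, matching the third term in $\mu$. Taking the minimum across regimes delivers the stated PL coefficient. The most delicate point is precisely this intermediate regime: in it, the two summands $(L-1)u a_j^{L-2}$ and $v$ inside an individual $\partial_{a_j}\ell$ can nearly cancel for every $j$, so a one-coordinate argument cannot succeed, and the two-coordinate argument hinges on producing an index $q$ that the hypothesis does not directly supply. Handling this existence/substitution cleanly, and tracking signs through the reverse-triangle estimates so that the three constants $2d$, $((L-1)\widetilde{b})^2/4$, and $(\widehat{b}/(\widehat{b}+2))^2$ all come out sharp enough, is where the bulk of the technical work lies.
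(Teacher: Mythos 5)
Your regime structure is sound, and in the mixed regime your route is genuinely different from the paper's: you set up the $2\times 2$ linear map $(u,v)\mapsto(\partial_{a_p}\ell,\partial_{a_q}\ell)$ and lower-bound its least singular value by determinant over trace of the Gram matrix, whereas the paper eliminates one residual by introducing the scalar $\chi:=-v/((L-1)u)$, writes every gradient coordinate both as $(L-1)(a_k^{L-2}-\chi)u$ and as $(1-a_k^{L-2}/\chi)v$, and applies triangle-inequality/ratio estimates (\cref{lemma:elementary_ineq_1,lemma:elementary_ineq_2}) to the pair $i^{*},j^{*}$. Your first regime also closes, though not quite as stated: the computation actually yields the PL constant $\frac{X}{1+X}$ with $X=((L-1)\widetilde b)^2/4$ rather than $\frac{X}{2}$, but $\frac{X}{1+X}\ge\frac12\min\{1,X\}\ge\mu$ since $(\widehat b/(\widehat b+2))^2<1$, so the claimed coefficient still follows.

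The genuine gap is the companion index $q$ with $|a_q^{L-2}|\le\widehat b$: it need not exist. Take $b=1$, $a_{i^{*}}=2.5$, $a_{j^{*}}=1$, and every other coordinate equal to $1$; then $A\in\diff(b)^{\MC}\cap[-3,3]^{d}$ yet $|a_k|\ge 1>b/6$ for all $k$. Your fallback is based on the claim that two coordinates each of magnitude at least $b/6$ whose gap exceeds $b$ must have opposite signs, which this example refutes, and no ``other partial-difference'' is supplied by the hypothesis, which guarantees only the one large gap. The repair is to drop $q$ and run the $2\times 2$ argument on the pair $(i^{*},j^{*})$ itself: since $L-2$ is odd, $|a_{i^{*}}-a_{j^{*}}|>b$ forces $|a_{i^{*}}^{L-2}-a_{j^{*}}^{L-2}|\ge(b/2)^{L-2}=\widetilde b$ (exactly \cref{lemma:elementary_ineq_1}; this needs a sign case split and is not automatic, as $x\mapsto x^{L-2}$ has vanishing derivative at the origin). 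Then $|\det M|\ge(L-1)\widetilde b$ while $\mathrm{tr}(M^{\top}M)=\|M\|_F^{2}\le 2(L-1)^{2}9^{L-2}+2$, giving $\sigma_{\min}(M)^{2}\ge\widehat b^{\,2}/3$ and hence $\|\nabla\ell\|_2^{2}\ge 2\sigma_{\min}(M)^{2}\ell\ge 2\cdot\frac12(\widehat b/(\widehat b+2))^{2}\ell$, which is what the lemma requires. As written, however, the intermediate regime does not close.
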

\begin{proof}
	Let \(A\in \diff(b)^{\MC}\cap [-3,3]^{d}\). Recalling the definition of $\diff(b)^{\MC}$ (\cref{def:diff_c}) there exist \(i,j\in[d]\) such that \(|a_{i}-a_{j}|\geq b\). Since $a_i\neq a_j$, at least one is non-zero. Assume WLOG that $a_j\neq 0$. Hence, since $0<L-2\in\mathbb{N}_{odd}$ we get per proposition \ref{lemma:elementary_ineq_1} that
	\begin{align*}
        |a_i^{L-2}-a_j^{L-2}|&\geq (\frac{b}{2})^{L-2}=:\widetilde{b}
    \end{align*}
    and 
    \begin{align*}
        |1-\frac{a_i^{L-2}}{a_j^{L-2}}|&\geq (\frac{b}{6})^{L-2}=:\widehat{b}
        \text{\,.}
	\end{align*}
	Denote $res_s:=1-\sum_{k=1}^{d}a_k$ and $res_l:=1-\sum_{k=1}^{d}a_k^{L-1}$. If $res_s=0$, then it holds that
	\begin{align*}
        \|\nabla\ell(A)\|^2=\sum_{k=1}^{d}(L-1)^2(a_k^{L-2})^2res_l^2=(*)
        \text{\,.}
    \end{align*}
	By the triangle inequality, either $|a_i^{L-2}|\geq\frac{\widetilde{b}}{2}$ or $|a_j^{L-2}|\geq\frac{\widetilde{b}}{2}$. In either case, 
	\begin{align*}
        (*)\geq (L-1)^2\frac{\widetilde{b}^2}{4}res_l^2=\frac{\big((L-1)\widetilde{b}\big)^2}{4}(\frac{1}{2}res_l^2+\frac{1}{2}res_s^2)=\frac{\big((L-1)\widetilde{b}\big)^2}{4}\ell(A)
        \text{\,,}
    \end{align*}
	\ie~the PL condition is satisfied with $\mu= \frac{1}{2}\cdot \frac{\big((L-1)\widetilde{b}\big)^2}{4}$. If $res_l=0$, then it holds that
	\begin{align*}
        \|\nabla\ell(A)\|^2=\sum_{k=1}^{d}res_s^2=dres_s^2=2d(\frac{1}{2}res_l^2+\frac{1}{2}res_s^2)=2d\ell(A)
        \text{\,,}
    \end{align*}
	\ie~the PL condition is satisfied with $\mu= \frac{1}{2}\cdot 2d$. Assume $res_l,res_s\neq 0$ and denote $\chi:=\frac{-res_s}{(L-1)res_l}\neq 0$. For any $k\in[d]$ we have
	\begin{align*}
        \nabla\ell(A)_k=(L-1)a_k^{L-2}res_l+res_s=(L-1)a_k^{L-2}res_l-(L-1)res_l\cdot \chi=(L-1)(a_k^{L-2}-\chi)res_l
    \end{align*}
	or equivalently
	\begin{align*}
        \nabla\ell(A)_k=(L-1)a_k^{L-2}res_l+res_s=-(L-1)a_k^{L-2}\frac{res_s}{(L-1)\chi}+res_s=(1-\frac{a_k^{L-2}}{\chi})res_s
    \end{align*}
	Squaring the above identities we obtain
	\begin{align*}
        \nabla\ell(A)_k^2=(L-1)^2(a_k^{L-2}-\chi)^2res_l^2=(1-\frac{a_k^{L-2}}{\chi})^2res_s^2
        \text{\,.}
	\end{align*}
    By the triangle inequality we have that either
    \begin{align*}
        |a_i^{L-2}-\chi|\geq \frac{\widetilde{b}}{2}
    \end{align*}
    or
    \begin{align*}
        |a_j^{L-2}-\chi|\geq \frac{\widetilde{b}}{2}
        \text{\,.}
    \end{align*}
	Therefore, if $res_l^2\geq res_s^2$ we get that
	\begin{align*}
        \|\nabla\ell(A)\|^2&=\sum_{k=1}^{d}(L-1)^2(a_k^{L-2}-\chi)^2res_l^2\\
        &\geq \frac{\big((L-1)\widetilde{b}\big)^2}{4}res_l^2\\
        &\geq \frac{\big((L-1)\widetilde{b}\big)^2}{4}(\frac{1}{2}res_l^2+\frac{1}{2}res_s^2)\\
        &=\frac{\big((L-1)\widetilde{b}\big)^2}{4}\ell(A)
        \text{\,,}
    \end{align*}
	\ie~the PL condition is satisfied with $\mu= \frac{1}{2}\frac{\big((L-1)\widetilde{b}\big)^2}{4}$. On the other hand if $res_l^2<res_s^2$, then by proposition \ref{lemma:elementary_ineq_2} either
	\begin{align*}
        |1-\frac{a_i^{L-2}}{\chi}|\geq \frac{\widehat{b}}{\widehat{b}+2}
    \end{align*}
	or
	\begin{align*} 
        |1-\frac{a_j^{L-2}}{\chi}|\geq \frac{\widehat{b}}{\widehat{b}+2}
    \end{align*}
	and therefore
	\begin{align*}
        \|\nabla\ell(A)\|^2&=\sum_{k=1}^{d}(1-\frac{a_k^{L-2}}{\chi})^2res_s^2\geq (\frac{\widehat{b}}{\widehat{b}+2})^2res_s^2\geq(\frac{\widehat{b}}{\widehat{b}+2})^2(\frac{1}{2}res_l^2+\frac{1}{2}res_s^2)=(\frac{\widehat{b}}{\widehat{b}+2})^2\ell(A)
        \text{\,,}
    \end{align*}
	\ie~the PL condition is satisfied with $\mu= \frac{1}{2}(\frac{\widehat{b}}{\widehat{b}+2})^2$. Overall, we get that whenever \(A \in \diff(b)^{\MC}\cap [-3,3]^{d} \), the PL condition is satisfied with $\mu= \frac{1}{2}\min
	\bigg\{2d, \frac{\big((L-1)\widetilde{b}\big)^2}{4},(\frac{\widehat{b}}{\widehat{b}+2})^2\bigg\}$, as required.
\end{proof}

The above results in the following corollary regarding the PL condition satisfied by $\ell$ at a certain set of points reached by the gradient flow trajectories.
\begin{corollary}\label{cor:PL_after_phase_2}
    Let $\mu>0$. Suppose we initialize at $A(0)\in \I_{3}(\frac{\widetilde{r}(\mu)}{4})\cap \I_{4}(\mu)$ (for $\widetilde{r}$ of \cref{cor:sufficient_sphere}) and at $A^{ref}(0)$, and evolve $A(t)$ and $A^{ref}(t)$ according to \cref{eq:s_2_dyn}. For any $\tau\in[0,1]$ it holds that $\ell$ satisfies the PL condition (\cref{def:PL}) at the points
    \begin{align*}
        A\bigl(t_{1}(\frac{\widetilde{r}(\mu)}{4})+t_{2}(\mu)-\tau\bigl)
    \end{align*}
    and 
    \begin{align*}
        A^{ref}\bigl(t_{1}(\frac{\widetilde{r}(\mu)}{4})+t_{2}^{ref}(\mu)-\tau\bigl)
    \end{align*}
    with PL coefficient
    \begin{align*}
        \mu_{1}:=\frac{1}{2}\min \biggl\lbrace 2d, \frac{\big((L-1)\widetilde{b}\big)^{2}}{4},(\frac{\widehat{b}}{\widehat{b}+2})^{2}\biggl\rbrace
        \text{\,,}
    \end{align*}
    for
    \begin{align*}
        \widetilde{b}:=\bigg(\frac{r_{3}\cdot \exp(\lambda_{-})}{4G\cdot d^{1.5}}\bigg)^{L-2} ~~\;,\;~~ \widehat{b}:=\bigg(\frac{r_{3}\cdot \exp(\lambda_{-})}{12G\cdot d^{1.5}}\bigg)^{L-2}
    \end{align*}
\end{corollary}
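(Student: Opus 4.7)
The plan is to combine Corollary \ref{cor:coor_diff} (which provides a lower bound on the spread of coordinates) with Lemma \ref{lemma:PL on subset} (which guarantees the PL condition on $\diff(b)^{\MC} \cap [-3,3]^{d}$ for any such spread $b$). The boundedness in $[-3,3]^{d}$ is already in hand from Lemma \ref{lemma:GF_bound}, so the main work consists of aligning the parameter $b$ with the expressions in the target PL coefficient.

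Concretely, I would proceed as follows. First, invoke Corollary \ref{cor:coor_diff} applied to the same initialization $A(0) \in \I_{3}(\tfrac{\widetilde{r}(\mu)}{4}) \cap \I_{4}(\mu)$ and to $A^{ref}(0)$; this gives, for every $\tau \in [0,1]$, indices $i,j$ and $i^{ref},j^{ref}$ witnessing that the two points in question each have a pair of diagonal entries whose difference is at least
\begin{equation*}
b := \frac{r_{3}\cdot \exp(\lambda_{-})}{2 G \cdot d^{1.5}}
\text{\,.}
\end{equation*}
By the definition of $\diff(b)^{\MC}$ in \cref{def:diff_c}, this places both points inside $\diff(b)^{\MC}$. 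Second, by Lemma \ref{lemma:GF_bound}, any point along the gradient flow trajectory (and reference trajectory) lies in $[-3,3]^{d}$, so both points actually lie in $\diff(b)^{\MC} \cap [-3,3]^{d}$.

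Third, I would apply Lemma \ref{lemma:PL on subset} with this value of $b$. Observe that the quantities appearing in the lemma satisfy
\begin{equation*}
\left(\tfrac{b}{2}\right)^{L-2} = \left(\frac{r_{3}\cdot \exp(\lambda_{-})}{4 G \cdot d^{1.5}}\right)^{L-2} = \widetilde{b}
~~ , ~~
\left(\tfrac{b}{6}\right)^{L-2} = \left(\frac{r_{3}\cdot \exp(\lambda_{-})}{12 G \cdot d^{1.5}}\right)^{L-2} = \widehat{b}
\text{\,,}
\end{equation*}
which exactly match the $\widetilde{b}, \widehat{b}$ in the corollary's statement. The lemma therefore guarantees that $\ell$ satisfies the PL condition at both points with coefficient $\mu_{1} = \tfrac{1}{2}\min\{2d, \tfrac{((L-1)\widetilde{b})^{2}}{4}, (\tfrac{\widehat{b}}{\widehat{b}+2})^{2}\}$, as claimed.

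There is no real technical obstacle here, since the heavy lifting (establishing the coordinate spread, escaping the saddle, and proving the PL condition on $\diff(b)^{\MC}$) has already been done in the preceding results; the corollary is a bookkeeping step that threads $b$ through from \cref{cor:coor_diff} into \cref{lemma:PL on subset}. The only point one has to be mildly careful about is verifying that the arithmetic of $(b/2)^{L-2}$ and $(b/6)^{L-2}$ matches the declared $\widetilde{b}, \widehat{b}$, and that the indices witnessing the spread are the ones from Corollary \ref{cor:coor_diff} rather than the generic ones from Lemma \ref{lemma:PL on subset}; both are routine.
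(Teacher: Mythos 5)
Your proposal is correct and follows essentially the same route as the paper: the paper's proof likewise just invokes \cref{lemma:PL on subset} with the coordinate-difference bound from \cref{cor:coor_diff}, and your arithmetic check that $(b/2)^{L-2}$ and $(b/6)^{L-2}$ reproduce the stated $\widetilde{b}$ and $\widehat{b}$ is exactly the bookkeeping required. Your explicit appeal to \cref{lemma:GF_bound} for containment in $[-3,3]^{d}$ is a detail the paper leaves implicit, but it is the right justification.
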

\begin{proof}
    The claim follows from invoking \cref{lemma:PL on subset} and plugging the bound on the coordinate difference provided in \cref{cor:coor_diff}.
\end{proof}

For the rest of the proof, we let $\mu=\mu_{1}$ from \cref{cor:PL_after_phase_2}. We are now ready to prove the following proposition, which states that at time $t_{1}(\frac{\widetilde{r}(\mu_{1})}{4})+t_{2}(\mu_{1})$, the trajectory is at a point whose value improves upon the value of $\ell$ at $\mathbf{s}$ by a constant:
\begin{proposition}\label{prop:improve_over_s}
    Consider $\mu_{1}$ from \cref{cor:PL_after_phase_2}. Suppose we initialize at $A(0)\in \I_{3}(\frac{\widetilde{r}(\mu_{1})}{4})\cap \I_{4}(\mu_{1})$ (for $\widetilde{r}$ of \cref{cor:sufficient_sphere}) and at $A^{ref}(0)$, and evolve $A(t)$ and $A^{ref}(t)$ according to \cref{eq:s_2_dyn}. it holds that
    \begin{align*}
        \ell(\mathbf{s})-\ell\biggl(A\bigl(t_{1}(\frac{\widetilde{r}(\mu_{1})}{4})+t_{2}(\mu_{1})\bigl)\biggl)\geq \min\bigg\lbrace\frac{\ell(\mathbf{s})}{2},\frac{3\mu_{1}\cdot\ell(\mathbf{s})}{4}\bigg\rbrace
    \end{align*}
    and
    \begin{align*}
        \ell(\mathbf{s})-\ell\biggl(A^{ref}\bigl(t_{1}(\frac{\widetilde{r}(\mu_{1})}{4})+t_{2}^{ref}(\mu_{1})\bigl)\biggl)\geq \min\bigg\lbrace\frac{\ell(\mathbf{s})}{2},\frac{3\mu_{1}\cdot\ell(\mathbf{s})}{4}\bigg\rbrace
        \text{\,.}
    \end{align*}
\end{proposition}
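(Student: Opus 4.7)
My plan is to combine the Polyak-Łojasiewicz (PL) inequality available on the terminal unit-length interval (furnished by \cref{cor:PL_after_phase_2}) with the loss bound at the entry to the neighborhood of $\mathbf{s}$ (furnished by \cref{cor:sufficient_sphere}). Writing $T := t_{1}(\widetilde{r}(\mu_{1})/4) + t_{2}(\mu_{1})$ for brevity, and $T^{ref}$ for the analogous time with $t_{2}^{ref}(\mu_{1})$ in place of $t_{2}(\mu_{1})$, I would first note that since the global minimum of $\ell$ equals zero, \cref{cor:PL_after_phase_2} states that for every $\tau \in [0,1]$, the inequality $\|\nabla \ell(A(t))\|_{2}^{2} \geq 2\mu_{1}\, \ell(A(t))$ holds at $t = T - \tau$, with the analogous statement for $A^{ref}$ at $t = T^{ref} - \tau$.

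Substituting into the gradient flow identity $\tfrac{d}{dt}\ell(A(t)) = -\|\nabla \ell(A(t))\|_{2}^{2}$ yields the differential inequality $\tfrac{d}{dt}\ell(A(t)) \leq -2\mu_{1}\, \ell(A(t))$ throughout the closed unit interval $[T-1, T]$. A Gr\"{o}nwall-type integration then gives $\ell(A(T)) \leq e^{-2\mu_{1}}\, \ell(A(T-1))$. Combining this with the monotonicity of $\ell$ along gradient flow (\cref{lemma:gf_non_increasing}) and the entry bound $\ell(A(t_{1}(\widetilde{r}(\mu_{1})/4))) \leq (1+\mu_{1}/4)\, \ell(\mathbf{s})$ from \cref{cor:sufficient_sphere} produces
\begin{equation*}
\ell(A(T)) \;\leq\; (1 + \mu_{1}/4)\, e^{-2\mu_{1}}\, \ell(\mathbf{s})
\text{\,,}
\end{equation*}
and the identical bound holds for $\ell(A^{ref}(T^{ref}))$ by the exact same argument.

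What remains is the purely numerical inequality $(1+\mu/4)\, e^{-2\mu} \leq 1 - \min\{1/2,\, 3\mu/4\}$ for every $\mu > 0$, which I would dispatch in two regimes. For $\mu \in (0, 2/3]$, I would use the elementary bound $e^{-2\mu} \leq 1/(1+2\mu)$ (itself a consequence of $(1+x)\, e^{-x} \leq 1$), whereupon the claim reduces, after clearing denominators, to $\mu(1 - 3\mu/2) \geq 0$, which holds on this interval. For $\mu \geq 2/3$, a direct differentiation shows that $(1+\mu/4)\, e^{-2\mu}$ is monotonically decreasing on this range (derivative $-(7/4 + \mu/2)\, e^{-2\mu} < 0$), so its value is bounded above by its value at $2/3$, namely $(7/6)\, e^{-4/3} < 1/2$. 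The main obstacle---though a mild one---is this final numerical verification in both regimes; the rest of the argument is a direct assembly of the two corollaries already established, and goes through verbatim for the reference trajectory.
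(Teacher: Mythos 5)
Your proof is correct, and it assembles the same two ingredients as the paper (the PL condition with coefficient $\mu_{1}$ on the terminal unit-length interval from \cref{cor:PL_after_phase_2}, and the entry bound $\ell\big(A(t_{1})\big)\leq(1+\mu_{1}/4)\ell(\mathbf{s})$ from \cref{cor:sufficient_sphere}) but converts them into the conclusion by a different mechanism. The paper splits into two cases: if $\ell(A(T))\leq\ell(\mathbf{s})/2$ the claim is immediate, and otherwise monotonicity forces $\ell(A(t))>\ell(\mathbf{s})/2$ throughout $[T-1,T]$, whereupon \cref{lemma:PL_improvement} gives a decrease of at least $2\cdot 1\cdot\mu_{1}\cdot\ell(\mathbf{s})/2=\mu_{1}\ell(\mathbf{s})$ over that interval, which combined with the entry bound yields exactly the $3\mu_{1}\ell(\mathbf{s})/4$ term. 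You instead integrate the differential inequality $\tfrac{d}{dt}\ell(A(t))\leq-2\mu_{1}\ell(A(t))$ directly (Gr\"{o}nwall) to obtain the multiplicative bound $\ell(A(T))\leq(1+\mu_{1}/4)e^{-2\mu_{1}}\ell(\mathbf{s})$, and then reduce the claim to the scalar inequality $(1+\mu/4)e^{-2\mu}\leq 1-\min\{1/2,3\mu/4\}$, which you verify correctly in both regimes ($\mu\leq 2/3$ via $e^{-2\mu}\leq 1/(1+2\mu)$, and $\mu\geq 2/3$ via monotonicity and $(7/6)e^{-4/3}<1/2$). Your route avoids the case split and the auxiliary improvement lemma at the cost of the numerical verification; the paper's route avoids the numerics at the cost of the case analysis. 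Both are sound, and the remaining bookkeeping you rely on (namely $t_{2}(\mu_{1})\geq 2$ so that $T-1$ lies past $t_{1}$, and $\min_{\ybf}\ell(\ybf)=0$ so the PL condition takes the form you use) is established in \cref{prop:escape_time_from_s} and \cref{lemma:critical_pts} respectively.
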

\begin{proof}
    We prove the argument for $A$ (the proof is identical for $A^{ref}$). First, suppose that $\ell\biggl(A\bigl(t_{1}(\frac{\widetilde{r}(\mu_{1})}{4})+t_{2}(\mu_{1})\bigl)\biggl)\leq \frac{\ell(\mathbf{s})}{2}$. Then it holds that
    \begin{align*}
        \ell(\mathbf{s})-\ell\biggl(A\bigl(t_{1}(\frac{\widetilde{r}(\mu_{1})}{4})+t_{2}(\mu_{1})\bigl)\biggl)\geq \ell(\mathbf{s})-\frac{\ell(\mathbf{s})}{2}=\frac{\ell(\mathbf{s})}{2}\geq \min\bigg\lbrace\frac{\ell(\mathbf{s})}{2},\frac{3\mu_{1}\cdot\ell(\mathbf{s})}{4}\bigg\rbrace
        \text{\,.}
    \end{align*}
    Next, suppose that $\ell\biggl(A\bigl(t_{1}(\frac{\widetilde{r}(\mu_{1})}{4})+t_{2}(\mu_{1})\bigl)\biggl)> \frac{\ell(\mathbf{s})}{2}$. Thus per \cref{lemma:gf_non_increasing}, for any $\tau\in[0,1]$ it holds that
    \begin{align*}
        \ell\biggl(A\bigl(t_{1}(\frac{\widetilde{r}(\mu_{1})}{4})+t_{2}(\mu_{1})-\tau\bigl)\biggl)\geq \ell\biggl(A\bigl(t_{1}(\frac{\widetilde{r}(\mu_{1})}{4})+t_{2}(\mu_{1})\bigl)\biggl)> \frac{\ell(\mathbf{s})}{2}
        \text{\,.}
    \end{align*}
    Next, per \cref{cor:PL_after_phase_2} for any $\tau\in[0,1]$ it also holds that $\ell$ satisfies the PL condition in the point $A\bigl(t_{1}(\frac{\widetilde{r}(\mu_{1})}{4})+t_{2}(\mu_{1})-\tau\bigl)$ with PL coefficient $\mu_{1}$. Thus, by \cref{lemma:PL_improvement} it holds that the difference
    \begin{align*}
        \ell\biggl(A\bigl(t_{1}(\frac{\widetilde{r}(\mu_{1})}{4})+t_{2}(\mu_{1})-1\bigl)\biggl)- \ell\biggl(A\bigl(t_{1}(\frac{\widetilde{r}(\mu_{1})}{4})+t_{2}(\mu_{1})\bigl)\biggl)
    \end{align*}
    is lower bound by
    \begin{align*}
        2\big(t_{1}(\frac{\widetilde{r}(\mu_{1})}{4})+t_{2}(\mu_{1})-t_{1}(\frac{\widetilde{r}(\mu_{1})}{4})-t_{2}(\mu_{1})+1\big)\cdot \mu_{1} \cdot \frac{\ell(\mathbf{s})}{2}=\mu_{1}\cdot \ell(\mathbf{s})
        \text{\,.}
    \end{align*}
    On the other hand, recall that by \cref{cor:sufficient_sphere} and since gradient flow is non-increasing we have that
    \begin{align*}
        \ell\biggl(A\bigl(t_{1}(\frac{\widetilde{r}(\mu_{1})}{4})+t_{2}(\mu_{1})-1\bigl)\biggl)\leq \ell\biggl(A\bigl(t_{1}(\frac{\widetilde{r}(\mu_{1})}{4})\bigl)\biggl)\leq (1+\frac{\mu_{1}}{4})\ell(\mathbf{s})
        \text{\,.}
    \end{align*}
    Thus, we obtain the following:
    \begin{align*}
        (1+\frac{\mu_{1}}{4})\ell(\mathbf{s})-\ell\biggl(A\bigl(t_{1}(\frac{\widetilde{r}(\mu_{1})}{4})+t_{2}(\mu_{1})\bigl)\biggl)\geq \mu_{1}\cdot \ell(\mathbf{s})
        \text{\,.}
    \end{align*}
    Reorganizing thus yields
    \begin{align*}
        \ell(\mathbf{s})-\ell\biggl(A\bigl(t_{1}(\frac{\widetilde{r}(\mu_{1})}{4})+t_{2}(\mu_{1})\bigl)\biggl)\geq \frac{3\mu_{1}\cdot\ell(\mathbf{s})}{4}
    \end{align*}
    as required.
\end{proof}

We continue to prove the following proposition, which states that after time $t_{1}(\frac{\widetilde{r}(\mu_{1})}{4})+t_{2}(\mu_{1})$, the points reached by the gradient flow trajectory are ones where $\ell$ satisfies the PL condition with a certain PL coefficient.
\begin{proposition}\label{prop:PL_after_phase_2}
    Consider $\mu_{1}$ from \cref{cor:PL_after_phase_2}. Suppose we initialize at $A(0)\in \I_{3}(\frac{\widetilde{r}(\mu_{1})}{4})\cap \I_{4}(\mu_{1})$ (for $\widetilde{r}$ of \cref{cor:sufficient_sphere}) and at $A^{ref}(0)$, and evolve $A(t)$ and $A^{ref}(t)$ according to \cref{eq:s_2_dyn}. There exists $\nu>0$ such that for any time $t$ it holds that:
    \begin{itemize}
        \item If $t\geq t_{1}(\frac{\widetilde{r}(\mu_{1})}{4})+t_{2}(\mu_{1})$ then $A(t)\in\diff(\nu)^{\MC}$.     
        \item If $t\geq t_{1}(\frac{\widetilde{r}(\mu_{1})}{4})+t_{2}^{ref}(\mu_{1})$ then $A^{ref}(t)\in\diff(\nu)^{\MC}$.
    \end{itemize}
    Additionally, it holds that $\ell$ satistfies the PL condition at $\diff(\nu)^{\MC}$ with coefficient $\mu_{2}$ for \begin{align*}
        \mu_{2}:=\frac{1}{2}\min\biggl\lbrace 2d, \frac{\big((L-1)\widetilde{\nu}\big)^{2}}{4}, (\frac{\widehat{\nu}}{\widehat{\nu}+2})^{2} \biggl\rbrace
    \end{align*}
    where 
    \begin{align*}
        \widetilde{\nu}:= (\frac{\nu}{2})^{L-2} ~~\;,\;~~ \widehat{\nu}:=(\frac{\nu}{6})^{L-2}
    \end{align*}
\end{proposition}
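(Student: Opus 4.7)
The plan is to leverage two earlier facts to force the trajectories to remain bounded away from the line $\W_1$, hence outside the set $\diff(\nu)$ of near-constant configurations. First, \cref{prop:improve_over_s} yields a strictly positive gap $c := \min\{\tfrac{\ell(\mathbf{s})}{2}, \tfrac{3\mu_1\ell(\mathbf{s})}{4}\}$ such that $\ell\bigl(A(t_1(\tfrac{\widetilde{r}(\mu_1)}{4})+t_2(\mu_1))\bigr) \leq \ell(\mathbf{s}) - c$, with the analogous statement for $A^{ref}$. By the non-increasing property of the loss under gradient flow (\cref{lemma:gf_non_increasing}), this bound persists for all later times. Second, \cref{lemma:saddle_char} asserts that $\mathbf{s}$ minimizes $\ell$ over $\W_1$, so every point of $\W_1$ has loss at least $\ell(\mathbf{s})$.

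Combining these, I would introduce the compact sublevel set $K := \{A \in [-3,3]^d : \ell(A) \leq \ell(\mathbf{s}) - c\}$, whose compactness follows from continuity of $\ell$ and whose disjointness from $\W_1$ follows from the two facts above. Since $\W_1 \cap [-3,3]^d$ is closed, $\delta := \dist(K, \W_1) > 0$. By \cref{lemma:GF_bound} both trajectories remain in $[-3,3]^d$, so after their respective escape times they lie entirely in $K$ and therefore maintain Euclidean distance at least $\delta$ from $\W_1$. To translate distance from $\W_1$ into a pairwise coordinate gap, note that the orthogonal projection onto $\W_1$ sends $A$ to $\bar{a}\cdot \1$ with $\bar{a} = \tfrac{1}{d}\sum_k a_k$; since $|a_k - \bar{a}| \leq \max_{i,j}|a_i-a_j|$, we get $\dist(A,\W_1) \leq \sqrt{d}\cdot \max_{i,j}|a_i-a_j|$. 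Setting $\nu := \delta/\sqrt{d}$ then yields $A(t), A^{ref}(t) \in \diff(\nu)^{\MC}$ for all $t$ past the respective escape time, which establishes the first bullet of the proposition.

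The PL claim follows immediately from applying \cref{lemma:PL on subset} with $b = \nu$: the formulas for $\widetilde{\nu}$ and $\widehat{\nu}$ stated in the proposition are precisely the $\widetilde{b}$ and $\widehat{b}$ of that lemma, so the resulting PL coefficient coincides with $\mu_2$. No serious obstacle is anticipated here — the argument is essentially a compactness argument anchored on the strict loss drop proved in the previous subsection, together with the elementary bound relating distance from $\W_1$ to pairwise coordinate gaps. The one point that requires care is ensuring that $\nu$ depends only on the previously-fixed constants (and not on the particular initialization in $\I_3(\tfrac{\widetilde{r}(\mu_1)}{4}) \cap \I_4(\mu_1)$); this is transparent from the construction, since $c$, the sublevel set $K$, and hence $\delta$ and $\nu$ are determined entirely by $\ell(\mathbf{s})$, $\mu_1$, and the ambient box $[-3,3]^d$.
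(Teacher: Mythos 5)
Your argument is correct, and it reaches the conclusion by a mildly different route from the paper in the one step that requires work. Both proofs rest on the same two anchors: \cref{prop:improve_over_s} together with \cref{lemma:gf_non_increasing} pins the trajectories, after the escape time, inside the sublevel set $\{\ell \leq \ell(\mathbf{s}) - c\}$ of $[-3,3]^d$, and \cref{lemma:saddle_char} guarantees that $\W_1$ is disjoint from that sublevel set. Where the paper then invokes an auxiliary right-continuity lemma (\cref{lemma:cont_of_min}) for the map $\psi \mapsto \min_{\diff(\psi)\cap[-3,3]^d}\ell$ at $\psi = 0$ to extract a $\nu$ with $\min_{\diff(\nu)\cap[-3,3]^d}\ell$ still exceeding the trajectory's loss, you instead separate the compact sublevel set from the closed subspace $\W_1$ to get a positive Euclidean distance $\delta$, and then convert that into a pairwise coordinate gap via $\dist(A,\W_1) \leq \sqrt{d}\,\max_{i,j}|a_i-a_j|$ (which is correct: $|a_k-\bar a|\leq \max_{i,j}|a_i-a_j|$ for the mean $\bar a$). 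Your version is somewhat more self-contained, bypassing \cref{lemma:cont_of_min} entirely at the cost of the explicit projection inequality; the paper's version avoids any metric computation but needs the sequential compactness argument inside \cref{lemma:cont_of_min}. Two small points of hygiene: since $\diff(\nu)^{\MC}$ requires a \emph{strict} inequality $|a_i-a_j|>\nu$, you should take $\nu$ strictly below $\delta/\sqrt{d}$ (e.g.\ $\nu=\delta/(2\sqrt{d})$); and the PL conclusion from \cref{lemma:PL on subset} is for $\ell$ restricted to $\diff(\nu)^{\MC}\cap[-3,3]^d$, which suffices here because \cref{lemma:GF_bound} keeps the trajectories in $[-3,3]^d$ (the paper's own statement carries the same implicit restriction).
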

\begin{proof}
    We prove the argument for $A$ (the proof is identical for $A^{ref}$). Let $t\geq t_{1}(\frac{\widetilde{r}(\mu_{1})}{4})+t_{2}^{ref}(\mu_{1})$. Denote for any $\psi\geq 0$ the function 
    \begin{align*}
        f(\psi):=\min_{A\in \diff(\psi)\cap[-3,3]^{d}}\ell(A)
    \end{align*}
    for $\diff(\psi)$ defined in \cref{def:diff}. Per \cref{lemma:saddle_char} and since $\mathbf{s}\in [-3,3]^{d}$ it holds that 
    \begin{align*}
        \ell(\mathbf{s})=\min_{A\in\W_{1}}\ell(A)=\min_{A\in\W_{1}\cap [-3,3]^{d}} \ell(A)=\min_{A\in \diff(0)\cap [-3,3]^{d}}\ell(A)=f(0)
        \text{\,.}
    \end{align*}
    Invoking \cref{lemma:cont_of_min}, it holds that $f$ is right side continuous in $0$. Hence by continuity, we obtain that there exists $\nu$ such that for any $\psi\in[0,\nu]$ it holds that
    \begin{align*}
        f(\psi)& \geq f(0)-\frac{1}{2}\min\biggl\lbrace \frac{\ell(\mathbf{s})}{2}, \frac{3\mu_{1}\cdot \ell(\mathbf{s})}{4}\biggl\rbrace\\
        & =\ell(\mathbf{s})-\frac{1}{2}\min\biggl\lbrace \frac{\ell(\mathbf{s})}{2}, \frac{3\mu_{1}\cdot \ell(\mathbf{s})}{4}\biggl\rbrace \\
        & > \ell(\mathbf{s})-\min\biggl\lbrace \frac{\ell(\mathbf{s})}{2}, \frac{3\mu_{1}\cdot \ell(\mathbf{s})}{4}\biggl\rbrace
        \text{\,.}
    \end{align*}
    On the other hand, per \cref{prop:improve_over_s} and since the objective is non-increasing under gradient flow (see \cref{lemma:gf_non_increasing}), we obtain the following:
    \begin{align*}
        \ell(\mathbf{s})-\ell\bigl(A(t)\bigl)\geq \ell(\mathbf{s})-\ell\biggl(A\bigl(t_{1}(\frac{\widetilde{r}(\mu_{1})}{4})+t_{2}(\mu_{1})\bigl)\biggl)\geq \min\bigg\lbrace\frac{\ell(\mathbf{s})}{2},\frac{3\mu_{1}\cdot\ell(\mathbf{s})}{4}\bigg\rbrace
        \text{\,.}
    \end{align*}
    Rearranging we thus obtain
    \begin{align*}
        f(\psi)> \ell(\mathbf{s})-\min\biggl\lbrace \frac{\ell(\mathbf{s})}{2}, \frac{3\mu_{1}\cdot \ell(\mathbf{s})}{4}\biggl\rbrace\geq \ell\bigl(A(t)\bigl)
        \text{\,.}
    \end{align*}
    Note that per \cref{lemma:cont_of_min}, $f$ is non increasing w.r.t $\psi$, thus it must hold that $A(t)\not\in \diff(\psi)$ (since it is in $[-3,3]^{d}$). As this holds for any $\psi\in[0,\nu]$, we obtain that $A(t)\in\diff(\nu)^{\MC}$. Hence, the argument follows from \cref{lemma:PL on subset}.
\end{proof}

The final proposition of this section proves that the gradient flow trajectory converges to a global minimum $\widehat{A_{2}}$.
\begin{proposition}\label{prop:mixed_case_converges}
    Consider $\mu_{1}$ from \cref{cor:PL_after_phase_2}. Suppose we initialize at $A(0)\in \I_{3}(\frac{\widetilde{r}(\mu_{1})}{4})\cap \I_{4}(\mu_{1})$ (for $\widetilde{r}$ of \cref{cor:sufficient_sphere}) and evolve $A(t)$ according to \cref{eq:s_2_dyn}. There exists $\widehat{A_{2}}\in \BR^{d}$ such that 
    \begin{align*}
        \lim_{t\rightarrow\infty} A(t)=\widehat{A_{2}} ~~\; \text{and} \;~~ \ell(\widehat{A_{2}})=0
        \text{\,.}
    \end{align*}
\end{proposition}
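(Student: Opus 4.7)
The plan is to combine the prior ground work---namely \cref{lemma:GF_bound,cor:lipschitz_grads} bounding the trajectory in a compact domain where $\nabla\ell$ is Lipschitz, together with \cref{prop:PL_after_phase_2} providing a Polyak--\L{}ojasiewicz inequality along the trajectory for all sufficiently late times---to invoke the standard convergence machinery under PL already used earlier in the paper (\cref{lemma:GF_converges_param}, as in the proof of \cref{lemma:good_converge}).

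Concretely, I would define $t_0 := t_1(\tfrac{\widetilde{r}(\mu_1)}{4}) + t_2(\mu_1)$ and argue in two steps. First, by \cref{lemma:GF_bound} the trajectory satisfies $A(t)\in[-3,3]^d$ for every $t\ge0$, and by \cref{prop:PL_after_phase_2} we have $A(t)\in\diff(\nu)^{\MC}$ for every $t\ge t_0$. Combining the two, the tail trajectory $(A(t))_{t\ge t_0}$ lies entirely in the bounded set $\diff(\nu)^{\MC}\cap[-3,3]^{d}$ on which, again by \cref{prop:PL_after_phase_2}, $\ell$ satisfies the PL condition with coefficient $\mu_2>0$. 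Since $\ell$ is analytic, it has $M$-Lipschitz gradients on the compact superset $[-3,3]^d$ for some $M>0$ (this is the same reasoning used in \cref{cor:lipschitz_grads}).

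Second, I would restart the gradient-flow analysis from time $t_0$: setting $\tilde{A}(\tau):=A(t_0+\tau)$, the trajectory $\tilde{A}(\cdot)$ solves the same gradient-flow ODE, initialized inside $\diff(\nu)^{\MC}\cap[-3,3]^d$, and remains there forever. This is precisely the setup in which \cref{lemma:GF_converges_param} applies (PL inequality with coefficient $\mu_2$ on an open set that contains the trajectory, together with locally Lipschitz gradients). Invoking this lemma yields two conclusions: the loss $\ell(\tilde{A}(\tau))$ tends to zero (in fact exponentially fast via the standard PL bound $\ell(\tilde{A}(\tau))\le\ell(\tilde{A}(0))\exp(-2\mu_2\tau)$), and $\lim_{\tau\to\infty}\tilde{A}(\tau)$ exists. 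Denoting this limit by $\widehat{A_2}$, continuity of $\ell$ gives $\ell(\widehat{A_2})=0$, and translating back to the original time parametrization yields $\lim_{t\to\infty}A(t)=\widehat{A_2}$, as desired.

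I do not anticipate a genuine obstacle here: all the hard work (identifying a region where PL holds along the trajectory, establishing that the trajectory enters and stays in it, and bounding the iterates in a compact set) has already been done in \cref{lemma:GF_bound,cor:lipschitz_grads,cor:PL_after_phase_2,prop:PL_after_phase_2}. The only small care required is to verify that $\diff(\nu)^{\MC}\cap[-3,3]^d$ is indeed the right set to feed into \cref{lemma:GF_converges_param}: openness of $\diff(\nu)^{\MC}$ (which follows from its definition via a strict inequality) guarantees the lemma's hypotheses, while forward-invariance of the tail trajectory inside this set follows from \cref{prop:PL_after_phase_2}. Once these minor set-theoretic points are spelled out, the conclusion is immediate.
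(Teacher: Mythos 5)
Your proposal is correct and follows essentially the same route as the paper's proof: both combine the confinement of the trajectory to $[-3,3]^{d}$ (\cref{lemma:GF_bound}, \cref{cor:lipschitz_grads}) with the fact that the tail trajectory lies in $\diff(\nu)^{\MC}$ where $\ell$ satisfies the PL condition with coefficient $\mu_{2}$ (\cref{prop:PL_after_phase_2}), and then invoke \cref{lemma:GF_converges_param}. Your extra care about restarting the flow at $t_{0}$ and verifying openness of $\diff(\nu)^{\MC}$ only makes explicit what the paper leaves implicit.
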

\begin{proof}
    Per \cref{cor:PL_after_phase_2}, there exists $\nu>0$ such that for any $t\geq t_{1}(\frac{\widetilde{r}(\mu_{1})}{4})+t_{2}(\mu_{1})$ it holds that $A(t)\in\diff(\nu)^{\MC}$ where $\ell$ satisfies the PL condition with coefficient $\mu_{2}$ (defined in \cref{cor:PL_after_phase_2}). Next, note that per \cref{lemma:GF_bound,cor:lipschitz_grads}, $A(t)$ is always contained in $[-3,3]^{d}$, where $\ell$’s gradient is $N$ Lipschitz. Therefore, the claim follows from \cref{lemma:GF_converges_param}.
\end{proof}

\subsubsection{Overall Divergence From Reference Trajecory}\label{app:poison:s_2:eq_divergence}

In this section we show that one can choose a set of initializations such that the divergence between $\widehat{A_{2}}$ and some point on the reference trajectory is arbitrarily small. This shows that $\widehat{A_{2}}$ must not recover the teacher (per \cref{lemma:eq_fails}). We begin by proving the following lemma which gives explicit times at which the gradient flow trajectories reach points of arbitrary small value.
\begin{lemma}\label{lemma:GF_reaches_small_value}
    Let $\eta_{2}\in(0,1)$. Consider $\mu_{1}$ from \cref{cor:PL_after_phase_2}. Suppose we initialize at $A(0)\in \I_{3}(\frac{\widetilde{r}(\mu_{1})}{4})\cap \I_{4}(\mu_{1})$ (for $\widetilde{r}$ of \cref{cor:sufficient_sphere}) and evolve $A(t)$ according to \cref{eq:s_2_dyn}. Denote $t_{2}^{*}(\mu_{1}):=\min\bigg\lbrace t_{2}(\mu_{1}), t_{2}^{ref}(\mu_{1})\bigg\rbrace$. Denote the time $t_{3}(\eta_{2})\geq 0$ to be
    \begin{align*}
        t_{3}(\eta_{2}):=-\frac{\ln(\eta_{2})}{2\mu_{2}}-\frac{1}{\lambda_{-}}\ln\bigg(\frac{G'^{2}r_{3}}{2\sqrt{d}\alpha D_{-}(\mu_{1})}\bigg)+\frac{1}{\lambda_{-}}\ln\bigg(\frac{r_{3}}{4G'^{2}\sqrt{d}\alpha D_{+}(\mu_{1})}\bigg)
        \text{\,.}
    \end{align*}
    It holds that 
    \begin{align*}
        \ell\biggl(A\bigl(t_{1}(\frac{\widetilde{r}(\mu_{1})}{4})+t_{2}^{*}(\mu_{1})+t_{3}(\eta_{2})\bigl)\biggl)\leq \eta_{2}
        \text{\,.}
    \end{align*}
\end{lemma}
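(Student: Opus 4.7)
The plan is to combine the standard exponential decay provided by the PL condition with a direct comparison of the escape times $t_{2}(\mu_{1})$ and $t_{2}^{ref}(\mu_{1})$.

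Set $T_{0} := t_{1}(\widetilde{r}(\mu_{1})/4) + t_{2}(\mu_{1})$ and $T_{1} := t_{1}(\widetilde{r}(\mu_{1})/4) + t_{2}^{*}(\mu_{1}) + t_{3}(\eta_{2})$. By Proposition \ref{prop:PL_after_phase_2}, for every $t \geq T_{0}$ the trajectory $A(t)$ lies in $\diff(\nu)^{\MC}$, where $\ell$ satisfies the PL condition with coefficient $\mu_{2}$. Combined with the fact (from Lemma \ref{lemma:GF_bound} and Corollary \ref{cor:lipschitz_grads}) that the trajectory remains in the compact $[-3,3]^{d}$ with Lipschitz gradients, a standard Gronwall argument applied to $\tfrac{d}{dt}\ell(A(t)) = -\|\nabla\ell(A(t))\|_{2}^{2} \leq -2\mu_{2}\,\ell(A(t))$ yields
\begin{align*}
\ell(A(T_{0}+s)) \leq \ell(A(T_{0})) \cdot \exp(-2\mu_{2} s) \quad \text{for every } s \geq 0
\text{\,.}
\end{align*}
By Lemma \ref{lemma:s_2_GF_points_loss} together with the non-increase of $\ell$ along gradient flow (Lemma \ref{lemma:gf_non_increasing}), one has $\ell(A(T_{0})) \leq 1$.

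The remaining task is to show $T_{1} - T_{0} \geq -\ln(\eta_{2})/(2\mu_{2})$, which would immediately give $\ell(A(T_{1})) \leq \exp(-2\mu_{2}(T_{1}-T_{0})) \leq \eta_{2}$. Denoting the endpoints of the escape-time interval from Proposition \ref{prop:escape_time_from_s} by
\begin{align*}
t_{-} := -\tfrac{1}{\lambda_{-}}\ln\!\biggl(\tfrac{r_{3}}{4G'^{2}\sqrt{d}\alpha D_{+}(\mu_{1})}\biggr)
~~ , ~~
t_{+} := -\tfrac{1}{\lambda_{-}}\ln\!\biggl(\tfrac{G'^{2}r_{3}}{2\sqrt{d}\alpha D_{-}(\mu_{1})}\biggr)
\text{\,,}
\end{align*}
both $t_{2}(\mu_{1})$ and $t_{2}^{ref}(\mu_{1})$ lie in $[t_{-}, t_{+}]$, so $t_{2}(\mu_{1}) - t_{2}^{*}(\mu_{1}) \leq t_{+} - t_{-}$. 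Observing that $t_{3}(\eta_{2})$ equals $-\ln(\eta_{2})/(2\mu_{2}) + (t_{+} - t_{-})$ by inspection, we obtain
\begin{align*}
T_{1} - T_{0} = t_{2}^{*}(\mu_{1}) - t_{2}(\mu_{1}) + t_{3}(\eta_{2}) \geq -(t_{+}-t_{-}) + \bigl(-\tfrac{\ln(\eta_{2})}{2\mu_{2}} + (t_{+}-t_{-})\bigr) = -\tfrac{\ln(\eta_{2})}{2\mu_{2}}
\text{\,,}
\end{align*}
which is the required inequality.

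The only subtlety is purely bookkeeping: the statement evaluates $A(t)$ at a time built from $t_{2}^{*}$ rather than from $t_{2}$, while the PL-governed regime for $A(t)$ begins only at $T_{0}$, which can be as much as $t_{2} - t_{2}^{*}$ later than $t_{1} + t_{2}^{*}$. The definition of $t_{3}$ is crafted to absorb this worst-case lag via the explicit slack $t_{+} - t_{-}$, so beyond verifying this algebraic matching no further ideas are needed.
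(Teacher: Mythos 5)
Your proposal is correct and follows essentially the same route as the paper's proof: both reduce the claim to the algebraic observation that $t_{3}(\eta_{2})$ equals $-\ln(\eta_{2})/(2\mu_{2})$ plus the worst-case gap $|t_{2}(\mu_{1})-t_{2}^{ref}(\mu_{1})|$ bounded via \cref{prop:escape_time_from_s}, and then apply the PL-driven exponential decay from \cref{prop:PL_after_phase_2} (the paper invokes \cref{lemma:GF_converges_val} where you run the Gr\"onwall argument directly) together with $\ell(A(0))\leq 1$.
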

\begin{proof}
    First, note that since $\lambda_{-}<0<G'$ and $0<D_{-}(\mu_{1})<D_{+}(\mu_{1})$ we obtain that
    \begin{align*}
        t_{3}(\eta_{2})=-\frac{\ln(\eta_{2})}{2\mu_{2}}+\frac{1}{\lambda_{-}}\ln\bigg(\frac{D_{-}(\mu_{1})}{2G'^{4}D_{+}(\mu_{1})}\bigg)\geq -\frac{\ln(\eta_{2})}{2\mu_{2}}
        \text{\,.}
    \end{align*}
    The right term is positive as $0<\eta_{2}<1$ and $\mu_{2}>0$. Next per \cref{prop:escape_time_from_s} it holds that 
    \begin{align*}
        |t_{2}(\mu_{1})-t_{2}^{ref}(\mu_{1})|&\leq -\frac{1}{\lambda_{-}}\ln\bigg(\frac{G'^{2}r_{3}}{2\sqrt{d}\alpha D_{-}(\mu)}\bigg)+\frac{1}{\lambda_{-}}\ln\bigg(\frac{r_{3}}{4G'^{2}\sqrt{d}\alpha D_{+}(\mu)}\bigg)\\
        &=\frac{1}{\lambda_{-}}\ln\bigg(\frac{D_{-}(\mu_{1})}{2G'^{4}D_{+}(\mu_{1})}\bigg)
    \end{align*}
    which results in
    \begin{align*}
        t_{1}(\frac{\widetilde{r}(\mu_{1})}{4})+t_{2}^{*}(\mu_{1})+t_{3}(\eta_{2})\geq t_{1}(\frac{\widetilde{r}(\mu_{1})}{4})+t_{2}(\mu_{1})-\frac{\ln(\eta_{2})}{2\mu_{2}}
        \text{\,.}
    \end{align*} 
    Hence, per \cref{lemma:gf_non_increasing} we obtain that
    \begin{align*}
        \ell\biggl(A\bigl(t_{1}(\frac{\widetilde{r}(\mu_{1})}{4})+t_{2}^{*}(\mu_{1})+t_{3}(\eta_{2})\bigl)\biggl)\leq \ell\biggl(A\bigl(t_{1}(\frac{\widetilde{r}(\mu_{1})}{4})+t_{2}(\mu_{1})-\frac{\ln(\eta_{2})}{2\mu_{2}}\bigl)\biggl)
        \text{\,.}
    \end{align*}
    Per \cref{prop:PL_after_phase_2} there exists $\nu>0$ such that for any $t\geq t_{1}(\frac{\widetilde{r}(\mu_{1})}{4})+t_{2}(\mu_{1})$ it holds that $A(t)\in\diff(\nu)^{\MC}$ where $\ell$ satisfies the PL condition with coefficient $\mu_{2}$ (defined in \cref{cor:PL_after_phase_2}). Thus, per \cref{lemma:GF_converges_val} it holds that
    \begin{align*}
        \ell\biggl(A\bigl(t_{1}(\frac{\widetilde{r}(\mu_{1})}{4})+t_{2}(\mu_{1})-\frac{\ln(\eta_{2})}{2\mu_{2}}\bigl)\biggl)&\leq \ell\biggl(A\bigl(t_{1}(\frac{\widetilde{r}(\mu_{1})}{4})+t_{2}(\mu_{1})\bigl)\biggl)\cdot \exp(2\mu_{2}\cdot \frac{\ln(\eta_{2})}{2\mu_{2}})\\
        &\leq \ell(A(0))\cdot \eta_{2}
    \end{align*}
    where the second to last inequality stems from \cref{lemma:gf_non_increasing}. The proof follows by noting that at initialization $\ell$’s value is no more than $1$.
\end{proof}

We continue to the following lemma which proves that there exist times at which the distance between the gradient flow trajectory and $\widehat{A_{2}}$ (defined in \cref{prop:mixed_case_converges}) is arbitrarily small.
\begin{lemma}[Distance between gradient flow trajectory and $\widehat{A_{2}}$]\label{lemma:dist_gf_limit}
    Let $\delta\in(0,1)$. Consider $\mu_{1}$ from \cref{cor:PL_after_phase_2}. Suppose we initialize at $A(0)\in \I_{3}(\frac{\widetilde{r}(\mu_{1})}{4})\cap \I_{4}(\mu_{1})$ (for $\widetilde{r}$ of \cref{cor:sufficient_sphere}). Denote $t_{2}^{*}(\mu_{1}):=\min\lbrace t_{2}(\mu_{1}), t_{2}^{ref}(\mu_{1})\rbrace$. Then there exists $\eta_{2,\delta}>0$ such that
    \begin{align*}
        \bigg\|\widehat{A_{2}}-A\bigl(t_{1}(\frac{\widetilde{r}(\mu_{1})}{4})+t_{2}^{*}(\mu_{1})+t_{3}(\eta_{2,\delta})\bigl)\bigg\|_{2}\leq \delta
        \text{\,.}
    \end{align*} 
\end{lemma}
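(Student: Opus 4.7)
The plan is to combine the PL inequality guaranteed along the tail of the trajectory (\cref{prop:PL_after_phase_2}) with the smallness of the loss established in \cref{lemma:GF_reaches_small_value}, via the classical PL-based bound on the total arc-length of a gradient flow trajectory.

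\textbf{Step 1 (the tail lies in the PL region).} Set $T := t_{1}(\widetilde{r}(\mu_{1})/4) + t_{2}^{*}(\mu_{1}) + t_{3}(\eta_{2})$. The computation already carried out in the proof of \cref{lemma:GF_reaches_small_value} shows that $t_{2}^{*}(\mu_{1}) + t_{3}(\eta_{2}) \geq t_{2}(\mu_{1}) - \ln(\eta_{2})/(2\mu_{2})$, which exceeds $t_{2}(\mu_{1})$ whenever $\eta_{2} \in (0,1)$. By \cref{prop:PL_after_phase_2}, for every $s \geq T$ the iterate $A(s)$ then lies in $\diff(\nu)^{\MC}$, so that $\ell$ satisfies the PL condition with coefficient $\mu_{2}$ all along the tail.

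\textbf{Step 2 (arc-length bound from PL).} I would invoke the standard ``PL implies bounded trajectory length'' estimate: whenever $\ell$ satisfies the PL condition with coefficient $\mu_{2}$ on a set containing $\{A(s)\}_{s \geq T}$, one has
\begin{equation*}
\|A(T) - \widehat{A_{2}}\|_{2} \;\leq\; \int_{T}^{\infty} \|\dot{A}(s)\|_{2}\, ds \;\leq\; \sqrt{\tfrac{2\,\ell(A(T))}{\mu_{2}}} \text{\,.}
\end{equation*}
The right inequality follows by differentiating $\sqrt{\ell(A(s))}$: along gradient flow, $\tfrac{d}{ds}\sqrt{\ell(A(s))} = -\|\nabla \ell(A(s))\|_{2}^{2}/(2\sqrt{\ell(A(s))})$, and applying PL in the form $\|\nabla \ell\|_{2}/\sqrt{\ell} \geq \sqrt{2\mu_{2}}$ gives $\tfrac{d}{ds}\sqrt{\ell(A(s))} \leq -\sqrt{\mu_{2}/2}\,\|\nabla \ell(A(s))\|_{2}$. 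Integrating from $T$ to $\infty$, using $\|\dot{A}(s)\|_{2} = \|\nabla \ell(A(s))\|_{2}$ together with $\lim_{s \to \infty} \ell(A(s)) = 0$ (which is furnished by \cref{prop:mixed_case_converges}), yields $\int_{T}^{\infty}\|\dot{A}(s)\|_{2}\,ds \leq \sqrt{2\ell(A(T))/\mu_{2}}$. The left inequality is the triangle inequality $\|A(T) - \lim_{s\to\infty} A(s)\|_{2} \leq \int_{T}^{\infty}\|\dot{A}(s)\|_{2}\,ds$.

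\textbf{Step 3 (choice of $\eta_{2,\delta}$).} By \cref{lemma:GF_reaches_small_value}, $\ell(A(T)) \leq \eta_{2}$. Taking
\begin{equation*}
\eta_{2,\delta} := \min\bigl\{\tfrac{1}{2},\; \tfrac{\mu_{2}\,\delta^{2}}{2}\bigr\}
\end{equation*}
ensures $\eta_{2,\delta} \in (0,1)$ and substitutes into Step 2 to give $\|A(T) - \widehat{A_{2}}\|_{2} \leq \delta$, as required.

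The only delicate point is the \emph{uniform} validity of the PL inequality along the entire tail $\{A(s)\}_{s \geq T}$ rather than merely at $A(T)$ itself; this is precisely what \cref{prop:PL_after_phase_2} supplies once Step 1 verifies $T \geq t_{1} + t_{2}(\mu_{1})$. Everything else is a routine exponential/integration argument.
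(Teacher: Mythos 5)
Your proof is correct, but it takes a genuinely different route from the paper's at the key quantitative step. The paper bounds the remaining trajectory length via \cref{lemma:GF_converges_param} (a restatement of the path-length bound of Gupta et al.), which gives $\|\widehat{A_{2}}-A(T)\|_{2}\leq \sqrt{N/\mu_{2}}\cdot \dist\bigl(A(T),\A^{*}\bigr)$ where $\A^{*}$ is the zero set of $\ell$; it then converts ``small loss'' into ``small distance to $\A^{*}$'' via a compactness argument on the sub-level set (\cref{lemma:converge_in_val_space}), so the resulting $\eta_{2,\delta}$ exists but is non-constructive. You instead use the other classical PL path-length estimate, $\int_{T}^{\infty}\|\dot{A}\|_{2}\,ds\leq\sqrt{2\ell(A(T))/\mu_{2}}$, obtained by differentiating $\sqrt{\ell(A(s))}$ along the flow. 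This buys you two things: you bypass the compactness/contradiction step entirely and obtain an explicit $\eta_{2,\delta}=\min\{1/2,\mu_{2}\delta^{2}/2\}$, and you do not need the Lipschitz constant $N$ of the gradient at this stage. The remaining ingredients are shared: both proofs verify $T\geq t_{1}+t_{2}(\mu_{1})$ exactly as in \cref{lemma:GF_reaches_small_value} so that \cref{prop:PL_after_phase_2} supplies the PL inequality uniformly along the tail, and both use $\ell(A(T))\leq\eta_{2,\delta}$ from that lemma together with $A(s)\to\widehat{A_{2}}$ and $\ell(\widehat{A_{2}})=0$ from \cref{prop:mixed_case_converges}. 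Your version is, if anything, slightly more elementary and more quantitative.
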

\begin{proof}
    Denote $\A^{*}:=\biggl\lbrace A\in\BR^{d}:\ell(A)=0\biggl\rbrace$. Per \cref{cor:PL_after_phase_2}, there exists $\mu_{2}>0$ such that for any $t\geq t_{1}(\frac{\widetilde{r}(\mu_{1})}{4})+t_{2}(\mu_{1})$ it holds that $\ell$ satisfies the PL condition in $A(t)$ with PL coefficient $\mu_{2}$. Next, note that per \cref{lemma:GF_bound,cor:lipschitz_grads} it holds that $\ell$’s gradient is $N$ Lipschitz in $A(t)$. Therefore by \cref{lemma:GF_converges_param} for any $t\geq 0$ it holds that
    \begin{align*}
        \bigg\|\widehat{A_{2}}-A\bigl(t_{1}(\frac{\widetilde{r}(\mu_{1})}{4})+t_{2}(\mu_{1})+t\bigl)\bigg\|_{2}\leq \sqrt{\frac{N}{\mu_{2}}}\dist\biggl(A\bigl(t_{1}(\frac{\widetilde{r}(\mu_{1})}{4})+t_{2}(\mu_{1})+t\bigl),\A^{*}\biggl)
        \text{\,.}
    \end{align*}
    Additionally, note that since $\ell$ is continuous and non-negative, when considering its restriction to $[-3,3]^{d}$ we obtain that its $1$ sub-level set is compact. Therefore we obtain by \cref{lemma:converge_in_val_space} that there exists $\eta_{2,\delta}\in(0,1)$ such that for any $A\in[-3,3]^{d}$ if $\ell(A)\leq \eta_{2,\delta}$ then $\dist(A,\A^{*})\leq \sqrt{\frac{\mu_{2}}{N}}\delta$. It was shown in \cref{lemma:GF_reaches_small_value} that 
    \begin{align*}
        t_{1}(\frac{\widetilde{r}(\mu_{1})}{4})+t_{2}^{*}(\mu_{1})+t_{3}(\eta_{2,\delta})\geq t_{1}(\frac{\widetilde{r}(\mu_{1})}{4})+t_{2}(\mu_{1})
    \end{align*} 
    and so we obtain that
    \begin{align*}
        &\bigg\|\widehat{A_{2}}-A\bigl(t_{1}(\frac{\widetilde{r}(\mu_{1})}{4})+t_{2}^{*}(\mu_{1})+t_{3}(\eta_{2,\delta})\bigl)\bigg\|_{2}\\
        &\leq \sqrt{\frac{N}{\mu_{2}}}\dist\biggl(A\bigl(t_{1}(\frac{\widetilde{r}(\mu_{1})}{4})+t_{2}^{*}(\mu_{1})+t_{3}(\eta_{2,\delta})\bigl),\A^{*}\biggl)
        \text{\,.}
    \end{align*}
    It was also shown that
    \begin{align*}
        \ell\biggl(A\bigl(t_{1}(\frac{\widetilde{r}(\mu_{1})}{4})+t_{2}^{*}(\mu_{1})+t_{3}(\eta_{2,\delta})\bigl)\biggl)\leq \eta_{2,\delta}
    \end{align*}
    and so we obtain
    \begin{align*}
        \bigg\|\widehat{A_{2}}-A\bigl(t_{1}(\frac{\widetilde{r}(\mu_{1})}{4})+t_{2}^{*}(\mu_{1})+t_{3}(\eta_{2,\delta})\bigl)\bigg\|_{2}\leq \sqrt{\frac{N}{\mu_{2}}\frac{\mu_{2}}{N}}\delta=\delta
        \text{\,.}
    \end{align*}
    as required. 
\end{proof}

Before proving the last proposition of this section, we introduce another condition on the initialization which we denote $\I_{5}$.
\begin{definition}\label{def:I_4}
    Let $\delta,\eta>0$. We use $\I_{5}(\delta,\eta)$ to denote the following subset of $\I_{0}$:
    \begin{align*}
        \I_{5}(\delta,\eta):=\biggl\lbrace A\in \I_{0}:(1-\zeta_{2}) \leq \frac{\delta}{\biggl(\frac{G'^{2}r_{3}}{2\sqrt{d} D_{-}(\mu_{1})}\biggl)^{2}\exp\biggl(N\cdot \bigl(t_{1}(\frac{\widetilde{r}(\mu_{1})}{4})+t_{3}(\eta)\bigl)\biggl)}\cdot \alpha\biggl\rbrace
    \end{align*}
    for $r_{3}$, $D_{-}$, $G'$ and $\mu_{1}$ from \cref{prop:restricted,cor:sufficient_sphere,def:I_3,cor:PL_after_phase_2} respectively. 
\end{definition}

We proceed to proving the following proposition which upper bounds the divergence between $A(t)$ and $A^{ref}(t)$.
\begin{proposition}\label{prop:bound_on_divergence}
    Let $\delta>0$. Consider $\mu_{1}$ from \cref{cor:PL_after_phase_2}. Suppose we initialize at $A(0)\in \I_{3}(\frac{\widetilde{r}(\mu_{1})}{4})\cap \I_{4}(\mu_{1})\cap \I_{5}(\frac{\delta}{2}, \eta_{2,\delta})$ (for $\eta_{2,\delta}$ of \cref{lemma:dist_gf_limit}) and at $A^{ref}(0)$, and evolve $A(t)$ and $A^{ref}(t)$ according to \cref{eq:s_2_dyn}. Denote $t_{2}^{*}(\mu_{1}):=\min\bigg\lbrace t_{2}(\mu_{1}), t_{2}^{ref}(\mu_{1})\bigg\rbrace$. It holds that 
    \begin{align*}
        \bigg\|A\bigl(t_{1}(\frac{\widetilde{r}(\mu_{1})}{4})+t_{2}^{*}(\mu_{1})+t_{3}(\eta_{2,\delta})\bigl)-A^{ref}\bigl(t_{1}(\frac{\widetilde{r}(\mu_{1})}{4})+t_{2}^{*}(\mu_{1})+t_{3}(\eta_{2,\delta})\bigl)\bigg\|_{2}\leq \frac{\delta}{2}
    \end{align*}
    for $\eta_{2,\delta}$ described in \cref{lemma:dist_gf_limit}.
\end{proposition}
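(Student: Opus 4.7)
The plan is to control the divergence between the two trajectories via a direct Gronwall-type estimate. By \cref{lemma:GF_bound}, both $A(t)$ and $A^{ref}(t)$ remain inside the compact box $[-3,3]^d$ for all $t\geq 0$, and by \cref{cor:lipschitz_grads} the vector field $-\nabla\ell$ is $N$-Lipschitz on that box. Setting $T := t_1(\widetilde{r}(\mu_1)/4) + t_2^*(\mu_1) + t_3(\eta_{2,\delta})$ and invoking \cref{lemma:ODE_sol_diff_wrt_init} (a standard Gronwall bound for two solutions of the same ODE system) gives
\[
\|A(T) - A^{ref}(T)\|_2 \;\leq\; \|A(0) - A^{ref}(0)\|_2 \cdot \exp(N\cdot T).
\]

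The first concrete step is to compute the initial separation. By \cref{def:eq_init}, the reference initialization agrees with $A(0)$ in every coordinate except the second, where $a_2^{ref}(0) = a_1(0) = \alpha$ (because $\zeta_1 = 1$) while $a_2(0) = \alpha\zeta_2$. Consequently, $\|A(0) - A^{ref}(0)\|_2 = \alpha(1-\zeta_2)$, which can be made small by choosing $\zeta_2$ close to $1$ and $\alpha$ small.

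Next, I would split the exponential factor as $\exp(N\cdot T) = \exp\!\big(N\cdot (t_1(\widetilde{r}(\mu_1)/4) + t_3(\eta_{2,\delta}))\big)\cdot \exp(N\cdot t_2^*(\mu_1))$ and bound the last factor using the explicit upper bound on $t_2^*(\mu_1)$ from \cref{prop:escape_time_from_s}, namely $t_2^*(\mu_1) \leq -\tfrac{1}{\lambda_-}\ln\!\big(G'^{2}r_3/(2\sqrt{d}\alpha D_-(\mu_1))\big)$. This produces a factor that scales polynomially in $1/\alpha$. Substituting the explicit upper bound on $(1-\zeta_2)$ prescribed by membership in $\I_5(\delta/2,\eta_{2,\delta})$ (\cref{def:I_4}) and simplifying should yield the claimed bound $\delta/2$.

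The main obstacle is bookkeeping rather than conceptual: one must verify that the constants defining $\I_5$---in particular the squared factor $\bigl(G'^{2}r_3/(2\sqrt{d}D_-(\mu_1))\bigr)^{2}$ together with the explicit linear $\alpha$ factor at the end of the definition---are chosen so that the $\alpha$-dependent blow-up in $\exp(N\cdot t_2^*(\mu_1))$ is absorbed exactly by the $\alpha$ appearing in the initial separation, while the remaining constants combine to produce $\delta/2$. The tightness of the escape-time bound in \cref{prop:escape_time_from_s}, in which $t_2^*(\mu_1)$ grows only logarithmically in $1/\alpha$, is what makes this cancellation possible; losing even a constant factor in that logarithmic estimate would break the algebra. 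Once this cancellation is verified, the entire argument reduces to plugging the bound from $\I_5$ into the Gronwall inequality.
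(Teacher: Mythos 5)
Your overall Gronwall strategy and your computation of the initial separation $\|A(0)-A^{ref}(0)\|_2=\alpha(1-\zeta_2)$ match the paper, but there is a genuine gap in the middle step: applying the crude $N$-Lipschitz bound of \cref{lemma:ODE_sol_diff_wrt_init} over the saddle-escape interval $[t_1,t_1+t_2^*]$ does not close. With $t_2^*(\mu_1)\leq -\tfrac{1}{\lambda_-}\ln\bigl(G'^2 r_3/(2\sqrt{d}\alpha D_-(\mu_1))\bigr)$ you get
$\exp(N\cdot t_2^*)\lesssim \alpha^{-N/|\lambda_-|}$,
and since $N$ is the global Lipschitz constant of $\nabla\ell$ on $[-3,3]^d$ (at least as large as $\lambda_+\geq d-1$) while $|\lambda_-|<1$ by \cref{lemma:saddle_hessian_evals}, the exponent $N/|\lambda_-|$ far exceeds $2$. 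The initial separation, even after imposing $(1-\zeta_2)\lesssim \delta\alpha$ via $\I_5$, supplies only a factor of order $\alpha^2$, so the blow-up is \emph{not} absorbed; the "bookkeeping" you defer to would fail. This is also visible in the definition of $\I_5$ (\cref{def:I_4}), whose denominator carries the factor $\bigl(G'^2 r_3/(2\sqrt{d}D_-(\mu_1))\bigr)^{2}$ with exponent exactly $2$, tailored to a divergence rate of $2|\lambda_-|$ rather than $N$.

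The missing idea is the paper's use of the finer estimate \cref{lemma:ODE_sol_diff_wrt_init_finer}: over $[t_1,t_1+t_2^*]$ both trajectories (and the segments joining them) remain inside $\overline{B_{r_1}}(\mathbf{s})$ by \cref{prop:escape_time_from_s,prop:restricted}, where condition~2 of \cref{prop:restricted} guarantees that the largest eigenvalue of the Jacobian of $-\nabla\ell$ is at most $2|\lambda_-|$. Gronwall with this \emph{local} rate gives $\exp(2|\lambda_-|t_2^*)\leq\bigl(G'^2 r_3/(2\sqrt{d}\alpha D_-(\mu_1))\bigr)^2\sim\alpha^{-2}$, which cancels exactly against the $\alpha^2$ from the initial separation and the $\I_5$ condition. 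The crude $N$-Lipschitz bound should be reserved, as in the paper, for the intervals $[0,t_1]$ and the final stretch of length $t_3(\eta_{2,\delta})$, whose durations are independent of which of the two rates you use. So your decomposition of $\exp(NT)$ needs to be replaced by a genuinely different estimate on the middle piece, not just rearranged.
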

\begin{proof}
    Per \cref{lemma:GF_bound,cor:lipschitz_grads} for any $t\geq0$ it holds that $A(t),A^{ref}(t)$ are contained in $[-3,3]^{d}$, where $\ell$’s gradient is $N$ Lipschitz . Thus per \cref{lemma:ODE_sol_diff_wrt_init} it holds that
    \begin{align*}
        &\bigg\|A\bigl(t_{1}(\frac{\widetilde{r}(\mu_{1})}{4})+t_{2}^{*}(\mu_{1})+t_{3}(\eta_{2,\delta})\bigl)-A^{ref}\bigl(t_{1}(\frac{\widetilde{r}(\mu_{1})}{4})+t_{2}^{*}(\mu_{1})+t_{3}(\eta_{2,\delta})\bigl)\bigg\|_{2}\\
        &\leq \bigg\|A\bigl(t_{1}(\frac{\widetilde{r}(\mu_{1})}{4})+t_{2}^{*}(\mu_{1})\bigl)-A^{ref}\bigl(t_{1}(\frac{\widetilde{r}(\mu_{1})}{4})+t_{2}^{*}(\mu_{1})\bigl)\bigg\|_{2}\cdot \exp\big(N\cdot t_{3}(\eta_{2,\delta})\big)
        \text{\,.}
    \end{align*}
    Next, since $t_{2}^{*}(\mu_{1})\leq t_{2}(\mu_{1}),t_{2}^{ref}(\mu_{1})$ we obtain by \cref{prop:escape_time_from_s} that for any $t\in[t_{1}(\frac{\widetilde{r}(\mu_{1})}{4}), t_{1}(\frac{\widetilde{r}(\mu_{1})}{4})+t_{2}^{*}(\mu_{1})]$ it holds that
    \begin{align*}
        H\bigl(A(t)\bigl),H\bigl(A^{ref}(t)\bigl) \in \overline{B_{r_{3}}}(\mathbf{s})
        \text{\,.}
    \end{align*}
    Invoking \cref{prop:restricted}, the above results in
    \begin{align*}
        A(t), A^{ref}(t) \in \overline{B_{r_{1}}}(\mathbf{s})
        \text{\,.}
    \end{align*}
    By definitions of $r_{1}$ and $\overline{B_{r_{1}}}(\mathbf{s})$ (\cref{prop:restricted}), the above yields the following for any $t$ in the interval $[t_{1}(\frac{\widetilde{r}(\mu_{1})}{4}), t_{1}(\frac{\widetilde{r}(\mu_{1})}{4})+t_{2}^{*}(\mu_{1})]$ and $k\in[0,1]$:
    \begin{align*}
        \bigg|\lambda_{\min}\biggl(\nabla^{2}\ell\bigl(k\cdot A(t)+(1-k)\cdot A^{ref}(t)\bigl)\biggl)\bigg|\leq 2|\lambda_{-}|
        \text{\,.}
    \end{align*}
    Next, it holds that $\lambda_{\min}\big(\nabla^{2}\ell(A)\big)=-\lambda_{\max}\big(-\nabla^{2}\ell(A)\big)$ for any $A\in\BR^{d}$. Therefore, invoking \cref{lemma:ODE_sol_diff_wrt_init_finer} and plugging the above we obtain that
    \begin{align*}
        &\bigg\|A\bigl(t_{1}(\frac{\widetilde{r}(\mu_{1})}{4})+t_{2}^{*}(\mu_{1})\bigl)-A^{ref}\bigl(t_{1}(\frac{\widetilde{r}(\mu_{1})}{4})+t_{2}^{*}(\mu_{1})\bigl)\bigg\|_{2}\\
        &\leq \exp\biggl(\int_{0}^{t_{2}^{*}(\mu_{1})}2|\lambda_{-}|d\tau\biggl)\cdot \bigg\|A\bigl(t_{1}(\frac{\widetilde{r}(\mu_{1})}{4})\bigl)-A^{ref}\bigl(t_{1}(\frac{\widetilde{r}(\mu_{1})}{4})\bigl)\bigg\|_{2}\\
        &=\exp(2t_{2}^{*}(\mu_{1})|\lambda_{-}|)\cdot \bigg\|A\bigl(t_{1}(\frac{\widetilde{r}(\mu_{1})}{4})\bigl)-A^{ref}\bigl(t_{1}(\frac{\widetilde{r}(\mu_{1})}{4})\bigl)\bigg\|_{2}
        \text{\,.}
    \end{align*}
    Note that $\lambda_{-}<0$ and so $|\lambda_{-}|=-\lambda_{-}$. Thus, recalling \cref{prop:escape_time_from_s} we upper bound $t_{2}^{*}(\mu_{1})$ and obtain that
    \begin{align*}
        &\bigg\|A\bigl(t_{1}(\frac{\widetilde{r}(\mu_{1})}{4})+t_{2}^{*}(\mu_{1})\bigl)-A^{ref}\bigl(t_{1}(\frac{\widetilde{r}(\mu_{1})}{4})+t_{2}^{*}(\mu_{1})\bigl)\bigg\|_{2}\\
        &\leq\exp\biggl(-\frac{2}{\lambda_{-}}\ln\bigg(\frac{G'^{2}r_{3}}{2\sqrt{d}\alpha D_{-}(\mu_{1})}\bigg)|\lambda_{-}|\biggl)\cdot \bigg\|A\bigl(t_{1}(\frac{\widetilde{r}(\mu_{1})}{4})\bigl)-A^{ref}\bigl(t_{1}(\frac{\widetilde{r}(\mu_{1})}{4})\bigl)\bigg\|_{2}\\
        &=\biggl(\frac{G'^{2}r_{3}}{2\sqrt{d}\alpha D_{-}(\mu_{1})}\biggl)^{2}\cdot \bigg\|A\bigl(t_{1}(\frac{\widetilde{r}(\mu_{1})}{4})\bigl)-A^{ref}\bigl(t_{1}(\frac{\widetilde{r}(\mu_{1})}{4})\bigl)\bigg\|_{2}
        \text{\,.}
    \end{align*}
    Applying \cref{lemma:ODE_sol_diff_wrt_init} once more, we obtain that
    \begin{align*}
        \bigg\|A\bigl(t_{1}(\frac{\widetilde{r}(\mu_{1})}{4})\bigl)-A^{ref}\bigl(t_{1}(\frac{\widetilde{r}(\mu_{1})}{4})\bigl)\bigg\|_{2}\leq \|A(0)-A^{ref}(0)\|_{2}\cdot \exp\big(N\cdot t_{1}(\frac{\widetilde{r}(\mu_{1})}{4})\big)
        \text{\,.}
    \end{align*}
    Finally, by \cref{def:I_0,def:eq_init} we have at initalization that
    \begin{align*}
        \|A(0)-A^{ref}(0)\|_{2}=|a_{2}(0)-a_{2}^{ref}(0)|=\alpha\cdot (1-\zeta_{2})
        \text{\,.}
    \end{align*}
    Altogether we obtain the following bound on the divergence:
    \begin{align*}
        &\bigg\|A\bigl(t_{1}(\frac{\widetilde{r}(\mu_{1})}{4})+t_{2}^{*}(\mu_{1})+t_{3}(\eta_{2,\delta})\bigl)-A^{ref}\bigl(t_{1}(\frac{\widetilde{r}(\mu_{1})}{4})+t_{2}^{*}(\mu_{1})+t_{3}(\eta_{2,\delta})\bigl)\bigg\|_{2}\\
        &\leq \alpha\cdot (1-\zeta_{2})\biggl(\frac{G'^{2}r_{3}}{2\sqrt{d}\alpha D_{-}(\mu_{1})}\biggl)^{2}\exp\biggl(N\cdot \bigl(t_{1}(\frac{\widetilde{r}(\mu_{1})}{4})+t_{3}(\eta_{2,\delta})\bigl)\biggl)
        \text{\,.}
    \end{align*}
    The proof concludes by recalling that the initialization satisfies $A(0)\in \I_{5}(\frac{\delta}{2},\eta_{2,\delta})$ and so since $\alpha>0$ we can rewrite and obtain that
    \begin{align*}
        \alpha\cdot (1-\zeta_{2})\leq \frac{\frac{\delta}{2}}{\biggl(\frac{G'^{2}r_{3}}{2\sqrt{d}\alpha D_{-}(\mu_{1})}\biggl)^{2}\exp\biggl(N\cdot \bigl(t_{1}(\frac{\widetilde{r}(\mu_{1})}{4})+t_{3}(\eta_{2,\delta})\bigl)\biggl)}
    \end{align*}
    and so
    \begin{align*}
        \alpha\cdot (1-\zeta_{2})\biggl(\frac{G'^{2}r_{3}}{2\sqrt{d}\alpha D_{-}(\mu_{1})}\biggl)^{2}\exp\biggl(N\cdot \bigl(t_{1}(\frac{\widetilde{r}(\mu_{1})}{4})+t_{3}(\eta_{2,\delta})\bigl)\biggl)\leq\frac{\delta}{2}
        \text{\,.}
    \end{align*}
\end{proof}

We finish this section by proving the following corollary bounding the distance between $\widehat{A_{2}}$ (the point to which the gradient flow trajecory converges to) and $A^{ref}\bigl(t_{1}(\frac{\widetilde{r}(\mu_{1})}{4})+t_{2}^{*}(\mu_{1})+t_{3}(\eta_{2,\delta})\bigl)$, a point which by \cref{lemma:eq_fails} is far away from the teacher.
\begin{corollary}\label{cor:no_gen_s_2}
    Let $\delta>0$. Consider $\mu_{1}$ from \cref{cor:PL_after_phase_2}. Suppose we initialize at $A(0)\in \I_{3}(\frac{\widetilde{r}(\mu_{1})}{4})\cap \I_{4}(\mu_{1})\cap \I_{5}(\frac{\delta}{2}, \eta_{2,\delta})$ (for $\eta_{2,\delta}$ of \cref{lemma:dist_gf_limit}) and at $A^{ref}(0)$, and evolve $A(t)$ and $A^{ref}(t)$ according to \cref{eq:s_2_dyn}. Denote $t_{2}^{*}(\mu_{1}):=\min\bigg\lbrace t_{2}(\mu_{1}), t_{2}^{ref}(\mu_{1})\bigg\rbrace$. It holds that
    \begin{align*}
        \bigg\|\widehat{A_{2}}-A^{ref}\bigl(t_{1}(\frac{\widetilde{r}(\mu_{1})}{4})+t_{2}^{*}(\mu_{1})+t_{3}(\eta_{2,\delta})\bigl)\bigg\|_{2}\leq \delta
    \end{align*}
    where $A(t)$ converges to $\widehat{A_{2}}$ (see \cref{prop:mixed_case_converges}).
\end{corollary}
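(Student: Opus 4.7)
The plan is to prove this corollary as a direct triangle-inequality combination of \cref{lemma:dist_gf_limit} and \cref{prop:bound_on_divergence}, both of which have already been established. All the heavy lifting---escape time bounds via the non-resonance linearization, the PL condition on $\diff(\nu)^{\MC}$, and the Gronwall-type estimates on trajectory divergence---has been done in the earlier parts of the argument; this corollary is essentially the bookkeeping step that packages these two ingredients into the form needed to contradict generalization via \cref{lemma:eq_fails}.

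Concretely, I would begin by abbreviating $T := t_{1}(\frac{\widetilde{r}(\mu_{1})}{4})+t_{2}^{*}(\mu_{1})+t_{3}(\eta_{2,\delta})$, and write the triangle inequality
\begin{equation*}
\bigl\|\widehat{A_{2}}-A^{ref}(T)\bigr\|_{2}
\;\leq\;
\bigl\|\widehat{A_{2}}-A(T)\bigr\|_{2}
\;+\;
\bigl\|A(T)-A^{ref}(T)\bigr\|_{2}
\text{\,.}
\end{equation*}
The second term is controlled directly by \cref{prop:bound_on_divergence}: since the initialization lies in $\I_{3}(\frac{\widetilde{r}(\mu_{1})}{4})\cap \I_{4}(\mu_{1})\cap \I_{5}(\frac{\delta}{2},\eta_{2,\delta})$, the proposition yields a bound of $\tfrac{\delta}{2}$ on the divergence between $A(\cdot)$ and $A^{ref}(\cdot)$ at time~$T$. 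The first term is bounded by invoking \cref{lemma:dist_gf_limit}; this is where I would need to be careful about the choice of $\eta_{2,\delta}$, namely, reading that lemma with tolerance $\tfrac{\delta}{2}$ rather than $\delta$, so that the corresponding $\eta_{2,\delta/2}$ produces the bound $\|\widehat{A_{2}}-A(T)\|_{2}\leq \tfrac{\delta}{2}$ (and then absorbing the mild relabeling $\eta_{2,\delta}\leftarrow \eta_{2,\delta/2}$ into the definition of the initialization set~$\I_{5}$, which remains well-defined since $\I_{5}(\cdot,\cdot)$ is nonempty for any positive arguments). Adding the two bounds yields $\delta$, which is the desired conclusion.

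The only nontrivial sanity check is that the initialization set $\I_{3}(\frac{\widetilde{r}(\mu_{1})}{4})\cap \I_{4}(\mu_{1})\cap \I_{5}(\frac{\delta}{2},\eta_{2,\delta})$ is nonempty and open---needed so that subsequent results can select an open subset~$\I_{2}$ of poisoning initializations. This follows because each of $\I_{3}$, $\I_{4}$, $\I_{5}$ is defined by strict inequalities on the pair $(\alpha,\zeta_{2})$ with $\alpha$ small enough and $\zeta_{2}$ close enough to~$1$, and all three constraints are simultaneously satisfiable by first choosing $\zeta_{2}$ sufficiently close to~$1$ (handling $\I_{5}$) and then choosing $\alpha$ sufficiently small (handling $\I_{3}$ and $\I_{4}$, which only constrain~$\alpha$). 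I do not anticipate any real obstacle here: the corollary is a packaging step, and the main conceptual difficulties---sharp saddle-escape times via the non-resonance linearization and the reference-trajectory construction---have already been resolved upstream.
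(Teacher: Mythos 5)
Your proof is correct and follows essentially the same route as the paper: a triangle inequality splitting $\|\widehat{A_{2}}-A^{ref}(T)\|_{2}$ into $\|\widehat{A_{2}}-A(T)\|_{2}$ (bounded by $\tfrac{\delta}{2}$ via \cref{lemma:dist_gf_limit}) and $\|A(T)-A^{ref}(T)\|_{2}$ (bounded by $\tfrac{\delta}{2}$ via \cref{prop:bound_on_divergence}). Your explicit remark about reading \cref{lemma:dist_gf_limit} with tolerance $\tfrac{\delta}{2}$ is in fact slightly more careful than the paper's own wording, and the non-emptiness of the initialization set, while not needed for this corollary itself, is handled separately in the paper's \cref{app:poison:init}.
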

\begin{proof}
    Per \cref{lemma:dist_gf_limit}, it holds that
    \begin{align*}
        \bigg\|\widehat{A_{2}}-A\bigl(t_{1}(\frac{\widetilde{r}(\mu_{1})}{4})+t_{2}^{*}(\mu_{1})+t_{3}(\eta_{2,\delta})\bigl)\bigg\|_{2}\leq \frac{\delta}{2}
        \text{\,.}
    \end{align*}
    Per \cref{prop:bound_on_divergence}, it holds that
    \begin{align*}
        \|A\bigl(t_{1}(\frac{\widetilde{r}(\mu_{1})}{4})+t_{2}^{*}(\mu_{1})+t_{3}(\eta_{2,\delta})\bigl)-A^{ref}\bigl(t_{1}(\frac{\widetilde{r}(\mu_{1})}{4})+t_{2}^{*}(\mu_{1})+t_{3}(\eta_{2,\delta})\bigl)\|_{2}\leq \frac{\delta}{2}
        \text{\,.}
    \end{align*}
    The claim thus follows from the triangle inequality.
\end{proof}

Let $\I_2(\delta_{2})$ be the initialization subset defined above, \ie~
\begin{align*}
    I_{2}(\delta_{2}):=\I_{3}(\frac{\widetilde{r}(\mu_{1})}{4})\cap \I_{4}(\mu_{1})\cap \I_{5}(\frac{\delta_{2}}{2}, \eta_{2,\delta_{2}})
\end{align*}
for $\delta_{2}$ of \cref{lemma:eq_fails}. Invoking the lemma we obtain that for any $L^{'}\geq L+2$ it holds that
\begin{align*}
    Gen_{L{'}}(\widehat{A_{2}})\geq \frac{1}{2}\min \bigg\lbrace 0.1 , 1 / ( 9d ) \cdot ( 1 -( 0.6 )^{1 / ( L - 1 )})\bigg\rbrace
\end{align*}
which concludes our proof of the fact that gradient flow under $\S_{2}$ converges to a non-generalizing solution when initialized at $\I_2(\delta)$. 

\subsection{Initialization Subsets Intersect}\label{app:poison:init}
In \cref{prop:gen_under_s_1} we showed that when initializing at $\I_{1}(\epsilon)$ gradient flow converges to a point $\widehat{A_{1}}$ which satisfies $Gen_{L^{'}}(\widehat{A_{1}})\leq \epsilon$. In \cref{cor:no_gen_s_2} we showed that when initializing at $\I_{2}(\delta_{2})$ gradient flow converges to a point $\widehat{A_{2}}$ which satisfies $Gen_{L^{'}}(\widehat{A_{2}})\geq \frac{1}{2}\min \bigg\lbrace 0.1 , 1 / ( 9d ) \cdot ( 1 -( 0.6 )^{1 / ( L - 1 )})\bigg\rbrace$.

In this section we show that not only do the initialization subsets $\I_{1}(\epsilon)$ and $\I_{2}(\delta_{2})$ intersect but also that their intersection contains an open subset. For convenience of the reader, we rewrite the full requirements as they appear in the statements of \cref{app:poison:setting,app:poison:s_1,app:poison:s_2} and state their respective arguments. The base initialization set (\cref{def:I_0}) we consider is
\begin{align*}
    \I_{0}=\left\lbrace\alpha\cdot (\zeta_{1},\dots,\zeta_{d})^{\top}\in\BR^{d}: \alpha\in(0,\frac{1}{2d}), 1=\zeta_{1}>\zeta_{2}>\dots>\zeta_{d}>0\right\rbrace
    \text{\,.}
\end{align*}
$\I_{1}(\epsilon)$ (\cref{def:I_1}) was defined as
\begin{align*}
    \I_{1}(\epsilon)=\left\lbrace A\in \I_{0}:\forall j\in\lbrace2,\dots, d\rbrace.\; \alpha \leq \bigg(\frac{1-(1-\eta_{1,\delta_{1}})^{L-1}-\eta_{1,\delta_{1}}}{d-1}\bigg)^{\frac{1}{L-1}}\frac{1}{\zeta_{j}}(1-\zeta_{j}^{L-3})^{\frac{1}{L-3}}\right\rbrace
\end{align*}
for $\eta_{1,\delta_{1}}$ of \cref{remark:good_teacher_recovery}. 
$\I_{3}(\frac{\widetilde{r}(\mu_{1})}{4})$ (\cref{def:I_2}) was defined as
\begin{align*}
    \I_{3}(\frac{\widetilde{r}(\mu_{1})}{4}):=\left\lbrace A\in\I_{0}:\alpha \leq \frac{\min\bigg\lbrace \frac{\widetilde{r}(\mu_{1})}{4}, \bigg\|A^{Z}\bigl(t_{1}(\frac{\widetilde{r}(\mu_{1})}{4})\bigl)-A^{-}\bigl(t_{1}(\frac{\widetilde{r}(\mu_{1})}{4})\bigl)\bigg\|_{2}\bigg\rbrace}{6d}e^{-N\cdot t_{1}(\frac{\widetilde{r}(\mu_{1})}{4})},\; \zeta_{d}\leq \frac{1}{2}\right\rbrace
\end{align*}
for $\widetilde{r}(\cdot)$ and $\mu_{1}$ of \cref{cor:sufficient_sphere,cor:PL_after_phase_2} respectively. $\I_{4}(\mu_{1})$ (\cref{def:I_3}) was defined as
\begin{align*}
    \I_{4}(\mu_{1})=\left\lbrace A\in \I_{0}:\alpha \leq \frac{r_{3}}{4\max\lbrace 2,\exp(-2\lambda_{-})\rbrace\cdot G'^{2}\sqrt{d}D_{+}(\mu_{1})}\right\rbrace
\end{align*}
$\I_{5}(\delta_{2},\eta_{2,\delta_{2}})$ (\cref{def:I_4}) was defined as
\begin{align*}
    \I_{5}(\delta_{2},\eta_{2,\delta_{2}})=\left\lbrace A\in \I_{0}:(1-\zeta_{2}) \leq \frac{\delta_{2}}{\biggl(\frac{G'^{2}r_{3}}{2\sqrt{d} D_{-}(\mu_{1})}\biggl)^{2}\exp\biggl(N\cdot \bigl(t_{1}(\frac{\widetilde{r}(\mu_{1})}{4})+t_{3}(\eta_{2,\delta_{2}})\bigl)\biggl)}\cdot \alpha \right\rbrace
\end{align*}
for $\eta_{2,\delta_{2}}$ of \cref{lemma:dist_gf_limit}. We begin by observing the following simplication:
\begin{align*}
    \I_{1}(\epsilon)\cap \I_{0}=\I_{0}\cap \left\lbrace A\in \I_{0}:\alpha \leq \bigg(\frac{1-(1-\eta_{1,\delta_{1}})^{L-1}-\eta_{1,\delta_{1}}}{d-1}\bigg)^{\frac{1}{L-1}}\frac{1}{\zeta_{2}}(1-\zeta_{2}^{L-3})^{\frac{1}{L-3}}\right\rbrace
\end{align*}
since the right hand side is monotonically decreasing in $\zeta$ and since $\zeta_{2}>\zeta_{j}$ for any $j\in\lbrace3,\dots d\rbrace$. Next, we also require that $\frac{1}{2}\geq \zeta_{3}$. Note that this requirement satisfies the requirement of $\I_{3}$ on the magnitude of $\zeta_{d}$ (since $\zeta_{3}>\zeta_4>\dots>\zeta_{d}$). Moreover, note that $\I_{3}$ and $\I_{4}$ impose upper bounds on $\alpha$ which are not related to $\zeta_{2},\dots,\zeta_{d}$. Therefore, there exists some $\alpha^{*}>0$ such that if $\alpha\in (0,\alpha^{*})$ then all of these conditions are satisfied. Moving on to the conditions that involve $\alpha$ and $\zeta_{2}$ we first observe that there exists constants $S,T>0$ such that
\begin{align*}
    \alpha \leq \frac{S}{\zeta_{2}}(1-\zeta_{2}^{L-3})^{\frac{1}{L-3}}
\end{align*}
is equivalent to the condition from $\I_{1}$ and 
\begin{align*}
    (1-\zeta_{2})\leq T\alpha
\end{align*}
is equivalent to the condition from $\I_{5}$. Invoking \cref{lemma:existence_of_subsets} we obtain that there exist constants $q_{1},w_{1}\in(0,1)$ and $q_{2},w_{2}\in (\frac{1}{2},1)$ such that taking $\alpha\in (q_{1},w_{1})$ and $\zeta_{2}\in (q_{2},w_{2})$ satisfies the two conditions involving $\alpha$ and $\zeta_{2}$. The above discussion is summarized in the following proposition.
\begin{proposition}\label{prop:intersection_of_inits}
    For any $\epsilon>0$ there exist constants $q_{1},w_{1}\in(0,1)$ and $q_{2},w_{2}\in (\frac{1}{2},1)$ such that the set
    \begin{align*}
        \left\lbrace \alpha\cdot (1,\zeta_{2},\dots,\zeta_{d})^{\top}:\; (\alpha, \zeta_{2}) \in (q_{1},w_{1})\times (q_{2},w_{2}),\;\frac{1}{2}\geq\zeta_{3}>\dots>\zeta_{d} \right\rbrace        
    \end{align*}
    is contained in the intersection of initialization subsets given by
    \begin{align*}
        \I_{1}(\epsilon)\cap \I_{2}(\delta_{2})
        \text{\,.}
    \end{align*}
\end{proposition}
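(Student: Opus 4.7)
The plan is to peel the five defining conditions of $\I_{1}(\epsilon) \cap \I_{2}(\delta_{2}) = \I_{0} \cap \I_{1}(\epsilon) \cap \I_{3}(\tfrac{\widetilde{r}(\mu_{1})}{4}) \cap \I_{4}(\mu_{1}) \cap \I_{5}(\tfrac{\delta_{2}}{2}, \eta_{2,\delta_{2}})$ apart into three groups: (i)~conditions that constrain only $\alpha$, (ii)~conditions that constrain only $\zeta_{3}, \ldots, \zeta_{d}$, and (iii)~the two coupled conditions that simultaneously constrain $\alpha$ and $\zeta_{2}$. The containment claim will then follow by choosing the intervals $(q_{1}, w_{1})$ and $(q_{2}, w_{2})$ to simultaneously fit all three groups.

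First I would verify the simplification of $\I_{1}(\epsilon)$ remarked on in the text: the right-hand side of the inequality $\alpha \leq \bigl(\frac{1-(1-\eta_{1,\delta_{1}})^{L-1}-\eta_{1,\delta_{1}}}{d-1}\bigr)^{1/(L-1)} \zeta_{j}^{-1} (1-\zeta_{j}^{L-3})^{1/(L-3)}$ is monotonically decreasing in $\zeta_{j}$ on $(0,1)$, so the binding constraint comes from the largest of $\zeta_{2}, \ldots, \zeta_{d}$, which is $\zeta_{2}$. Hence, once we impose $\zeta_{2} > \zeta_{3} > \cdots > \zeta_{d}$ (already required by $\I_{0}$), membership in $\I_{1}(\epsilon)$ reduces to a single inequality of the form $\alpha \leq \frac{S}{\zeta_{2}}(1-\zeta_{2}^{L-3})^{1/(L-3)}$ for a constant $S > 0$. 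Next, the condition $\zeta_{d} \leq \tfrac{1}{2}$ from $\I_{3}$ is automatically ensured by the stronger requirement $\zeta_{3} \leq \tfrac{1}{2}$ imposed in the statement, since $\zeta_{d} < \zeta_{3}$.

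The remaining constraints from $\I_{3}$ and $\I_{4}$ are upper bounds on $\alpha$ alone (they do not involve $\zeta_{2}, \ldots, \zeta_{d}$). Thus there exists a uniform $\alpha^{*} > 0$ such that $\alpha \in (0, \alpha^{*})$ satisfies all of them. After this reduction, only the coupled conditions survive: from $\I_{1}(\epsilon)$ the inequality $\alpha \leq \frac{S}{\zeta_{2}}(1-\zeta_{2}^{L-3})^{1/(L-3)}$, and from $\I_{5}(\tfrac{\delta_{2}}{2},\eta_{2,\delta_{2}})$ the inequality $(1-\zeta_{2}) \leq T\alpha$ for a constant $T > 0$ (collecting all data-independent factors in the definition of $\I_{5}$ into $T$).

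The main (and essentially the only nontrivial) obstacle is producing a nonempty open product of intervals $(q_{1},w_{1}) \times (q_{2},w_{2})$ satisfying this coupled pair simultaneously. The key observation is that as $\zeta_{2} \uparrow 1$, the $\I_{5}$ lower bound $(1-\zeta_{2})/T$ decays linearly in $(1-\zeta_{2})$, whereas the $\I_{1}$ upper bound $\frac{S}{\zeta_{2}}(1-\zeta_{2}^{L-3})^{1/(L-3)}$ decays like $O((1-\zeta_{2})^{1/(L-3)})$; since $L-3 \geq 4$ for $L \geq 7$, the exponent $1/(L-3) < 1$, so the upper bound dominates the lower bound in a right neighborhood of $\zeta_{2}=1$. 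The conclusion then follows by invoking \cref{lemma:existence_of_subsets} at $\zeta_{2}$ sufficiently close to $1$ (and in particular within $(\tfrac{1}{2},1)$): it supplies constants $q_{2}, w_{2} \in (\tfrac{1}{2},1)$ and $q_{1}, w_{1} \in (0,\alpha^{*})$ such that every $(\alpha,\zeta_{2}) \in (q_{1},w_{1})\times(q_{2},w_{2})$ meets both coupled inequalities. Combining all three groups of constraints proves the containment.
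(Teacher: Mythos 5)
Your proposal is correct and follows essentially the same route as the paper: reduce the $\I_{1}$ condition to the $\zeta_{2}$ case by monotonicity, absorb the $\zeta_{d}\leq\tfrac{1}{2}$ requirement into $\zeta_{3}\leq\tfrac{1}{2}$, collect the $\zeta$-independent upper bounds on $\alpha$ from $\I_{3}$ and $\I_{4}$ into a single threshold $\alpha^{*}$, and resolve the two coupled $(\alpha,\zeta_{2})$ inequalities via \cref{lemma:existence_of_subsets}. Your added remark on the competing decay rates $(1-\zeta_{2})$ versus $(1-\zeta_{2})^{1/(L-3)}$ near $\zeta_{2}=1$ is exactly the mechanism exploited inside that lemma's proof.
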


Now that we have characterized a set of initializations which satisfy all our requirements, we will show that this set contains an open subset.
\begin{lemma}
	The set of initializations satisfying all our requirements, namely
	\begin{align*}
		\biggl\lbrace \alpha\cdot (1,\zeta_{2},\dots,\zeta_{d})^{\top}:\; (\alpha, \zeta_{2}) \in (q_{1},w_{1})\times (q_{2},w_{2}),\;\frac{1}{2}\geq\zeta_{3}>\dots>\zeta_{d} \biggl\rbrace
	\end{align*}
	contains an open set, namely a set of the form \((a_{1},b_{1})\times (a_{2},b_{2})\times \dots \times (a_{d},b_{d})\).
\end{lemma}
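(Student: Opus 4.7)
The plan is purely topological. Since products of open intervals form a basis of the Euclidean topology on $\BR^d$, it suffices to exhibit any nonempty open subset of the target set and then extract from within it a product of open intervals. I would do this by locating a point at which every defining inequality holds strictly, and invoking continuity to produce an open box around it.

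First, I would pick a specific representative with all inequalities strict: set $\alpha_0 := (q_1 + w_1)/2 \in (q_1, w_1)$, $\zeta_{2,0} := (q_2 + w_2)/2 \in (q_2, w_2)$, and $\zeta_{j,0} := 2^{-(j-1)}$ for $j = 3, \ldots, d$. Then $\zeta_{3,0} = 1/4 < 1/2$ strictly, and $\zeta_{3,0} > \zeta_{4,0} > \cdots > \zeta_{d,0} > 0$. Define the point $p^* := \alpha_0 \cdot (1, \zeta_{2,0}, \ldots, \zeta_{d,0})^{\top} \in \BR^d$, which lies in the target set by construction.

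Next, I would reformulate membership in the target set coordinate-wise: a point $(x_1, \ldots, x_d) \in \BR^d$ with $x_1 > 0$ lies in the set if and only if $x_1 \in (q_1, w_1)$, $x_2 / x_1 \in (q_2, w_2)$, $x_3 / x_1 \leq 1/2$, and $x_3 / x_1 > x_4 / x_1 > \cdots > x_d / x_1 > 0$. At $p^*$, all of these conditions hold strictly (including $x_3 / x_1 = 1/4 < 1/2$ and $x_d / x_1 = 2^{-(d-1)} > 0$). Since the functions $x_1$, $x_j / x_1$, and $x_j - x_{j+1}$ are continuous at $p^*$, there exists $\epsilon > 0$ such that every $(x_1, \ldots, x_d) \in \prod_{j=1}^d (p^*_j - \epsilon, p^*_j + \epsilon)$ still satisfies all these conditions. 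Taking $(a_j, b_j) := (p^*_j - \epsilon, p^*_j + \epsilon)$ for $j \in [d]$ then yields the desired open box inside the target set.

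The argument is entirely routine; there is no substantive obstacle. The only mild care required is to choose $\zeta_{3,0}$ strictly less than $1/2$ (rather than equal to $1/2$), so that the sole non-strict defining inequality $1/2 \geq \zeta_3$ persists under small perturbations.
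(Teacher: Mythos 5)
Your proof is correct, but it takes a different (and somewhat softer) route than the paper. The paper constructs the open box explicitly: it places each $\zeta_{j}$ ($3\leq j\leq d$) in a closed interval $[e_{j},f_{j}]$, with the intervals pairwise disjoint and decreasing below $\tfrac{1}{2}$, then narrows the range $(a_{1},b_{1})$ of $\alpha$ until $a_{1}f_{j}>b_{1}e_{j}$, so that the interval $(b_{1}e_{j},a_{1}f_{j})$ of attainable $j$-th coordinates is common to every admissible $\alpha$; the product of these intervals is the desired box. You instead exhibit a single point $p^{*}$ at which every defining condition holds with strict inequality (taking care to put $\zeta_{3,0}=\tfrac{1}{4}$ strictly below $\tfrac{1}{2}$), rewrite membership in terms of the continuous functions $x_{1}$ and $x_{j}/x_{1}$ (valid near $p^{*}$ since $p^{*}_{1}=\alpha_{0}>0$), and invoke openness of strict inequalities to get an $\epsilon$-box. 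Both arguments are elementary and equally valid; yours is more robust and requires no bookkeeping of interval endpoints, while the paper's yields explicit values for $a_{j},b_{j}$. One small remark: your coordinate-wise characterization includes $x_{d}/x_{1}>0$, which is not literally among the conditions in the displayed set; since you only use the implication from your conditions to membership, this extra restriction is harmless (it merely shrinks the set you cover), but the stated ``if and only if'' is, strictly speaking, only an ``if.''
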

\begin{proof}
	We begin by restricting \(\zeta_{3},\dots,\zeta_{d}\) by requiring that for \(3 \leq j\leq d\), \(\zeta_{j} \in [e_{j},f_{j}]\) where the closed intervals \(\lbrace [e_{j},f_{j}] \rbrace_{2 \leq j\leq d}\) satisfy 
	\begin{align*}
		\frac{1}{2}>f_{2}>e_{2}>f_{3}>e_{3}>\dots>f_{d}>e_{d}>0
        \text{\,.}
	\end{align*}
	We can restrict \(\alpha \) further, by requiring that \(\alpha \in (a_{1},b_{1})\), where \(a_{1},b_{1}\) are chosen such that for all \(2 \leq j\leq d\) we have
	\begin{align*}
		a_{1}f_{j}>b_{1}e_{j}
        \text{\,.}
	\end{align*}
	We now claim that for all \(2 \leq j\leq d\)
	\begin{align*}
		(b_{1}e_{j},a_{1}f_{j})\subseteq \bigcap\limits_{\alpha \in (a_{1},b_{1})} [\alpha e_{j},\alpha f_{j}]
        \text{\,.}
	\end{align*}
	Indeed, for any \(\alpha \in (a_{1},b_{1})\) we have \(b_{1}e_{j}>\alpha e_{j}\) and \(a_{1}f_{j}<\alpha f_{j}\). It follows that we can take \(a_{j}=b_{1}e_{j},b_{j}=a_{1}f_{j}\) and obtain that the set $(a_{1},b_{1})\times (a_{2},b_{2}) \times \dots\times (a_{d},b_{d})$ is contained within  
	\begin{align*}
		\bigg\lbrace \alpha(1,\zeta_{2},\dots,\zeta_{d}):\; (\alpha, \zeta_{2}) \in (q_{1},w_{1})\times (q_{2},w_{2}),\;\frac{1}{2}>\zeta_{3}>\dots>\zeta_{d} \bigg\rbrace 
	\end{align*}
	as required.
\end{proof}

	\section{Proof of \cref{result:poison_extended}}
\label{app:poison_extended}
The first part of the Theorem is a special case of \cref{result:gen}. We present below the proof for the second part.

For $k\in\BN$ and $A\in\BR^{d,d}$, we denote by $Gen_{k}(A)$ the generalization error over sequence length $k$ (\cref{def:gen}). 
Note that $B$ and $C$ are omitted from this notation, as they are fixed to the values \(B=\1,C=\1^{\top}\) throughout the proof. 
With slight abuse of notation, we also denote $\ell_{\S}(A) := \ell_{\S}(A,B,C)$. 

Our proofs considers an augmented objective, given for the training set $\S''$ obtained by scaling all sequences by $\frac{1}{c}$:
\begin{align*}
    \S''=\left\lbrace\left(\frac{1}{c}\xbf,\frac{1}{c}y\right):\left(\xbf,\ybf\right)\in\S'\right\rbrace=\left\lbrace\left(\frac{1}{c}\xbf,\frac{1}{c}y\right):\left(\xbf,\ybf\right)\in\S\right\rbrace\cup \left\lbrace\left(\ebf_{\kappa-1},1\right)\right\rbrace
    \text{\,.}
\end{align*}
Since SSMs realize linear mappings, for any $A\in\BR^{d,d}$ we have that
\begin{align*}
    \ell_{\S''}(A)=\frac{1}{c^{2}}\ell_{\S'}(A)=\frac{n}{(n+1)c^{2}}\ell_{\S}(A)+\frac{1}{n+1}\ell_{\left\lbrace\left(\ebf_{\kappa-1},1\right)\right\rbrace}(A)
    \text{\,.}
\end{align*}
We prove the required statement for the gradient flow dynamics (\cref{eq:gf_loss_ts}) obtained for the set $\S''$, from which it follow that it also holds when we revert back to the set $S'$, since the two systems of ODEs differ only by a positive scale. 

First, consider the objective when the training set consists solely of the sequence
\begin{align*}
    \left(\frac{1}{c}\xbf^{\dagger}, \phi_{A^{*},B^{*},C^{*}}\left(\frac{1}{c}\xbf^{\dagger}\right)\right)=\left(\ebf_{\kappa-1}, 1\right)
    \text{\,.}
\end{align*}
The corresponding training loss (\cref{eq:loss_ts}) is given by
\begin{align*}
    \ell_{\left\lbrace\left(\ebf_{\kappa-1}, 1\right)\right\rbrace}(A)=\left(1-\sum_{l=0}^{\kappa-1}\sum_{i=1}^{d}a_{i}^{l}[\ebf_{\kappa-1}]_{\kappa-l}\right)^{2}=\left(1-\sum_{i=1}^{d} a_{i}\right)^{2}=\left(1-\sum_{i=1}^{d} a_{i}\right)^{2}
    \text{\,,}
\end{align*}
and the corresponding gradient flow dynamics (\cref{eq:gf_loss_ts}) for $\tau\in\BR_{\geq 0}$ are given by
\begin{align*}
    \forall i\in[d]\,,\, \dot{a}_{i}(\tau)=-2\left(\sum_{i=1}^{d} a_{i}(\tau)-1\right)
    \text{\,.}
\end{align*} 
For any initialization of the student SSM $A_{0}=(\alpha_{1},\dots,\alpha_{d})$, it is straightforward to show that the solution to the above system of ODEs $A(\tau)$ is given by
\begin{align*}
    \forall i\in[d], \tau\in\BR_{\geq 0}\,,\, a_{i}(\tau)=\frac{1+\sum_{j=1}^{d}\alpha_{i}-\alpha_{j}}{d}-\frac{1-\sum_{j=1}^{d}\alpha_{j}}{d}\cdot \exp(-2\cdot d\cdot \tau)
    \text{\,.}
\end{align*}
Specifically, as $\tau\rightarrow\infty$ the solution $A(\tau)$ converges exponentially fast to the point $\tilde{A}_{0}$ given by
\begin{align*}
    \forall i\in[d]\,,\, \tilde{\alpha}_{i}=\frac{1+\sum_{j=1}^{d}\alpha_{i}-\alpha_{j}}{d}
    \text{\,,}
\end{align*}
which satisfies
\begin{align*}
    \ell_{\left\lbrace\left(\ebf_{\kappa-1}, 1\right)\right\rbrace}(\tilde{A}_{0})=\left(1-\sum_{i=1}^{d}\frac{1+\sum_{j=1}^{d}\alpha_{i}-\alpha_{j}}{d}\right)^{2}=0
    \text{\,.}
\end{align*}
Consider the open set~$\I$ of initializations for the student SSM given by
\begin{align*}
    \I:=\left(-\frac{1}{d^3},\frac{1}{d^3}\right)^{d}
\end{align*}
Note that for any $\epsilon>0$, the corresponding set of initializations from \cref{result:gen} and the set $\I$ intersect, and moreover the intersection contains an open set sufficient for our needs. We next prove that for any initialization $A_{0}=(\alpha_{1},\dots,\alpha_{d})\in\I$, the point $\tilde{A}_{0}$ that the system converges to as $\tau\rightarrow\infty$ satisfies $Gen_{k}(\tilde{A}_{0})>1-\frac{4}{d}$. For any index $i\in[d]$ it holds that
\begin{align*}
    \forall j\in[d]\,,|\alpha_{i}-\alpha_{j}|<\frac{1}{d^{2}}\implies \left|\sum_{j=1}^{d}\alpha_{i}-\alpha_{j}\right|\leq \sum_{j=1}^{d}|\alpha_{i}-\alpha_{j}|<\frac{1}{d}
    \text{\,.}
\end{align*}
The latter implies that
\begin{align*}
    \frac{1}{d^{2}}\leq \frac{1}{d}-\frac{1}{d^{2}}<\frac{1+\sum_{j=1}^{d}\alpha_{i}-\alpha_{j}}{d}<\frac{1}{d}+\frac{1}{d^{2}}<\frac{2}{d}
    \text{\,,}
\end{align*}
which results in
\begin{align*}
    \left(\frac{1+\sum_{j=1}^{d}\alpha_{i}-\alpha_{j}}{d}\right)^{2}\in \left(\frac{1}{d^{4}},\frac{4}{d^{2}}\right)
    \text{\,.}
\end{align*}
Overall, since $d\geq 5$ and $k\geq 3$ we obtain that
\begin{align*}
    Gen_{k}(\tilde{A}_{0})&=\max_{k'\in\lbrace0,1,\dots,k-1\rbrace}\left|1-\sum_{i=1}^{d}\tilde{\alpha}_{i}^{k'}\right|\\
    &\geq \left|1-\sum_{i=1}^{d}\tilde{\alpha}_{i}^{2}\right|\\
    &=\left|1-\sum_{i=1}^{d}\left(\frac{1+\sum_{j=1}^{d}\alpha_{i}-\alpha_{j}}{d}\right)^{2}\right|\\
    &>1-\frac{4d}{d^{2}}\\
    &=1-\frac{4}{d}
    \text{\,.}
\end{align*}
By the continuity of $\ell_{\left\lbrace\left(\ebf_{\kappa-1}, 1\right)\right\rbrace}(A)$ and $Gen_{k}(A)$, the above implies that for any initialization $A_{0}\in\I$ there exists a time $t\in\BR_{>0}$ such that when initializing at $A(0)=A_{0}$ and evolving $A(\tau)$ according to the above dynamics it holds that
\begin{align*}
    \ell_{\left\lbrace\left(\ebf_{\kappa-1}, 1\right)\right\rbrace}(A(t))\leq \frac{\delta}{4}
\end{align*}
and
\begin{align*}
    Gen_{k}(A(t))\geq \frac{3}{4}-
    \frac{3}{d}
    \text{\,.}
\end{align*}

Now consider the objective for the full set $\S''$
\begin{align*}
    \ell_{\S''}(A)=\frac{n}{(n+1)c^{2}}\ell_{\S}(A)+\frac{1}{n+1}\ell_{\left\lbrace\left(\ebf_{\kappa-1},1\right)\right\rbrace}(A)
    \text{\,,}
\end{align*}
whose corresponding gradient flow dynamics (\cref{eq:gf_loss_ts}) for $\tau\in\BR_{\geq 0}$ are given by
\begin{align*}
    \forall i\in[d]\,,\, \dot{a}_{i}(\tau)=-\frac{n}{(n+1)c^{2}}\frac{\partial}{\partial a_{i}(\tau)}\ell_{\S}(A(\tau))-\frac{2}{n+1}\left(\sum_{i=1}^{d} a_{i}(\tau)-1\right)
    \text{\,.}
\end{align*}
Observe that the left terms of both the objective and the dynamics all vanish as $c\rightarrow\infty$. Additionally, note that when the left terms of the dynamics vanish, the dynamics differ by a positive scale from those discussed above when the training set consists solely of the sequence $(\ebf_{\kappa-1},1)$. Therefore by continuity, for any initialization $A_{0}\in\I$ and any time $t'\in\BR_{>0}$ there exists $c\in\BR_{>0}$ such that when initializing $A(0)=A_{0}=A'(0)$, evolving $A(\tau)$ according to the dynamics given when the left terms vanish, and evolving $A'(\tau)$ according to the full dynamics, it holds that $\|A(t')-A'(t')\|_{2}$ is sufficiently small to ensure that both
\begin{align*}
    \left|\frac{1}{n+1}\ell_{\left\lbrace\left(\ebf_{\kappa-1}, 1\right)\right\rbrace}(A(t'))-\ell_{\S''}(A'(t'))\right|\leq \frac{\delta}{4}
\end{align*}
and
\begin{align*}
    \left|Gen_{k}(A(t'))-Gen_{k}(A'(t'))\right|\leq \frac{1}{4}-\frac{1}{d}
    \text{\,.}
\end{align*}
The proof follows by invoking the last claim for the time $t$ obtained previously. 
\qed

	\section{Auxiliary Theorems and Lemmas}\label{app:aux}

In this section we provide additional Theorems and Lemmas used throughout our proofs.

\begin{lemma}\label{lemma:gf_non_increasing}
    Let $f:\BR^{d}\rightarrow\BR$ be some differentiable function. Suppose we optimize over $f$ by initializing $\xbf(0):=\xbf_{0}$ for some $\mathbf{x_0}\in\BR^{d}$ and updating using gradient flow, \ie:
    \begin{align*}
        \dot{\xbf}(t):=\frac{d}{dt}\xbf(t)=-\nabla f(\xbf(t))
        \text{\,.}
    \end{align*}
    Then the objective is non-increasing w.r.t time, \ie~for any $t\geq 0$ it holds that
    \begin{align*}
        \frac{d}{dt}f(\xbf(t))\leq 0
        \text{\,.}
    \end{align*} 
\end{lemma}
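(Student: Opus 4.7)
The plan is to prove this classical fact via a direct application of the chain rule. The key observation is that $f(\xbf(t))$ is a composition of the differentiable function $f$ with the differentiable curve $\xbf(\cdot)$, so its derivative with respect to $t$ is computable via the standard multivariable chain rule.

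First I would write
\begin{equation*}
\frac{d}{dt} f(\xbf(t)) = \langle \nabla f(\xbf(t)) , \dot{\xbf}(t) \rangle
\text{\,,}
\end{equation*}
invoking the chain rule. Next I would substitute the gradient flow dynamics $\dot{\xbf}(t) = -\nabla f(\xbf(t))$, yielding
\begin{equation*}
\frac{d}{dt} f(\xbf(t)) = \langle \nabla f(\xbf(t)) , -\nabla f(\xbf(t)) \rangle = -\| \nabla f(\xbf(t)) \|_2^2
\text{\,.}
\end{equation*}
Since the squared Euclidean norm is non-negative, the right-hand side is non-positive, giving $\frac{d}{dt} f(\xbf(t)) \leq 0$, which is exactly the desired conclusion.

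There is essentially no obstacle here; the lemma is a textbook consequence of the chain rule. The only care needed is to note that $f$ is assumed differentiable and that gradient flow trajectories are continuously differentiable by construction, so the chain rule applies pointwise for every $t \geq 0$. No additional regularity or existence arguments are required for the statement itself, since the lemma conditions the conclusion on the trajectory being well-defined.
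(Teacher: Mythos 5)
Your proof is correct and is essentially identical to the paper's: both apply the chain rule, substitute $\dot{\xbf}(t) = -\nabla f(\xbf(t))$, and conclude that $\frac{d}{dt}f(\xbf(t)) = -\|\nabla f(\xbf(t))\|_2^2 \leq 0$. No differences worth noting.
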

\begin{proof}
    Applying the chain rule, we obtain the following:
    \begin{align*}
        \frac{d}{dt}f(\xbf(t))=\nabla f(\xbf(t))^{\top}\frac{d}{dt}\xbf(t)=f(\xbf(t))^{\top}(-f(\xbf(t)))=-\|f(\xbf(t))\|_{2}^{2}\leq 0
        \text{\,.}
    \end{align*}
\end{proof}

\begin{lemma}\label{lemma:GF_global_existence}
	Let $f:\BR^{d}\rightarrow\BR$ be some continuously differentiable function, which is also coercive, namely
	\begin{align*}
	    \lim_{\norm{x}\rightarrow \infty}f(x)=\infty
        \text{\,.}
    \end{align*} 
	Suppose we optimize over $f$ by initializing $\xbf(0):=\xbf_{0}$ for some $\xbf_{0}\in\BR^{d}$ and updating using gradient flow, \ie:
	\begin{align}\label{def:gf_deriv}
		\dot{\xbf}(t):=\frac{d}{dt}\xbf(t)=-\nabla f(\xbf(t))
        \text{\,.}
	\end{align}
	Then there exists a global solution to the above ODE, namely a curve \(\xbf(t)\) which satisfies the above equation for all \(t\geq 0\).
\end{lemma}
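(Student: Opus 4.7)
The plan is to obtain a local solution from a standard ODE existence theorem, show that the trajectory is trapped in a compact sublevel set of $f$, and then invoke the standard extension principle to conclude global existence.

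First, I would observe that since $f \in C^{1}$, the vector field $-\nabla f$ is continuous on $\R^d$. Peano's existence theorem therefore guarantees a solution $\xbf(\cdot)$ to \cref{def:gf_deriv} on some non-degenerate interval $[0, T_{\max})$, where $T_{\max} \in (0, \infty]$ is the supremum of times up to which the solution can be defined (i.e., $\xbf(\cdot)$ is a maximal solution). The goal is to show that $T_{\max} = \infty$.

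Next, I would invoke \cref{lemma:gf_non_increasing} to conclude that $f(\xbf(t)) \leq f(\xbf_0)$ for every $t \in [0, T_{\max})$. Define the sublevel set
\begin{equation*}
K := \{ \xbf \in \R^d : f(\xbf) \leq f(\xbf_0) \}.
\end{equation*}
Since $f$ is continuous, $K$ is closed. Since $f$ is coercive, $K$ must be bounded: if it contained a sequence with norm tending to infinity, coercivity would force $f$ to tend to infinity along that sequence, contradicting the definition of $K$. Hence $K$ is compact, and the trajectory $\xbf(t)$ lies in $K$ for all $t \in [0, T_{\max})$.

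Finally, I would apply the standard ODE extension principle: if a maximal solution of an ODE with continuous vector field remains in a fixed compact set, then the solution can be extended for all positive times. Concretely, if $T_{\max} < \infty$, then because $-\nabla f$ is bounded on the compact set $K$ (as the continuous image of a compact set), the curve $\xbf(\cdot)$ would be uniformly Lipschitz on $[0, T_{\max})$, hence have a limit $\xbf^{*} \in K$ as $t \uparrow T_{\max}$, and Peano's theorem applied at $\xbf^{*}$ would yield a local extension beyond $T_{\max}$, contradicting maximality. Therefore $T_{\max} = \infty$, giving a global solution. I expect no substantive obstacle here; the main care needed is only to cite the appropriate ODE results cleanly, since uniqueness is not claimed and is not needed.
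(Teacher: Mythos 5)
Your proposal is correct and follows essentially the same strategy as the paper's proof: use \cref{lemma:gf_non_increasing} together with coercivity to trap the trajectory in a compact sublevel set, then conclude global existence from standard ODE theory. The only difference is in the final citation---the paper invokes Picard--Lindel\"of after asserting that $\nabla f$ is Lipschitz on the compact set (which continuous differentiability of $f$ does not literally guarantee), whereas your route via Peano's theorem plus the compactness/extension principle needs only continuity of $-\nabla f$ and is, if anything, the more careful of the two.
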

\begin{proof}
    By \cref{lemma:gf_non_increasing} and the coercivity of $f$, the trajectories of gradient flow cannot escape from some compact set \(K:=K(\xbf_{0})\). Because $f$ is continuously differentiable \(\nabla f\) has some finite Lipschitz constant on $K$. Existence of the solution for all \(t \geq 0\) now follows from the Picard–Lindelöf theorem (see \cite{teschl2024ordinary}). 
\end{proof}

\begin{theorem}\label{thm:path_length}
    Let $\V\subseteq \BR^{d}$ be an open set. Let $f:\V\rightarrow\BR$ be a non-negative differentiable function satisfying the following conditions:
    \begin{itemize}
        \item 
        The set $X^{*}:=\{\xbf\in\V:f(\xbf)=0\}$ is not empty.
        \item 
        There exists $\mu>0$ such that for any $\xbf\in\V$ it holds that
        \begin{align*}
            \|\nabla f(\xbf)\|_{2}^{2}\geq 2\mu f(\xbf)
            \text{\,.}
        \end{align*}
        \item 
        There exists $M>0$ such that $\nabla f(\xbf)$ is $M$-Lipschitz in $\V$.
    \end{itemize}
    Suppose we optimize over $f$ by initializing $\xbf(0):=\xbf_{0}$ for some $\xbf_{0}\in\V$ and evolving via gradient flow, \ie~via the update rule
    \begin{align*}
        \dot{\xbf}(t):=\frac{d}{dt}\xbf(t)=-\nabla f(\mathbf{x(t)})
        \text{\,.}
    \end{align*}
    Assume the set $\V$ is not escaped, \ie~for any time $t\geq 0$ it holds that $\xbf(t)\in\V$. Then it holds that
    \begin{align*}
        \int_{0}^{\infty}\|\dot{\xbf}(t)\|_{2}dt=\int_{0}^{\infty}\|\nabla f(\xbf(t))\|_{2}dt\leq \sqrt{\frac{M}{\mu}} \dist(\mathbf{x_0},X^{*})
        \text{\,.}
    \end{align*}
\end{theorem}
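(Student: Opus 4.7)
The plan is to establish the path-length bound in three steps: (i) translate the initial distance to $X^{*}$ into an upper bound on $f(\xbf_{0})$ via the Lipschitz-gradient (descent lemma) hypothesis, (ii) use the PL inequality to control $\tfrac{d}{dt}\sqrt{f(\xbf(t))}$ pointwise by $\|\nabla f(\xbf(t))\|_{2}$, and (iii) integrate.

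For step (i), first observe that any $\ybf \in X^{*}$ must be a critical point: by the PL hypothesis $\|\nabla f(\ybf)\|_{2}^{2}\geq 2\mu f(\ybf)=0$, hence $\nabla f(\ybf)=\0$. Then the standard descent lemma for functions with $M$-Lipschitz gradient (applied around $\ybf$) gives
\begin{equation*}
f(\xbf_{0}) = f(\xbf_{0}) - f(\ybf) \leq \nabla f(\ybf)^{\top}(\xbf_{0}-\ybf) + \tfrac{M}{2}\|\xbf_{0}-\ybf\|_{2}^{2} = \tfrac{M}{2}\|\xbf_{0}-\ybf\|_{2}^{2}
\text{\,.}
\end{equation*}
Taking the infimum over $\ybf\in X^{*}$ yields $f(\xbf_{0})\leq \tfrac{M}{2}\dist(\xbf_{0},X^{*})^{2}$.

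For step (ii), along the gradient flow trajectory $\tfrac{d}{dt}f(\xbf(t))=-\|\nabla f(\xbf(t))\|_{2}^{2}$, which is non-positive, so $f(\xbf(t))$ is non-increasing and in particular $f(\xbf(t))\geq 0$ for all $t$. On the set where $f(\xbf(t))>0$ we can differentiate $\sqrt{f(\xbf(t))}$ and combine with the PL bound $\sqrt{f(\xbf(t))}\leq \|\nabla f(\xbf(t))\|_{2}/\sqrt{2\mu}$ to obtain
\begin{equation*}
\tfrac{d}{dt}\sqrt{f(\xbf(t))} = -\tfrac{\|\nabla f(\xbf(t))\|_{2}^{2}}{2\sqrt{f(\xbf(t))}} \leq -\sqrt{\mu/2}\,\|\nabla f(\xbf(t))\|_{2}
\text{\,.}
\end{equation*}
(On any open interval where $f(\xbf(t))=0$, both sides vanish since $\nabla f(\xbf(t))=\0$ there by PL, so the inequality extends.)

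For step (iii), integrating the displayed inequality from $0$ to any $T\geq 0$, using $\sqrt{f(\xbf(T))}\geq 0$ and the bound from step (i), gives
\begin{equation*}
\sqrt{\mu/2}\int_{0}^{T}\|\nabla f(\xbf(t))\|_{2}\,dt \leq \sqrt{f(\xbf_{0})}-\sqrt{f(\xbf(T))} \leq \sqrt{\tfrac{M}{2}}\,\dist(\xbf_{0},X^{*})
\text{\,.}
\end{equation*}
Rearranging yields $\int_{0}^{T}\|\nabla f(\xbf(t))\|_{2}\,dt \leq \sqrt{M/\mu}\,\dist(\xbf_{0},X^{*})$; since this holds uniformly in $T$, letting $T\to\infty$ gives the claim (and note that $\|\dot{\xbf}(t)\|_{2}=\|\nabla f(\xbf(t))\|_{2}$ by the flow equation).

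The only subtle point is differentiating $\sqrt{f(\xbf(t))}$ where $f(\xbf(t))$ might vanish; this is handled by splitting the time axis into the (possibly empty) portion where $f=0$ (on which $\nabla f=\0$ by PL, so the integrand vanishes) and its complement (on which $\sqrt{f}$ is smooth). Everything else is a direct application of the three hypotheses.
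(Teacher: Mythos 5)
The paper does not actually prove this statement: it imports it verbatim as Theorem~9 of \citet{gupta2021pathlengthboundsgradient}. Your proposal supplies the standard self-contained derivation behind that result (the ``$\sqrt{f}$ is a Lyapunov function whose decay rate dominates the speed'' argument combined with the quadratic upper bound $f(\xbf_0)\leq \tfrac{M}{2}\dist(\xbf_0,X^*)^2$), and the overall structure and constants are correct. Two points need repair, however. First, your justification that every $\ybf\in X^*$ is a critical point is backwards: the PL inequality $\|\nabla f(\ybf)\|_2^2\geq 2\mu f(\ybf)=0$ only \emph{lower}-bounds the gradient norm and is vacuous at a zero of $f$; it does not force $\nabla f(\ybf)=\0$. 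The correct (and equally short) reason is that $\ybf$ is a global minimizer of the non-negative function $f$ lying in the open set $\V$, so first-order optimality gives $\nabla f(\ybf)=\0$. The same mis-attribution (``$\nabla f=\0$ there by PL'') recurs in your handling of the interval where $f$ vanishes along the trajectory; replace it with interior minimality there as well (or, more simply, note that once $f(\xbf(t_0))=0$ the point is an equilibrium of the flow, so the trajectory is constant and the integrand vanishes for $t\geq t_0$). Second, your step (i) applies the descent lemma along the segment from $\ybf$ to $\xbf_0$, which requires that segment to lie in $\V$ where $\nabla f$ is defined and $M$-Lipschitz; the theorem only assumes $\V$ is open, not convex. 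This gap is inherited from the cited source and is harmless in every application in the paper (where the relevant domains are balls or boxes), but if you want a clean statement you should either add convexity of $\V$ as a hypothesis or assume $\nabla f$ extends $M$-Lipschitz-continuously to the convex hull.
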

\begin{proof}
    The theorem is a restatement of theorem $9$ in \cite{gupta2021pathlengthboundsgradient}. 
\end{proof}

\begin{definition}\label{def:PL}
    Let $\V\subseteq \BR^{d}$ be an open set. Let $f:\V\rightarrow\BR$ be a differentiable function. We say that $f$ satisfies the \textit{Polyak-Lojasiewicz condition} with coefficient \(\mu>0\) at $\xbf\in\V$
        \begin{align*}
            \|\nabla f(\xbf)\|_{2}^{2}\geq 2\mu (f(\xbf)-\min_{\ybf \in \BR^{d}}f(\ybf))
            \text{\,.}
        \end{align*}
    If the above holds for all $\xbf\in\V$ we say that $f$ satisfies the PL condition in $\V$.
    \end{definition}

\begin{lemma}\label{lemma:GF_converges_val}
    Let $\V\subseteq \BR^{d}$ be an open set. Let $f:\V\rightarrow\BR$ be a non-negative differentiable function satisfying the following conditions:
    \begin{itemize}
        \item 
        The set $\xbf^{*}:=\{\xbf\in\V:f(\xbf)=0\}$ is not empty.
        \item 
        PL condition (\cref{def:PL}) - there exists $\mu>0$ such that for any $\xbf\in\V$ it holds that
        \begin{align*}
            \|\nabla f(\xbf)\|^{2}\geq 2\mu f(\xbf)
            \text{\,.}
        \end{align*}
    \end{itemize}
    Suppose we optimize over $f$ by initializing $\xbf(0):=\xbf_{0}$ for some $\xbf_{0}\in\V$ and evolving via gradient flow, \ie~via the update rule
    \begin{align*}
        \dot{\xbf}(t):=\frac{d}{dt}\xbf(t)=-\nabla f(\mathbf{x(t)})
        \text{\,.}
    \end{align*}
    Assume the set $\V$ is not escaped, \ie~for any time $t\geq 0$ it holds that $\xbf(t)\in\V$. Then for any $t\geq 0$ it holds that
    \begin{align*}
        f(\xbf(t))\leq f(\xbf(0))\cdot \exp(-2\mu\cdot t)
        \text{\,.}
    \end{align*}
    Namely, it holds that
    \begin{align*}
        \lim_{t\rightarrow\infty}f(\xbf(t))=0
        \text{\,.}
    \end{align*}
\end{lemma}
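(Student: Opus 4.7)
The plan is to derive a simple differential inequality for $f(\xbf(t))$ and then integrate it. First, I would differentiate the composition $f(\xbf(t))$ with respect to time, applying the chain rule together with the gradient flow update $\dot{\xbf}(t) = -\nabla f(\xbf(t))$, to obtain
\begin{equation*}
\tfrac{d}{dt} f(\xbf(t)) \;=\; \nabla f(\xbf(t))^\top \dot{\xbf}(t) \;=\; -\|\nabla f(\xbf(t))\|_2^2.
\end{equation*}

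Next, I would invoke the hypothesis that $\xbf(t) \in \V$ for all $t \geq 0$ (\ie, the trajectory does not escape $\V$), which allows us to apply the PL condition pointwise along the trajectory. Since $\min_{\ybf\in\BR^d} f(\ybf) = 0$ (because $f$ is non-negative and $\xbf^* \neq \emptyset$), the PL condition reads $\|\nabla f(\xbf(t))\|_2^2 \geq 2\mu f(\xbf(t))$. Combining with the previous display yields the scalar differential inequality
\begin{equation*}
\tfrac{d}{dt} f(\xbf(t)) \;\leq\; -2\mu\, f(\xbf(t)).
\end{equation*}

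Finally, integrating this inequality (\eg, by noting that $\tfrac{d}{dt}\bigl[f(\xbf(t))\, e^{2\mu t}\bigr] \leq 0$, or by direct application of Gr\"onwall's lemma) gives $f(\xbf(t)) \leq f(\xbf(0)) \cdot \exp(-2\mu t)$ for every $t \geq 0$, which is the desired bound. Taking $t \to \infty$ and using non-negativity of $f$ then yields $\lim_{t \to \infty} f(\xbf(t)) = 0$. I do not anticipate a genuine obstacle here: the argument is entirely standard, the only subtle point being that the PL condition is invoked along the trajectory, which is valid precisely because the non-escape assumption places each $\xbf(t)$ in the domain $\V$ where the condition holds.
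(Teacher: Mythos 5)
Your proposal is correct and follows essentially the same route as the paper's proof: chain rule to get $\tfrac{d}{dt}f(\xbf(t)) = -\|\nabla f(\xbf(t))\|_2^2$, the PL condition along the non-escaping trajectory, and Gr\"onwall's inequality to integrate the resulting differential inequality.
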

\begin{proof}
    Let $t\geq 0$. By the chain rule, it holds that
    \begin{align*}
        \frac{d}{dt}f(\xbf(t))=\nabla f(\xbf(t))^{\top}\frac{d}{dt}\xbf(t)=f(\xbf(t))^{\top}(-f(\xbf(t)))=-\|f(\xbf(t))\|_{2}^2
        \text{\,.}
    \end{align*}
    By the PL condition and since $\V$ is not escaped, we have that
    \begin{align*}
        \frac{d}{dt}f(\xbf(t))=-\|\nabla f(\xbf(t))\|_{2}^{2}\leq -2\mu f(\xbf(t))
        \text{\,.}
    \end{align*}
    Therefore, by Grönwall's inequality \citep{c680dbbe-119c-31e0-a97a-d790b679674f} we have that
    \begin{align*}
        f(\xbf(t))\leq f(\xbf(0))\cdot \exp\biggl(\int_{0}^{t}-2\mu d\tau\biggl)=f(\xbf(0))\cdot \exp(-2\mu\cdot t)
        \text{\,.}
    \end{align*}
    Taking the limit as $t\rightarrow\infty$ completes the proof.
\end{proof}

\begin{lemma}\label{lemma:GF_converges_param}
    Let $\V\subseteq \BR^{d}$ be an open set. Let $f:\V\rightarrow\BR$ be a non-negative differentiable function satisfying the following conditions:
    \begin{itemize}
        \item 
        The set $X^{*}:=\{\xbf\in\V:f(\xbf)=0\}$ is not empty.
        \item 
        $f$ satisfies the PL condition with coefficient \(\mu>0\) (see \cref{def:PL}).
        \item 
        \textit{Lipschitz gradient} - there exists $M>0$ such that $\nabla f(\xbf)$ is $M$-Lipschitz in $\V$.
    \end{itemize}
    Suppose we optimize over $f$ by initializing $\xbf(0):=\xbf_{0}$ for some $\xbf_{0}\in\V$ and evolving via gradient flow, \ie~via the update rule
    \begin{align*}
        \dot{\xbf}(t):=\frac{d}{dt}\xbf(t)=-\nabla f(\mathbf{x(t)})
        \text{\,.}
    \end{align*}
    Assume the set $\V$ is not escaped, \ie~for any time $t\geq 0$ it holds that $\xbf(t)\in\V$. Then the limit $\lim_{t\rightarrow \infty}\xbf(t)=\xbf^{*}$ exists and satisfies $f(\xbf^{*})=0$ and
    \begin{align*}
        \|\xbf^{*}-\xbf_{0}\|_{2}\leq \sqrt{\frac{M}{\mu}}\dist(\xbf_{0},X^{*})
        \text{\,.}
    \end{align*}
\end{lemma}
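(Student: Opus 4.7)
The plan is to assemble the three stated conclusions---existence of the limit, vanishing of $f$ at the limit, and the distance bound---by combining \cref{lemma:GF_converges_val} (which yields $f(\xbf(t)) \to 0$) with \cref{thm:path_length} (which yields a finite arc length). The finite arc length is the key ingredient: it implies the trajectory is Cauchy, which delivers the limit for free and also upper bounds the net displacement by the length of the curve.

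First, I would invoke \cref{thm:path_length} to obtain
\begin{equation*}
\int_0^\infty \|\dot{\xbf}(t)\|_2 \, dt \leq \sqrt{\tfrac{M}{\mu}} \cdot \dist(\xbf_0, X^*) < \infty
\text{\,.}
\end{equation*}
Since $\dot{\xbf}(t) = -\nabla f(\xbf(t))$ is continuous, the trajectory $\xbf(\cdot)$ is a $C^1$ curve of finite arc length. For any $t_2 \geq t_1 \geq 0$, the triangle inequality for the Bochner integral yields
\begin{equation*}
\|\xbf(t_2) - \xbf(t_1)\|_2 \leq \int_{t_1}^{t_2} \|\dot{\xbf}(\tau)\|_2 \, d\tau
\text{\,,}
\end{equation*}
and the right-hand side tends to zero as $t_1, t_2 \to \infty$ because the integral over $[0, \infty)$ converges. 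Hence $\xbf(t)$ is Cauchy and the limit $\xbf^* := \lim_{t \to \infty} \xbf(t)$ exists in $\BR^d$.

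Next, I would establish $f(\xbf^*) = 0$. By \cref{lemma:GF_converges_val}, $f(\xbf(t)) \to 0$. Because $f$ has an $M$-Lipschitz gradient on $\V$, it is continuous on $\V$ and in fact extends continuously to $\overline{\V}$, so continuity applied along the convergent sequence $\xbf(t) \to \xbf^*$ gives $f(\xbf^*) = 0$. Finally, for the distance bound, I would apply the fundamental theorem of calculus together with the arc length estimate already derived:
\begin{equation*}
\|\xbf^* - \xbf_0\|_2 = \bigg\| \int_0^\infty \dot{\xbf}(t) \, dt \bigg\|_2 \leq \int_0^\infty \|\dot{\xbf}(t)\|_2 \, dt \leq \sqrt{\tfrac{M}{\mu}} \cdot \dist(\xbf_0, X^*)
\text{\,.}
\end{equation*}

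The proof has no genuine technical obstacle, since the heavy analytical work has been absorbed into \cref{thm:path_length}. The only delicate point is ensuring that $\xbf^*$ is an admissible point at which to evaluate $f$---this is why I would explicitly invoke continuity of $f$ (via the Lipschitz gradient hypothesis) extended to $\overline{\V}$, rather than relying on $\xbf^* \in \V$, which the statement does not guarantee a priori.
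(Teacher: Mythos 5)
Your proposal is correct and follows essentially the same route as the paper's proof: finite arc length from \cref{thm:path_length} yields the Cauchy property and hence the limit, \cref{lemma:GF_converges_val} together with continuity of $f$ yields $f(\xbf^{*})=0$, and the same arc-length bound combined with the fundamental theorem of calculus yields the displacement estimate.
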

\begin{proof}
    Let $\epsilon>0$. By theorem \ref{thm:path_length}, it holds that
    \begin{align*}
        \int_{0}^{\infty}\|\dot{\xbf}(\tau)\|_{2}d\tau\leq \sqrt{\frac{M}{\mu}} \dist(\mathbf{x_0},X^{*})
    \end{align*}
    which is finite since $X^{*}$ is not empty. Hence, there exists $t^{*}\geq 0$ such that for any $t\geq t^{*}$ it holds that
    \begin{align*}
        \int_{t}^{\infty}\|\dot{\xbf}(\tau)\|_{2}d\tau\leq \epsilon
        \text{\,.}
    \end{align*}
    Therefore, for any $t_2\geq t_{1}\geq t^{*}$ it holds by the fundamental theorem of calculus and by the triangle inequality that
    \begin{align*}
        \|\xbf(t_{2})-\xbf(t_{1})\|_{2}&=\bigg\|\xbf_{0}+\int_{0}^{t_{2}}\dot{\xbf}(\tau)d\tau-\xbf_{0}-\int_{0}^{t_{1}}\dot{\xbf}(\tau)d\tau\bigg\|_{2}\\
        &=\bigg\|\int_{t_{1}}^{t_{2}}\dot{\xbf}(\tau)d\tau\bigg\|_{2}\\
        &\leq \int_{t_{1}}^{t_{2}}\|\dot{\xbf}(\tau)\|_{2}d\tau\\
        &\leq \int_{t_{1}}^{\infty}\|\dot{\xbf}(\tau)\|_{2}d\tau\\
        &\leq \epsilon
        \text{\,.}
    \end{align*}
    Thus, the Cauchy convergence criterion is met and so the limit $\lim_{t\rightarrow\infty}\xbf(t)=\xbf^{*}$ exists. Plugging $f$'s continuity and lemma \ref{lemma:GF_converges_val} yields the following
    \begin{align*}
        f(\xbf^{*})=f\big(\lim_{t\rightarrow\infty}\xbf(t)\big)=\lim_{t\rightarrow\infty}f\big(\xbf(t)\big)=0
        \text{\,.}
    \end{align*}
    Finally, by continuity and by the triangle inequality it holds that
    \begin{align*}
        \|\xbf^{*}-\xbf_{0}\|_{2}&=\bigg\|\xbf_{0}+\int_{0}^{\infty}\dot{\xbf}(\tau)d\tau-\xbf_{0}\bigg\|_{2}\\
        &=\bigg\|\int_{0}^{\infty}\dot{\xbf}(\tau)d\tau\bigg\|_{2}\\
        &\leq\int_{0}^{\infty}\|\dot{\xbf}(t)\|_{2}\\
        &\leq \sqrt{\frac{M}{\mu}}\dist(\xbf_{0},X^{*})
    \end{align*}
    as required.
\end{proof}

\begin{lemma}\label{lemma:eigdecomp}
    Let $a,b\in\BR$. An eigendecomposition of the matrix $(a-b)I_{d}+b\mathbf{1}_{d, d}$, where $\mathbf{1}_{d, d}$ is the $d,d$ all ones matrix, is the following:
    \begin{itemize}
        \item The eigenvector $\1$ with the eigenvalue $a+(d-1)b$.
        \item For $j\in\lbrace2,\dots,d\rbrace$ the eigenvector $\ebf_{1}-\ebf_{j}$ with the eigenvalue $a-b$.
    \end{itemize}
\end{lemma}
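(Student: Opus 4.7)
The plan is to verify the lemma by direct matrix-vector computation for each claimed eigenvector and then observe that the $d$ vectors listed together form a basis of $\R^d$, which is all that is required to furnish an eigendecomposition of the symmetric matrix $(a-b)I_d + b\mathbf{1}_{d,d}$.

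First, I would check the pair $(\1, a+(d-1)b)$. The only non-routine observation is $\mathbf{1}_{d,d}\1 = d\cdot \1$, from which $\bigl((a-b)I_d + b\mathbf{1}_{d,d}\bigr)\1 = (a-b)\1 + bd\cdot \1 = \bigl(a+(d-1)b\bigr)\1$.

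Next, I would check, for each $j \in \{2,\dots,d\}$, the pair $(\ebf_1 - \ebf_j, a-b)$. The key observation is that $\mathbf{1}_{d,d}\ebf_k = \1$ for every $k \in [d]$, so $\mathbf{1}_{d,d}(\ebf_1 - \ebf_j) = \1 - \1 = \0$. Hence $\bigl((a-b)I_d + b\mathbf{1}_{d,d}\bigr)(\ebf_1 - \ebf_j) = (a-b)(\ebf_1 - \ebf_j)$, as claimed.

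Finally, I would note that the $d$ eigenvectors $\{\1,\ebf_1 - \ebf_2,\dots,\ebf_1 - \ebf_d\}$ span $\R^d$: the vector $\1$ lies in $\W_1$, the remaining $d-1$ vectors span $\W_2$ (see \cref{def:W}), and $\W_1$ and $\W_2$ are orthogonal complements whose direct sum is $\R^d$. Combined with the two computations above, this yields a full eigendecomposition. There is no genuine obstacle here, since the lemma is a straightforward linear algebra verification; the only point requiring any care is confirming the spanning property, which is immediate from the orthogonal decomposition $\R^d = \W_1 \oplus \W_2$ used throughout the paper.
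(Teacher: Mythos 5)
Your proposal is correct and follows essentially the same route as the paper's proof: direct verification that $\1$ and each $\ebf_1-\ebf_j$ are eigenvectors with the stated eigenvalues, followed by the observation that these $d$ vectors span $\R^d$. The only cosmetic difference is that you justify the spanning via the orthogonal decomposition $\W_1\oplus\W_2$, whereas the paper simply notes linear independence of the set; both are immediate.
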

\begin{proof}
    First, it holds that
    \begin{align*}
        [(a-b)I_{d}+b\mathbf{1}_{d, d}]\1=(a-b)\1+b\cdot d\1=(a+(d-1)b)\1
    \end{align*}
    hence $\1$ is an eigenvector with the eigenvalue $a+(d-1)b$. Next, note that for any $j\in\lbrace2,\dots,d\rbrace$ we have
    \begin{align*}
        [(a-b)I_{d}+b\mathbf{1}_{d, d}](\ebf_{1}-\ebf_{j})=(a-b)\ebf_{1}+b\1-(a-b)\ebf_{j}-b\1=(a-b)(\ebf_{1}-\ebf_{j})
    \end{align*}
    hence $\ebf_{1}-\ebf_{j}$ is an eigenvector with the eigenvalue $a-b$. Finally, note that the set $\lbrace \1,\ebf_{1}-\ebf_{2},\dots,\ebf_{1}-\ebf_{d}\rbrace$ is linearly independent and thus spans $\BR^{d}$. Therefore, the above eigenvectors and eigenvalues constitute and eigendecomposition of $(a-b)I_{d}+b\mathbf{1}_{d, d}$.
\end{proof}

\begin{lemma}\label{lemma:sym_LDS_sol}
    Let $W\in\BR^{d, d}$ be a symmetric matrix and $\bbf\in\BR^{d}$ be a vector. The solution of the linear dynamical system
    \begin{align*}
        \dot{\ybf}(t)=-W(\ybf(t)-\bbf)
    \end{align*}
    is given by
    \begin{align*}
        \ybf(t)=\exp(-W)\big(\ybf(0)-\bbf\big)=Q\exp(-t\cdot \Lambda)Q^{\top}\big(\ybf(0)-\bbf\big)+\bbf
    \end{align*}
    where $W=Q\Lambda Q^{\top}$ is any orthogonal eigendecomposition of $W$.
\end{lemma}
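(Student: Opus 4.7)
The plan is to reduce the affine linear system to a homogeneous one by translation, then solve it with the matrix exponential, and finally use the orthogonal diagonalization of $W$ to rewrite the exponential in a computable form.

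First I would introduce the change of variables $\zbf(t) := \ybf(t) - \bbf$. Since $\bbf$ is constant, $\dot{\zbf}(t) = \dot{\ybf}(t)$, so the dynamics become the homogeneous linear system $\dot{\zbf}(t) = -W\zbf(t)$ with initial condition $\zbf(0) = \ybf(0) - \bbf$. This is a standard constant-coefficient linear ODE, and a direct calculation using the series definition of the matrix exponential $\exp(-tW) := \sum_{k=0}^\infty \frac{(-tW)^k}{k!}$ shows that $\zbf(t) = \exp(-tW)\zbf(0)$ satisfies the equation (termwise differentiation of the series yields $\tfrac{d}{dt}\exp(-tW) = -W\exp(-tW)$, which is justified by uniform convergence on compact time intervals). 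Uniqueness of the solution follows from the Picard--Lindel\"{o}f theorem, since the vector field $\zbf \mapsto -W\zbf$ is globally Lipschitz. Undoing the translation yields $\ybf(t) = \exp(-tW)\big(\ybf(0)-\bbf\big) + \bbf$.

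Next I would exploit the symmetry of $W$. By the spectral theorem for real symmetric matrices, $W$ admits an orthogonal eigendecomposition $W = Q\Lambda Q^\top$, where $Q\in\R^{d,d}$ satisfies $QQ^\top = Q^\top Q = I$ and $\Lambda\in\R^{d,d}$ is diagonal. By induction $W^k = Q\Lambda^k Q^\top$ for every $k\in\N$, using that intermediate factors $Q^\top Q$ collapse to the identity. Plugging this into the matrix exponential series gives
\begin{equation*}
\exp(-tW) = \sum_{k=0}^\infty \frac{(-t)^k}{k!} Q\Lambda^k Q^\top = Q\bigg(\sum_{k=0}^\infty \frac{(-t\Lambda)^k}{k!}\bigg)Q^\top = Q\exp(-t\Lambda)Q^\top
\text{\,,}
\end{equation*}
where factoring $Q$ and $Q^\top$ out of the series is valid by linearity and convergence of the partial sums. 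Combining with the translated solution from the previous paragraph yields the claimed closed form.

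Since each of the above steps is entirely standard ODE/linear-algebra manipulation, I do not anticipate any significant obstacle. The only mildly technical point is justifying termwise differentiation of the matrix exponential series, which is routine from the uniform convergence of $\sum_k \tfrac{(-tW)^k}{k!}$ on every compact interval in~$t$.
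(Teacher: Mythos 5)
Your proposal is correct and follows essentially the same route as the paper's proof: the change of variables $\zbf(t)=\ybf(t)-\bbf$ to reduce to a homogeneous system, the matrix-exponential solution, and the orthogonal eigendecomposition to rewrite $\exp(-tW)$ as $Q\exp(-t\Lambda)Q^\top$. You supply more detail (termwise differentiation, uniqueness, the induction for $W^k$) than the paper, which simply asserts these standard facts.
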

\begin{proof}
    Using the change of variables $\zbf(t)=\ybf(t)-\bbf$, the given system simplifies to 
    \begin{align*}
        \dot{\zbf}(t)=-W\zbf(t)
    \end{align*}
    whose solution is given by
    \begin{align*}
        \zbf(t)=\exp(-t\cdot W)\zbf(0)
        \text{\,.}
    \end{align*}
    Reversing the change of variables and reorganizing yields
    \begin{align*}
        \ybf(t)=\exp(-t\cdot W)\big(\ybf(0)-\bbf\big)+\bbf
        \text{\,.}
    \end{align*}
    Let $W=Q\Lambda Q^{\top}$ be an orthogonal eigendecomposition of the symmetric $W$. Then we have by the definition of matrix exponential that
    \begin{align*}
        \ybf(t)=Q\exp(-t\cdot \Lambda)Q^{\top}\big(\ybf(0)-\bbf\big)+\bbf
    \end{align*}
    as required.
\end{proof}

\begin{lemma}\label{lemma:lipschitz_diffeomorphism}
    Let $\xbf_{0}\in\BR^{d}$. Let $\V_{1},\U_{1}\subseteq\BR^{d}$ be neighborhoods of $\xbf_{0}$. Let $H:\V_{1}\rightarrow\U_{1}$ be a $C^{3}$ diffeomorphism. There exists $r>0$ such that for any $r_{1}\in(0,r]$ there exist $r_{2}\in(0,r_{1})$ and $r_{3}>0$ for which
    \begin{itemize}
        \item 
        $H[\overline{B_{r_{2}}}(\mathbf{s})]\subseteq \overline{B_{r_{3}}}(\mathbf{s})\subseteq H[\overline{B_{r_{1}}}(\mathbf{s})]$.
        \item 
        $H|_{\overline{B_{r_{1}}}(\mathbf{s})}$ is Lipschitz.
        \item 
        $H^{-1}|_{\overline{B_{r_{3}}}(\mathbf{s})}$ is Lipschitz.
    \end{itemize}
\end{lemma}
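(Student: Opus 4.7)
The plan is to invoke compactness of closed balls together with the mean value inequality, using that a $C^3$ diffeomorphism has bounded derivative and bounded derivative of its inverse on any compact set contained in the respective domain. I am reading the statement as implicitly assuming $H(\xbf_0) = \xbf_0$ (equivalently, $\mathbf{s}$ stands for $\xbf_0$), since this is how the lemma is used in \cref{prop:restricted}, and without it the nesting $\overline{B_{r_3}}(\mathbf{s}) \subseteq H[\overline{B_{r_1}}(\mathbf{s})]$ need not hold for small $r_3$.

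First I would fix $r > 0$ small enough that $\overline{B_r}(\xbf_0) \subseteq \V_1$; this is possible because $\V_1$ is a neighborhood of $\xbf_0$. Now fix any $r_1 \in (0, r]$. The set $\overline{B_{r_1}}(\xbf_0)$ is compact and convex, and since $H \in C^3$ the map $DH$ is continuous, hence $M_1 := \sup_{\xbf \in \overline{B_{r_1}}(\xbf_0)} \|DH(\xbf)\|$ is finite. The mean value inequality applied along segments in the convex ball then yields that $H|_{\overline{B_{r_1}}(\xbf_0)}$ is $M_1$-Lipschitz, giving the second bullet.

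Next I would produce $r_3$. Since $H$ is a diffeomorphism it is an open mapping, so $H[B_{r_1}(\xbf_0)]$ is open and contains $H(\xbf_0) = \xbf_0$; pick any $r_3 > 0$ with $\overline{B_{r_3}}(\xbf_0) \subseteq H[B_{r_1}(\xbf_0)] \subseteq H[\overline{B_{r_1}}(\xbf_0)]$, yielding the right half of the inclusion in the first bullet. The ball $\overline{B_{r_3}}(\xbf_0)$ is compact and convex, and $H^{-1}$ is itself $C^3$, so $M_2 := \sup_{\ybf \in \overline{B_{r_3}}(\xbf_0)} \|DH^{-1}(\ybf)\|$ is finite; the same mean value argument then shows $H^{-1}|_{\overline{B_{r_3}}(\xbf_0)}$ is $M_2$-Lipschitz, giving the third bullet.

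Finally, to obtain $r_2 \in (0, r_1)$ with $H[\overline{B_{r_2}}(\xbf_0)] \subseteq \overline{B_{r_3}}(\xbf_0)$, I would use continuity of $H$ at $\xbf_0$ together with $H(\xbf_0) = \xbf_0$: choose $r_2 \in (0, r_1)$ so that $\|H(\xbf) - \xbf_0\| \leq r_3$ whenever $\|\xbf - \xbf_0\| \leq r_2$, which is possible by uniform continuity of $H$ on the compact set $\overline{B_{r_1}}(\xbf_0)$. This completes the left half of the inclusion in the first bullet. There is no serious obstacle in this lemma; the only subtlety is the implicit fixed-point assumption $H(\xbf_0) = \xbf_0$, and the bookkeeping to ensure the three radii $r_1, r_2, r_3$ are selected in the correct order (first $r_1$, then $r_3$ as a function of $r_1$, then $r_2$ as a function of both).
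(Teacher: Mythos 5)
Your proof is correct and follows essentially the same route as the paper's: restrict to a closed ball inside $\V_1$, use compactness and $C^3$ regularity to get the two Lipschitz bounds, and use continuity plus the open-mapping property of the diffeomorphism to nest the three balls (you select $r_3$ before $r_2$ while the paper does the reverse, but this is immaterial). Your reading of $\mathbf{s}$ as $\xbf_0$ with the fixed-point assumption $H(\xbf_0)=\xbf_0$ matches how the lemma is applied in \cref{prop:restricted}.
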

\begin{proof}
    $\V_{1}$ and $\U_{1}$ are neighborhoods of $\xbf_{0}$ and so there exist $r^{'},r^{''}>0$ for which $\overline{B_{r^{'}}}(\xbf_{0})\subseteq \V_{1}$ and $\overline{B_{r^{''}}}(\xbf_{0})=:\U_{2}\subseteq \U_{1}$. Hence, by $H$'s continuity there exists some small enough $r>0$ for which $\overline{B_{r}}(\xbf_{0})\subseteq \V_{1}$ is mapped by $H$ to $H[\overline{B_{r}}(\xbf_{0})]\subseteq\U_{2}$. Fix $r_{1}\in(0,r]$. Then it holds that $\overline{B_{r_{1}}}(\xbf_{0})$ satisfies
    \begin{align*}
        \overline{B_{r_{1}}}(\xbf_{0})\subseteq\overline{B_{r}}(\xbf_{0})\subseteq \V_{1}
    \end{align*}
    and is mapped by $H$ to 
    \begin{align*}
        H[\overline{B_{r_{1}}}(\xbf_{0})]\subseteq H[\overline{B_{r}}(\xbf_{0})]\subseteq\U_{2}
        \text{\,.}
    \end{align*}
    Since $\overline{B_{r_{1}}}(\xbf_{0})$ is a compact ball and since $H$ is $C^{3}$, we obtain that $H$ is Lipschitz over $\overline{B_{r_{1}}}(\xbf_{0})$, \ie~$H|_{\overline{B_{r_{1}}}(\xbf_{0})}$ is Lipschitz. Similarly, we obtain that $H^{-1}$ is Lipschitz over $\U_{2}$, \ie~$H^{-1}|_{\U_{2}}$ is Lipschitz. Therefore since $H[\overline{B_{r_{1}}}(\xbf_{0})]\subseteq\U_{2}$ we obtain that $H^{-1}|_{H[\overline{B_{r_{1}}}(\xbf_{0})]}$ is Lipschitz. Next, note that for any $r_{2}\in(0,r_{1})$ the compact ball $\overline{B_{r_{2}}}(\xbf_{0})$ satisfies $H[\overline{B_{r_{2}}}(\xbf_{0})]\subseteq H[\overline{B_{r_{1}}}(\xbf_{0})]$. Hence, by taking a small enough $r_{2}$ we can guarantee by $H$'s continuity that there exists some $r_{3}>0$ for which $\overline{B_{r_{3}}}(\xbf_{0})$ satisfies
    \begin{align*}
        H[\overline{B_{r_{2}}}(\xbf_{0})]\subseteq \overline{B_{r_{3}}}(\xbf_{0})\subseteq H[\overline{B_{r_{1}}}(\xbf_{0})]
        \text{\,.}
    \end{align*}
    Since $H^{-1}|_{H[\overline{B_{r_{1}}}(\xbf_{0})]}$ is Lipschitz and $\overline{B_{r_{3}}}(\xbf_{0})\subseteq H[\overline{B_{r_{1}}}(\xbf_{0})]$ we obtain that $H^{-1}|_{\overline{B_{r_{3}}}(\xbf_{0})}$ is Lipschitz. 
\end{proof}

\begin{lemma}\label{lemma:ODE_sol_diff_wrt_init}
    Let $f:\BR^{d}\rightarrow\BR^{d}$ be a vector field and let $B\subseteq\BR^{d}$ be a bounded and compact space. Suppose $f$ is $N$-Lipschitz within $B$ for some constant $N>0$. Consider the following system of ODEs:
    \begin{align*}
        \dot{\xbf}(t)=f\big(\xbf(t)\big)
        \text{\,.}
    \end{align*}
    Consider two initialization points $\xbf_{1}(0),\xbf_{2}(0)\in B$. Suppose we evolve $\xbf_{1}(t),\xbf_{2}(t)$ according to the above system. If for any $t\geq 0$ it holds that $\xbf_{1}(t),\xbf_{2}(t)\in B$, then
    \begin{align*}
        \|\xbf_{1}(0)-\xbf_{2}(0)\|_{2}\cdot \exp(-N\cdot t)\leq\|\xbf_{1}(t)-\xbf_{2}(t)\|_{2}\leq \|\xbf_{1}(0)-\xbf_{2}(0)\|_{2}\cdot \exp(N\cdot t)
        \text{\,.}
    \end{align*}
\end{lemma}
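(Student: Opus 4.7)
The plan is to track the squared distance $\phi(t)^2 := \|\xbf_1(t) - \xbf_2(t)\|_2^2$ between the two trajectories and derive a differential inequality for it via the Lipschitz hypothesis, then apply Grönwall's inequality to obtain both the upper and lower exponential bounds.

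First I would differentiate $\phi(t)^2$ with respect to $t$. Since both trajectories satisfy the ODE, the chain rule yields
\begin{equation*}
\frac{d}{dt} \phi(t)^2 = 2 \inprod{\xbf_1(t) - \xbf_2(t)}{f(\xbf_1(t)) - f(\xbf_2(t))}
\text{\,.}
\end{equation*}
Applying the Cauchy-Schwarz inequality together with the assumption that $f$ is $N$-Lipschitz on $B$ (and using the assumption $\xbf_1(t), \xbf_2(t) \in B$ for all $t \geq 0$), this inner product is bounded in absolute value by $N \phi(t)^2$. Hence
\begin{equation*}
-2 N \phi(t)^2 \;\leq\; \frac{d}{dt} \phi(t)^2 \;\leq\; 2 N \phi(t)^2
\text{\,.}
\end{equation*}

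Next I would invoke Grönwall's inequality on each side separately. The upper bound $\tfrac{d}{dt} \phi(t)^2 \leq 2 N \phi(t)^2$ yields $\phi(t)^2 \leq \phi(0)^2 \exp(2 N t)$; the lower bound is obtained analogously by applying Grönwall to $\tfrac{d}{dt}(-\phi(t)^2) \leq 2N\phi(t)^2$, or equivalently by comparing with the ODE $\dot{\psi} = -2N\psi$. Taking square roots on both sides produces the claimed bounds
\begin{equation*}
\|\xbf_1(0) - \xbf_2(0)\|_2 \cdot \exp(-N t) \;\leq\; \|\xbf_1(t) - \xbf_2(t)\|_2 \;\leq\; \|\xbf_1(0) - \xbf_2(0)\|_2 \cdot \exp(N t)
\text{\,.}
\end{equation*}

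I do not expect any real obstacle: the argument is a standard Grönwall-type stability estimate for ODEs with Lipschitz vector fields, and the compactness plus invariance hypothesis ($\xbf_1(t), \xbf_2(t) \in B$ for all $t \geq 0$) is used only to ensure that the Lipschitz constant $N$ is valid along the entire trajectories. The only minor care needed is in the lower bound when $\phi(0) = 0$: there both sides are zero and the inequality holds trivially, and otherwise one can divide by $\phi(t)^2$ (which stays positive by uniqueness of ODE solutions, since $\phi(0) > 0$ implies $\phi(t) > 0$ for all $t \geq 0$) to obtain $\tfrac{d}{dt} \ln \phi(t)^2 \geq -2N$, integrate, and exponentiate.
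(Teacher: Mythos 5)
Your proposal is correct, and for the upper bound it coincides with the paper's argument (differentiate the distance, apply Cauchy--Schwarz and the Lipschitz bound, invoke Gr\"onwall). The two proofs diverge in two respects. First, you work with the squared distance $\phi(t)^2$, whereas the paper differentiates $\|\xbf_1(t)-\xbf_2(t)\|_2$ itself and divides by it in the chain-rule computation; your version is slightly cleaner since it avoids the non-differentiability of the norm when the trajectories coincide. Second, and more substantively, for the lower bound the paper uses a time-reversal trick: it defines the reversed vector field $-f$, initializes the reversed system at $\xbf_i(t)$, observes that running it for time $t$ recovers $\xbf_i(0)$, and applies the already-proved upper bound to the reversed flow. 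You instead extract the lower bound directly from the two-sided differential inequality $-2N\phi^2 \leq \tfrac{d}{dt}\phi^2 \leq 2N\phi^2$. Both routes are valid; yours is more self-contained (one inequality yields both directions), while the paper's reversal argument avoids any discussion of positivity of $\phi$. On that last point, note that your appeal to backward uniqueness to guarantee $\phi(t)>0$ can be dispensed with entirely: the inequality $\tfrac{d}{dt}\bigl(\phi(t)^2 e^{2Nt}\bigr) = e^{2Nt}\bigl(\tfrac{d}{dt}\phi(t)^2 + 2N\phi(t)^2\bigr) \geq 0$ gives $\phi(t)^2 e^{2Nt} \geq \phi(0)^2$ with no division and no case split.
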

\begin{proof}
    Let $t\geq 0$. Applying the chain rule, we obtain the following:
    \begin{align*}
        \frac{d}{dt}\|\xbf_{1}(t)-\xbf_{2}(t)\|_{2}&=\frac{d}{dt}\sqrt{\bigl(\xbf_{1}(t)-\xbf_{2}(t)\bigl)^{\top}\bigl(\xbf_{1}(t)-\xbf_{2}(t)\bigl)}\\
        &=\frac{1}{2\|\xbf_{1}(t)-\xbf_{2}(t)\|_{2}}\frac{d}{dt}\bigl(\xbf_{1}(t)-\xbf_{2}(t)\bigl)^{\top}\bigl(\xbf_{1}(t)-\xbf_{2}(t)\bigl)\\
        &=\frac{2}{2\|\xbf_{1}(t)-\xbf_{2}(t)\|_{2}}\cdot \bigl(\xbf_{1}(t)-\xbf_{2}(t)\bigl)^{\top}\frac{d}{dt}\bigl(\xbf_{1}(t)-\xbf_{2}(t)\bigl)\\
        &=\frac{1}{\|\xbf_{1}(t)-\xbf_{2}(t)\|_{2}}\cdot \bigl(\xbf_{1}(t)-\xbf_{2}(t)\bigl)^{\top}\bigg(f\big(\xbf_{1}(t)\big)-f\big(\xbf_{2}(t)\big)\bigg)
        \text{\,.}
    \end{align*}
    Thus, applying the Cauchy Schwarz inequality we obtain
    \begin{align*}
        \frac{d}{dt}\|\xbf_{1}(t)-\xbf_{2}(t)\|_{2}&\leq \frac{1}{\|\xbf_{1}(t)-\xbf_{2}(t)\|_{2}}\cdot \|\xbf_{1}(t)-\xbf_{2}(t)\|_{2}\cdot \|f\big(\xbf_{1}(t)\big)-f\big(\xbf_{2}(t)\big)\|_{2}\\
        &=\|f\big(\xbf_{1}(t)\big)-f\big(\xbf_{2}(t)\big)\|_{2}
        \text{\,.}
    \end{align*}
    $f$ is $N$-Lipschitz within $B$, and so since $\xbf_{1}(t),\xbf_{2}(t)\in B$ we obtain
    \begin{align*}
        \frac{d}{dt}\|\xbf_{1}(t)-\xbf_{2}(t)\|_{2}\leq\|f\big(\xbf_{1}(t)\big)-f\big(\xbf_{2}(t)\big)\|_{2}\leq N\cdot \|\xbf_{1}(t)-\xbf_{2}(t)\|_{2}
        \text{\,.}
    \end{align*}
    Finally, plugging Grönwall’s inequality \citep{c680dbbe-119c-31e0-a97a-d790b679674f} results in
    \begin{align*}
        \|\xbf_{1}(t)-\xbf_{2}(t)\|_{2}\leq\|\xbf_{1}(0)-\xbf_{2}(0)\|_{2}\cdot \exp(N\cdot t)
        \text{\,.}
    \end{align*}
    Next, consider the following system of ODEs which we coin the \textit{reversal of $f$}:
    \begin{align*}
        \dot{\overline{\xbf}}(t)=-f\big(\overline{\xbf}(t)\big)
        \text{\,.}
    \end{align*}
    Consider the initialization points $\overline{\xbf_{1}}(0)=\xbf_{1}(t)$ and $\overline{\xbf_{2}}(0)=\xbf_{2}(t)$. Suppose we evolve $\overline{\xbf_{1}}(t), \overline{\xbf_{2}}(t)$ according to the reversal of $f$. Then it holds that for any time $\overline{t}\in[0,t]$ and any $i\in[2]$ we have
    \begin{align*}
        \overline{\xbf_{i}}(\overline{t})=\xbf_{i}(t-\overline{t})
    \end{align*}
    hence $\overline{\xbf_{i}}(\overline{t})\in B$. As Lipschitz continuity is invariant to sign, $-f$ is $N$-Lipschitz within $B$. Therefore, we can apply the above claim on the reversal of $f$, and obtain that
    \begin{align*}
        \|\xbf_{1}(0)-\xbf_{2}(0)\|_{2}&=\|\overline{\xbf_{1}}(t)-\overline{\xbf_{2}}(t)\|_{2}\\
        &\leq\|\overline{\xbf_{1}}(0)-\overline{\xbf_{2}}(0)\|_{2}\cdot \exp(N\cdot t)\\
        &=\|\xbf_{1}(t)-\xbf_{2}(t)\|_{2}\cdot \exp(N\cdot t)
        \text{\,.}
    \end{align*}
    The proof concludes by rearranging of the left and right hand side. 
\end{proof}

\begin{lemma}\label{lemma:non-crossing_ODE_sol}
    Let $f:\BR^{d}\rightarrow\BR^{d}$ be a Lipschitz continuous vector field. Consider the following system of ODEs:
    \begin{align*}
        \dot{\xbf}(t)=f\big(\xbf(t)\big)
        \text{\,.}
    \end{align*}
    Consider two initialization points $\xbf_{1}(0),\xbf_{2}(0)\in \BR^{d}$. Suppose we evolve $\xbf_{1}(t),\xbf_{2}(t)$ according to the above system. If $\xbf_{1}(0)\neq \xbf_{2}(0)$ then for any $t\in \BR$ it holds that $\xbf_{1}(t)\neq \xbf_{2}(t)$.
\end{lemma}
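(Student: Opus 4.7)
The plan is to argue by contrapositive using a Grönwall-type estimate, mirroring the technique of Lemma \ref{lemma:ODE_sol_diff_wrt_init} but now exploiting that $f$ is \emph{globally} Lipschitz on all of $\BR^d$, so no compactness assumption is needed. Let $N>0$ be a Lipschitz constant for $f$. For $t \geq 0$, I would repeat verbatim the computation from the proof of Lemma \ref{lemma:ODE_sol_diff_wrt_init}: differentiating $\|\xbf_1(t)-\xbf_2(t)\|_2^2$, applying the chain rule, invoking Cauchy--Schwarz, and using the global Lipschitz bound on $f$, yields
\begin{equation*}
\frac{d}{dt}\|\xbf_1(t)-\xbf_2(t)\|_2 \;\leq\; N\cdot \|\xbf_1(t)-\xbf_2(t)\|_2
\end{equation*}
and, by Grönwall's inequality,
\begin{equation*}
\|\xbf_1(t)-\xbf_2(t)\|_2 \;\geq\; \|\xbf_1(0)-\xbf_2(0)\|_2 \cdot \exp(-N\cdot t) \;>\; 0
\end{equation*}
for every $t \geq 0$, where strict positivity follows from the assumption $\xbf_1(0)\neq\xbf_2(0)$.

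For $t < 0$, I would use the reversal trick already employed in the proof of Lemma \ref{lemma:ODE_sol_diff_wrt_init}: define $\overline{\xbf}_i(s) := \xbf_i(-s)$ for $i\in[2]$, which satisfies the ODE $\dot{\overline{\xbf}}(s) = -f(\overline{\xbf}(s))$. Since $-f$ is also globally $N$-Lipschitz, the same Grönwall argument applied to $\overline{\xbf}_1$ and $\overline{\xbf}_2$ gives $\|\overline{\xbf}_1(s)-\overline{\xbf}_2(s)\|_2 > 0$ for every $s\geq 0$, i.e.\ $\|\xbf_1(t)-\xbf_2(t)\|_2 > 0$ for every $t \leq 0$. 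Combining the two ranges yields the conclusion for all $t\in\BR$.

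There is essentially no obstacle: since $f$ is Lipschitz on all of $\BR^d$, global existence of the trajectories $\xbf_i(\cdot)$ on $\BR$ is guaranteed by Picard--Lindel\"of, and the Grönwall step does not require confining the trajectories to a compact set as was the case in Lemma \ref{lemma:ODE_sol_diff_wrt_init}. An equally short alternative would be to invoke the uniqueness clause of Picard--Lindel\"of directly---if $\xbf_1(t^\ast)=\xbf_2(t^\ast)$ at some time $t^\ast$, then both curves solve the same initial value problem at $t^\ast$ and must therefore coincide for all time, contradicting $\xbf_1(0)\neq\xbf_2(0)$---but I prefer the Grönwall route since it parallels the style of the preceding lemma and reuses machinery already developed in the paper.
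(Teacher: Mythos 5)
Your proposal is correct, but it takes a more explicit route than the paper, whose entire proof is a one-line appeal to the uniqueness clause of Picard--Lindel\"of: if $\xbf_1(t^*)=\xbf_2(t^*)$ for some $t^*$, both curves solve the same initial value problem posed at $t^*$ and hence coincide everywhere, contradicting $\xbf_1(0)\neq\xbf_2(0)$. That is exactly the "equally short alternative" you mention and then set aside. Your preferred Gr\"onwall argument is essentially the standard proof of that uniqueness statement, so the two are the same idea at different levels of resolution; yours buys a quantitative lower bound $\|\xbf_1(t)-\xbf_2(t)\|_2\geq\|\xbf_1(0)-\xbf_2(0)\|_2 e^{-N|t|}$ rather than mere non-coincidence, and it reuses \cref{lemma:ODE_sol_diff_wrt_init} as you intend. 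Two small points of care: first, the step where you write that the differential inequality $\tfrac{d}{dt}\|\xbf_1-\xbf_2\|_2\leq N\|\xbf_1-\xbf_2\|_2$ "yields, by Gr\"onwall," the \emph{lower} exponential bound is stated backwards --- the lower bound comes either from the reversed-sign inequality $\tfrac{d}{dt}\|\xbf_1-\xbf_2\|_2\geq -N\|\xbf_1-\xbf_2\|_2$ or from the time-reversal trick, both of which appear in the proof of \cref{lemma:ODE_sol_diff_wrt_init}, so this is only an expository slip. Second, the norm is differentiated after dividing by $\|\xbf_1(t)-\xbf_2(t)\|_2$, which presupposes the trajectories have not yet met; the clean way to close this is to argue up to the first meeting time $t^*$ and derive a contradiction from continuity, or to work with the squared norm. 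Neither issue is a genuine gap.
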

\begin{proof}
    The argument follows from the Picard–Lindelöf existence and uniqueness theorem, which states that for a given initialization $\xbf(0)$, there exists a unique solution $\xbf(t)$ to the ODE system
    \begin{align*}
        \dot{\xbf}(t)=f\big(\xbf(t)\big)
        \text{\,.}
    \end{align*}
\end{proof}

\begin{lemma}\label{lemma:W_1_dist_coor_diff}
    Let $\xbf\in\BR^{d}$. Denote $\xbf$'s representation with the orthogonal subspaces $\W_{1}$ and $\W_{2}$ to be
    \begin{align*}
        \xbf=\beta_{1}\cdot\1+\beta_{2}\cdot\vbf
    \end{align*} 
    where $\vbf\in\W_{2}$ is a unit vector. There exist $i,j\in[d]$ such that 
    \begin{align*}
        |x_{i}-x_{j}|\geq \frac{|\beta_{2}|}{d}
        \text{\,.}
    \end{align*}
\end{lemma}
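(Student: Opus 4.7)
The plan is to reduce the statement to an elementary bound on the spread of the coordinates of a unit vector in $\W_{2}$. First, I would write out entry-wise the decomposition $\xbf = \beta_{1}\cdot \1 + \beta_{2}\cdot \vbf$: denoting $\vbf=(v_{1},\dots,v_{d})^{\top}$, for each $k\in[d]$ we have $x_{k}=\beta_{1}+\beta_{2} v_{k}$, so for any pair of indices $i,j\in[d]$ it holds that $x_{i}-x_{j}=\beta_{2}(v_{i}-v_{j})$. Thus the lemma reduces to exhibiting indices $i,j$ with $|v_{i}-v_{j}|\geq 1/d$.

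Next, I would use the two constraints on $\vbf$. Orthogonality to $\1$, which follows from $\vbf\in\W_{2}$ and the orthogonality $\W_{1}\perp\W_{2}$ established in \cref{def:W}, gives $\sum_{k=1}^{d} v_{k}=0$. Unit length gives $\sum_{k=1}^{d}v_{k}^{2}=1$. The first constraint implies that $v_{\max}:=\max_{k}v_{k}\geq 0$ and $v_{\min}:=\min_{k}v_{k}\leq 0$. The second constraint combined with the pigeonhole-type bound $\max_{k}v_{k}^{2}\geq \tfrac{1}{d}\sum_{k}v_{k}^{2}$ gives $\max_{k}|v_{k}|\geq 1/\sqrt{d}$.

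Now I would split into two cases based on which of $v_{\max}$, $-v_{\min}$ realizes $\max_{k}|v_{k}|$. In either case, since the other is nonnegative, I obtain $v_{\max}-v_{\min}\geq 1/\sqrt{d}$. Choosing $i,j\in[d]$ with $v_{i}=v_{\max}$ and $v_{j}=v_{\min}$ therefore yields
\begin{equation*}
|x_{i}-x_{j}| = |\beta_{2}|\cdot|v_{i}-v_{j}| \geq \frac{|\beta_{2}|}{\sqrt{d}} \geq \frac{|\beta_{2}|}{d}
\text{\,,}
\end{equation*}
which is the desired bound. There is no real obstacle here---the proof is a short calculation relying only on orthogonality of $\W_{1}$ and $\W_{2}$ plus the standard $\ell_{\infty}$-vs-$\ell_{2}$ inequality on $\BR^{d}$---so the main thing to verify carefully is just that $\vbf\in\W_{2}$ indeed forces $\sum_{k}v_{k}=0$, which is immediate from the definition of $\W_{2}$ as $\Span\{\ebf_{1}-\ebf_{j}\}_{j=2}^{d}$.
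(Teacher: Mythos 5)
Your proof is correct, and it takes a genuinely different—and cleaner—route than the paper's. The paper expands $\vbf$ in the (non-orthogonal) spanning set $\{\ebf_{1}-\ebf_{k}\}_{k=2}^{d}$ of $\W_{2}$, writes the unit-norm condition as $1=\big(\sum_{k=2}^{d}\lambda_{k}\big)^{2}+\sum_{k=2}^{d}\lambda_{k}^{2}$, applies Cauchy--Schwarz to get $\sum_{k}\lambda_{k}^{2}\geq 1/d$, and then does a sign case analysis on the coefficients $\lambda_{k}$ to exhibit a pair of entries of $\vbf$ differing by at least $1/\sqrt{d(d-1)}$. You instead work directly with the entries $v_{k}$ and use only the two intrinsic constraints $\sum_{k}v_{k}=0$ (orthogonality of $\W_{1}$ and $\W_{2}$, which the paper does establish right after \cref{def:W}) and $\sum_{k}v_{k}^{2}=1$; the first forces $\max_{k}v_{k}\geq 0\geq\min_{k}v_{k}$, the second gives $\max_{k}|v_{k}|\geq 1/\sqrt{d}$, and together they give $\max_{k}v_{k}-\min_{k}v_{k}\geq 1/\sqrt{d}$. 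Your argument avoids both the coordinate bookkeeping in the $\{\ebf_{1}-\ebf_{k}\}$ basis and the sign case split, and it yields the marginally sharper constant $1/\sqrt{d}$ versus the paper's $1/\sqrt{d(d-1)}$; both comfortably dominate the stated $1/d$. The one point you flag for verification—that $\vbf\in\W_{2}$ forces $\sum_{k}v_{k}=0$—is indeed immediate, since $\1^{\top}(\ebf_{1}-\ebf_{j})=0$ for every $j$.
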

\begin{proof}
    Denote $\vbf$'s representation with the basis vectors of $\W_{2}$ to be
    \begin{align*}
        \vbf=\sum_{k=2}^{d}\lambda_{k}(\ebf_{1}-\ebf_{k})=\begin{pmatrix}
            \sum_{k=2}^{d}\lambda_{k}\\
            -\lambda_{2}\\
            \vdots\\
            -\lambda_{d}
        \end{pmatrix}
    \end{align*}
    where $\lambda_{2},\dots,\lambda_{d}\in \BR$. Denote $\Lambda:=(\lambda_{2},\dots,\lambda_{d})^{\top}\in\BR^{d-1}$. As $\vbf$ is a unit vector, it holds that
    \begin{align*}
        1=\|\vbf\|_{2}^{2}=\bigg(\sum_{k=2}^{d}\lambda_{k}\bigg)^{2}+\sum_{k=2}^{d}(-\lambda_{k})^{2}=\bigg(\sum_{k=2}^{d}\lambda_{k}\bigg)^{2}+\sum_{k=2}^{d}\lambda_{k}^{2}
        \text{\,.}
    \end{align*}
    Hence, 
    \begin{align*}
        1-\bigg(\sum_{k=2}^{d}\lambda_{k}\bigg)^{2}=\sum_{k=2}^{d}\lambda_{k}^{2}
        \text{\,.}
    \end{align*}
    Applying the Cauchy-Schwartz inequality we obtain the following:
    \begin{align*}
        \bigg(\sum_{k=2}^{d}\lambda_{k}\bigg)^{2}=\bigg(\sum_{k=2}^{d}1\cdot \lambda_{k}\bigg)^{2}=\langle \1,\Lambda\rangle\leq \|\1\|_{2}^{2}\cdot \|\Lambda\|^{2}=(d-1)\sum_{k=2}^{d}\lambda_{k}^{2}
        \text{\,.}
    \end{align*}
    Therefore, we obtain that
    \begin{align*}
        \sum_{k=2}^{d}\lambda_{k}^{2}=1-\bigg(\sum_{k=2}^{d}\lambda_{k}\bigg)^{2}\geq 1-(d-1)\sum_{k=2}^{d}\lambda_{k}^{2}\implies d\sum_{k=2}^{d}\lambda_{k}^{2}\geq 1\implies \sum_{k=2}^{d}\lambda_{k}^{2}\geq \frac{1}{d}
        \text{\,.}
    \end{align*}
    Therefore, there exists $i^{*}\in\lbrace2,\dots, d\rbrace$ for which $\lambda_{i^{*}}^{2}\geq \frac{1}{d(d-1)}$ and thus $|\lambda_{i^{*}}|\geq \frac{1}{\sqrt{d(d-1)}}$. If there exists $j\in\lbrace2,\dots,d\rbrace$ for which $\lambda_{j}$ has a distinct sign than $\lambda_{i^{*}}$, then it holds that
    \begin{align*}
        |v_{i^{*}}-v_{j}|=|-\lambda_{i^{*}}+\lambda_{j}|\geq |\lambda_{i^{*}}-0|\geq \frac{1}{\sqrt{d(d-1)}}
        \text{\,.}
    \end{align*}
    Otherwise, all entries of $\Lambda$ share the same sign, and so it holds that for any $j\in\lbrace2,\dots,d\rbrace$
    \begin{align*}
        |v_{1}-v_{j}|=|\sum_{k=2}^{d}\lambda_{k}-(-\lambda_{j})|=\sum_{k=2}^{d}|\lambda_{k}|+|\lambda_{j}|\geq |\lambda_{i^{*}}|\geq \frac{1}{\sqrt{d(d-1)}}
        \text{\,.}
    \end{align*}
    Therefore, there must exist $i,j\in[d]$ for which $|v_{i}-v_{j}|\geq \frac{1}{\sqrt{d(d-1)}}$, resulting in the following:
    \begin{align*}
        |x_{i}-x_{j}|=|\beta_{1}+\beta_{2}\cdot v_{i}-\beta_{1}-\beta_{2}\cdot v_{j}|=|\beta_{2}|\cdot |v_{i}-v_{j}|\geq \frac{|\beta_{2}|}{\sqrt{d(d-1)}}
        \text{\,.}
    \end{align*}
\end{proof}

\begin{proposition}\label{lemma:elementary_ineq_1}
	Let $x,y\in[-s,s]$ for some $s>0$ such that $|x-y|\geq b$ for some $b>0$. Then for any $k\in\mathbb{N}_{odd}$ it holds that $|x^k-y^k|\geq (\frac{b}{2})^k$. Additionally, if $y\neq 0$ then $|1-\frac{x^k}{y^k}|\geq (\frac{b}{2s})^k$.
\end{proposition}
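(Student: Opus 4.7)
The plan is to split into two cases based on the signs of $x$ and $y$. The key observation enabling this split is that $k$ is odd, so the map $t \mapsto t^k$ preserves sign.

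\textbf{Case 1 (opposite signs).} Suppose $x \geq 0 \geq y$ (the symmetric case is identical after negating both variables, which leaves $|x^k - y^k|$ invariant for odd $k$). Then $y^k = -|y|^k$, so $|x^k - y^k| = x^k + |y|^k$, and the hypothesis $|x - y| = x + |y| \geq b$ forces $\max\{x, |y|\} \geq b/2$. Hence $x^k + |y|^k \geq (b/2)^k$.

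\textbf{Case 2 (same sign).} Suppose WLOG $x \geq y \geq 0$ and $x - y \geq b$; the case $x \leq y \leq 0$ reduces to this by replacing $(x, y)$ with $(-y, -x)$ and using oddness of $k$. I will invoke the binomial identity
\begin{equation*}
x^k = \bigl((x-y) + y\bigr)^k = \sum\nolimits_{j=0}^{k} \binom{k}{j} (x-y)^j y^{k-j} \geq (x-y)^k + y^k
\text{\,,}
\end{equation*}
where the inequality follows because every term in the expansion is non-negative (both $x-y \geq 0$ and $y \geq 0$), and we retain only the $j = k$ and $j = 0$ terms. Rearranging gives $|x^k - y^k| \geq (x-y)^k \geq b^k \geq (b/2)^k$.

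\textbf{Second inequality.} Assuming $y \neq 0$, divide the bound $|x^k - y^k| \geq (b/2)^k$ by $|y|^k$. Since $y \in [-s, s]$, we have $|y|^k \leq s^k$, so
\begin{equation*}
\left| 1 - \frac{x^k}{y^k} \right| = \frac{|y^k - x^k|}{|y|^k} \geq \frac{(b/2)^k}{s^k} = \left(\frac{b}{2s}\right)^k
\text{\,.}
\end{equation*}

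There is no real obstacle here — the result is entirely elementary. The only subtlety worth noting is that the two cases yield different bounds ($b^k$ in the same-sign case versus $(b/2)^k$ in the opposite-sign case), and the weaker $(b/2)^k$ is tight: take $x = b/2$, $y = -b/2$, for which $|x^k - y^k| = 2(b/2)^k$, showing the factor of $1/2$ cannot be avoided without further assumptions.
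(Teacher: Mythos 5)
Your proof is correct and follows essentially the same route as the paper's: split on whether $x$ and $y$ have opposite or equal signs, use $\max\{|x|,|y|\}\geq b/2$ in the first case and the binomial bound $x^k\geq (x-y)^k+y^k$ in the second, then divide by $|y|^k\leq s^k$ for the ratio inequality. The only difference is that you spell out the binomial expansion that the paper leaves implicit in the step $(y+b)^k-y^k\geq b^k$.
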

\begin{proof}
	Suppose WLOG that $x\geq y$.
	By the triangle inequality it holds that $\max\{|x|,|y|\}\geq \frac{b}{2}$. If $ x\geq 0\geq y $, then since $k\in\mathbb{N}_{odd}$ it holds that
	\begin{align*}
	    |x^k-y^k|=|x|^k+|y|^k\geq (\frac{b}{2})^k
        \text{\,.}
	\end{align*}
	Now suppose that $x, y\geq 0$. Then we have that
	\begin{align*}
	    |x^k-y^k|=x^k-y^k\geq (y+b)^k-y^k\geq b^k\geq (\frac{b}{2})^k
        \text{\,.}
    \end{align*}
	The case  of $x,y\leq 0$ is identical. As for the second inequality, If $y\neq 0$, we get
	\begin{align*}
		|1-\frac{x^k}{y^k}|=|\frac{y^k-x^k}{y^k}|=\frac{|x^k-y^k|}{|y^k|}\geq \frac{(\frac{b}{2})^k}{|y^k|}
        \text{\,.}
	\end{align*}
	Since $y\in[-s,s]$ we get that $|y^k|\leq s^k$ and so
	\begin{align*}
	    \frac{(\frac{b}{2})^k}{|y^k|}\geq (\frac{b}{2s})^k
    \text{\,.}
    \end{align*}
\end{proof}

\begin{proposition}\label{lemma:elementary_ineq_2}
	Let $x,y,z\in\mathbb{R}$ such that $y,x.z\neq 0$, and let \(\widehat{b}>0\). If $|1-\frac{x}{y}|\geq\widehat{b}$ then either $|1-\frac{x}{z}|\geq \frac{\widehat{b}}{\widehat{b}+2}$ or $|1-\frac{y}{z}|\geq \frac{\widehat{b}}{\widehat{b}+2}$.
\end{proposition}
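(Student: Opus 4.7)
The plan is to proceed by contradiction. Assume both
\(|1-x/z| < \widehat{b}/(\widehat{b}+2)\) and \(|1-y/z| < \widehat{b}/(\widehat{b}+2)\)
and derive that \(|1-x/y| < \widehat{b}\), contradicting the hypothesis. For convenience, set \(\epsilon := \widehat{b}/(\widehat{b}+2)\). Since \(\widehat{b}>0\), we have \(\epsilon \in (0,1)\), so in particular \(1-\epsilon > 0\).

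First I would observe that the two contradictory assumptions give
\(x/z,\, y/z \in (1-\epsilon,\, 1+\epsilon)\), and since \(1-\epsilon > 0\) both ratios are strictly positive. In particular \(y/z \neq 0\), which together with \(y,z \neq 0\) legitimizes the manipulation \(x/y = (x/z)/(y/z)\). Bounding a positive-valued ratio of numbers in \((1-\epsilon, 1+\epsilon)\) then yields
\begin{equation*}
\frac{1-\epsilon}{1+\epsilon} \;<\; \frac{x}{y} \;<\; \frac{1+\epsilon}{1-\epsilon}
\text{\,,}
\end{equation*}
which rearranges to
\begin{equation*}
-\frac{2\epsilon}{1-\epsilon} \;<\; 1-\frac{x}{y} \;<\; \frac{2\epsilon}{1+\epsilon}
\text{\,.}
\end{equation*}
Since \(\epsilon>0\) we have \(2\epsilon/(1+\epsilon) < 2\epsilon/(1-\epsilon)\), and therefore \(|1-x/y| < 2\epsilon/(1-\epsilon)\).

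The final step is the elementary identity
\begin{equation*}
\frac{2\epsilon}{1-\epsilon} \;=\; \frac{2 \cdot \widehat{b}/(\widehat{b}+2)}{2/(\widehat{b}+2)} \;=\; \widehat{b}
\text{\,,}
\end{equation*}
which uses only that \(1-\epsilon = 2/(\widehat{b}+2)\). Plugging this in gives \(|1-x/y| < \widehat{b}\), in direct contradiction with the hypothesis \(|1-x/y| \geq \widehat{b}\). Hence at least one of \(|1-x/z|\) or \(|1-y/z|\) must be at least \(\widehat{b}/(\widehat{b}+2)\), as desired.

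There is no real obstacle here; the only thing that requires care is ensuring that \(1-\epsilon > 0\) so that the interval \((1-\epsilon,1+\epsilon)\) consists of strictly positive numbers (making the ratio \((x/z)/(y/z)\) well-behaved and allowing the range to be read off directly as \(((1-\epsilon)/(1+\epsilon),\,(1+\epsilon)/(1-\epsilon))\) rather than requiring a separate sign analysis). This is automatic from \(\widehat{b}>0\).
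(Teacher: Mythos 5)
Your proof is correct and follows essentially the same route as the paper's: argue by contradiction, deduce that $x/z$ and $y/z$ both lie in the positive interval $\bigl(1-\tfrac{\widehat{b}}{\widehat{b}+2},\,1+\tfrac{\widehat{b}}{\widehat{b}+2}\bigr)$, bound the quotient $x/y$ between $\tfrac{1-\epsilon}{1+\epsilon}$ and $\tfrac{1+\epsilon}{1-\epsilon}$, and simplify to get $|1-x/y|<\widehat{b}$. The only difference is cosmetic (you compute $\tfrac{2\epsilon}{1-\epsilon}=\widehat{b}$ directly, while the paper notes $\tfrac{1+\epsilon}{1-\epsilon}=\widehat{b}+1$), so no further comment is needed.
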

\begin{proof}
    Assume to the contrary that both $|1-\frac{x}{z}|<\frac{\widehat{b}}{\widehat{b}+2}$ and $|1-\frac{y}{z}|<\frac{\widehat{b}}{\widehat{b}+2}$. This implies that
	\begin{align*}
		0< 1-\frac{\widehat{b}}{\widehat{b}+2}\leq \frac{x}{z},\frac{y}{z}< 1+\frac{\widehat{b}}{\widehat{b}+2}
        \text{\,.}
	\end{align*}
	Hence, we get that
	\begin{align*}
		\frac{1}{\widehat{b}+1}=\frac{2}{2\widehat{b}+2}=\frac{\widehat{b}+2-\widehat{b}}{\widehat{b}+2+\widehat{b}}=\frac{1-\frac{\widehat{b}}{\widehat{b}+2}}{1+\frac{\widehat{b}}{\widehat{b}+2}}<\frac{\frac{x}{z}}{\frac{y}{z}}=\frac{x}{y}<\frac{1+\frac{\widehat{b}}{\widehat{b}+2}}{1-\frac{\widehat{b}}{\widehat{b}+2}}=\frac{\widehat{b}+2+\widehat{b}}{\widehat{b}+2-\widehat{b}}=\frac{2\widehat{b}+2}{2}=\widehat{b}+1
        \text{\,.}
	\end{align*}
Rearranging we obtain
	\begin{align*}
        -\widehat{b}<\frac{-\widehat{b}}{\widehat{b}+1}=\frac{1}{\widehat{b}+1}-1<\frac{x}{y}-1<\widehat{b}
	\end{align*}
	which implies $|\frac{x}{y}-1|<\widehat{b}$ , contradicting our assumption. 
\end{proof}

\begin{lemma}\label{lemma:PL_improvement}
	Let $f:\BR^{d}\rightarrow\BR$ be a differentiable function. Consider the gradient flow dynamics induced by $f$, namely:
	\begin{align*}
		\dot{\xbf}(t)=-\nabla f\big(\xbf(t)\big)
	\end{align*}
	initialized at some \(\xbf_{0}\in \BR^{d}\). Let \(t_{1}<t_{2}\) be times such that 
	\begin{align*}
		\lbrace \xbf(t): \; t \in [t_{1},t_{2}] \rbrace \subseteq \bigg\lbrace \zbf\in\BR^{d}: \; f(z)\geq \min_{\ybf \in\BR^{d}}f(\ybf)+c \bigg\rbrace
	\end{align*}
	for some $c\geq 0$ and $f$ satisfies the PL condition with some coefficient \(\mu >0\) in $\lbrace \xbf(t): \; t \in [t_{1},t_{2}] \rbrace $. Then
	\begin{align*}
		f\bigl(\xbf(t_{1})\bigl)-f\bigl(\xbf(t_{2})\bigl)\geq 2(t_{2}-t_{1}) \cdot \mu\cdot c
        \text{\,.}
	\end{align*}
\end{lemma}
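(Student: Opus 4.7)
The plan is to exploit the chain rule, the gradient flow ODE, and the PL bound in a single integration step. First I would observe that along the gradient flow trajectory,
\begin{equation*}
\frac{d}{dt} f\bigl(\xbf(t)\bigr) = \nabla f\bigl(\xbf(t)\bigr)^{\top} \dot{\xbf}(t) = -\bigl\| \nabla f\bigl(\xbf(t)\bigr) \bigr\|_{2}^{2}
\end{equation*}
for every $t$, as was used already in \cref{lemma:gf_non_increasing,lemma:GF_converges_val}. So the change in the loss over $[t_{1},t_{2}]$ is just the time integral of the squared gradient norm.

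Next I would invoke the two hypotheses in sequence on the integrand. For each $t \in [t_{1},t_{2}]$, the PL condition (\cref{def:PL}) applied at $\xbf(t)$ gives $\|\nabla f(\xbf(t))\|_{2}^{2} \geq 2 \mu \bigl( f(\xbf(t)) - \min_{\ybf} f(\ybf) \bigr)$, and the sub-level set assumption gives $f(\xbf(t)) - \min_{\ybf} f(\ybf) \geq c$. Chaining these yields the pointwise lower bound $\|\nabla f(\xbf(t))\|_{2}^{2} \geq 2 \mu c$ throughout the interval.

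To close, I would integrate the identity from the first paragraph against this pointwise bound:
\begin{equation*}
f\bigl(\xbf(t_{1})\bigr) - f\bigl(\xbf(t_{2})\bigr) = \int_{t_{1}}^{t_{2}} \bigl\| \nabla f\bigl(\xbf(\tau)\bigr) \bigr\|_{2}^{2} \, d\tau \geq \int_{t_{1}}^{t_{2}} 2 \mu c \, d\tau = 2 (t_{2} - t_{1}) \cdot \mu \cdot c,
\end{equation*}
which is exactly the claimed inequality.

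There is no real obstacle here; the statement is essentially a ``PL on a sub-level set forces linear descent'' estimate, and the only care needed is to apply the hypotheses uniformly across the interval rather than pointwise only at the endpoints. In particular no compactness or Lipschitz assumption on $\nabla f$ is required, and the case $c=0$ gives the trivial bound $f(\xbf(t_{1})) \geq f(\xbf(t_{2}))$ recovered from \cref{lemma:gf_non_increasing}.
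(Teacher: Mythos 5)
Your proposal is correct and follows essentially the same route as the paper's proof: both express $f(\xbf(t_{1})) - f(\xbf(t_{2}))$ as $\int_{t_{1}}^{t_{2}} \|\nabla f(\xbf(\tau))\|_{2}^{2}\, d\tau$ via the chain rule and the gradient flow identity, then lower bound the integrand by $2\mu c$ using the PL condition together with the sub-level set hypothesis. Your write-up is just a more explicit spelling-out of the same one-line argument.
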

\begin{proof}
	By the fundamental theorem for line integrals we have
	\begin{align*}
		f\bigl(\xbf(t_{1})\bigl)-f\bigl(\xbf(t_{2})\bigl)=-\int_{t_{1}}^{t_{2}}\bigg\langle\nabla f\big(\xbf(\tau)\big),\dot{\xbf}(\tau)\bigg\rangle d\tau= \int_{t_{1}}^{t_{2}}\norm{\nabla f\big(\xbf(\tau)\big)}^{2}d\tau
	\end{align*}
	applying the PL condition and \cref{def:gf_deriv} we get the required result.
\end{proof}

\begin{lemma}\label{lemma:cont_of_min}
    Let $g:\BR^{d}\rightarrow\BR$ be a continuous function and let $r>0$ and let $B$ be a compact set such that $\0\in B$. For any $\psi\geq 0$ denote the minimum value of $g$ over $\diff(\psi)\cap B$ as
    \begin{align*}
        f(\psi):=\min_{\xbf\in \diff(\psi)\cap B} g(\xbf)
        \text{\,.}
    \end{align*}
    It holds that $f$ is right side continuous in $\psi=0$. 
\end{lemma}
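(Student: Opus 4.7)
The plan is the standard compactness argument for semicontinuity of a parametrized minimum. First I would establish basic facts about $f$: for every $\psi \geq 0$, the set $\diff(\psi) \cap B$ is non-empty (it contains $\0$, since $\0 \in B$ and the coordinates of $\0$ trivially satisfy $|0-0| \leq \psi$) and compact (intersection of the closed set $\diff(\psi)$ with the compact set $B$), so the minimum defining $f(\psi)$ is attained by continuity of $g$. Moreover, $\psi_1 \leq \psi_2$ implies $\diff(\psi_1) \subseteq \diff(\psi_2)$, so $f$ is non-increasing, which yields $f(\psi) \leq f(0)$ for every $\psi > 0$, and hence
\begin{equation*}
\limsup_{\psi \to 0^+} f(\psi) \leq f(0) \text{\,.}
\end{equation*}

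The content of the lemma is the reverse inequality, $\liminf_{\psi \to 0^+} f(\psi) \geq f(0)$. To prove it, I would fix an arbitrary sequence $\psi_n \to 0^+$ and, for each~$n$, choose a minimizer $\xbf_n \in \diff(\psi_n) \cap B$ with $g(\xbf_n) = f(\psi_n)$. Since $B$ is compact, some subsequence $\xbf_{n_k}$ converges to a limit $\xbf^* \in B$. The key step is verifying $\xbf^* \in \diff(0)$: for each pair of indices $i, j \in [d]$ we have $|x_{n_k, i} - x_{n_k, j}| \leq \psi_{n_k}$, and passing to the limit (using continuity of the coordinate functions together with $\psi_{n_k} \to 0$) gives $|x^*_i - x^*_j| \leq 0$, so $\xbf^* \in \diff(0) \cap B$.

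By continuity of $g$, $g(\xbf_{n_k}) \to g(\xbf^*)$, and since $\xbf^* \in \diff(0) \cap B$ we have $g(\xbf^*) \geq f(0)$. Therefore $\lim_{k \to \infty} f(\psi_{n_k}) = g(\xbf^*) \geq f(0)$. Because this holds for every sequence $\psi_n \to 0^+$ (and for some subsequence of any such sequence, which suffices for a monotone function), we obtain $\liminf_{\psi \to 0^+} f(\psi) \geq f(0)$, and combining with the earlier $\limsup$ bound proves $\lim_{\psi \to 0^+} f(\psi) = f(0)$, i.e.\ right-side continuity at zero. There is no real obstacle here; the only subtle point is confirming that limits of coordinatewise-close tuples land in $\diff(0)$, which follows immediately from taking limits in the defining inequalities.
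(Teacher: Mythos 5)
Your proposal is correct and follows essentially the same route as the paper's proof: compactness of $\diff(\psi)\cap B$, monotonicity of $f$, extraction of a convergent subsequence of minimizers, and verification that the limit point lies in $\diff(0)\cap B$ so that continuity of $g$ forces the limit of the minimal values to be at least $f(0)$. The only (cosmetic) difference is that you verify $\xbf^*\in\diff(0)$ by passing directly to the limit in the coordinate inequalities, whereas the paper argues this step by contradiction; both are valid.
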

\begin{proof}
    Recalling the definition of $\diff(\psi)$ (\cref{def:diff}), we have that for any $\psi\geq 0$ the set $\diff(\psi)\cap B$ is compact, thus $f(\psi)$ is properly defined for any $\psi\geq 0$ (as by continuity $g$ attains a minimum over the set). Next, note that $f$ is non-increasing since for any $\psi_{2}\geq \psi_{1}\geq 0$ it holds that
    \begin{align*}
        \diff(\psi_{1})\cap B\subseteq \diff(\psi_{2})\cap B\implies f(\psi_{1})=\min_{\xbf\in \diff(\psi_{1})\cap B} g(\xbf)\geq \min_{\xbf\in \diff(\psi_{2})\cap B} g(\xbf)=f(\psi_{2})
        \text{\,.}
    \end{align*}
    Let $\lbrace \psi_{n}\rbrace_{n=1}^{\infty}$ be a non-increasing sequence of non-negative reals for which
    \begin{align*}
        \lim_{n\rightarrow\infty}\psi_{n}=0
        \text{\,.}
    \end{align*}
    As $f$ is non-increasing, the sequence $\lbrace f(\psi_{n})\rbrace_{n=1}^{\infty}$ is monotonically non-decreasing and upper bounded by $f(0)$. Hence, the limit $R:=\lim_{n\rightarrow\infty} f(\psi_{n})$ exists and satisfies $R\leq f(0)$. For any $\psi\geq0$ we let $\xbf_{\psi}$ be a minimizer of $g$ over $\diff(\psi)\cap B$, \ie
    \begin{align*}
        \xbf_{\psi}\in \argmin_{\xbf\in \diff(\psi)\cap B} g(\xbf)
        \text{\,.}
    \end{align*}
    The set $B$ is compact and so the sequence $\lbrace \xbf_{\psi_{n}}\rbrace_{n=1}^{\infty}$ has a convergent subsequence $\lbrace \xbf_{\psi_{n_{k}}}\rbrace_{k=1}^{\infty}$. Denote its limit
    \begin{align*}
        \lim_{k\rightarrow\infty}\xbf_{\psi_{n_{k}}}=:\xbf^{*}\in \overline{B_{r}}(\0)
        \text{\,.}
    \end{align*}
    Assume on the contrary that $\xbf^{*}\not\in \diff(0)$. Hence by definition of $\diff(0)$ it holds that 
    \begin{align*}
        \widetilde{\psi}:=\max_{i,j\in[d]}|x_{i}^{*}-x_{j}^{*}|>0
        \text{\,.}
    \end{align*}
    However, since $\lim_{n\rightarrow\infty}\psi_{n}=0$ there exists $\widetilde{k}\in\BN$ such that for any $k\in\BN$ such that $k\geq \widetilde{k}$ it holds that $\psi_{n_{k}}\leq \frac{\widetilde{\psi}}{2}$ and thus
    \begin{align*}
        \xbf_{\psi_{n_{k}}}\in \diff(\psi_{n_{k}})\subseteq \diff(\frac{\widetilde{\psi}}{2})\land \xbf^{*}\notin \diff(\frac{\widetilde{\psi}}{2})
        \text{\,.}
    \end{align*}
    This results in $\lim_{k\rightarrow\infty}\xbf_{\psi_{n_{k}}}\neq \xbf^{*}$ in contradiction. Therefore we obtain $\xbf^{*}\in \diff(0)\cap B$. By $g$'s continuity and $f$'s definition we thus obtain the following
    \begin{align*}
        \lim_{k\rightarrow\infty}g(\xbf_{\psi_{n_{k}}})=g(\lim_{k\rightarrow\infty}\xbf_{\psi_{n_{k}}})=g(\xbf^{*})\geq f(0)
        \text{\,.}
    \end{align*}
    Per $\xbf_{\psi}$'s definition and since $\lim_{n\rightarrow\infty}f(\psi_{n})=R$ we also obtain
    \begin{align*}
        \lim_{k\rightarrow\infty}g(\xbf_{\psi_{n_{k}}})=\lim_{k\rightarrow\infty}f(\psi_{n_{k}})=R
        \text{\,,}
    \end{align*}
    \ie, $R\geq f(0)$. Overall we obtain that $R\leq f(0)\leq R$, hence $R=f(0)$. The above result is satisfied for any sequence $\lbrace \psi_{n}\rbrace_{n=1}^{\infty}$, hence $\lim_{\psi\rightarrow 0^{+}}f(\psi)=f(0)$ as required.
\end{proof}

\begin{lemma}\label{lemma:ODE_sol_diff_wrt_init_finer}
    Let $f:\BR^{d}\rightarrow \BR^{d}$ be a $C^{1}$ and Lipschitz vector field. Consider the system of ODEs given by
    \begin{align*}
        \dot{\xbf}(t)=f\big(\xbf(t)\big)
        \text{\,.}
    \end{align*}
    Consider two initialization points $\xbf_{1}(0),\xbf_{2}(0)\in \BR^{d}$. For any $t\geq 0$ we use $\lambda_{max}(t)$ to denote the maximum over the line segment between $\xbf_{1}(t)$ and $\xbf_{2}(t)$ of the maximal eigenvalue of the Jacobian of $f$, \ie
    \begin{align*}
        \lambda_{max}(t):=\max_{k\in[0,1]}\lambda_{max}\biggl(\nabla f\bigl(k\cdot \xbf_{1}(t)+(1-k)\cdot \xbf_{2}(t)\bigl)\biggl)
        \text{\,.}
    \end{align*}
    For any $t\geq 0$ it holds that
    \begin{align*}
        \|\xbf_{1}(t)-\xbf_{2}(t)\|_{2}\leq \exp\biggl(\int_{0}^{t}\lambda_{max}(\tau)d\tau\biggl)\cdot \|\xbf_{1}(0)-\xbf_{2}(0)\|_{2}
        \text{\,.}
    \end{align*}
\end{lemma}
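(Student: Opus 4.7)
The plan is to derive a differential inequality for $\|\xbf_1(t)-\xbf_2(t)\|_2^2$ by expressing $f(\xbf_1(t))-f(\xbf_2(t))$ as an integral of the Jacobian along the segment joining the two trajectories, and then applying Grönwall's inequality with a time-varying rate $2\lambda_{\max}(t)$. Structurally, this is identical to the proof of \cref{lemma:ODE_sol_diff_wrt_init}, but with the single global Lipschitz constant $N$ replaced by a pointwise-in-$t$ upper bound on the directional growth along the actual line segment.

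Concretely, setting $\Delta(t):=\xbf_1(t)-\xbf_2(t)$, I would first compute
\[
\tfrac{d}{dt}\|\Delta(t)\|_2^2 \;=\; 2\Delta(t)^\top\dot{\Delta}(t) \;=\; 2\Delta(t)^\top\big(f(\xbf_1(t))-f(\xbf_2(t))\big)
\text{\,.}
\]
Since $f$ is $C^1$, the fundamental theorem of calculus applied to $k\mapsto f\big(k\xbf_1(t)+(1-k)\xbf_2(t)\big)$ on $k\in[0,1]$ gives
\[
f(\xbf_1(t))-f(\xbf_2(t)) \;=\; \int_0^1\nabla f\big(k\xbf_1(t)+(1-k)\xbf_2(t)\big)\,\Delta(t)\,dk
\text{\,,}
\]
and combining the two displays yields
\[
\tfrac{d}{dt}\|\Delta(t)\|_2^2 \;=\; 2\int_0^1 \Delta(t)^\top\nabla f\big(k\xbf_1(t)+(1-k)\xbf_2(t)\big)\Delta(t)\,dk \;\leq\; 2\lambda_{\max}(t)\,\|\Delta(t)\|_2^2
\text{\,,}
\]
where the inequality bounds each Rayleigh quotient $\Delta^\top\nabla f(\ybf)\Delta/\|\Delta\|_2^2$ by $\lambda_{\max}(\nabla f(\ybf))$ and then takes the supremum over $k\in[0,1]$, which is exactly the definition of $\lambda_{\max}(t)$.

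Finally I would invoke Grönwall's inequality on the scalar function $u(t):=\|\Delta(t)\|_2^2$ satisfying $\dot{u}(t)\leq 2\lambda_{\max}(t)\,u(t)$ to obtain
\[
\|\Delta(t)\|_2^2 \;\leq\; \|\Delta(0)\|_2^2\cdot\exp\biggl(\int_0^t 2\lambda_{\max}(\tau)\,d\tau\biggr)
\text{\,,}
\]
after which the claim follows by taking square roots. The main subtlety—and the one place where care is needed—is the Rayleigh bound: for a general (not necessarily symmetric) matrix $M$ one only has $\Delta^\top M\Delta\leq\lambda_{\max}\big(\tfrac{M+M^\top}{2}\big)\|\Delta\|_2^2$, not $\lambda_{\max}(M)\|\Delta\|_2^2$. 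However, in the setting in which the lemma is actually applied (namely \cref{prop:bound_on_divergence}, where $f=-\nabla\ell$ and $\nabla f=-\nabla^2\ell$ is the negated Hessian of the training loss), $\nabla f$ is symmetric and so the Rayleigh bound does reduce to $\lambda_{\max}(\nabla f(\ybf))$ as stated. No sharp convergence-time estimate or non-resonance machinery is needed here—the argument is a routine refinement of the Grönwall-based proof of \cref{lemma:ODE_sol_diff_wrt_init}.
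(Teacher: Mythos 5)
Your proposal is correct and follows essentially the same route as the paper's proof: differentiate the (squared) distance, express $f(\xbf_1)-f(\xbf_2)$ via the Jacobian along the connecting segment, bound the resulting quadratic form by $\lambda_{\max}(t)$, and close with Grönwall. If anything your version is slightly more careful than the paper's—you use the integral form of the fundamental theorem of calculus rather than the vector-valued mean value theorem (which, as stated with a single $k$, does not literally hold for vector fields), and you explicitly flag that the Rayleigh-quotient bound requires symmetry of $\nabla f$, a point the paper glosses over but which holds in the intended application since $\nabla f=-\nabla^{2}\ell$.
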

\begin{proof}
    First note that since $f$ is Lipschitz, $\lambda_{max}(t)$ is defined for any $t\geq 0$. Next, if $\xbf_{1}(0)=\xbf_{2}(0)$ then the trajectories coincide and so the claim trivially follows. Suppose $\xbf_{1}(0)\neq \xbf_{2}(0)$. By definition, we have that
    \begin{align*}
        \frac{d}{dt}\big(\xbf_{1}(t)-\xbf_{2}(t)\big)=f\bigl(\xbf_{1}(t)\bigl)-f\bigl(\xbf_{2}(t)\bigl)
        \text{\,.}
    \end{align*}
    As $f$ is $C^{1}$, we obtain by the mean value theorem (see \cite{mean_value_thm}) that there exists $k\in[0,1]$ for which
    \begin{align*}
        f\bigl(\xbf_{1}(t)\bigl)-f\bigl(\xbf_{2}(t)\bigl)=\nabla f\bigl(k\cdot\xbf_{1}(t)+(1-k)\cdot\xbf_{2}(t)\bigl)\bigl(\xbf_{1}(t)-\xbf_{2}(t)\bigl)
        \text{\,.}
    \end{align*}
    Additionally, by the chain rule we also have
    \begin{align*}
        \frac{d}{dt}\|\xbf_{1}(t)-\xbf_{2}(t)\|_{2}&=\frac{d}{dt}\sqrt{\bigl(\xbf_{1}(t)-\xbf_{2}(t)\bigl)^{\top}\bigl(\xbf_{1}(t)-\xbf_{2}(t)\bigl)}\\
        &=\frac{1}{2\|\xbf_{1}(t)-\xbf_{2}(t)\|_{2}}\frac{d}{dt}\bigl(\xbf_{1}(t)-\xbf_{2}(t)\bigl)^{\top}\bigl(\xbf_{1}(t)-\xbf_{2}(t)\bigl)\\
        &=\frac{2}{2\|\xbf_{1}(t)-\xbf_{2}(t)\|_{2}}\cdot \bigl(\xbf_{1}(t)-\xbf_{2}(t)\bigl)^{\top}\frac{d}{dt}\bigl(\xbf_{1}(t)-\xbf_{2}(t)\bigl)\\
        &=\frac{1}{\|\xbf_{1}(t)-\xbf_{2}(t)\|_{2}}\cdot \bigl(\xbf_{1}(t)-\xbf_{2}(t)\bigl)^{\top}\biggl(f\bigl(\xbf_{1}(t)\bigl)-f\bigl(\xbf_{2}(t)\bigl)\biggl)
        \text{\,.}
    \end{align*}
    Plugging the above yields
    \begin{align*}
        &\frac{d}{dt}\|\xbf_{1}(t)-\xbf_{2}(t)\|_{2}=\frac{\bigl(\xbf_{1}(t)-\xbf_{2}(t)\bigl)^{\top}\nabla f\bigl(k\cdot\xbf_{1}(t)+(1-k)\cdot\xbf_{2}(t)\bigl)\bigl(\xbf_{1}(t)-\xbf_{2}(t)\bigl)}{\|\xbf_{1}(t)-\xbf_{2}(t)\|_{2}}\\
        &=\|\xbf_{1}(t)-\xbf_{2}(t)\|_{2}\cdot \frac{\bigl(\xbf_{1}(t)-\xbf_{2}(t)\bigl)^{\top}\nabla f\bigl(k\cdot\xbf_{1}(t)+(1-k)\cdot\xbf_{2}(t)\bigl)\bigl(\xbf_{1}(t)-\xbf_{2}(t)\bigl)}{\bigl(\xbf_{1}(t)-\xbf_{2}(t)\bigl)^{\top}\bigl(\xbf_{1}(t)-\xbf_{2}(t)\bigl)}=(*)
        \text{\,.}
    \end{align*}
    The right term is bound by the Rayleigh quotient (see \cite{Horn_Johnson_1985}), and so the above can be bound by
    \begin{align*}
        (*)\leq \|\xbf_{1}(t)-\xbf_{2}(t)\|_{2}\cdot\lambda_{max}\biggl(\nabla f\bigl(k\cdot\xbf_{1}(t)+(1-k)\cdot\xbf_{2}(t)\bigl)\biggl)\leq \|\xbf_{1}(t)-\xbf_{2}(t)\|_{2}\cdot\lambda_{max}(t)
    \end{align*}
    where the last inequality stems from $\lambda_{max}(t)$'s definition. Dividing both sides by $\|\xbf_{1}(t)-\xbf_{2}(t)\|_{2}$ and integrating w.r.t time yields the following
    \begin{align*}
        \ln(\|\xbf_{1}(t)-\xbf_{2}(t)\|_{2})-\ln(\|\xbf_{1}(0)-\xbf_{2}(0)\|_{2})=\int_{0}^{t}\frac{\frac{d}{d\tau}\|\xbf_{1}(\tau)-\xbf_{2}(\tau)\|_{2}}{\|\xbf_{1}(\tau)-\xbf_{2}(\tau)\|_{2}}d\tau\leq \int_{0}^{t}\lambda_{max}(\tau)d\tau
        \text{\,.}
    \end{align*}
    Reorganizing the inequality and taking exponents yields
    \begin{align*}
        \|\xbf_{1}(t)-\xbf_{2}(t)\|_{2}&\leq \exp\biggl(\int_{0}^{t}\lambda_{max}(\tau)d\tau+\ln(\|\xbf_{1}(0)-\xbf_{2}(0)\|_{2})\biggl)\\
        &=\exp\biggl(\int_{0}^{t}\lambda_{max}(\tau)d\tau\biggl)\cdot \|\xbf_{1}(0)-\xbf_{2}(0)\|_{2}
    \end{align*}
    as required.
\end{proof}

\begin{lemma}\label{lemma:converge_in_val_space}
    Let $f:\BR^{d}\rightarrow\BR$ be a continuous non-negative function with $\min_{\xbf\in\BR^{d}}f(\xbf)=0$. Denote $X^{*}:=\{\xbf\in\BR^{d}:f(\xbf)=0\}$. Suppose that the $1$ sub-level set of $f$ defined as $\L_{1}(f):=\{\xbf\in\BR^{d}:f(\xbf)\leq 1\}$ is compact. Then for any $\delta>0$ there exists $\eta>0$ such that for any $\xbf\in\BR^{d}$ if $f(\xbf)\leq \eta$ then $\dist(\xbf, X^{*})\leq \delta$. 
\end{lemma}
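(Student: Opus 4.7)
The plan is to argue by contradiction, using the compactness of the sub-level set $\L_{1}(f)$ together with the continuity of $f$ in a standard Bolzano--Weierstrass style argument. Suppose the claim fails. Then there is some $\delta>0$ such that for every $n\in\N$, one can find a point $\xbf_{n}\in\BR^{d}$ with $f(\xbf_{n})\leq 1/n$ yet $\dist(\xbf_{n},X^{*})>\delta$. Since $f(\xbf_{n})\leq 1/n\leq 1$, the entire sequence $\{\xbf_{n}\}_{n=1}^{\infty}$ lies in the compact set $\L_{1}(f)$.

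By compactness, $\{\xbf_{n}\}$ admits a convergent subsequence $\{\xbf_{n_{k}}\}_{k=1}^{\infty}$ with limit $\xbf^{*}\in\L_{1}(f)$. I would then use continuity of $f$ to conclude
\begin{equation*}
f(\xbf^{*})=\lim_{k\rightarrow\infty}f(\xbf_{n_{k}})\leq\lim_{k\rightarrow\infty}\tfrac{1}{n_{k}}=0
\text{\,,}
\end{equation*}
so $f(\xbf^{*})=0$ and hence $\xbf^{*}\in X^{*}$. On the other hand, the distance function $\zbf\mapsto\dist(\zbf,X^{*})$ is $1$-Lipschitz (and in particular continuous), so passing to the limit in the inequality $\dist(\xbf_{n_{k}},X^{*})>\delta$ yields
\begin{equation*}
\dist(\xbf^{*},X^{*})=\lim_{k\rightarrow\infty}\dist(\xbf_{n_{k}},X^{*})\geq\delta>0
\text{\,,}
\end{equation*}
contradicting $\xbf^{*}\in X^{*}$.

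There is no real obstacle here; the only subtlety is ensuring the contradiction is well-posed, which requires noting that $X^{*}$ is non-empty (guaranteed since $\min_{\xbf}f(\xbf)=0$) so that $\dist(\cdot,X^{*})$ is finite, and that the sequence indeed lies in the compact set $\L_{1}(f)$ for all sufficiently large $n$ (which is why I pick $f(\xbf_{n})\leq 1/n$ rather than an arbitrary vanishing bound). Taking $\eta:=1/n_{0}$ for a sufficiently large $n_{0}$ then produces the desired $\eta>0$ in the contrapositive form of the statement.
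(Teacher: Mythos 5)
Your proof is correct and follows essentially the same route as the paper's: assume the contrary, extract a convergent subsequence from the compact sub-level set $\L_{1}(f)$, use continuity of $f$ to place the limit in $X^{*}$, and derive a contradiction with the distance bound. The only cosmetic difference is that you pass to the limit via the Lipschitz continuity of $\dist(\cdot,X^{*})$, whereas the paper notes directly that the subsequence converges to a point of $X^{*}$ while remaining at distance at least $\delta$ from it.
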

\begin{proof}
    Assume on the contrary that there exists $\delta>0$ such that for any $\epsilon>0$ there exists $\xbf_{\eta}\in\BR^{d}$ for which $f(\xbf_{\eta})\leq\delta$ and $
    \dist(\xbf_{\eta}, X^{*})>\delta$. Consider the sequence $\{\xbf_{\frac{1}{n}}\}_{n=1}^{\infty}$. For any $n\in\BN$ it holds that $f(\xbf_{\frac{1}{n}})\leq\frac{1}{n}$ and $\dist(\xbf_{\frac{1}{n}}, X^{*})>\delta$, therefore it holds that
    \begin{align*}
        \lim_{n\rightarrow\infty}f(\xbf_{\frac{1}{n}})=0
        \text{\,.}
    \end{align*}
    The sub-level set $\L_{1}(f)$ is compact and satisfies $\xbf_{\frac{1}{n}}\in\L_{1}(f)$ for any $n\in\BN$, hence the sequence $\{\xbf_{\frac{1}{n}}\}_{n=1}^{\infty}$ is bounded. Therefore, the sequence has a convergent subsequence $\{\xbf_{\frac{1}{n_k}}\}_{k=1}^{\infty}$ with some limit $\xbf^{*}:=\lim_{k\rightarrow\infty}\xbf_{n_k}$. By $f$'s continuity we get that
    \begin{align*}
        f(\xbf^{*})=f(\lim_{k\rightarrow\infty}\xbf_{n_k})=\lim_{k\rightarrow\infty}f(\xbf_{n_k})=\lim_{n\rightarrow\infty}f(\xbf_{n})=0
        \text{\,,}
    \end{align*}
    \ie, $\xbf^{*}\in X^{*}$. This is a contradiction since all $\xbf_{\frac{1}{n}}$ must remain at distance at least $\delta$ from $\xbf^{*}$ on the one hand, and $\xbf_{\frac{1}{n_k}}$ converges to $\xbf^{*}$ on the other hand. 
\end{proof}

\begin{lemma}\label{lemma:existence_of_subsets}
    Let $T,S>0$, $n\in\BN$ and $x^{*}\in (0,1)$. There exist $q_{1},w_{1}\in (0, x^{*})$ and $q_{2},w_{2}\in (\frac{1}{2},1)$ such that for any $x\in (q_{1},w_{1})$ and $y\in (q_{2},w_{2})$ it holds that
    \begin{align*}
        1-y\leq Tx
    \end{align*}
    and
    \begin{align*}
        x\leq \frac{S}{y}(1-y^{n})^{\frac{1}{n}}
        \text{\,.}
    \end{align*}
\end{lemma}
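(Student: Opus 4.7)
The strategy is to pick $y$ close to $1$, exploit the fact that the lower bound on $x$ (from the first inequality) is $O(1-y)$ while the upper bound on $x$ (from the second inequality) is $\Theta((1-y)^{1/n})$, and then use continuity to inflate a single admissible pair $(x_0, y_0)$ into open intervals around it.

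Concretely, define the continuous functions $G(y) := (1-y)/T$ and $F(y) := (S/y)(1-y^n)^{1/n}$ on $(0,1)$, and note that the pair of inequalities amounts to $G(y) \leq x \leq F(y)$. Using the factorization $1 - y^n = (1-y)(1 + y + \cdots + y^{n-1})$, I would write
\[
\frac{F(y)}{G(y)} \;=\; \frac{ST}{y}\,(1-y)^{\frac{1}{n}-1}\,(1+y+\cdots+y^{n-1})^{1/n}.
\]
For $n \geq 2$ the exponent $\tfrac{1}{n} - 1$ is negative, so the right hand side tends to $+\infty$ as $y \to 1^-$ (the other factors stay bounded and positive). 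In particular there exists $y_0 \in (1/2, 1)$ with $G(y_0) < F(y_0)$. By making $y_0$ even closer to $1$ one can further arrange $F(y_0) < x^*$, since $F(y) \to 0$ as $y \to 1^-$. Pick any $x_0 \in (G(y_0), F(y_0))$; then $0 < x_0 < x^*$.

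Now apply continuity of $F$ and $G$: there is $\delta' > 0$ with $[x_0 - \delta', x_0 + \delta'] \subset (G(y_0), F(y_0)) \cap (0, x^*)$, and then there is $\delta > 0$ with $(y_0 - \delta, y_0 + \delta) \subset (1/2, 1)$ such that for every $y$ in this interval $G(y) < x_0 - \delta'$ and $F(y) > x_0 + \delta'$. Setting $q_1 := x_0 - \delta'$, $w_1 := x_0 + \delta'$, $q_2 := y_0 - \delta$, $w_2 := y_0 + \delta$ then yields both inequalities for every $(x, y) \in (q_1, w_1) \times (q_2, w_2)$, and by construction $q_1, w_1 \in (0, x^*)$ and $q_2, w_2 \in (1/2, 1)$.

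I do not anticipate any serious obstacle: the entire argument is a routine continuity/limit comparison. The only delicate point is establishing $F(y)/G(y) \to \infty$ as $y \to 1^-$, which relies on $n \geq 2$; in the application (see \cref{prop:intersection_of_inits}) this is automatic because the exponent $n = L-3$ satisfies $n \geq 4$ under the standing hypothesis $\kappa \in \{7, 9, 11, \ldots\}$ of \cref{result:poison}.
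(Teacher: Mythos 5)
Your proposal is correct and follows essentially the same route as the paper's proof: both hinge on the factorization $1-y^{n}=(1-y)(1+y+\cdots+y^{n-1})$ to show that the upper bound on $x$ exceeds the lower bound by a factor of order $(1-y)^{\frac{1}{n}-1}\to\infty$ as $y\to 1^{-}$, while the upper bound itself tends to $0<x^{*}$; the paper extracts the open rectangle via explicit endpoints $y^{*}$ and $\frac{y^{*}+1}{2}$ rather than your generic continuity argument around $(x_{0},y_{0})$, but this is a cosmetic difference. Your observation that the argument requires $n\geq 2$ is apt---the paper's own limit computation ($\lim_{y\to 1^{-}}(1-y)^{-\frac{n-1}{n}}=\infty$) silently needs the same, and the statement as written actually fails for $n=1$ when $ST\leq\frac{1}{2}$, though this is harmless since the application uses $n=L-3\geq 4$.
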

\begin{proof}
    First note that since $T>0$, the first requirement is equivalent to having
    \begin{align*}
        \frac{1-y}{T}\leq x
        \text{\,.}
    \end{align*}    
    Let $y\in(0,1)$. It holds that
    \begin{align*}
        \lim_{y\rightarrow 1^{-}}\frac{S}{y}(1-y^{n})^{\frac{1}{n}}=\lim_{y\rightarrow 1^{-}}\frac{S}{y}\bigg((1-y)\sum_{i=0}^{n-1}y^{i}\bigg)^{\frac{1}{n}}=S\cdot n^{\frac{1}{n}}\cdot \lim_{y\rightarrow 1^{-}}(1-y)^{\frac{1}{n}}=0
        \text{\,.}
    \end{align*}
    Therefore, there exists $y'\in (0,1)$ such that for any $y\in [y',1)$ it holds that
    \begin{align*}
        \frac{S}{y}(1-y^{n})^{\frac{1}{n}}\leq x^{*}
        \text{\,.}
    \end{align*}
    On the other hand, it also holds that
    \begin{align*}
        \frac{\frac{S}{\frac{y+1}{2}}\bigg(1-(\frac{y+1}{2})^{n}\bigg)^{\frac{1}{n}}}{\frac{1-y}{T}}&=\frac{2TS}{y+1}\cdot \frac{\bigg((1-\frac{y+1}{2})\sum_{i=0}^{n-1}(\frac{y+1}{2})^{i}\bigg)^{\frac{1}{n}}}{1-y}\\
        & =\frac{2TS\bigg(\sum_{i=0}^{n-1}(\frac{y+1}{2})^{i}\bigg)^{\frac{1}{n}}}{y+1}\cdot (\frac{1}{2})^{\frac{1}{n}}\cdot\frac{(1-y)^{\frac{1}{n}}}{1-y}
        \text{\,.}
    \end{align*}
    Hence, taking the limit as $y\rightarrow 1^{-}$ we obtain that
    \begin{align*}
        \lim_{y\rightarrow 1^{-}}\frac{\frac{S}{\frac{y+1}{2}}\bigg(1-(\frac{y+1}{2})^{n}\bigg)^{\frac{1}{n}}}{\frac{1-y}{T}}&=\lim_{y\rightarrow 1^{-}}\frac{2TS\bigg(\sum_{i=0}^{n-1}(\frac{y+1}{2})^{i}\bigg)^{\frac{1}{n}}}{y+1}\cdot(\frac{1}{2})^{\frac{1}{n}}\cdot\frac{(1-y)^{\frac{1}{n}}}{1-y}\\
        &=TS\cdot n^{\frac{1}{n}}\cdot(\frac{1}{2})^{\frac{1}{n}}\cdot\lim_{y\rightarrow 1^{-}}\frac{1}{(1-y)^{\frac{n-1}{n}}}\\
        &=\infty
        \text{\,.}
    \end{align*}
    Therefore, there exists $y''\in (0,1)$ such that for any $y\in[y'',1)$ it holds that
    \begin{align*}
        \frac{\frac{S}{\frac{y+1}{2}}\bigg(1-(\frac{y+1}{2})^{n}\bigg)^{\frac{1}{n}}}{\frac{1-y}{T}}>2
        \text{\,.}
    \end{align*}
    and so $\frac{1-y}{T}<\frac{S}{\frac{y+1}{2}}\bigg(1-(\frac{y+1}{2})^{n}\bigg)^{\frac{1}{n}}$. Thus, setting $y^{*}=\max\lbrace \frac{1}{2}, y', y''\rbrace$ we obtain that the interval $\bigg(\frac{1-y^{*}}{T},\frac{S}{\frac{y^{*}+1}{2}}\bigg(1-(\frac{y^{*}+1}{2})^{n}\bigg)^{\frac{1}{n}}\bigg)$ is not empty and upper bounded by $x^{*}$. Additionally, for any $y\in (y^{*},\frac{y^{*}+1}{2})$ the following holds:
    \begin{align*}
        y^{*}<y\implies \frac{1-y^{*}}{T}>\frac{1-y}{T}
        \text{\,,}
    \end{align*}
    \begin{align*}
        \frac{y^{*}+1}{2}>y\implies \frac{S}{\frac{y^{*}+1}{2}}\bigg(1-(\frac{y^{*}+1}{2})^{n}\bigg)^{\frac{1}{n}}< \frac{S}{y}(1-y^{n})^{\frac{1}{n}}
        \text{\,.}
    \end{align*}
    Hence the interval $\bigg(\frac{1-y^{*}}{T},\frac{S}{\frac{y^{*}+1}{2}}\bigg(1-(\frac{y^{*}+1}{2})^{n}\bigg)^{\frac{1}{n}}\bigg)$ is contained within the interval $\bigg(\frac{1-y}{T},\frac{S}{y}(1-(y^{n})^{\frac{1}{n}}\bigg)$. Noting that $\frac{y^{*}+1}{2}<1$, we complete the proof by setting 
    \begin{align*}
        q_{1}=\frac{1-y^{*}}{T} ~~\;,\;~~ w_{1}=\frac{S}{\frac{y^{*}+1}{2}}\bigg(1-(\frac{y^{*}+1}{2})^{n}\bigg)^{\frac{1}{n}}
    \end{align*}
    and 
    \begin{align*}
        q_{2}=y^{*} ~~\;,\;~~ w_{2}=\frac{y^{*}+1}{2}
        \text{\,.}
    \end{align*}
\end{proof}

	\section{Further Experimental Results}
\label{app:exper} 

In this appendix we provide additional experimental results omitted from \cref{sec:exper}.

\subsection{Dynamical Characterization}
\label{app:exper:dynamic}

The experiments in \cref{sec:exper:dynamic} corroborate our dynamical characterization (\cref{sec:analysis:dynamic}) by demonstrating that greedy low rank learning of the state transition matrix $A$ occurs under some, but not all, choices of training sequences. 
This appendix reports the following additional experiments.
\begin{itemize}
    \item \cref{fig:dynamic_longer} extends the experiments reported in \cref{fig:dynamic} to longer training sequences.
    
    \item \cref{fig:dynamic_rank_2,fig:dynamic_rank_3,fig:dynamic_rank_4} include experiments similar to those in \cref{fig:dynamic}, where the training sequences are labeled by teachers of higher dimensions.
    
    \item \cref{fig:dynamic_additional,fig:dynamic_longer_additional} report the results achieved with different random seeds in the settings of \cref{fig:dynamic,fig:dynamic_longer}, respectively.
    
    \item A classical continuous surrogate for the matrix rank is the \textit{effective rank} \citep{EffectiveRank}. \cref{fig:effective,fig:effective_additional,fig:effective_longer,fig:effective_longer_additional,fig:effective_rank2,fig:effective_rank3,fig:effective_rank4} present the effective rank of the transition matrix $A$ throughout optimization in the settings of \cref{fig:dynamic,fig:dynamic_additional,fig:dynamic_longer,fig:dynamic_longer_additional,fig:dynamic_rank_2,fig:dynamic_rank_3,fig:dynamic_rank_4}, respectively, underlining the low effective rank caused by greedy low rank learning.
    
    \item \cref{fig:gammas,fig:gammas_additional,fig:gammas_longer,fig:gammas_longer_additional} report the values of $\gamma^{(0)}(t)$ (as defined in \cref{result:dynamic}) observed during optimization in the settings of \cref{fig:dynamic,fig:dynamic_additional,fig:dynamic_longer,fig:dynamic_longer_additional}, respectively, showcasing that larger absolute values of $\gamma^{(0)}(t)$ do not correspond to greedy low rank learning, whereas lower absolute values do.
\end{itemize}

\begin{figure*}[t]
    \begin{center}
        \vspace{1.5mm}
        \begin{center}
            \includegraphics[width=1\textwidth]{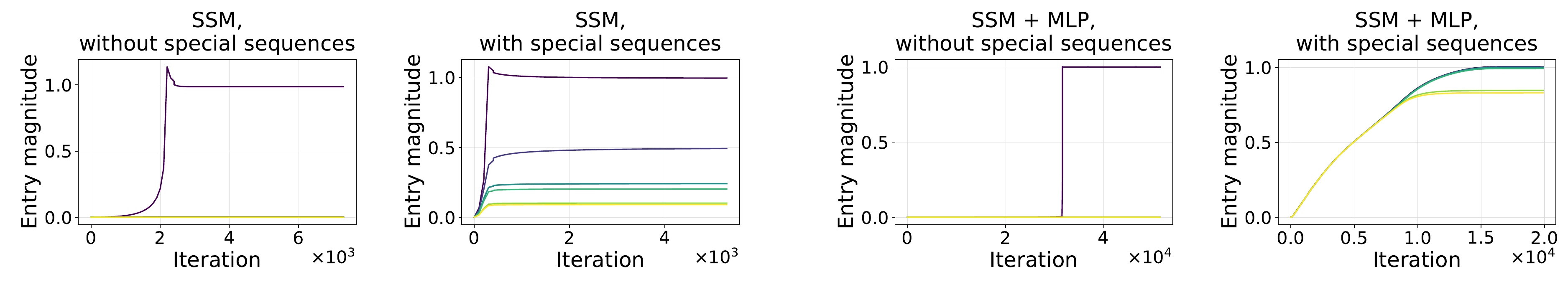}
        \end{center}
        \vspace{-2mm}
    \end{center}
\caption{
Demonstration of the dynamical characterization derived in \cref{result:dynamic}---optimization of an SSM, trained individually or as part of a non-linear neural network, implicitly induces greedy learning of the (diagonal) entries of the state transition matrix~$A$ under some, but not all, choices of training sequences.
This figure is identical to \cref{fig:dynamic}, except that the sequences used to train the models were longer, namely, of sequence length $10$ as opposed to $6$. For further details see \cref{fig:dynamic} as well as \cref{app:details:dynamic}.}
\label{fig:dynamic_longer}
\end{figure*}

\begin{figure*}[t]
    \begin{center}
        \vspace{1.5mm}
        \begin{center}
            \includegraphics[width=1\textwidth]{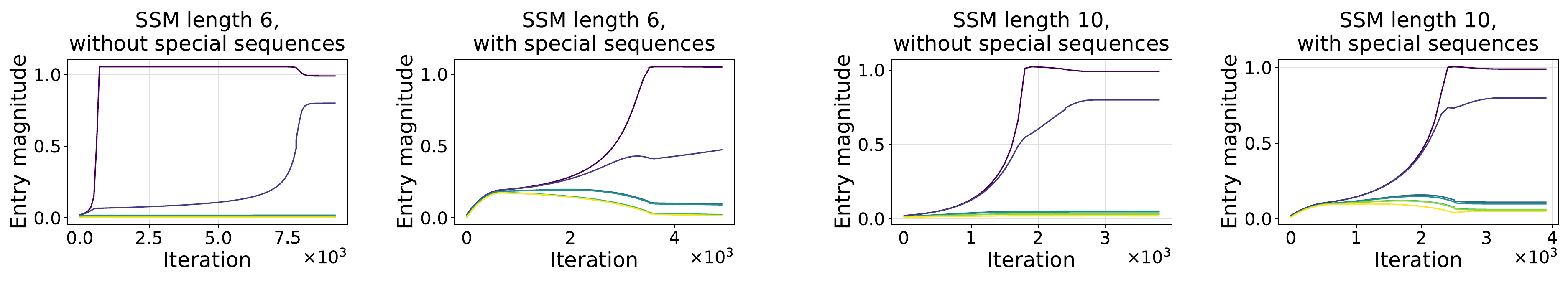}
        \end{center}
        \vspace{-2mm}
    \end{center}
\caption{
Demonstration of the dynamical characterization derived in \cref{result:dynamic}---optimization of an individually trained SSM implicitly induces greedy learning of the (diagonal) entries of the state transition matrix~$A$ under some, but not all, choices of training sequences. First two plots (left) and last two plots are identical to the first two plots in \cref{fig:dynamic,fig:dynamic_longer} respectively, except that the teacher used to label the training sequences is of dimension $d^* = 2$ (as opposed to $d^* = 1$). For further details see \cref{fig:dynamic,fig:dynamic_longer} as well as \cref{app:details:dynamic}.}
\label{fig:dynamic_rank_2}
\end{figure*}

\begin{figure*}[t]
    \begin{center}
        \vspace{1.5mm}
        \begin{center}
            \includegraphics[width=1\textwidth]{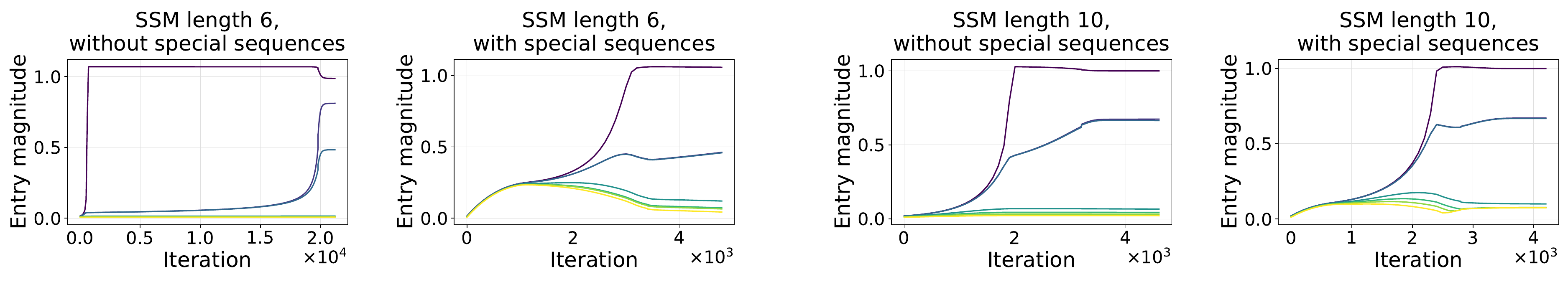}
        \end{center}
        \vspace{-2mm}
    \end{center}
\caption{
Demonstration of the dynamical characterization derived in \cref{result:dynamic}. This figure is identical to \cref{fig:dynamic_rank_2} except that the teacher used to label the training sequences is of dimension $d^* = 3$. For further details see \cref{fig:dynamic_rank_2} and \cref{app:details:dynamic}.}
\label{fig:dynamic_rank_3}
\end{figure*}

\begin{figure*}[t]
    \begin{center}
        \vspace{1.5mm}
        \begin{center}
            \includegraphics[width=1\textwidth]{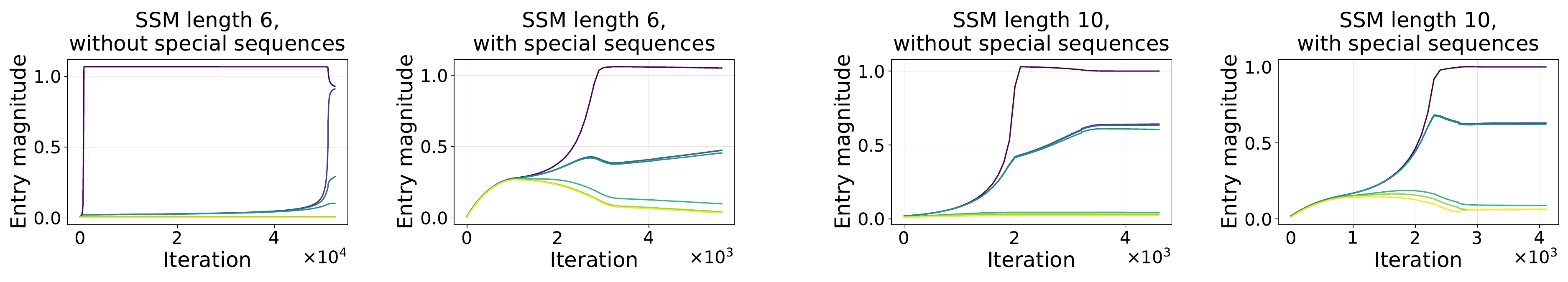}
        \end{center}
        \vspace{-2mm}
    \end{center}
\caption{
    Demonstration of the dynamical characterization derived in \cref{result:dynamic}. This figure is identical to \cref{fig:dynamic_rank_2} except that the teacher used to label the training sequences is of dimension $d^* = 4$. For further details see \cref{fig:dynamic_rank_2} and \cref{app:details:dynamic}.}
\label{fig:dynamic_rank_4}
\end{figure*}

\begin{figure*}[t]
    \begin{center}
        \vspace{1.5mm}
        \begin{center}
            \includegraphics[width=1\textwidth]{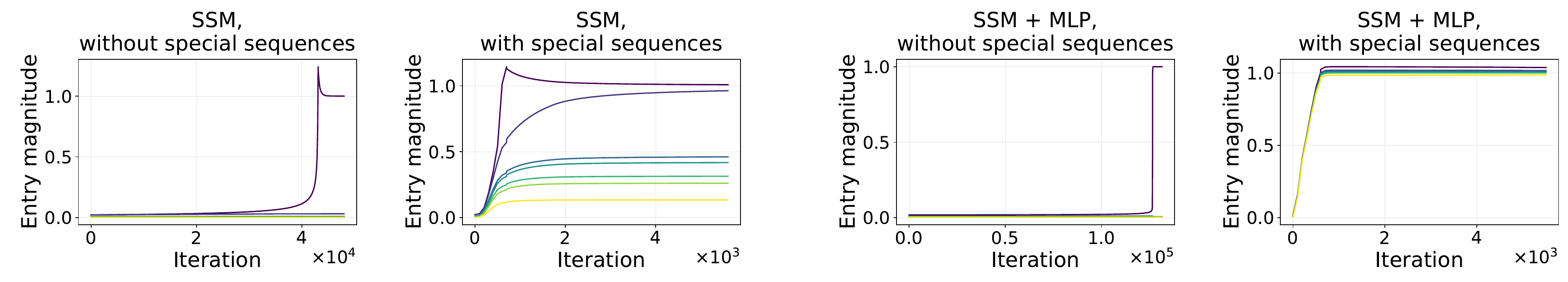}
        \end{center}
        \vspace{-2mm}
    \end{center}
\caption{
    Demonstration of the dynamical characterization derived in \cref{result:dynamic}. This figure is identical to \cref{fig:dynamic} except that a different random seed was used.}
\label{fig:dynamic_additional}
\end{figure*}

\begin{figure*}[t]
    \begin{center}
        \vspace{1.5mm}
        \begin{center}
            \includegraphics[width=1\textwidth]{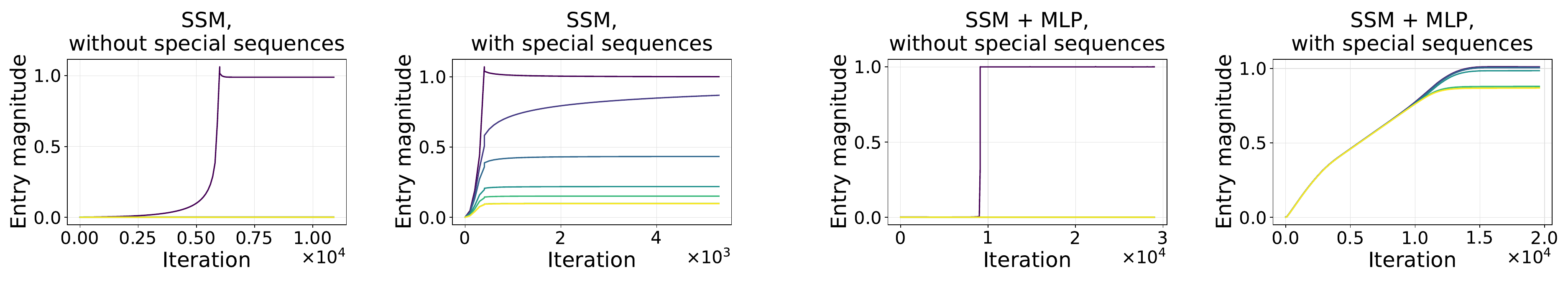}
        \end{center}
        \vspace{-2mm}
    \end{center}
\caption{
    Demonstration of the dynamical characterization derived in \cref{result:dynamic}. This figure is identical to \cref{fig:dynamic_longer} except that a different random seed was used.}
\label{fig:dynamic_longer_additional}
\end{figure*}

\begin{figure*}[t]
    \begin{center}
        \vspace{1.5mm}
        \begin{center}
            \includegraphics[width=1\textwidth]{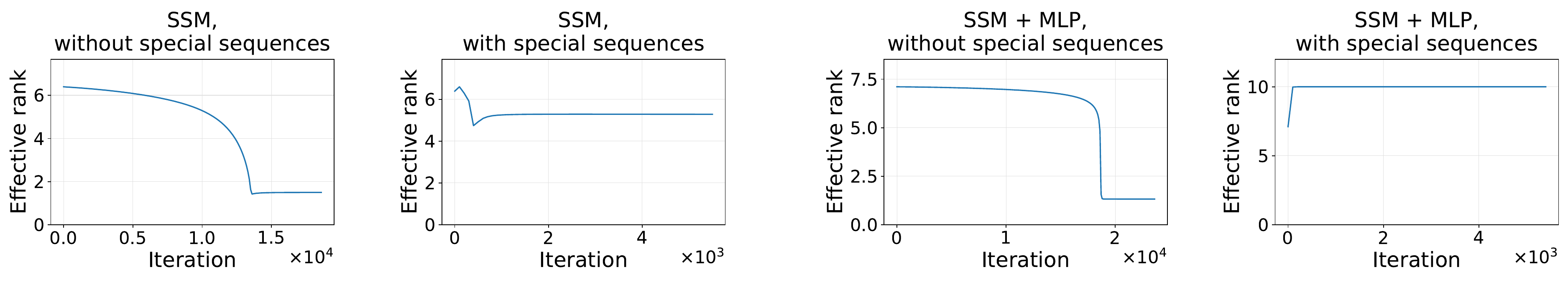}
        \end{center}
        \vspace{-2mm}
    \end{center}
\caption{
    Demonstration of the dynamical characterization derived in \cref{result:dynamic} through the lens of the effective rank of $A$---introduction of special sequences to the tranining set results in significantly larger effective rank, in compliance with the disruption of greedy low rank learning. Each plot shows the effective rank of $A$ during the training process reported in \cref{fig:dynamic}.}
\label{fig:effective}
\end{figure*}

\begin{figure*}[t]
    \begin{center}
        \vspace{1.5mm}
        \begin{center}
            \includegraphics[width=1\textwidth]{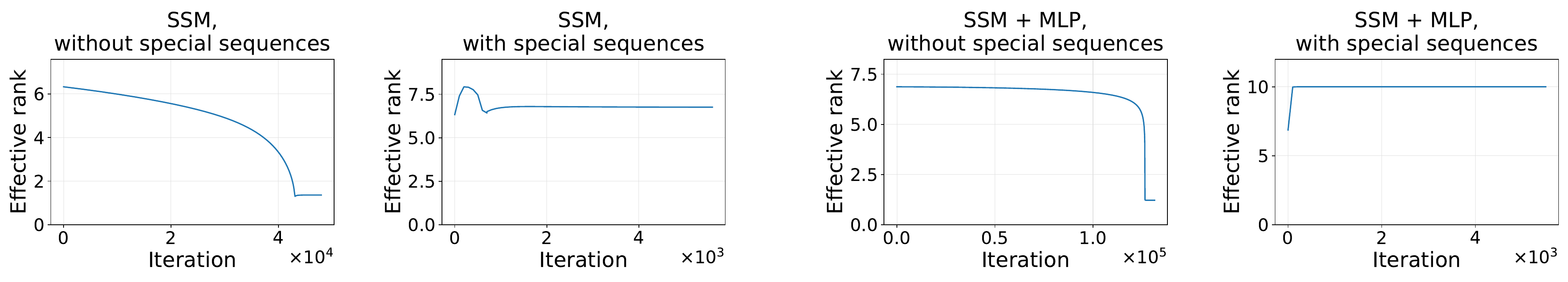}
        \end{center}
        \vspace{-2mm}
    \end{center}
\caption{
    Demonstration of the dynamical characterization derived in \cref{result:dynamic} through the lens of the effective rank of $A$. This figure is identical to \cref{fig:effective} except that the setting considered is that of \cref{fig:dynamic_additional}.}
\label{fig:effective_additional}
\end{figure*}

\begin{figure*}[t]
    \begin{center}
        \vspace{1.5mm}
        \begin{center}
            \includegraphics[width=1\textwidth]{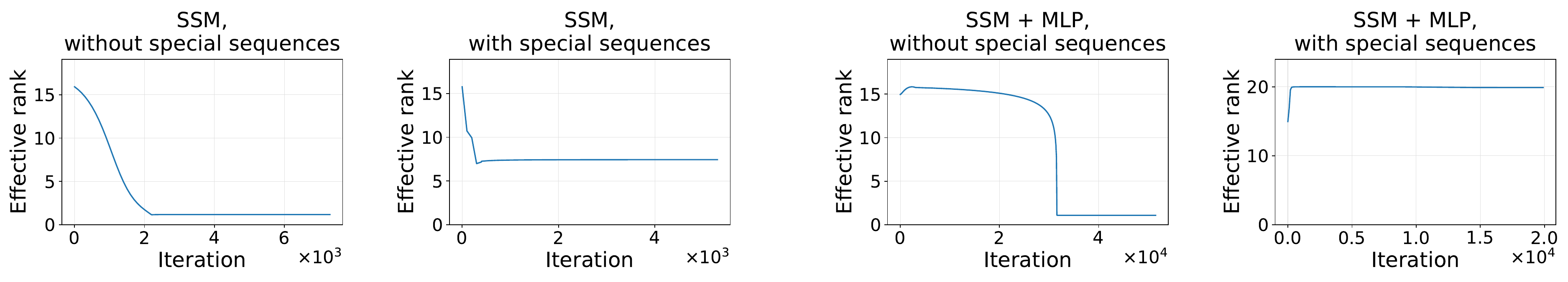}
        \end{center}
        \vspace{-2mm}
    \end{center}
\caption{
    Demonstration of the dynamical characterization derived in \cref{result:dynamic} through the lens of the effective rank of $A$. This figure is identical to \cref{fig:effective} except that the setting considered is that of \cref{fig:dynamic_longer}.}
\label{fig:effective_longer}
\end{figure*}

\begin{figure*}[t]
    \begin{center}
        \vspace{1.5mm}
        \begin{center}
            \includegraphics[width=1\textwidth]{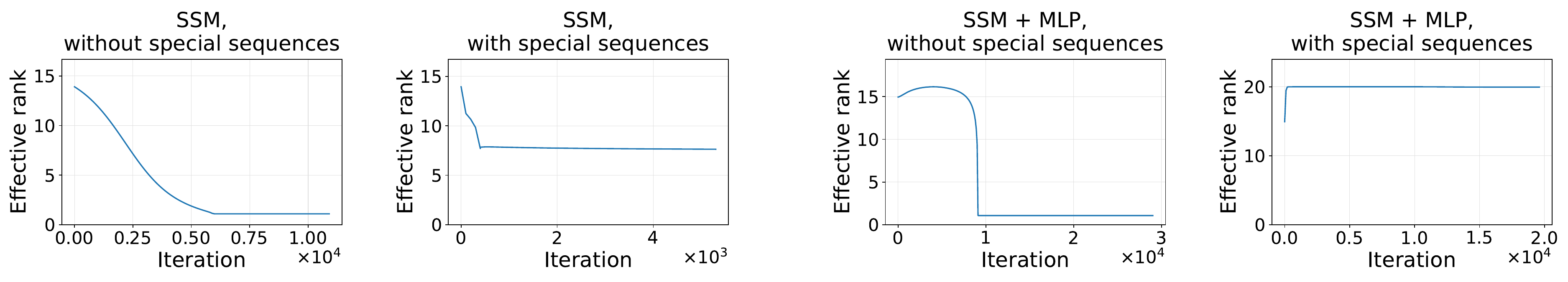}
        \end{center}
        \vspace{-2mm}
    \end{center}
\caption{
    Demonstration of the dynamical characterization derived in \cref{result:dynamic} through the lens of the effective rank of $A$. This figure is identical to \cref{fig:effective} except that the setting considered is that of \cref{fig:dynamic_longer_additional}.}
\label{fig:effective_longer_additional}
\end{figure*}

\begin{figure*}[t]
    \begin{center}
        \vspace{1.5mm}
        \begin{center}
            \includegraphics[width=1\textwidth]{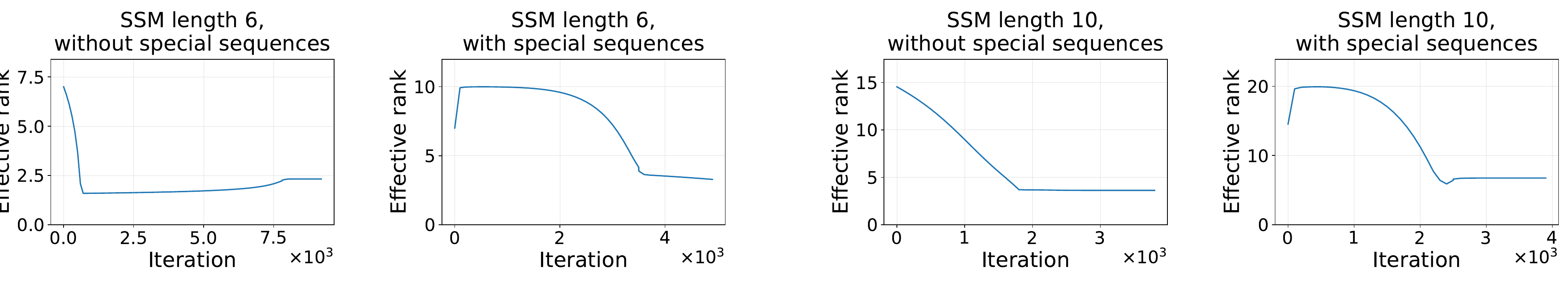}
        \end{center}
        \vspace{-2mm}
    \end{center}
\caption{
    Demonstration of the dynamical characterization derived in \cref{result:dynamic} through the lens of the effective rank of $A$. This figure is identical to \cref{fig:effective} except that the setting considered is that of \cref{fig:dynamic_rank_2}.}
\label{fig:effective_rank2}
\end{figure*}

\begin{figure*}[t]
    \begin{center}
        \vspace{1.5mm}
        \begin{center}
            \includegraphics[width=1\textwidth]{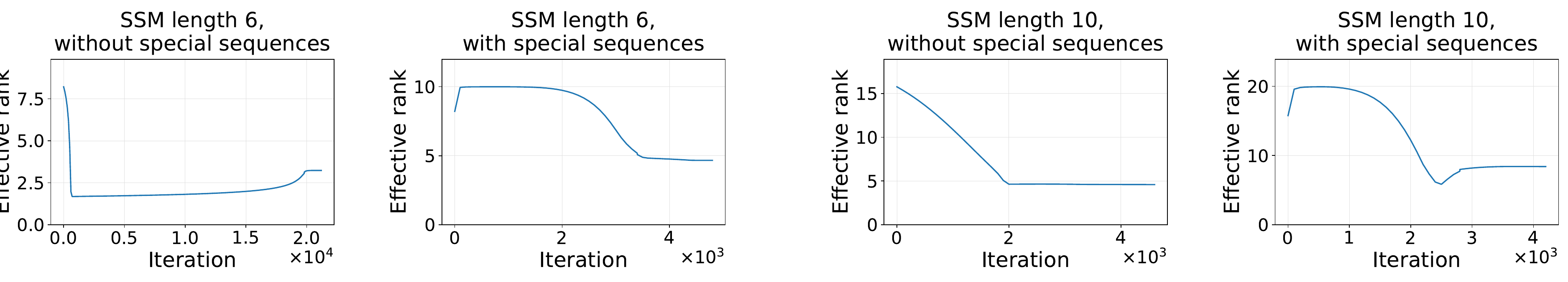}
        \end{center}
        \vspace{-2mm}
    \end{center}
\caption{
    Demonstration of the dynamical characterization derived in \cref{result:dynamic} through the lens of the effective rank of $A$. This figure is identical to \cref{fig:effective} except that the setting considered is that of \cref{fig:dynamic_rank_3}.}
\label{fig:effective_rank3}
\end{figure*}

\begin{figure*}[t]
    \begin{center}
        \vspace{1.5mm}
        \begin{center}
            \includegraphics[width=1\textwidth]{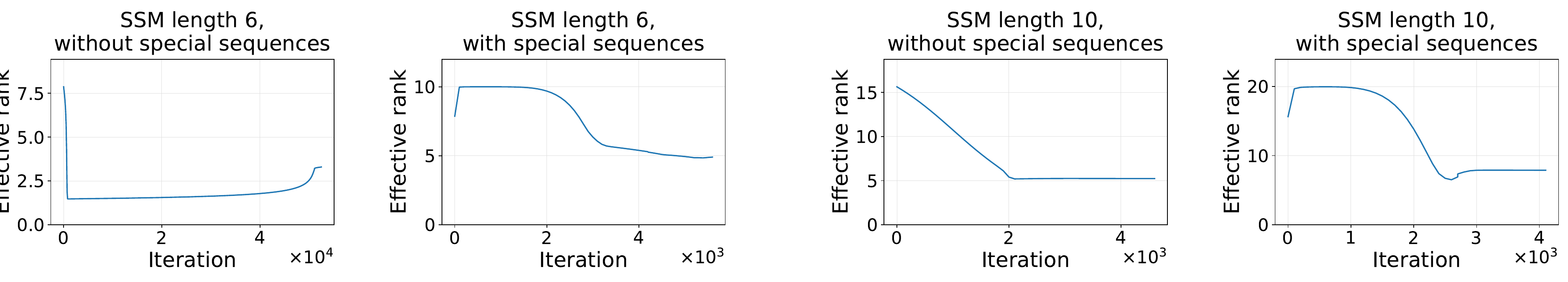}
        \end{center}
        \vspace{-2mm}
    \end{center}
\caption{
    Demonstration of the dynamical characterization derived in \cref{result:dynamic} through the lens of the effective rank of $A$. This figure is identical to \cref{fig:effective} except that the setting considered is that of \cref{fig:dynamic_rank_4}.}
\label{fig:effective_rank4}
\end{figure*}

\begin{figure*}[t]
    \begin{center}
        \vspace{1.5mm}
        \begin{center}
            \includegraphics[width=1\textwidth]{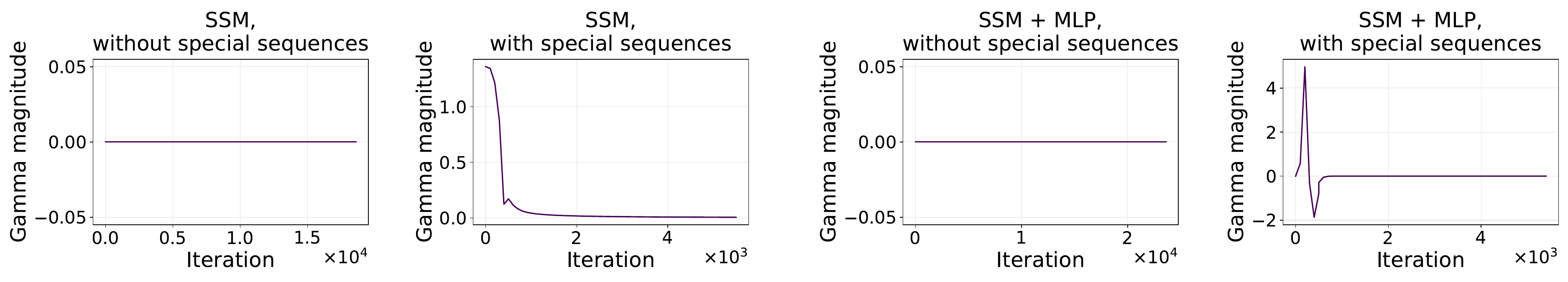}
        \end{center}
        \vspace{-2mm}
    \end{center}
\caption{
    Evolution of $\gamma^{(0)}(t)$ (as defined in \cref{result:dynamic}) during training.
    Each plot shows the values of $\gamma^{(0)}(t)$ during optimization in the setting of \cref{fig:dynamic}.
    As can be seen, in compliance with our interpretation of \cref{result:dynamic}, larger absolute values of $\gamma^{(0)}(t)$ do not correspond to greedy low rank learning, whereas lower absolute values do.
    }
\label{fig:gammas}
\end{figure*}

\begin{figure*}[t]
    \begin{center}
        \vspace{1.5mm}
        \begin{center}
            \includegraphics[width=1\textwidth]{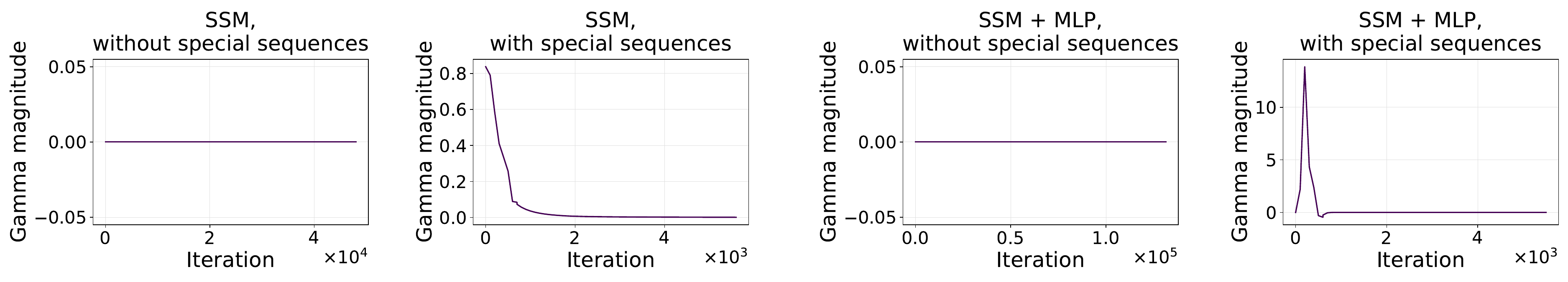}
        \end{center}
        \vspace{-2mm}
    \end{center}
\caption{
    Evolution of $\gamma^{(0)}(t)$ (as defined in \cref{result:dynamic}) during training. This figure is identical to \cref{fig:gammas} except that the setting considered is that of \cref{fig:dynamic_additional}.}
\label{fig:gammas_additional}
\end{figure*}

\begin{figure*}[t]
    \begin{center}
        \vspace{1.5mm}
        \begin{center}
            \includegraphics[width=1\textwidth]{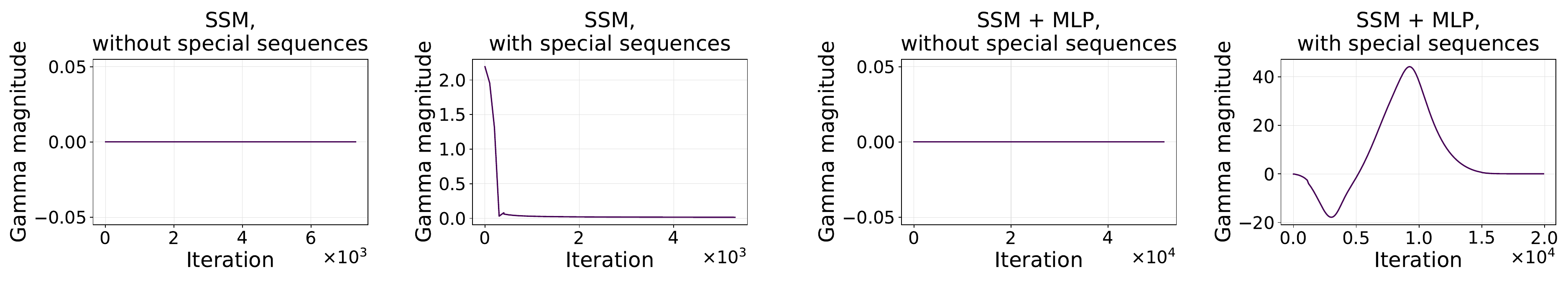}
        \end{center}
        \vspace{-2mm}
    \end{center}
\caption{
    Evolution of $\gamma^{(0)}(t)$ (as defined in \cref{result:dynamic}) during training. This figure is identical to \cref{fig:gammas} except that the setting considered is that of \cref{fig:dynamic_longer}.}
\label{fig:gammas_longer}
\end{figure*}

\begin{figure*}[t]
    \begin{center}
        \vspace{1.5mm}
        \begin{center}
            \includegraphics[width=1\textwidth]{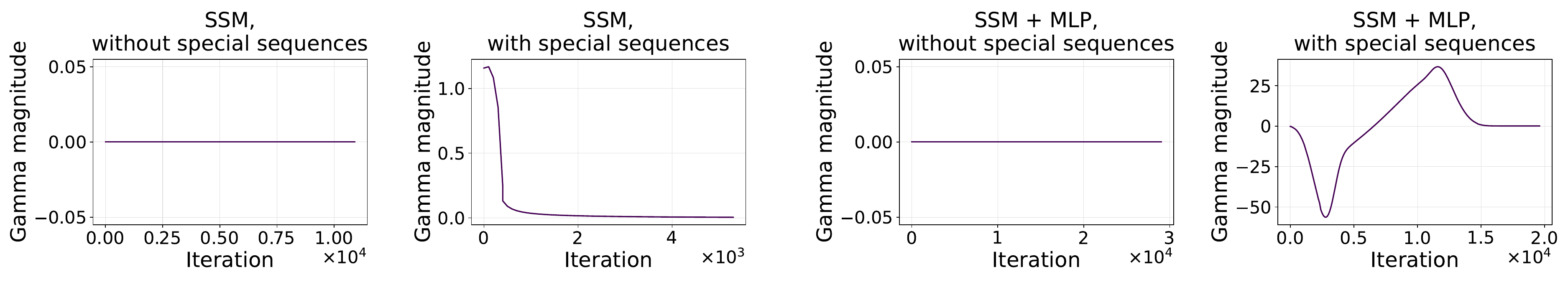}
        \end{center}
        \vspace{-2mm}
    \end{center}
\caption{
    Evolution of $\gamma^{(0)}(t)$ (as defined in \cref{result:dynamic}) during training. This figure is identical to \cref{fig:gammas} except that the setting considered is that of \cref{fig:dynamic_longer_additional}.}
\label{fig:gammas_longer_additional}
\end{figure*}

\subsection{Synthetic Clean-Label Poisoning}
\label{app:exper:poison_syn}

The experiments in \cref{tab:poison_syn} corroborate the theoretical results of \cref{sec:analysis:poison} by showcasing synthetic regimes in which SSMs are susceptible to clean-label poisoning. 
\cref{tab:poison_syn_longer} supplements the experiments of \cref{tab:poison_syn} by including results in which the SSMs were trained on longer sequences. \cref{tab:poison_syn_std,tab:poison_syn_longer_std} report the standard deviations of the quantities presented in \cref{tab:poison_syn,tab:poison_syn_longer} respectively.

\begin{table}[ht]
\caption{
Demonstration of clean-label poisoning of SSMs. The table is identical to \cref{tab:poison_syn}, except that the sequences used to train the models were longer, namely, of sequence length $10$ as opposed to $6$. Notice that across all settings, special training sequences significantly deteriorate generalization. For further details see \cref{tab:poison_syn} as well as \cref{app:details:poison_syn}.}
\centering
\vspace{1mm}
\small
\begin{tabular}{lccc}
        \toprule
        \textbf{Setting} & \textbf{Without special sequences} & \textbf{With special sequences} \\
        \midrule
        Per \cref{result:poison} & $7.34\times 10^{-3}$ & $3.51\times 10^{-2}$ \\
        Standalone SSM beyond \cref{result:poison} & $1.22\times 10^{-1}$ & $1.2$ \\
        SSM in non-linear neural network & $4.67\times 10^{-2}$ & $8.93\times 10^{-2}$ \\
        \bottomrule
\end{tabular}
\label{tab:poison_syn_longer}
\end{table}

\begin{table}[ht]
    \caption{
    Standard deviations of the quantities presented in \cref{tab:poison_syn}. For further details see \cref{tab:poison_syn,app:exper:poison_syn,app:details:poison_syn}.}
    \centering
    \vspace{1mm}
    \small
    \begin{tabular}{lccc}
            \toprule
            \textbf{Setting} & \textbf{Without special sequences} & \textbf{With special sequences} \\
            \midrule
            Per \cref{result:poison} & $8.2\times 10^{-5}$ & $3.54\times 10^{-3}$ \\
            Standalone SSM beyond \cref{result:poison} & $0.195$ & $9.56$ \\
            SSM in non-linear neural network & $1.88\times 10^{-3}$ & $3.83\times 10^{-2}$ \\
            \bottomrule
    \end{tabular}
    \label{tab:poison_syn_std}
    \end{table}

\begin{table}[ht]
    \caption{
    Standard deviations of the quantities presented in \cref{tab:poison_syn_longer}. For further details see \cref{tab:poison_syn_longer,app:exper:poison_syn,app:details:poison_syn}.}
    \centering
    \vspace{1mm}
    \small
    \begin{tabular}{lccc}
            \toprule
            \textbf{Setting} & \textbf{Without special sequences} & \textbf{With special sequences} \\
            \midrule
            Per \cref{result:poison} & $2.71\times 10^{-4}$ & $1.69\times 10^{-4}$ \\
            Standalone SSM beyond \cref{result:poison} & $4.3\times 10^{-2}$ & $0.635$ \\
            SSM in non-linear neural network & $4.67\times 10^{-2}$ & $1.99\times 10^{-2}$ \\
            \bottomrule
    \end{tabular}
    \label{tab:poison_syn_longer_std}
    \end{table}

\subsection{Real-World Clean-Label Poisoning}
\label{app:exper:poison_real}

The experiments in \cref{tab:poison_real} corroborate the theoretical results of \cref{sec:analysis:poison} by demonstrating clean-label poisoning of SSM-based neural networks in real-world settings.
\cref{tab:poison_real_train} provides additional context to the experiments of \cref{tab:poison_real} by reporting the training performance of the trained models, before and after the inclusion of cleanly labeled poisonous examples. \cref{tab:poison_real_std,tab:poison_real_train_std} report the standard deviations of the quantities presented in \cref{tab:poison_real,tab:poison_real_train} respectively.

\begin{table*}[t!]
    \caption{
    Demonstration of clean-label poisoning of SSMs in real-world (non-synthetic) settings comprising SSM-based neural networks consisting of the S4, Mamba-2 or LRU layers ~\citep{s4, mamba2, lru} trained on the CIFAR-10 dataset~\citep{cifar10}.
    This table is similar to \cref{tab:poison_real}, except that it reports the training performance of the trained models, before and after the inclusion of cleanly labeled poisonous examples.
    Notice that across all experiments, the models' training performance remains consistent before and after the inclusion of cleanly labeled poisonous examples. Consequently, the drop in test performance reported in \cref{tab:poison_real} cannot be attributed to a change in training performance.
    For further details see \cref{tab:poison_real,app:exper:poison_real,app:details:poison_real}.
    }
    \vspace{-1mm}    
    \begin{center}
        \fontsize{8.5}{9.5}\selectfont
    \begin{tabular}{lcc}
        \toprule
        \textbf{SSM-based NN} & \textbf{Loss without / with poisoning} & \textbf{Accuracy without / with poisoning} \\
        \midrule
        S4~\citep{s4} & $0.316$ / $0.316$ & $88.6\%$ / $88.7\%$ \\
        Mamba-2~\citep{mamba2} & $6.59\times 10^{-2}$ / $6.59\times 10^{-2}$ & $97.9\%$ / $97.8\%$ \\        
        LRU~\citep{lru} & $0.334$ / $0.299$ & $89.1\%$ / $90\%$ \\ 
        \bottomrule
    \end{tabular}
    \end{center}
    \vspace{-4mm}
        \label{tab:poison_real_train}
    \end{table*}

\begin{table*}[t!]   
    \caption{
        Standard deviations of the quantities presented in \cref{tab:poison_real}. For further details see \cref{tab:poison_real,app:exper:poison_real,app:details:poison_real}.}
    \vspace{-1mm}   
    \begin{center}
        \fontsize{8.5}{9.5}\selectfont
    \begin{tabular}{lccc}
        \toprule
        \textbf{SSM-based NN} & \textbf{Loss without / with poisoning} & \textbf{Accuracy without / with poisoning} & \textbf{Noise tail size} \\
        \midrule
        S4~\citep{s4} & $1.47\times 10^{-2}$ / $1.71\times 10^{-2}$ & $0.29\%$ / $0.18\%$ & $1.64\times 10^{-2}$ \\
        Mamba-2~\citep{mamba2} & $3.97\times 10^{-2}$ / $4.38\times 10^{-2}$ & $1.32\%$ / $0.67\%$ & $2.1\times 10^{-2}$ \\        
        LRU~\citep{lru} & $4.76\times 10^{-3}$ / $2.96\times 10^{-2}$ & $0.71\%$ / $0.53\%$ & $5.52\times 10^{-2}$ \\
        \bottomrule
    \end{tabular}
    \end{center}
    \vspace{-4mm}
        \label{tab:poison_real_std}
    \end{table*}

\begin{table*}[t!]
    \caption{
        Standard deviations of the quantities presented in \cref{tab:poison_real_train}. For further details see \cref{tab:poison_real_train,app:exper:poison_real,app:details:poison_real}.}
    \vspace{-1mm}   
    \begin{center}
        \fontsize{8.5}{9.5}\selectfont
    \begin{tabular}{lcc}
        \toprule
        \textbf{SSM-based NN} & \textbf{Loss without / with poisoning} & \textbf{Accuracy without / with poisoning} \\
        \midrule
        S4~\citep{s4} & $1.25\times 10^{-2}$ / $1.35\times 10^{-2}$ & $0.45\%$ / $0.61\%$ \\
        Mamba-2~\citep{mamba2} & $1.95\times 10^{-2}$ / $1.98\times 10^{-2}$ & $0.75\%$ / $0.68\%$ \\        
        LRU~\citep{lru} & $0.62\times 10^{-2}$ / $0.5\times 10^{-2}$ & $2.4\%$ / $1.76\%$ \\
        \bottomrule
    \end{tabular}
    \end{center}
    \vspace{-4mm}
        \label{tab:poison_real_train_std}
    \end{table*}

	\section{Implementation Details}\label{app:details}

We provide implementation details omitted from \cref{sec:exper,app:exper}.
\ifdefined\CAMREADY
    Code for reproducing our experiments can be found at \url{https://github.com/YoniSlutzky98/imp-bias-ssm-poison}.
\fi

\subsection{Dynamical Characterization}\label{app:details:dynamic}

In this appendix we provide implementation details for the experiments of \cref{sec:exper:dynamic,app:exper:dynamic}. 
All experiments were implemented using Keras \citep{chollet2015keras} and carried out on a single Nvidia RTX 2080 Ti GPU.

\subsubsection{Standalone SSM}\label{app:details:dynamic:ssm}

\textbf{Models.} In the experiments reported in \cref{fig:dynamic,fig:dynamic_longer,fig:dynamic_additional,fig:dynamic_longer_additional} where a standalone SSM was trained, we used a teacher SSM of dimension $d^{*}=1$ with parameters 
\begin{equation*}
    A^{*} = 1
    ~~ , ~~
    B^{*} = 1
    ~~ , ~~
    C^{*} = 1
    \text{\,.}
\end{equation*}
All parameters of the student SSM (\ie, $A$, $B$ and $C$) were trained.
We set the student SSM dimension to $d=10$ for \cref{fig:dynamic,fig:dynamic_additional} and to $d=20$ for \cref{fig:dynamic_longer,fig:dynamic_longer_additional}.

Next, we detail the models used in the experiments with teachers of higher dimensions (\cref{fig:dynamic_rank_2,fig:dynamic_rank_3,fig:dynamic_rank_4}). In the experiments reported in \cref{fig:dynamic_rank_2} we used a teacher SSM model of dimension $2$ that was set with the parameters:
\begin{equation*}
    A^{*} = \begin{pmatrix} 0.99 & 0 \\ 0 & 0.8 \end{pmatrix}
    ~~ , ~~
    B^{*} = \1
    ~~ , ~~
    C^{*} = \1^{\top}
    \text{\,.}
\end{equation*}
In the experiments reported in \cref{fig:dynamic_rank_3} we used a teacher SSM model of dimension $3$ that was set with the parameters 
\begin{equation*}
    A^{*} = \begin{pmatrix} 0.99 & 0 & 0 \\ 0 & 0.8 & 0 \\ 0 & 0 & 0.5 \end{pmatrix}
    ~~ , ~~
    B^{*} = \1
    ~~ , ~~
    C^{*} = \1^{\top}
    \text{\,.}
\end{equation*}
In the experiments reported in \cref{fig:dynamic_rank_4} we used a teacher SSM model of dimension $3$ that was set with the parameters 
\begin{equation*}
    A^{*} = \begin{pmatrix} 0.99 & 0 & 0 & 0\\ 0 & 0.8 & 0 & 0 \\ 0 & 0 & 0.5 & 0 \\ 0 & 0 & 0 & 0.3 \end{pmatrix}
    ~~ , ~~
    B^{*} = \1
    ~~ , ~~
    C^{*} = \1^{\top}
    \text{\,.}
\end{equation*}
We used student SSM models whose input and output matrices $B(\cdot)$ and $C(\cdot)$ were fixed at $\1$ and $\1^{\top}$ respectively (due to computational limitations). The student models had dimension $d=10$ in the experiments of shorter sequence length and dimension $d=20$ in the experiments of longer sequence length.

\textbf{Data.} In all experiments we used the respective teachers to generate the labels for training sequences. Additionally, we used training sequences of one of two types, referred to as “baseline“ and “special“, where each type had designated indices of non-zero entries. \cref{tab:dynamics_data_struct} specifies which non-zero indices were present in each sequence type for each experiment. Training sequences of both types had their non-zero entries sampled i.i.d. from $\NN(0,1)$. \cref{tab:dynamics_data_amount} specifies how many training sequences of each type were used in each experiment.

\begin{table}[ht]
    \caption{Types of sequences used in dynamics experiments (\cref{fig:dynamic,fig:dynamic_additional,fig:dynamic_longer,fig:dynamic_longer_additional,fig:dynamic_rank_2,fig:dynamic_rank_3,fig:dynamic_rank_4}). Last two columns (right) indicate the non-zero indices for each sequence type.}
    \centering
    \vspace{1mm}
    \small
    \begin{tabular}{lcccc}
            \toprule
            \textbf{Setting} & \textbf{Length} & \textbf{Baseline indices} & \textbf{Special indices}\\
            \midrule
            SSM / SSM + MLP (\cref{fig:dynamic,fig:dynamic_additional}) & $6$ & $1,2$ & $5,6$\\
            SSM / SSM + MLP longer (\cref{fig:dynamic_longer,fig:dynamic_longer_additional}) & $10$ & $1,2,\dots,7$ & $9,10$\\
            SSM higher dimension (\cref{fig:dynamic_rank_2,fig:dynamic_rank_3,fig:dynamic_rank_4}) & $6$ & $1,2$ & $5$\\
            SSM higher dimension longer (\cref{fig:dynamic_rank_2,fig:dynamic_rank_3,fig:dynamic_rank_4}) & $10$ & $1,2,\dots,7$ & $9$\\
            \bottomrule
    \end{tabular}
    \label{tab:dynamics_data_struct}
\end{table}

\begin{table}[ht]
    \caption{Amount of sequences of each type used in dynamics experiments (\cref{fig:dynamic,fig:dynamic_additional,fig:dynamic_longer,fig:dynamic_longer_additional,fig:dynamic_rank_2,fig:dynamic_rank_3,fig:dynamic_rank_4}).}
    \centering
    \vspace{1mm}
    \small
    \begin{tabular}{lcccc}
            \toprule
            \textbf{Setting} & \textbf{Baseline amount} & \textbf{Special amount}\\
            \midrule
            SSM (\cref{fig:dynamic,fig:dynamic_additional,fig:dynamic_longer,fig:dynamic_longer_additional}) & $8$ & $10$\\
            SSM + MLP (\cref{fig:dynamic,fig:dynamic_additional,fig:dynamic_longer,fig:dynamic_longer_additional}) & $20$ & $20$\\
            SSM higher dimension (\cref{fig:dynamic_rank_2,fig:dynamic_rank_3,fig:dynamic_rank_4}) & $8$ & $10$\\
            \bottomrule
    \end{tabular}
    \label{tab:dynamics_data_amount}
\end{table}

\textbf{Initialization.} In all experiments we initialized the student’s $A$, $B$ and $C$ parameter matrices in a manner that was inspired by the initialization set $\I$ of \cref{result:poison}.

To initialize (the diagonal) $A$ in each experiment we first sampled $d$ i.i.d. entries from $\NN(0,\texttt{sd\_A})$, took their absolute values and then arranged them in descending order. Then, we set the second entry to have the first entry’s value minus a constant \texttt{diff}. This was done to reflect the near zero initialization on the one hand and the proximity to the reference trajectory on the other hand. In the experiments reported in \cref{fig:dynamic_rank_3,fig:dynamic_rank_4} we naturally extended this procedure by setting the third entry to have the first entry’s value minus $1.01\cdot \texttt{diff}$ in both experiments, and the fourth entry to have the first entry’s value minus $1.05\cdot \texttt{diff}$ in the latter. \cref{tab:dynamics_init} report the values of \texttt{sd\_A} and \texttt{diff} used in each experiment.

To initialize $B$ in each experiment we first sampled $d$ i.i.d. entries from $\NN(0,\texttt{sd\_B\_C})$, took their absolute values and then arranged them in descending order. Then, we set the second entry to have the first entry’s value minus a constant \texttt{diff}. This was done to reflect the near zero initialization on the one hand and the proximity to the reference trajectory on the other hand. In the experiments reported in \cref{fig:dynamic_rank_3,fig:dynamic_rank_4} we naturally extended this procedure by setting the third entry to have the first entry’s value minus $1.01\cdot \texttt{diff}$ in both experiments, and the fourth entry to have the first entry’s value minus $1.05\cdot \texttt{diff}$ in the latter. To initialize $C$ we followed the same procedure, without modifying the second to potentially fourth entries. Note that in the experiments reported in \cref{fig:dynamic_rank_2,fig:dynamic_rank_3,fig:dynamic_rank_4} the input and output matrices $B(\cdot)$ and $C(\cdot)$ were not trained. \cref{tab:dynamics_init} report the values of \texttt{sd\_B\_C} used in each experiment.

\begin{table}[ht]
    \caption{Values of \texttt{sd\_A}, \texttt{sd\_B\_C} and \texttt{diff} used in the dynamics experiments (\cref{fig:dynamic,fig:dynamic_additional,fig:dynamic_longer,fig:dynamic_longer_additional,fig:dynamic_rank_2,fig:dynamic_rank_3,fig:dynamic_rank_4}).}
    \centering
    \vspace{1mm}
    \small
    \begin{tabular}{lcccc}
            \toprule
            \textbf{Setting} & \textbf{\texttt{sd\_A}} & \textbf{\texttt{sd\_B\_C}} & \textbf{\texttt{diff}}\\
            \midrule
            SSM (\cref{fig:dynamic,fig:dynamic_additional}) & $10^{-2}$ & $10^{-2}$ & $0.05\times \exp\bigl(20\cdot \log_{10}(\texttt{sd\_A})\bigl)$\\
            SSM longer (\cref{fig:dynamic_longer,fig:dynamic_longer_additional}) & $10^{-3}$ & $5\times 10^{-2}$ & $0.05\times \exp\bigl(20\cdot \log_{10}(\texttt{sd\_A})\bigl)$\\
            SSM + MLP (\cref{fig:dynamic_additional}) & $10^{-2}$ & $10^{-2}$ & $0.05\times \exp\bigl(0.5\cdot \log_{10}(\texttt{sd\_A})\bigl)$\\
            SSM + MLP longer (\cref{fig:dynamic_longer_additional}) & $10^{-3}$ & $10^{-3}$ & $0.05\times \exp\bigl(2\cdot \log_{10}(\texttt{sd\_A})\bigl)$\\
            SSM higher dimension (\cref{fig:dynamic_rank_2,fig:dynamic_rank_3,fig:dynamic_rank_4}) & $10^{-2}$ & $-$ & $0.05\times \exp\bigl(2\cdot \log_{10}(\texttt{sd\_A})\bigl)$\\
            SSM higher dimension longer (\cref{fig:dynamic_rank_2,fig:dynamic_rank_3,fig:dynamic_rank_4}) & $10^{-2}$ & $-$ & $0.05\times \exp\bigl(3\cdot \log_{10}(\texttt{sd\_A})\bigl)$\\
            \bottomrule
    \end{tabular}
    \label{tab:dynamics_init}
\end{table}

\textbf{Optimization.} In all of the experiments we trained using the empirical mean squared error as a loss function and optimized over full batches of the training sets. In order to facilitate more efficient experimentation in the experiments where a standalone SSM is trained, we optimized using gradient descent with an adaptive learning rate scheme, where at each iteration a base learning rate is divided by the square root of an exponential moving average of squared gradient norms (see appendix D.2 in \citet{razin2022implicitregularizationhierarchicaltensor} for more details). We used a weighted average coefficient of $\beta=0.8$ and a softening constant of $10^{-6}$. Note that only the learning rate (step size) is affected by this scheme, not the direction of movement. Comparisons between the adaptive scheme and optimization with a fixed learning rate showed no significant difference in terms of the dynamics, while run times of the former were considerably shorter. \cref{tab:dynamics_lr} specifies the base learning rate used in each experiment.

\begin{table}[ht]
    \caption{Base learning rates used in dynamics experiments (\cref{fig:dynamic,fig:dynamic_additional,fig:dynamic_longer,fig:dynamic_longer_additional,fig:dynamic_rank_2,fig:dynamic_rank_3,fig:dynamic_rank_4}). Last two columns (right) indicate the base learning rate used in the experiments without the addition of “special” sequences and with their addition respectively.}
    \centering
    \vspace{1mm}
    \small
    \begin{tabular}{lcccc}
            \toprule
            \textbf{Setting} & \textbf{Without special sequences} & \textbf{With special sequences}\\
            \midrule
            SSM (\cref{fig:dynamic,fig:dynamic_additional}) & $0.01$ & $0.01$\\
            SSM longer (\cref{fig:dynamic_longer,fig:dynamic_longer_additional}) & $0.01$ & $0.01$\\
            SSM + MLP (\cref{fig:dynamic,fig:dynamic_additional}) & $0.01$ & $0.001$\\
            SSM + MLP longer (\cref{fig:dynamic_longer,fig:dynamic_longer_additional}) & $0.01$ & $5\times 10^{-5}$\\
            SSM higher dimension (\cref{fig:dynamic_rank_2,fig:dynamic_rank_3,fig:dynamic_rank_4}) & $0.01$ & $0.001$\\
            SSM higher dimension longer (\cref{fig:dynamic_rank_2,fig:dynamic_rank_3,fig:dynamic_rank_4}) & $0.001$ & $0.001$\\
            \bottomrule
    \end{tabular}
    \label{tab:dynamics_lr}
\end{table}

\subsubsection{SSM in a Non-Linear Neural Network}\label{app:details:dynamic:nn}

\textbf{Models.} In the experiments reported in \cref{fig:dynamic,fig:dynamic_longer,fig:dynamic_additional,fig:dynamic_longer_additional} where an SSM was trained as a part of a non-linear neural network, we used neural networks comprised of an SSM module whose output was fed to a $2$ hidden layers MLP module. Overall, the models used realize the following function:
\begin{equation*}
    D_{out}\cdot \sigma\biggl(D_{hidden}\cdot \sigma\big(D_{in}\cdot \phi_{A,B,C}(\xbf)\big)\biggl)
\end{equation*}
where $d_{h}\in \BN$ is the hidden MLP width, $D_{in}\in\BR^{d_{h}, 1}$, $D_{hidden}\in\BR^{d_{h}, d_{h}}$ and $D_{out}\in \BR^{1, d_{h}}$ are the MLP’s parameter matrices, $\sigma(x):=\max\lbrace 0, x\rbrace$ is the MLP’s activation function
and $\phi_{A,B,C}(\xbf)$ is the output of an SSM with the parameter matrices $A,B,C$. All teacher models used had SSM modules of dimension $d^{*}=1$ that were set with the parameters 
\begin{equation*}
    A^{*} = 1
    ~~ , ~~
    B^{*} = 1
    ~~ , ~~
    C^{*} = 1
    \text{\,.}
\end{equation*}
The teacher models in \cref{fig:dynamic,fig:dynamic_additional} had hidden MLP width of $d_{h}^{*}=15$ while the teacher models in \cref{fig:dynamic_longer,fig:dynamic_longer_additional} had hidden MLP width of $d_{h}^{*}=25$. In both cases, the teacher models had MLP modules that were set with the following parameter matrices:
\begin{equation*}
    D_{in}^{*} = \1
    ~~ , ~~
    D_{hidden}^{*} = I_{d_{h}^{*}}
    ~~ , ~~
    D_{out}^{*} = \frac{1}{2}\cdot \1^{\top}
    \text{\,.}
\end{equation*}
We trained all of the SSM and MLP parameter matrices of our student models. The student models in \cref{fig:dynamic,fig:dynamic_additional} had SSM dimension of $d=10$ and hidden MLP width of $d_{h}=15$, while the student models in \cref{fig:dynamic_longer,fig:dynamic_longer_additional} had SSM dimension of $d=20$ and hidden MLP width of $d_{h}=25$.

\textbf{Data.} Data for the experiments were generated in the same manner as in \cref{app:details:dynamic:ssm}. \cref{tab:dynamics_data_struct} specifies which non-zero indices were present in each sequence type for each experiment. Training sequences of both types had their non-zero entries sampled i.i.d. from $\NN(0,1)$. \cref{tab:dynamics_data_amount} specifies how many training sequences of each type were used in each experiment.

\textbf{Initialization.} 
In all experiments we initialized the student’s $A$, $B$ and $C$ parameter matrices identically to the standalone SSM experiments (\cref{app:details:dynamic:ssm}). \cref{tab:dynamics_init} report the values of \texttt{sd\_A}, \texttt{sd\_B\_C} and \texttt{diff} used in each experiment.

To initialize the MLP layers, we initialized all parameter matrices by sampling i.i.d. values from $\NN(0,\texttt{sd\_D})$. We used $\texttt{sd\_D}=0.03$ in the original experiments (\cref{fig:dynamic,fig:dynamic_additional}) and $\texttt{sd\_D}=0.1$ in the experiments with longer sequences (\cref{fig:dynamic_longer,fig:dynamic_longer_additional}).

\textbf{Optimization.} To speed up the optimization we trained using the default Keras implementation of Adam \citep{adam} with its default hyperparameters. \cref{tab:dynamics_lr} report the base learning rates used in each of the experiments.

\subsection{Synthetic Clean-Label Poisoning}\label{app:details:poison_syn}

In this appendix we provide implementation details for the synthetic experiments provided in \cref{sec:exper:poison,app:exper:poison_syn}. All experiments were implemented using Keras \citep{chollet2015keras} and were carried out using a single Nvidia RTX 2080 Ti GPU.

\subsubsection{SSM Per \cref{result:poison}}\label{app:details:poison_syn:theory}

The main goal of the experiments in the first poisoning setting (standalone SSM per \cref{result:poison}) was to approximate the solution to the system of ODEs induced by gradient flow (\cref{eq:gf_loss_ts}) in order to demonstrate the results of \cref{result:poison}. To do so we employed the use of the \texttt{odeint} function of SciPy \citep{2020SciPy-NMeth} which is a numerical solver for systems of ODEs based on \texttt{lsoda} from the FORTRAN library odepack \citep{odepack}. \texttt{odeint}’s arguments are the initial point in parameter space $A(0)$, the timestamps at which the solution is required, and a function which specifies the system by intaking a timestamp $t$ and a point in parameter space $A$ and outputting the derivative in time $t$ at $A$. \texttt{odeint} outputs a set of numerical approximations for the solution of the system at the required timestamps.

\textbf{Models.} We use teacher and student models according to the setting described in \cref{sec:analysis:poison}. We used a teacher SSM of dimension $d^{*}=2$ that was set with the parameters 
\begin{equation*}
    A^{*} = \begin{pmatrix} 1 & 0 \\ 0 & 0 \end{pmatrix}
    ~~ , ~~
    B^{*} = \begin{pmatrix} 1 & \sqrt{d-1}\, \end{pmatrix}^\top
    ~~ , ~~
    C^{*} = \begin{pmatrix} 1 & \sqrt{d-1}\, \end{pmatrix}
    \text{\,.}
\end{equation*}
We used student SSM models whose input and output matrices $B(\cdot)$ and $C(\cdot)$ were fixed at $\1$ and $\1^{\top}$ respectively. The student models had dimension $d=10$ in the original poisoning experiments (\cref{tab:poison_syn}), and dimension $d=20$ when training over longer sequences (\cref{tab:poison_syn_longer}).

\textbf{Data.} In all experiments we used the respective teachers to generate the labels for all training sequences. Per \cref{result:poison}, we trained using a single “baseline” sequence of the form $\ebf_{1}\in \BR^{\kappa}$ and a single “special” sequence of the form $\ebf_{\kappa-1}\in\BR^{\kappa}$. The relevant experiments reported in \cref{tab:poison_syn,tab:poison_syn_longer} were trained using sequences of lengths $7$ and $9$ respectively.

\textbf{Initialization.} We initialized the student’s diagonal matrix $A$ in a manner that was inspired by the initialization set $\I$ of \cref{result:poison}. In each experiment we first sampled $d$ i.i.d. entries from $\NN(0,\texttt{sd\_A})$ and then arranged them in descending order. We then set the second entry to have the first’s value minus a constant \texttt{diff}. This was done to reflect the near zero initialization on the one hand and the proximity to the reference trajectory on the other hand. \cref{tab:ode_init} reports the values of \texttt{sd\_A} and \texttt{diff} used in each experiment.

\begin{table}[ht]
    \caption{Values of \texttt{sd\_A} and \texttt{diff} used in the experiments of the first poisoning setting (\cref{tab:poison_syn,tab:poison_syn_longer}).}
    \centering
    \vspace{1mm}
    \small
    \begin{tabular}{lcccc}
            \toprule
            \textbf{Setting} & \textbf{\texttt{sd\_A}} & \textbf{\texttt{diff}}\\
            \midrule
            Per \cref{result:poison} (\cref{tab:poison_syn}) & $10^{-3}$ & $0.05\times \exp\bigl(5\cdot \log_{10}(\texttt{sd\_A})\bigl)$\\
            Per \cref{result:poison} longer (\cref{tab:poison_syn_longer}) & $5\times 10^{-3}$ & $0.05\times \exp\bigl(10\cdot \log_{10}(\texttt{sd\_A})\bigl)$\\
            \bottomrule
    \end{tabular}
    \label{tab:ode_init}
\end{table}

\textbf{Optimization.} We input \texttt{odeint} a custom function which computes $-\nabla\ell_{\S}(A)$ (\cref{eq:gf_loss_ts}) given the point $A$ to be used for derivative computations. \cref{tab:ode_times} reports the timestamps simulated in each experiment. All experiments reached training loss values of less than $10^{-5}$ and stable generalization errors.

\begin{table}[ht]
    \caption{Timestamps simulated for the experiments of the first poisoning setting (\cref{tab:poison_syn,tab:poison_syn_longer}). The timestamps used for each experiment are the endpoints of intervals obtained by evenly partitioning the range $(0,\texttt{last\_timestamp})$ into \texttt{timestamp\_amount} segments.}
    \centering
    \vspace{1mm}
    \small
    \begin{tabular}{lcccc}
            \toprule
            \textbf{Setting} & \textbf{\texttt{last\_timestamp}} & \textbf{\texttt{timestamp\_amount}}\\
            \midrule
            Per \cref{result:poison} (\cref{tab:poison_syn}) w/o special & $10^{11}$ & $1000$\\
            Per \cref{result:poison} (\cref{tab:poison_syn}) w/ special & $10^{4}$ & $1000$\\
            Per \cref{result:poison} longer (\cref{tab:poison_syn_longer}) w/o special & $10^{13}$ & $10000$\\
            Per \cref{result:poison} longer (\cref{tab:poison_syn_longer}) w/ special & $10^{7}$ & $1000$\\
            \bottomrule
    \end{tabular}
    \label{tab:ode_times}
\end{table}

\textbf{Generalization evaluation.} Generalization errors were measured via impulse responses of length $40$ as defined in \cref{def:gen}, divided by the $\ell_{\infty}$ norm of the teacher’s length $40$ impulse response (such that the zero mapping has error of one). The same evaluation procedures were used in both the original experiments (\cref{tab:poison_syn}) and in the longer experiments (\cref{tab:poison_syn_longer}).

\subsubsection{SSM Beyond \cref{result:poison}}\label{app:details:poison_syn:ssm}

\textbf{Models.} We used the same teacher models as described in \cref{app:details:dynamic:ssm}. We used student SSM models that were trained end to end (\ie~$B(\cdot)$ and $C(\cdot)$ were not fixed). The student models had dimension $d=10$ in the original poisoning experiments (\cref{tab:poison_syn}), and dimension $d=20$ in the experiments with longer sequences (\cref{tab:poison_syn_longer}).

\textbf{Data.} Data for the experiments were generated in the same manner as in \cref{app:details:dynamic:ssm}. \cref{tab:poison_syn_data_struct} specifies which non-zero indices were present in each sequence type for each experiment. Training sequences of both types had their non-zero entries sampled i.i.d. from $\NN(0,1)$. \cref{tab:poison_syn_data_amount} specifies how many training sequences of each type were used in each experiment.

\begin{table}[ht]
    \caption{Types of sequences used in the experiments of the second and third poisoning settings (\cref{tab:poison_syn,tab:poison_syn_longer}). Last two columns (right) indicate the non-zero indices for each sequence type.}
    \centering
    \vspace{1mm}
    \small
    \begin{tabular}{lcccc}
            \toprule
            \textbf{Setting} & \textbf{Length} & \textbf{Baseline indices} & \textbf{Special indices}\\
            \midrule
            $2^{nd}$ and $3^{rd}$ settings of \cref{tab:poison_syn} & $6$ & $1,2$ & $5$\\
            $2^{nd}$ and $3^{rd}$ settings of \cref{tab:poison_syn_longer} & $10$ & $1,2,\dots,7$ & $9$\\
            \bottomrule
    \end{tabular}
    \label{tab:poison_syn_data_struct}
\end{table}

\begin{table}[ht]
    \caption{Amount of sequences of each type used in in the experiments of the second and third poisoning settings (\cref{tab:poison_syn,tab:poison_syn_longer}).}
    \centering
    \vspace{1mm}
    \small
    \begin{tabular}{lcccc}
            \toprule
            \textbf{Setting} & \textbf{Baseline amount} & \textbf{Special amount}\\
            \midrule
            Standalone SSM beyond \cref{result:poison} (\cref{tab:poison_syn,tab:poison_syn_longer}) & $8$ & $10$\\
            SSM in non-linear neural network (\cref{tab:poison_syn,tab:poison_syn_longer}) & $20$ & $20$\\
            \bottomrule
    \end{tabular}
    \label{tab:poison_syn_data_amount}
\end{table}

\textbf{Initialization.} To speed up optimization we modified the initialized employed in \cref{app:details:dynamic:ssm} by adding a small constant of $10^{-1}$ to all entries of $A$ and $B$ at initialization. This modification showed no significant differences in terms of the generalization error achieved by the models when compared to without it, while run times of the former were considerably shorter. \cref{tab:poison_syn_init} report the values of \texttt{sd\_A}, \texttt{sd\_B\_C} and \texttt{diff} used in each experiment.

\begin{table}[ht]
    \caption{Values of \texttt{sd\_A}, \texttt{sd\_B\_C} and \texttt{diff} used in the experiments of the second and third poisoning settings (\cref{tab:poison_syn,tab:poison_syn_longer}).}
    \centering
    \vspace{1mm}
    \fontsize{8.5}{9.5}\selectfont
    \begin{tabular}{lcccc}
            \toprule
            \textbf{Setting} & \textbf{\texttt{sd\_A}} & \textbf{\texttt{sd\_B\_C}} & \textbf{\texttt{diff}}\\
            \midrule
            Standalone SSM beyond \cref{result:poison} (\cref{tab:poison_syn}) & $10^{-3}$ & $10^{-3}$ & $0.05\times \exp\bigl(5\cdot \log_{10}(\texttt{sd\_A})\bigl)$\\
            Standalone SSM beyond \cref{result:poison} longer (\cref{tab:poison_syn_longer}) & $10^{-2}$ & $10^{-3}$ & $0.05\times \exp\bigl(3\cdot \log_{10}(\texttt{sd\_A})\bigl)$\\
            SSM in non-linear neural network (\cref{tab:poison_syn}) & $10^{-2}$ & $10^{-2}$ & $0.05\times \exp\bigl(0.5\cdot \log_{10}(\texttt{sd\_A})\bigl)$\\
            SSM in non-linear neural network longer (\cref{tab:poison_syn_longer}) & $10^{-3}$ & $10^{-3}$ & $0.05\times \exp\bigl(2\cdot \log_{10}(\texttt{sd\_A})\bigl)$\\
            \bottomrule
    \end{tabular}
    \label{tab:poison_syn_init}
\end{table}

\textbf{Optimization.} We followed a training scheme identical to \cref{app:details:dynamic:ssm}. \cref{tab:poison_syn_lr} report the base learning rates used in each of the experiments. 

We optimized all models to reach a training loss under $0.01$. To verify the generalization errors we report were stable, we trained for additional iterations after reaching sub $0.01$ training loss. We trained the standalone SSM models for $1500$ more iterations, and the models with additional layers for $5000$ more iterations.

\begin{table}[ht]
    \caption{Base learning rates used in the experiments of the second and third poisoning settings (\cref{tab:poison_syn,tab:poison_syn_longer}). Last two columns (right) indicate the base learning rate used in the experiments without the addition of “special” sequences and with their addition respectively.}
    \centering
    \vspace{1mm}
    \fontsize{8.5}{9.5}\selectfont
    \begin{tabular}{lcccc}
            \toprule
            \textbf{Setting} & \textbf{Without special sequences} & \textbf{With special sequences}\\
            \midrule
            Standalone SSM beyond \cref{result:poison} (\cref{tab:poison_syn}) & $0.01$ & $0.01$\\
            Standalone SSM beyond \cref{result:poison} longer (\cref{tab:poison_syn_longer}) & $0.001$ & $0.001$\\
            SSM in non-linear neural network (\cref{tab:poison_syn}) & $0.01$ & $0.01$\\
            SSM in non-linear neural network longer (\cref{tab:poison_syn_longer}) & $0.01$ & $5\times 10^{-5}$\\
            \bottomrule
    \end{tabular}
    \label{tab:poison_syn_lr}
\end{table}

\textbf{Generalization evaluation.} Generalization errors were measured via impulse responses of length $40$ as defined in \cref{def:gen}, divided by the $\ell_{\infty}$ norm of the teacher’s length $40$ impulse response (such that the zero mapping has error of one). The same evaluation procedures were used in both the original experiments (\cref{tab:poison_syn}) and in the longer experiments (\cref{tab:poison_syn_longer}).

\subsubsection{SSM in Non-Linear Neural Network}\label{app:details:poison_syn:nn}

\textbf{Models.} We used the same teacher models as described in \cref{app:details:dynamic:nn}. We used student SSM models that were trained end to end (\ie~$B(\cdot)$ and $C(\cdot)$ were not fixed). The student models had dimension $d=10$ in the original poisoning experiments (\cref{tab:poison_syn}), and dimension $d=20$ in the experiments with longer sequences (\cref{tab:poison_syn_longer}).

\textbf{Data.} The data used is identical to that of \cref{app:details:poison_syn:ssm}. \cref{tab:poison_syn_data_amount} specify how many training sequences of each type were used in each experiment.

\textbf{Initialization.} We initialized the student models identically to \cref{app:details:dynamic:nn}. To speed up optimization we modified the initialized employed in \cref{app:details:dynamic:nn} by adding a small constant of $10^{-3}$ to all entries of $A$ and $B$ at initialization. This modification showed no significant differences in terms of the generalization error achieved by the models when compared to without it, while run times of the former were considerably shorter. \cref{tab:poison_syn_init} report the values of \texttt{sd\_A}, \texttt{sd\_B\_C} and \texttt{diff} used in each experiment.

\textbf{Optimization.} We followed a training scheme identical to \cref{app:details:dynamic:nn}. \cref{tab:poison_syn_lr} report the base learning rates used in each of the experiments. 

We optimized all models to reach a training loss under $0.01$. To verify the generalization errors we report were stable, we trained for additional iterations after reaching sub $0.01$ training loss. We trained the standalone SSM models for $1500$ more iterations, and the models with additional layers for $5000$ more iterations.

\textbf{Generalization evaluation.} Generalization errors were measured via the root mean square error of a held-out test set of $2000$ correctly labeled sequences of length $40$, divided by the $\ell_{2}$ norm of the teacher’s outputs vector (such that the zero mapping has error of one). The same evaluation procedures were used in both the original experiments (\cref{tab:poison_syn}) and in the longer experiments (\cref{tab:poison_syn_longer}).

\subsection{Real-World Clean-Label Poisoning}\label{app:details:poison_real}
In this appendix we provide implementation details for the real-world experiments provided in \cref{sec:exper:poison,app:exper:poison_real}. All experiments were implemented using PyTorch \citep{paszke2019pytorch} and were carried out using a cluster of $8$ Nvidia RTX A6000 GPUs. All experiments consist of three stages: (i) training the attacked model from scratch on a set of $20000$ randomly chosen CIFAR-10 train examples, (ii) generating poisonous noise added to a random subset of  $10000$ train examples using an adapted version of the Gradient Matching method \citep{geiping2020witches} provided in \url{https://github.com/JonasGeiping/poisoning-gradient-matching}, and (iii) retraining the model on the $30000$-examples poisoned train set consisting of the original $10000$ examples used for poison generation in their clean form and in their poisoned form, as well as the remaining $10000$ original examples. Due to computational limitations, the poison generation process was targeted to disrupt the classification of $3000$ randomly chosen test instances, and the poisoning effect carried over to the entire CIFAR-10 test set.

To allow for poisoning of the entirety of the test set, we adapted the Gradient Matching method by strategically partitioning the poisoning budget---namely, $5000$ examples were used to poison the $1000$ correctly-classified test instances for which the original model's loss was the highest, and the remaining $5000$ examples were used to poison the remaining $2000$ test instances. Additionally, in order to make poisoning more precise we further split the poisoning examples into groups such that the noise trained for each group's examples targets $50$ test instances and ignores the rest. Our adapted approach also included: (a) reshaping the CIFAR-10 image input into a compatible sequence format, (b) introducing regularization that encourages the last elements of an injected noise sequence to be relatively large \note{%
Regularization comprised weight decay of $32 - i$ applied to the noise entries corresponding to the~$i$th row of an input image across all three channels, where $i \in [32]$ (recall that CIFAR-10 images are of size $3\times 32 \times 32$).
}, and (c) using the \texttt{untargeted-cross-entropy} target criterion and a \texttt{pbatch} value of $500$. Apart from the above adaptations, all hyperparameters were kept at their default values.

The SSM-based S4 neural network adheres to the implementation provided in \url{https://github.com/state-spaces/s4}, utilizing the “minimalist S4“ configuration available in \texttt{s4d.py} and \texttt{s4.py}. The SSM-based Mamba-2 neural network is an adaptation of the minimal implementation provided in \url{https://github.com/tommyip/mamba2-minimal}. The SSM-based LRU neural network is a python adaptation of the JAX-implementation provided in \url{https://github.com/NicolasZucchet/minimal-LRU}. We trained all models using the AdamW optimizer and the \texttt{CosineAnnealingLR} scheduler over $100$ epochs. To ensure the models before and after poisoning were comparable, we kept training the latter until reaching the training loss of the former (but no longer than $300$ total epochs). \cref{tab:poison_real_hyperparam} details the hyperparameters used for each of the models.

\begin{table}[ht]
    \caption{Hyperparameters used in the real-world poisoning experiments (\cref{tab:poison_real,tab:poison_real_train}).}
    \centering
    \vspace{1mm}
    \fontsize{8.5}{9.5}\selectfont
    \begin{tabular}{ccccccc}
            \toprule
            \textbf{Model} & \textbf{Depth} & \textbf{Dropout} & \textbf{\texttt{d\_{model}}} & \textbf{\texttt{d\_{state}}} & \textbf{Base learning rate} & \textbf{Weight decay}\\
            \midrule
            S4 & $4$ & $0.1$ & $128$ & $64$ & $10^{-2}$ & $10^{-2}$\\
            Mamba-2 & $4$ & $0.1$ & $128$ & $16$ & $3\times 10^{-3}$ & $5\times 10^{-2}$\\
            LRU & $4$ & $0.1$ & $128$ & $64$ & $10^{-3}$ & $5\times 10^{-2}$\\
            \bottomrule
    \end{tabular}
    \label{tab:poison_real_hyperparam}
\end{table}

\end{document}